\newcommand{\mbf}[1]{\mathbf{#1}}
\newcommand{\fat}[1]{\mathds{#1}}
\newcommand{\NN}{\fat{N}}
\newcommand{\NNH}{\widehat{\fat{N}}}
\newcommand{\ZZ}{\fat{Z}}
\newcommand{\RR}{\fat{R}}
\renewcommand{\SS}{\fat{S}}
\newcommand{\RRplus}{\RR_{_{\geq 0}}}
\newcommand{\dd}{\mathrm{d}}
\newcommand{\at}[1]{\big|_{\scriptscriptstyle{#1}}}
\newcommand{\indicator}[1]{\fat{1}_{#1}}
\newcommand{\wild}{{\scriptscriptstyle\bullet}}
\newcommand{\rv}[1]{\mathrm{#1}}
\newcommand{\prob}[1]{\mathbf{Pr}\!\left(#1\right)}
\newcommand{\ee}{\mathtt{e}}
\newcommand{\zero}{0}
\newcommand{\set}[2]{\left\{#1\left|#2\right.\right\}}
\newcommand{\THEN}{\Rightarrow}
\newcommand{\IFF}{\Leftrightarrow}
\newcommand{\DEFF}{\;\stackrel{\begin{array}{c}
	\scriptscriptstyle{def.}\\
\end{array}}{\Longleftrightarrow}}		
\newcommand{\minus}{\smallsetminus}		
\newcommand{\symplus}{\vartriangle}		
\newcommand{\into}{\hookrightarrow}		
\newcommand{\id}[1]{\mathtt{id}_{#1}}	
\newcommand{\power}[1]{\mathbf{2}^{#1}}	
\newcommand{\card}[1]{\left|#1\right|}	
\newcommand{\com}{{^{\scriptscriptstyle\ast}}}
\newcommand{\inv}{{^{\scriptscriptstyle -1}}}	
\newcommand{\expect}[1]{\mathbb{E}\left[#1\right]}
\newcommand{\diam}[1]{\mathtt{diam}\!\left(#1\right)}
\newcommand{\dist}[2]{\mathtt{dist}\!\left(#1,#2\right)}	
\newcommand{\hull}[2]{\mathtt{hull}\!\left(#1;#2\right)}		
\newcommand{\alphabet}{\mathbb{A}} 
\newcommand{\orth}[1]{\mathbb{O}(#1)}
\newcommand{\sel}{\mbf{S}} 
\newcommand{\val}[1]{\varphi_{#1}}
\newcommand{\north}{\mathtt{n}}
\newcommand{\south}{\mathtt{s}}
\newcommand{\west}{\mathtt{w}}
\newcommand{\east}{\mathtt{e}}
\newcommand{\cubings}{\mathscr{C}}
\newcommand{\spc}{\mathbf{X}}
\newcommand{\sens}{\mbf{\Sigma}}
\newcommand{\env}{\mbf{E}}					
\newcommand{\model}{\mathbf{M}}				
\newcommand{\ellone}[2]{\mbf{\Delta}\!\left(#1,#2\right)} 
\newcommand{\minP}{\mathbf{0}}		
\newcommand{\maxP}{\mathbf{0^\ast}}	
\newcommand{\up}[1]{#1\!\uparrow}		
\newcommand{\down}[1]{#1\!\downarrow}	
\newcommand{\coh}[2]{\mathtt{coh}_{#1}(#2)}			
\newcommand{\med}[3]{\mathrm{med}\!\left(#1,#2,#3\right)} 	
\newcommand{\morph}[3]{\mathrm{Hom}_{\scriptscriptstyle{#1}}\!\left(#2,\,#3\right)}	
\newcommand{\dual}[1]{\mathtt{Dual}\!\left(#1\right)}		
\newcommand{\cube}[1]{\mathtt{Cube}\!\,(#1)}				
\newcommand{\flip}[2]{\left[#1\right]_{_{#2}}}		
\newcommand{\half}[1]{\mathfrak{h}(#1)}		
\newcommand{\sepr}[1]{\mathtt{sep}\!\left(#1\right)} 
\newcommand{\pos}{\mathtt{pos}}			
\newcommand{\ppoc}{\mathbf{P}}		
\newcommand{\pog}{\mbf{\Gamma}}
\newcommand{\ori}[2]{\omega(#1#2)} 		
\newcommand{\tri}[2]{\partial(#1#2)} 			
\newcommand{\proj}[1]{\mathtt{proj}_{#1}}
\newcommand{\delay}{\sharp}
\newcommand{\propagate}[2]{\mathtt{PROP}(#1;#2)}
\newcommand{\witness}[2]{\mathtt{w}^{#1}_{#2}} 		
\newcommand{\observe}[1]{\mathtt{Obs}^{#1}}			
\newcommand{\prediction}[1]{\mathtt{Pred}^{#1}}
\newcommand{\current}[1]{\mathtt{Curr}^{#1}}		
\newcommand{\divergence}[1]{\mathtt{Div}(#1)}	
\newcommand{\lt}{\mathtt{lt}}
\newcommand{\rt}{\mathtt{rt}}
\newcommand{\ham}{\mathbb{H}}
\newcommand{\pointmass}[2]{\delta_{#1,#2}}
\newcommand{\minset}[1]{\mathtt{M}(#1)}
\newcommand{\residual}[1]{\mathbf{Res}\!\left(#1\right)}
\newcommand{\derived}[1]{\mathbf{Der}\!\left(#1\right)}
\newcommand{\completion}[1]{\widehat{#1}}
\newcommand{\twores}[1]{#1^{\scriptscriptstyle{(2)}}}
\newcommand{\twoclose}[1]{\widehat{#1}}
\newcommand{\supp}{^{\scriptscriptstyle{\sharp}}}
\newcommand{\kldiv}[2]{\mathbf{D}_{_{KL}}\!\left(#1\big{\Vert}#2\right)}
\theoremstyle{plain}
\newtheorem{theorem}{Theorem}[section]
\newtheorem{proposition}[theorem]{Proposition}
\newtheorem{lemma}[theorem]{Lemma}
\newtheorem{corollary}[theorem]{Corollary}
\theoremstyle{definition}
\newtheorem{definition}[theorem]{Definition}
\newtheorem{example}[theorem]{Example}
\newtheorem{remark}[theorem]{Remark}
\begin{document}

\title{Iterated Belief Revision Under Resource Constraints: Logic as Geometry}

\author{Dan P.~Guralnik}
\address{Electrical \& Systems Engineering, School of Engineering \& Applied Sciences, University of Pennsylvania, Penn Engineering Research \& Collaboration Hub (PERCH), 3401 Grays Ferry Ave., Pennovation Center, Building 6176, 3rd Floor, Philadelphia, PA 19146}
\email{guraldan@seas.upenn.edu}

\author{Daniel E.~Koditschek}
\address{Electrical \& Systems Engineering, School of Engineering \& Applied Sciences, University of Pennsylvania, Penn Engineering Research \& Collaboration Hub (PERCH), 3401 Grays Ferry Ave., Pennovation Center, Building 6176, 3rd Floor, Philadelphia, PA 19146}
\email{kod@seas.upenn.edu}

\begin{abstract}
We propose a variant of iterated belief revision designed for settings with limited computational resources, such as mobile autonomous robots.
The proposed memory architecture---called the {\em universal memory architecture} (UMA)---maintains an epistemic state in the form of a system of default rules similar to those studied by Pearl and by Goldszmidt and Pearl (systems $Z$ and $Z^+$).

A duality between the category of UMA representations and the category of the corresponding model spaces, extending the Sageev-Roller duality between discrete poc sets and discrete median algebras provides a two-way dictionary from inference to geometry, leading to immense savings in computation, at a cost in the quality of representation that can be quantified in terms of topological invariants.
Moreover, the same framework naturally enables comparisons between different model spaces, making it possible to analyze the deficiencies of one model space in comparison to others.

This paper develops the formalism underlying UMA, analyzes the complexity of maintenance and inference operations in UMA, and presents some learning guarantees for different UMA-based learners.
Finally, we present simulation results to illustrate the viability of the approach, and close with a discussion of the strengths, weaknesses, and potential development of UMA-based learners.
\end{abstract}

\maketitle

\section{Introduction}\label{introduction}
\subsection{Motivation.}\label{intro:motivation}
Iterated belief revision (BR) deals with the problem of maintaining syntactic propositional knowledge representations that are sufficiently flexible to accommodate reasoning about a stream of incoming observations in the form of propositional formulae (over a finite alphabet of atomic propositions), while taking into account the possibility of any such observation being inconsistent with the current state of the knowledge representation.
It is not unreasonable then to argue that BR operators should be used for maintaining well-reasoned internal representations for autonomous learning agents (see, e.g.~\cite{To_Son_Pontelli-planning_incomplete_information}).
However, one needs merely to observe the high computational costs associated with revision operators~\cite{Liberatore_Schaerf-BR_complexity_model_checking,Liberatore-complexity_iterated_BR} to conclude that such representations are too expensive to implement them in a mobile autonomous agent.
Attempts at making the representations more palatable using prime forms~\cite{Boutilier-unified_model_qualitative_belief_change,Marchi_Bittencourt_Perrussel-prime_forms_and_min_change_in_belief} have been made, but the fundamental complexity barriers remain~\cite{Kean_Tsiknis-computing_prime_implicants}.

We introduce a computationally cheap form of iterated propositional belief revision---the universal memory architecture (UMA)---which harnesses the geometry of model spaces in place of the model-theoretic techniques characteristic of this field.
The computational advantages come at the price of modifying the notion of an observation and restricting the syntactic form of the epistemic state maintained by the agent (understood in the broad sense of Darwiche and Pearl~\cite{Darwiche_Pearl-iterated_belief_revision}) to a special type of default system in the sense of~\cite{Pearl-system_Z}.
Most notably, observations are no longer allowed to take the form of arbitrary propositional formulae; rather, we restrict them to conjunctive monomials in the underlying propositional variables.
Equivalently, an observation is a partial truth-value assignment to the agent's inputs.
In addition, each observation is accompanied by a {\em value signal}---a quantity indicating a notion of the value of the experience to the agent at that time.
\footnote{The value signal should not be confused with the notion of reward, as used in Reinforcement Learning. One of our learning schemes (see Section~\ref{snapshot:real-valued}) leads to a (partial) syntactic representation of the distribution from which observations are being drawn, and does not encode any preference of one state over another.}

These alterations to the classical setting of iterated BR are motivated by the prospect of implementing iterated BR on mobile robotic platforms in real time.
While the Boolean component of the observation corresponds to the robot's raw sensory inputs, the value signal may correspond to an encoding of a task, or to feedback from a teacher.
The limited form of the epistemic state maintained by an UMA instance reduces the space and time complexity costs of maintenance (applying the revision operator) and exploitation (e.g. inference) down to an absolute minimum, as we review next.

\subsection{Contributions: Introduction and Analysis of UMAs.}\label{intro:contrib}
Motivated by the problem of realizing iterated belief revision and update in a bounded resources setting, we seek a class of lightweight general-purpose representations.
From a learning perspective, ours is a problem of {\em learning from positive examples}: an observer of an unknown, unmodeled system $\mathscr{S}$ experiences some process---a sequence of transitions---in that system through an array of Boolean sensors, and is required to reason about regularities in the observed sequence of experiences, constructing a formal theory of what is possible for that system.

We assume that observations occur in discrete time steps.
An {\em observation} at time $t$ will consist of (1) a complete truth-value assignment\footnote{Henceforth, the symbol $(\at{t})$ appended to anything else should be read as ``at time $t$''.} $\observe{}\at{t}$---the {\em observation at time $t$}---to a fixed set $\sens$---the {\em sensorium}---of Boolean queries of the agent's interactions with its environment; and of (2) a sample $\val{}\at{t}$ of a fixed {\em value signal}, $\val{}$.

Little needs to be assumed about the sensorium:
for the purpose of this paper, we allow any query expressible as a Boolean function of the state history (finite or infinite) of the system $\mathscr{S}$ (an appropriate formalism is developed in \Cref{models:first notions});
it is also assumed that truth-value assignments $\observe{}\at{t}$ are {\em consistent} in the sense that each agrees with the values of the available queries on the history that manifested at the corresponding time; 
finally, observations are assumed to be {\em time shift-invariant} in the sense that observing the same histories at different times must yield the same Boolean observation vector.
The value signal, for now, is assumed to be {\em static}, in the sense that it factors through a function of the observation (more detail in \Cref{high-level:observation model}).
The architecture itself does not rely on any of these assumptions, but the learning guarantees we provide in this paper do.

An UMA representation integrates its accumulated experiences by repeatedly revising two structural components, based on the incoming observations:
(a) a relation $G\at{t}$, called a {\em pointed complemented relation} (PCR), representing a system of implications, or {\em defaults}, which the agent believes to hold true among the queries in $\sens$;
and (b) a set $\current{}\at{t}\subset\sens$, representing the agent's belief regarding the current state of the system.
The machinery for maintaining these data structures will be referred to as a {\em snapshot}.
Briefly, our results about UMA representations are as follows.

\paragraph{Universality of Representation.}\label{contrib:universality}
In our intended setting, the learner's sensors realize the formal sensorium $\sens$ as a family of subsets of the space of histories, closed under complementation.
The possible worlds actually witnessed by points of this space correspond to the learner's perceptual equivalence classes (in the sense of, e.g.~\cite{Donald_Jennings-planning_from_sens_equiv_classes,Rivest_Schapire-diversity}).
Intuitively, an element $(a,b)$ of the PCR $G\at{t}$ should be seen as correct if no history falsifies the formula $a\rightarrow b$, and, more generally, if histories falsifying $a\rightarrow b$ are improbable, or insignificant according to the user's formal model of these notions.

It turns out that a PCR $G$ supports a natural {\em dual space}, a set $\model=G^\circ$ of possible worlds canonically associated with the PCR.
Recall that a {\em possible world} over $\sens$ is a complete truth value assignment $\sens\to\{\bot,\top\}$.
We prove that, given a PCR $G$ over a set of literals $\sens$, its dual space $\model$ has the following universality property (\Cref{prop:universality}):
$\model$ is the smallest set of possible worlds over $\sens$ which, for {\em any} realization $\rho$ of $\sens$ as a set of Boolean queries over a space $\spc$ not falsifying a relation listed in $G$, contains every model for $\rho$.

Returning to UMA learners, this means that the model space $\model\at{t}$ encoded by the PCR $G\at{t}$ is a minimal envelope for the true space of possible worlds, provided {\em just the information} that all the relations recorded in $G\at{t}$ are correct.

\paragraph{Computational Complexity.}\label{contrib:complexity}
From a computational perspective, the maintenance costs of an UMA representation are roughly the same as those of maintaining a neural representation (=the cost of maintaining and using a matrix of weights), but with the added benefit of affording a formal understanding of the model space, its geometry, and its deficiencies.
Here are some results, all of which are corollaries of the {\em geometric} properties of the class of model spaces defined by PCRs.
Let $N$ denote the cardinality of the sensorium $\sens$.
Then:
\begin{itemize}
    \item Maintaining an UMA snapshot structure requires $O(N^2)$ space;
    \item Update operations for learning the PCR structure require $O(N^2)$ time;
    \item Inference requires $O(N^2)$ time, reducible to $O(N)$ on fully parallel hardware.
    \footnote{We will remark that our current implementation is, in fact, an $O(N^3)$ implementation utilizing matrix multiplication on a GPU.
    This kind of implementation makes it possible to multiply fairly big matrices very quickly, improving on the performance of the na\"ive quadratic algorithm we provide later in this paper.}
\end{itemize}
%

\paragraph{Multiple Learning Paradigms.}\label{contrib:learning}
The mathematical foundations for UMA provide sufficient flexibility to admit a variety of learning mechanisms and settings, spanning the range from probabilistic filtering, as proposed in~\cite{GK-Allerton_2012}, to a variation on [iterated] revision and update introduced in~\cite{Darwiche_Pearl-iterated_belief_revision}, while keeping maintenance costs down to the bare minimum (see preceding paragraph).
Depending on the {\em snapshot type}, different learning scenarios and guarantees may be provided, while maintaining a uniform revision and update scheme at the symbolic level.

\paragraph{Flexibility of Representation.}\label{contrib:flexibility} 
A central feature of the UMA architecture is that the duality theory of PCRs allows one to interpret maps between PCRs as maps between the associated model spaces and vice versa.
This makes it possible to formally introduce---as well as operate with---notions of approximate equivalence, of redundancy and negligibility of queries.
This also enables the study of the impact on model space geometry of operations augmenting a sensorium with new queries (see, for example, \Cref{poc:ex:moving bead on an interval}) or removing existing ones.
In particular, this opens a way to formal (and, possibly, automated) cost/benfit analysis of such extension and pruning operations---a topic of ongoing research at the moment, which we will touch upon briefly in our final discussion of the results presented in this paper.

%
%
%

\subsection{Related Work.}\label{related}

Given the focus of this work on the representation of knowledge using defaults, we believe it is most tightly related to work in the field of propositional iterated belief revision.
Early work in BR resulted in wide acceptance of the AGM framework~\cite{Alchourron_Makinson-belief_set_revision_0,Alchourron_Makinson-belief_set_revision_1,AGM-belief_set_revision} for maintaining a {\em belief set}---a deductively closed set of formulae representing the state of the observed system.
Convenient, intuitive axioms for belief revision in the propositional setting, the KM axioms, were developed by Katsuno and Mendelzon in~\cite{Katsuno_Mendelzon-prop_knowledge_base_revision}.

Pointing out some inadequacies of the KM axioms in the context of repeated application of revisions, Darwiche and Pearl (DP) argue in their seminal paper~\cite{Darwiche_Pearl-iterated_belief_revision} that, to achieve the overarching goal of iterated revision, one must maintain a set of conditional statements---an {\em epistemic state}---which, upon revision by an incoming observation, {\em always produces a belief set accommodating that observation} (axiom $\mbf{R\!\ast\! 1}$ of the DP system of axioms for iterated revision).
Building on Spohn's framework of ordinal conditional functions~\cite{Spohn-OCF_epistemic_state} and its implications for ranked default systems~\cite{Pearl-system_Z,Goldszmidt_Pearl-system_Z+} and revision of the associated belief sets~\cite{Goldszmidt_Pearl-qualitative_probas_for_belief_revision}, they propose to view ranking functions as epistemic states (interchangeable with the associated system of ranked defaults), as they construct appropriate revision operators. 
Consequent work by many authors~\cite{Jin_Thielscher-iterated_BR_revised,Delgrande_Jin-parallel_BR,KernIsberner_Huvermann-multiple_IBR_wo_independence,KernIsberner_Brewka-strong_syntax_splitting_for_IBR,MKL-BR_for_general_epistemic_states,Konieczny_Perez-more_dynamics_in_revision}---much of it very new---considers different weaknesses and benefits of the DP axioms, relating to the effect of the order in which observations are made and the manner of mutual dependence they present, and resulting in a variety of iterated revision methods, as well as in some proposals to apply belief revision methods to the control of general agents~\cite{To_Son_Pontelli-planning_incomplete_information} based on varying computational approaches to belief revision operators (e.g.~\cite{Boutilier-unified_model_qualitative_belief_change,Marchi_Bittencourt_Perrussel-prime_forms_and_min_change_in_belief} on the use of prime forms for this purpose).

Clearly, the problems tackled by this field generalize the representation problem we posed at the beginning of \Cref{intro:contrib}, but one needs merely to observe the high computational costs associated with revision operators~\cite{Liberatore_Schaerf-BR_complexity_model_checking,Liberatore-complexity_iterated_BR} (or with computing normal forms and prime forms~\cite{Kean_Tsiknis-computing_prime_implicants}) to reach the conclusion that the existing computational approaches cannot be considered viable candidates for a solution of the representation problem in any setting where computational resources are limited.

Aiming to reduce the computational burden on the learner, we shift attention from precise syntactic computation with arbitrary propositional formulae to imposing {\em radical} simplifying assumptions on the allowed model spaces. The postulated mode of interaction between the agent and its environment---specifically the fact that the agent is constrained to processing sequences of samples from the space $\model$ of realizable models (rather than arbitrary propositional formulae)---suggests constructing successive {\em upper approximations} $\model\at{t}\supset\model$ of $\model$, belonging to a restricted class $\cubings$ which satisfying the following intuitive properties:
\begin{enumerate}
	\item Syntactic characterization of an element in $\cubings$ is computationally inexpensive;
    \item Each approximation is, in some sense, optimal/minimal among members of $\cubings$, given its predecessor and the last observation;
    \item Reasoning (e.g., forming a belief set) over a member of $\cubings$ is cheap.
\end{enumerate}
We present results on what is, in essence, the simplest possible class $\cubings$ of model spaces satisfying these three requirements: the class of finite median algebras.
This class of spaces is well studied, in several different guises, and in very disparate fields. These include: event structures in parallel computation~\cite{Pratt-modeling_concurrency_with_geometry}; median graphs in metric graph theory~\cite{Chepoi-median}; simply connected non-positively curved cubical complexes in formalizations of reconfiguration in robotic systems~\cite{Ghrist_Peterson-reconfiguration}; and the spectacular recent achievements in the topology of 3-dimensional manifolds by Agol~\cite{Agol-virtual_Haken} are much due to the notion of a cubulated group from Geometric Group Theory~\cite{Wise-riches_to_raags}.

\subsection{Structure of this Paper.}\label{intro:structure of this paper}
In \Cref{models}, we extend Sageev-Roller duality\footnote{See~\cite{Roller-duality} for a detailed development of that theory; chapters 6-7 of~\cite{Wise-riches_to_raags} for a brief intuitive review; and here, \Cref{poc} for background material and examples developed specifically to support this paper.}, to obtain all finite median algebras as duals (model spaces) of PCRs, viewed as systems of defaults.
Further, we explain how to reason over model spaces in this class by leveraging their geometry to avoid satisfiability checks, or any kind of explicit search in model space, for that matter.
We then explain in \Cref{high-level} how, using UMA snapshot structures to perform a variant of iterated revision, where the model-theoretic outlook on the problem is replaced by its geometric counterpart arising by Sageev-Roller duality.
We discuss the necessity of relaxing the DP axiom $\mbf{R\!\ast\! 1}$, and show there is a natural operator for computing a belief set, the {\em coherent projection}.

\Cref{snapshots} presents two different classes of snapshot structures---mechanisms for learning PCR representations---one motivated by Goldszmidt and Pearl's interpretation of default reasoning as qualitative probabilistic reasoning~\cite{Goldszmidt_Pearl-qualitative_probas_for_belief_revision}, and the other based on statistical integration of the observed value signal.
Finally, \Cref{simulations} presents two kinds of simulation studies:
\begin{enumerate}
    \item First, in a range of settings with {\em a-priori} known (or readily computable) implications in the sensorium, we consider the deviation of the learned PCR from the ground truth as a function of the number of samples.
    This is done for both snapshot types, and under different exploration paradigms: sampling and diffusion.
    \item Next, we consider settings closer to the heart of a roboticist.
    We implement agents with a reactive control paradigm based entirely on their internal UMA representations and conduct comparative simulation studies of their performance given different domains for exploration, and snapshot types.
\end{enumerate}
We close with a discussion of our results and of avenues for additional research in \Cref{discussion}.

\section{Model Spaces for Systems of Approximate Implications.}\label{models}
In this section we construct a representation for finite median algebras (see above) that is sufficiently flexible to be maintained dynamically, and we explain how to reason over these representations.
We review and apply existing results about the geometry of model spaces of this class of representations, leading to complexity bounds on maintenance and exploitation.

\Cref{models:PCRs} formally introduces the basic formal notions required for discussing our representations.
\Cref{models:duals} constructs the model spaces as dual spaces of {\em pointed complemented relations} (PCRs) and discusses their universal properties.
\Cref{models:PCRs to poc sets} relates PCRs and their duals (the associated model spaces) to the earlier duality theory of poc sets that motivated our approach, showing that PCR duals are, in fact, poc set duals.
\Cref{models:convexity} reviews known results about the geometry and topology of poc set duals.
Finally, in \Cref{models:propagation} we discuss the connection between the geometry of PCR duals and algorithms enabling reasoning over PCRs.

\subsection{Pointed Complemented Relations (PCR).}\label[subsection]{models:PCRs} The nature of our application requires a generalization of the formal theory we are about to use, the Sageev-Roller duality theory of poc sets~\cite{Roller-duality}, prompting some changes in the language. We start with:
\begin{definition}[pointed complemented set, PCS]\label[definition]{defn:PCS} A {\em pointed complemented set} is a set $\sens$ endowed with a self-map $a\mapsto a\com$ satisfying
$a\com\com=a$ and $a\com\neq a$ for all $a\in\sens$, and containing a distinguished element, denoted $\mbf{0}$. 
The element $\mbf{0}\com$ will be denoted $\mbf{1}$.
Whenever possible and safe, we will abuse notation and use the symbols $\mbf{0},\mbf{1},\ast$ in different PCSs.
For any $S\subset\sens$ we will denote by $S\com$ the set of all $x\com$, $x\in S$.\qed
\end{definition}
\begin{definition}[PCS morphism]\label{defn:PCS morphism}
By a PCS morphism we mean a function $f:\sens_1\to\sens_2$ between PCSs satisfying $f(\mbf{0})=\mbf{0}$ and $f(a\com)=f(a)\com$ for all $a\in\sens_1$.
The set of all PCS morphisms from $\sens_1$ to $\sens_2$ will be denoted by $\morph{PCS}{\sens_1}{\sens_2}$.\qed
\end{definition}
\begin{example}[set families, power sets]\label[example]{ex:power set as PCS} Any collection $\mathscr U\subseteq\power{X}$ of subsets of a fixed non-empty set $\spc$ satisfying (1) $\varnothing\in\mathscr U$, and (2) $A\in\mathscr U\THEN \spc\minus A\in\mathscr U$.
Then $\mathscr U$ is a PCS with respect to the choices $\mbf{0}:=\varnothing$ and $A\com:=\spc\minus A$.

The power set of a singleton is, up to isomorphism, the smallest PCS, which we denote by $\power{}$, and identify with the set $\{\bot,\top\}$.
Also, the power set $\power{X}$ will be routinely identified with the set of all functions $X\to\{\bot,\top\}$.
\end{example}
\begin{example}[PCS over an alphabet]\label[example]{ex:PCS over an alphabet} Suppose $\alphabet$ is a finite collection of symbols, and think of them as atoms of the propositional calculus over $\alphabet$.
The extended collection of literals over $\alphabet$,
\begin{equation}\label{eqn:PCS from alphabet}
	\sens(\alphabet):=\{\bot,\top\}\cup\bigcup_{a\in\alphabet}\{a,\neg a\}
\end{equation}
may be thought of as a PCS when one declares $\mbf{0}:=\bot$, $\bot\com:=\top$, $\top\com:=\bot$ and $a\com:=\neg a$, $(\neg a)\com:=a$ for all $a\in\alphabet$. Hereafter, $\top$ and $\bot$ stand for the truth values \texttt{True} and \texttt{False}, respectively.
\end{example}

The reason for considering PCSs is that $\ast$-selections ``live on them'':
\begin{definition}[$\ast$-selection, the Hamming cube]\label[definition]{defn:ast-selection} Let $\sens$ be a PCS. By a {\em $\ast$-selection} on $\sens$ we mean a subset $S\subset\sens$ such that $S\cap S\com=\varnothing$. In addition, a $\ast$-selection $S$ on $\sens$ is {\em complete}, if $S\cup S\com=\sens$. The set of all $\ast$-selections $S\subset\sens$ with $\mbf{1}\in S$ will be denoted by $\sel(\sens)$, and referred to as the [combinatorial] {\em Hamming cube on $\sens$}. Its set of vertices, the complete $\ast$-selections in $\sel(\sens)$, will be denoted by $\ham(\sens)$.
\qed\end{definition}
We now consider these notions in the context of our intended application.

\subsubsection{Binary Sensing, Possible Worlds and Perceptual Classes.}\label{models:first notions}
Suppose $\alpha$ is an observer of some system $\mathscr{S}$ as it undergoes the transitions along a state trajectory $(p_t)_{t=-\infty}^\infty$, and suppose $\alphabet$ is a finite set of unique labels for the Boolean queries available to $\alpha$---this observer's {\em sensorium}.
We assume observations of $\mathscr{S}$ by $\alpha$ begin at $t=0$.
It will not matter for our discussion whether the trajectory of $\mathscr{S}$ in any particular instance does indeed extend indefinitely into the past or future: if needed, one may set the value of $p_t$ to be eventually constant (in either direction).

By a {\em history} of $\mathscr{S}$ we mean a sequence of the form $\mbf{x}:=(x_s)_{s=-\infty}^0$, where $x_s$ is a state of $\mathscr{S}$ for all $s$, and $x_0$ represents the {\em current state} of the history $\mbf{x}$; $x_{-1}$ represents the {\em preceding state}, and so on.
Given a trajectory $(p_t)$ of $\mathscr{S}$ observed by $\alpha$, at each time $t\geq 0$, the history that {\em manifests at time $t$} is given by $x_s:=p_{t+s}$.

Henceforth, we let $\spc$ denote the space of histories possible for the system $\mathscr{S}$ given the initial history manifested at time $t=0$ (as is the case in all physical systems, $\mathscr{S}$ may have its own dynamics, disqualifying some histories from manifesting at any time $t>0$, or making such events highly improbable).
To say that $\alpha$'s queries/sensors are time-shift invariant is to say that each query is represented by a fixed Boolean function of the manifested history.
In other words, the sensorium is {\em defined} by a PCS morphism $\rho:\sens\to\power{\spc}$, $\sens:=\sens(\alphabet)$, with a sensor $s\in\sens(\alphabet)$ reporting $\top$ on history $x\in\spc$ if and only if $x\in\rho(s)$.

The mapping $\rho$ induces a partition on $X$---its partition into {\em perceptual classes}---as follows.
Construct a map $\rho\com\colon\spc\to\ham(\sens)$ by setting $s\in\rho\com(x)$ if and only if $x\in\rho(s)$; each point is mapped to the set of queries (including complements) which evaluate to $\top$ on that point.
Two points $x,y\in\spc$ are {\em sensory-equivalent} if $\rho\com(x)=\rho\com(y)$. The image $\model(\rho):=\mathrm{Im}(\rho\com)$ are the {\em possible perceptual states} of $\alpha$ in the system $\mathscr{S}$, given $\rho$ and the system's initial history.
We will also refer to a world/$\ast$-selection $u\in\ham(\sens)$ as {\em consistent}, if, and only if $u\in\model(\rho)$, or, in other words, if and only if $u$ is witnessed (through $\rho$) by a point of $\spc$.

\subsubsection{Concept Presentation of Perceptual States.}\label{models:concept presentation}
Digging deeper into the formalism presented just now, observe that $\ast$-selections $S\subset\sens(\alphabet)$ are in one-to-one correspondence with {\em vectors}, as defined in concept learning~\cite{Valiant-PAC_paper}. Recall that a {\em vector} is an assignment $v:\alphabet\to\{\mbf{0},\mbf{1},\wild\}$ of values standing for $\bot$, $\top$, and ``undetermined'', respectively, to the alphabet $\alphabet$. A vector is {\em total} if it has no $(\wild)$ values. The map $v\mapsto \sigma_v:=v\inv(\mbf{1})\cup (v\inv(\mbf{0}))\com$ is then a correspondence between vectors over $\alphabet$ and $\ast$-selections on the PCS $\sens(\alphabet)$, mapping the set of total vectors onto the set of complete $\ast$-selections.
In more geometric terms, a complete $\ast$-selection---which corresponds to a complete conjunctive monomial (aka {\em complete term}) over $\alphabet$---defines a vertex of the cube $[0,1]^\alphabet$, while a $\ast$-selection $S$ with $|S|=|\alphabet|-d$ corresponds to a $d$-dimensional face.
We will refer to $[0,1]^\alphabet$ as the {\em Hamming cube}.
The advantage of PCS terminology here is that $\ast$-selections on $\sens$ enumerate the faces of the Hamming cube without us having to pick an origin for the cube.

Pushing the geometric viewpoint a bit further, we consider the notion of {\em concepts}.
In ~\cite{Valiant-PAC_paper}, Valiant defines concepts as mappings $F$ of the space of vectors to $\{\bot,\top\}$, satisfying the requirement that $F(v)=1$ on a vector $v$ if and only if $F(w)=1$ for all total vectors $w$ which agree with $v$ on those $a\in\alphabet$ where $v(a)\neq\wild$.
In other words, concepts correspond to collections $K$ of {\em faces} of the Hamming cube, possibly of varying dimensions, satisfying the condition that a face $F$ belongs to $K$ if and only if every vertex of $F$ lay in $K$.
Such $K$ are {\em precisely} the sub-complexes of the Hamming cube obtainable from it by vertex deletions.\footnote{Similarly to case of graphs, the operation of deleting a vertex from a cubical complex requires the removal of all the adjoining faces.}

Now we return to the observer $\alpha$ and the system $\mathscr{S}$ whose evolution it observes through the queries realized by $\rho\colon\sens(\alphabet)\to\power{\spc}$, as discussed in the preceding section.
Thinking of the space of perceptual classes $\model(\rho)$ as a concept gives rise to a cubical sub-complex, say $\cube{\rho}$, of the Hamming cube, whose faces correspond to those $\ast$-selections on the PCS $\sens=\sens(\alphabet)$ that are witnessed (via $\rho$) by a point in $\spc$.
Thus, precise reasoning and planning over $\model(\rho)$ depends on one's ability to efficiently capture/encode: (1) the notion of consistency produced by the map $\rho$; (2) the topological properties (e.g. connectivity, contractibility) of $\cube{\rho}$; and (3) the geometric properties (e.g. shortest paths, curvature, isoperimetric inequalities) of $\cube{\rho}$.
The class of approximating model spaces we propose to use as proxies for $\model(\rho)$ is a result of weakening this notion of consistency to the extreme, all the way to the notion of {\em coherence} discussed in the next section.

\subsubsection{PCRs, Implications and Coherence.}
\begin{definition}[pointed complemented relation, PCR]\label[definition]{defn:PCR} Let $\sens$ be a PCS.
By a {\em pointed complemented relation} over $\sens$ we mean a set $G\subseteq\sens\times\sens$ satisfying\footnote{To avoid a proliferation of parentheses, we write $ab$ to denote the pair $(a,b)\in\sens\times\sens$.} $\mbf{0}a\in G$ and $ab\in G\IFF b\com a\com\in G$ for all $a,b\in\sens$.\qed
\end{definition}
In the context of the representation problem, one should think of a PCR $G$ over $\sens$ as a record of Boolean implications believed to be valid over $\sens=\sens(\alphabet)$, conditioned on the particular space of histories being observed. In this respect, a PCR is a restricted form of the notion of a {\em system of defaults}, as discussed, e.g. in~\cite{Goldszmidt_Pearl-qualitative_probas_for_belief_revision}.
Some of these implications are specified directly ($ab\in G$ to be read as ``it is believed that $b$ follows from $a$''), while others are derived as their consequences, by transitive closure. Hence the following language:
\begin{definition}\label[definition]{PCR notation}
Given a PCR $G$ over a PCS $\sens$, for any $a,b\in\sens$, $S\subseteq\sens$, one defines the following:
\begin{itemize}
	\item Write $a\leq_G b$ if $ab$ lies in the reflexive and transitive closure of $G$;
    \item The {\em $G$-equivalence class} of $a\in\sens$, denoted $[a]_G$, is the equivalence class of $a$ under the relation $a\sim b\IFF a\leq_G b\wedge b\leq_G a$ on $\sens$;
	\item The {\em forward (backward) closure}, $\up{S}$ (resp. $\down{S}$), of $S$ with respect to $G$ is the set of all $b\in\sens$ for which $a\leq_G b$ (respectively $b\leq_G a$) holds for some $a\in S$;
	\item Note that $S\subset\up{S}$. One says that $S$ is {\em forward-closed} if $\up{S}=S$;
	\item Finally, we observe that $\up{S\com}=\down{S}\com$ for all $S\subseteq\sens$.
\end{itemize}
We will often drop the subscripts $G$ when no ambiguity can arise.\qed
\end{definition}
\begin{definition}[PCR morphism] Let $G_1,G_2$ be PCRs over $\sens_1,\sens_2$, respectively. A morphism of PCRS from $G_1$ to $G_2$ is a PCS morphism $f:\sens_1\to\sens_2$, additionally satisfying $f(a)\leq f(b)$ in $G_2$ whenever $ab\in G_1$. 
The set of all morphisms from $G_1$ to $G_2$ will be denoted by $\morph{PCR}{G_1}{G_2}$.\qed
\end{definition}
The primary example of a PCR for this work derives from the view of a power set as a PCS (\Cref{ex:power set as PCS}): 
\begin{example}[Set Families as PCRs]\label[example]{ex:power set as PCR} Let $\spc\neq\varnothing$ be a set. Then any collection $\mathscr{U}$ of subsets of $\spc$ that is closed under complementation and satisfies $\varnothing\in\mathscr{U}$ gives rise to the PCR of all pairs $(A,B)$ with $A\subseteq B$, $\mbf{0}=\varnothing$ and $A\com=\spc\minus A$. In what follows, $\power{X}$ will always be regarded as a PCR in this way, for any $\spc\neq\varnothing$.\qed
\end{example}
Another `canonical' example of a PCR to keep in mind is:
\begin{example}[Less classical PCRs]\label[example]{ex:fuzzy PCR} Let $\spc\neq\varnothing$ be any set. Then $[0,1]^\spc$ may be endowed with the structure of a PCR by setting $\mbf{0}(\mbf{x}):=0$, $\mbf{x}\in\spc$, and, for any $\psi,\varphi\in[0,1]^\spc$, setting $\psi^\ast(\mbf{x}):=1-\psi(\mbf{x})$, $\mbf{x}\in\spc$ and $(\varphi,\psi)\in G$ if and only if $\varphi(t)\leq\psi(\mbf{x})$, $\mbf{x}\in\spc$.\qed
\end{example}
Our notion of model for a PCR rests on the following weak form of consistency:
\begin{definition} Let $G$ be a PCR over $\sens$. A subset $S\subseteq\sens$ is said to be {\em $G$-coherent}, if no pair $a,b\in S$ satisfies $a\leq b\com$.\qed
\end{definition}
Note that a $G$-coherent set is always a $\ast$-selection on $\sens$. Furthermore:
\begin{eqnarray}
	S\text{ is coherent }
		\IFF
        S\cap\down{S\com}=\varnothing
		\IFF
        \up{S}\cap S\com=\varnothing
		\IFF
        \up{S}\cap\down{S\com}=\varnothing\,,
\end{eqnarray}
so coherence is preserved by forward closure. Coherent, forward-closed sets may be thought of as the natural counterparts of the notion of a belief state in this setting. We now turn to studying the appropriate notion of model.

\subsection{Model Spaces as Dual Spaces}\label[subsection]{models:duals}
\begin{definition}[duals]\label[definition]{defn:dual} Let $G$ be a PCR over $\sens$. The set $G^\circ$ of maximal $G$-coherent subsets of $\sens$ is the {\em dual of $G$}. The set of all forward-closed $G$-coherent subsets will be denoted $\mbf{C}(G)$.\qed
\end{definition}
A standard application of Zorn's lemma shows that any $G$-coherent subset of $\sens$ is contained in an element of $G^\circ$. 
Note also that $G^\circ\subseteq\mbf{C}(G)$.
\begin{example}[the orthogonal PCR and the Hamming cube] The simplest example of a dual space is one where the PCR in question is as small as possible. Let $\sens$ be a PCS. The smallest PCR over $\sens$ contains only pairs of the forms $\mbf{0}a$ and $a\mbf{1}$. We will denote this PCR by $\orth{\sens}$ and refer to it as the {\em orthogonal PCR} over $\sens$. It is clear that $\orth{\sens}^\circ=\ham(\sens)$, the ``Hamming cube'' from \Cref{defn:ast-selection}.\qed
\end{example}

\begin{example}[`bad' queries] The definitions given above do not preclude one from considering, for example, the PCR $G_1=\{\mbf{0}\mbf{1},\mbf{1}\mbf{0}\}$. It is easy to see that $G_1^\circ=\{\varnothing\}$. At the same time, the smaller $G_2=\{\mbf{0}\mbf{1}\}$ has $G_2^\circ=\{\mbf{1}\}$. 
More generally, for any $a\in\sens$, having $a\leq a\com$ precludes $a$ from belonging in any $G$-coherent set. In particular, if both $a\leq a\com$ and $a\com\leq a$ hold, then no $G$-coherent set is a complete selection on $\sens$.\qed
\end{example}
Following the last example, two definitions are in order: 
\begin{definition}\label[definition]{defn:trivial PCR} The {\em trivial} PCR, henceforth also denoted by $\power{}$, is the PCR over $\sens=\{\mbf{0},\mbf{1}\}$ containing only $\mbf{0}\mbf{1}$.\qed
\end{definition}
\begin{definition}[negligible query, degenerate graph]\label[definition]{defn:negligible_degenerate} Let $G$ be a PCR over $\sens$. An element $a\in\sens$ is {\em $G$-negligible}, if $a\leq a\com$. Denote the set of negligible elements by $N(G)$. We say that $G$ is {\em degenerate} if $\sens$ contains a negligible element whose complement is also negligible. Note that $\down{N(G)}=N(G)$.\qed
\end{definition}
\begin{proposition}\label[proposition]{prop:when maximal coherent is complete} For a PCR $G$ over $\sens$, the following are equivalent:
\begin{enumerate}
	\item $G$ is non-degenerate;
	\item Every element of $G^\circ$ is a complete selection on $\sens$;
	\item Some element of $G^\circ$ is a complete selection on $\sens$.
\end{enumerate}
\end{proposition}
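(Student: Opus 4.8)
The plan is to prove the cycle of implications $(1)\THEN(2)\THEN(3)\THEN(1)$, with all the real work concentrated in $(1)\THEN(2)$.

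For $(1)\THEN(2)$: Suppose $G$ is non-degenerate, and let $S\in G^\circ$, i.e.\ $S$ is a maximal $G$-coherent subset of $\sens$. Since $S$ is coherent, it is a $\ast$-selection, so $S\cap S\com=\varnothing$; I must show $S\cup S\com=\sens$. So take an arbitrary $a\in\sens$ and suppose toward a contradiction that neither $a\in S$ nor $a\com\in S$. The idea is that maximality of $S$ must be violated by one of $a$, $a\com$ unless both are $G$-negligible, which non-degeneracy forbids. Concretely, I would show: if $S\cup\{a\}$ fails to be coherent, then $a\leq a\com$, i.e.\ $a\in N(G)$. Indeed, coherence of $S\cup\{a\}$ can only fail because of a ``new'' obstruction involving $a$: either there is $b\in S$ with $a\leq b\com$, or there is $b\in S$ with $b\leq a\com$ (the case $a\leq a\com$ is already the desired conclusion). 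Here I'd use that $S$ itself is coherent and forward-closure compatibility. The key move is that the two mixed cases actually force $a\com\in S$: if $a\leq b\com$ with $b\in S$, then $b=b\com\com\leq a\com$, and since $S$ is maximal coherent hence forward-closed (coherence is preserved by forward closure, per the displayed equivalences just before \Cref{models:duals}), I get $a\com\in\up{S}=S$, contradicting our assumption; the case $b\leq a\com$ is handled symmetrically using $ab\in G\IFF b\com a\com\in G$. So in fact $S\cup\{a\}$ is coherent whenever $a\notin S$, $a\com\notin S$ — unless $a\leq a\com$. Thus both $a\in N(G)$ and, running the same argument with $a\com$ in place of $a$, $a\com\in N(G)$, contradicting non-degeneracy. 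Hence $a\in S$ or $a\com\in S$, and $S$ is complete.

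For $(2)\THEN(3)$: This is immediate once we know $G^\circ\neq\varnothing$, and $G^\circ\neq\varnothing$ follows from the remark just after \Cref{defn:dual} (every $G$-coherent set, e.g.\ $\varnothing$, extends to a maximal one via Zorn's lemma). For $(3)\THEN(1)$: Contrapositively, suppose $G$ is degenerate, so there is $a\in\sens$ with $a\leq a\com$ and $a\com\leq a$. As noted in the `bad queries' example, $a\leq a\com$ means $a$ lies in no $G$-coherent set, and likewise $a\com$ lies in no $G$-coherent set; hence no $G$-coherent set — in particular no element of $G^\circ$ — can contain either $a$ or $a\com$, so none is a complete $\ast$-selection. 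This contradicts $(3)$.

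The main obstacle is the heart of $(1)\THEN(2)$: correctly bookkeeping the several ways coherence of $S\cup\{a\}$ can fail and checking that each mixed obstruction, via the complementation symmetry of $G$ and the forward-closedness of maximal coherent sets, collapses to $a\com\in S$, leaving $a\in N(G)$ as the only genuinely new possibility. Everything else is routine.
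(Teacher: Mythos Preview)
Your proof is correct and follows the same overall scheme as the paper's: establish the cycle $(1)\Rightarrow(2)\Rightarrow(3)\Rightarrow(1)$, with all the content in $(1)\Rightarrow(2)$. The one difference worth noting is how the contradiction in $(1)\Rightarrow(2)$ is extracted. The paper, assuming both $S\cup\{a\}$ and $S\cup\{a\com\}$ are incoherent, treats $S=\varnothing$ separately and otherwise finds $b,c\in S$ with $a\leq b\com$ and $a\com\leq c\com$, then chains these via transitivity to obtain $c\leq b\com$, contradicting the coherence of $S$. You instead invoke that a maximal coherent set is forward-closed, so any ``mixed'' obstruction $a\leq b\com$ with $b\in S$ immediately forces $a\com\in\up{S}=S$, leaving only the self-obstruction $a\leq a\com$; this handles the empty case uniformly and isolates forward-closedness as a reusable lemma. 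Both routes are short and elementary, and your $(2)\Rightarrow(3)$ and $(3)\Rightarrow(1)$ match the paper's.
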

\begin{proof}
See \Cref{proof:when maximal coherent is complete}.\qed
\end{proof}
The impact of this result on our representation problem is twofold. First, it provides a clear and easily verifiable criterion for when the dual space of a PCR consists (only!) of possible worlds. Second, it introduces a new and consistent notion of a query of low import, not involving arbitrary choices such as thresholding.
\begin{proposition}\label[proposition]{prop:function form of dual} Let $G$ be a non-degenerate PCR over the PCS $\sens$.
Then the mapping $\chi\colon\morph{PCR}{G}{\power{}}\to G^\circ$ defined by $\chi(f)=f\inv(\mbf{1})$ is a bijection.
\end{proposition}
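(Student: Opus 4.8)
The plan is to construct an explicit two-sided inverse for $\chi$ and then verify the two composites are identities, using \Cref{prop:when maximal coherent is complete} at the key point. First I would define the candidate inverse $\psi\colon G^\circ\to\morph{PCR}{G}{\power{}}$ by sending a maximal $G$-coherent set $S$ to the indicator-type map $f_S\colon\sens\to\power{}=\{\bot,\top\}$ with $f_S(a)=\top$ if and only if $a\in S$. For this to make sense I must check that $f_S$ is genuinely a PCR morphism: that $f_S(\mbf{0})=\bot$ (i.e.\ $\mbf{0}\notin S$, which follows since $\mbf{0}\leq\mbf{0}\com=\mbf{1}$ would force $\mbf{0}$ out of any coherent set — more directly, $\mbf{0}\mbf{1}\in G$ so if $\mbf{0}\in S$ then $\mbf{1}\in\up{S}$ and coherence is violated via $\mbf{0}\leq\mbf{1}=\mbf{0}\com$... one has to be a touch careful here and just invoke that $S$ is a $\ast$-selection containing $\mbf{1}$, hence not $\mbf{0}$); that $f_S(a\com)=f_S(a)\com$, which is exactly the statement that $S$ is a \emph{complete} $\ast$-selection — and this is where non-degeneracy of $G$ enters, via the equivalence (1)$\THEN$(2) of \Cref{prop:when maximal coherent is complete}; and finally that $ab\in G$ implies $f_S(a)\leq f_S(b)$ in $\power{}$, i.e.\ $a\in S\THEN b\in S$, which holds because $S$ is forward-closed (being maximal coherent, $\up{S}$ is coherent and contains $S$, so $\up{S}=S$).

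Next I would verify $\chi\circ\psi=\id{G^\circ}$: for $S\in G^\circ$ we have $\chi(\psi(S))=\chi(f_S)=f_S\inv(\mbf{1})=\{a:f_S(a)=\top\}=S$, which is immediate from the definition. The reverse composite $\psi\circ\chi=\id{\morph{PCR}{G}{\power{}}}$ requires slightly more: given $f\in\morph{PCR}{G}{\power{}}$, set $S:=f\inv(\mbf{1})=\chi(f)$; I need $S\in G^\circ$ (so that $\psi$ applies) and then $f_S=f$. That $S$ is a $\ast$-selection containing $\mbf{1}$ follows from $f$ being a PCS morphism ($f(a\com)=f(a)\com$ prevents $a,a\com$ both mapping to $\top$, and $f(\mbf{0})=\bot$ gives $\mbf{0}\notin S$, hence $\mbf{1}=\mbf{0}\com\in S$). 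Coherence: if $a,b\in S$ with $a\leq_G b\com$, then since $f$ is a PCR morphism $f(a)\leq f(b\com)=f(b)\com$ in $\power{}$; but $f(a)=f(b)=\top$ forces $\top\leq\bot$, a contradiction — so $S$ is $G$-coherent. Maximality: since $f$ is a PCS morphism it is a complete $\ast$-selection, and a complete coherent $\ast$-selection is automatically maximal among coherent sets. Finally $f_S(a)=\top\IFF a\in S\IFF f(a)=\top$, so $f_S=f$.

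The main obstacle — really the only non-bookkeeping point — is ensuring that the maps in $\morph{PCR}{G}{\power{}}$ correspond exactly to the \emph{maximal} coherent sets rather than to all forward-closed coherent sets. A PCS morphism into $\power{}$ is forced to be a complete $\ast$-selection because $f(a\com)=f(a)\com$ means every $a$ lands in exactly one of $f\inv(\mbf{1})$, $f\inv(\mbf{0})$; dually, the image $f\inv(\mbf{1})$ of a PCR morphism is complete, and here I lean on \Cref{prop:when maximal coherent is complete}(2)$\IFF$(3) together with the non-degeneracy hypothesis to guarantee that maximal coherent sets are likewise complete, so the two descriptions match. I would organize the write-up as: (i) $\psi$ well-defined (the morphism checks, citing \Cref{prop:when maximal coherent is complete}); (ii) $S=\chi(f)$ is maximal coherent; (iii) the two composites are identities by direct substitution. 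Everything except the completeness input is routine manipulation of forward closure and the $\ast$-selection axioms.
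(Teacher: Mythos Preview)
Your proposal is correct and follows essentially the same route as the paper: both arguments hinge on \Cref{prop:when maximal coherent is complete} to identify maximal coherent sets with complete $\ast$-selections, use the morphism property to derive coherence of $f\inv(\mbf{1})$, and use forward-closedness of maximal coherent sets to verify the morphism condition for $f_S$. The only difference is organizational---you build an explicit two-sided inverse while the paper checks injectivity and surjectivity separately---and you are in fact slightly more explicit than the paper about why a complete coherent $\ast$-selection is automatically maximal.
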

\begin{proof}
See \Cref{proof:function form of dual}.\qed
\end{proof}
\begin{remark} Note that the mapping $\chi$ is independent of the choice of $G$.
\end{remark}
The last proposition explains the sense in which $G^\circ$ may be thought of as a {\em dual space} of $G$. As with other instances of duality, this is useful because it enables dual mappings: 
\begin{definition}\label[definition]{defn:dual map} Let $f:G_1\to G_2$ be a PCR morphism. The {\em dual mapping} $f^\circ:G_2^\circ\to G_1^\circ$ is defined by $f^\circ(S)=f\inv(S)$. Alternatively, upon applying the identification in \Cref{prop:function form of dual}, for any $\varphi\in\morph{PCR}{G_2}{\power{}}$, one has $f^\circ(\varphi)=\varphi\circ f$ to obtain an element of $\morph{PCR}{G_1}{\power{}}$.
\qed\end{definition}
We remark that, since morphisms are composable (meaning that the composition $(f\circ g)(a):=f(g(a))$ of two morphisms is a morphism as well), so are their dual mappings, producing the identity $(f\circ g)^\circ=g^\circ\circ f^\circ$.

\begin{example}\label[example]{ex:embedding of duals} Let $G$ be a non-degenerate PCR over a PCS $\sens$. Then it is clear that the identity mapping $\iota:\orth{\sens}\to G$~ ---~ that is: $\iota(a)=a$ for all $a\in\sens$~ ---~ is a morphism of PCRs. The dual mapping $\iota\circ:G^\circ\to\ham(\sens)$ is then, clearly, an injection. This reflects the intuitive notion that the dual of any (non-degenerate) PCR may be ``excavated'' out of a standard Hamming cube by going over all $G$-incoherent pairs, one by one, and successively deleting any vertices of $\ham(\sens)$ which contain the given pair.
\end{example}
We further specialize the example to our representation problem, considering the effect of fixing a PCR structure on a given PCS:
\begin{proposition}[Universality of Representation]\label[proposition]{prop:universality} Let $G$ be a non-degenerate PCR over $\sens$. Then, for any non-empty set $\spc$ and every PCS morphism $\rho\colon\sens\to\power{\spc}$, the set $\model(\rho)$ of all complete $\ast$-selections witnessed (via $\rho$) by a point in $\spc$ (in the sense of \Cref{models:first notions}) is contained in $G^\circ$ whenever $\rho$ is a PCR morphism. Moreover, $G^\circ$ is the smallest subset of $\ham(\sens)$ having this property.
\end{proposition}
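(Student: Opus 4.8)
The statement has two halves: the containment $\model(\rho)\subseteq G^\circ$ for every PCR morphism $\rho$, and the minimality of $G^\circ$ among subsets of $\ham(\sens)$ with that containment property. I would prove the first by unfolding the definition of a PCR morphism, and the second by showing that \emph{every} $u\in G^\circ$ is already witnessed over a one-point space, via \Cref{prop:function form of dual}.

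\textbf{Containment.} Fix a PCS morphism $\rho\colon\sens\to\power{\spc}$ that is also a PCR morphism $G\to\power{\spc}$ (where $\power{\spc}$ carries the subset-relation PCR structure of \Cref{ex:power set as PCR}), and fix $x\in\spc$; write $u:=\rho\com(x)=\{s\in\sens:x\in\rho(s)\}$. Since $\rho$ is a PCS morphism we have $\rho(\mbf 0)=\varnothing$, hence $\rho(\mbf 1)=\spc$, and $\rho(s\com)=\spc\minus\rho(s)$ for every $s$; as in \Cref{models:first notions} this makes $u$ a complete $\ast$-selection containing $\mbf 1$, i.e.\ $u\in\ham(\sens)$. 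It remains to see $u\in G^\circ$, i.e.\ that $u$ is maximal $G$-coherent. Maximality is automatic: a $G$-coherent set is a $\ast$-selection, and a complete $\ast$-selection has no proper $\ast$-selection extension. For coherence, suppose $a,b\in u$ with $a\leq_G b\com$. A one-line induction on chain length shows any PCR morphism carries the reflexive–transitive closure of its domain relation into the $\leq$ of its codomain, so $\rho(a)\subseteq\rho(b\com)=\spc\minus\rho(b)$; but $x\in\rho(a)\cap\rho(b)$, a contradiction. Hence $u\in G^\circ$, and therefore $\model(\rho)\subseteq G^\circ$.

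\textbf{Minimality.} Let $u\in G^\circ$ be arbitrary. Since $G$ is non-degenerate, \Cref{prop:function form of dual} furnishes a (unique) PCR morphism $f\colon G\to\power{}$ with $f\inv(\mbf 1)=u$. Regard $\power{}=\power{\{\ast\}}$ as the power set of the singleton $\spc:=\{\ast\}$; then $f$ is a PCR morphism $G\to\power{\spc}$, and $f\com(\ast)=\{s:\ast\in f(s)\}=\{s:f(s)=\mbf 1\}=u$, so $u\in\model(f)$. Consequently, any $T\subseteq\ham(\sens)$ containing $\model(\rho)$ for every PCR morphism $\rho$ (over every non-empty $\spc$) must contain every $u\in G^\circ$; combined with the first half, $G^\circ$ is the smallest such subset.

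\textbf{Expected obstacle.} I do not expect a genuine difficulty: the proof is a direct unfolding of definitions plus one invocation of \Cref{prop:function form of dual}. The points deserving care are both uses of completeness of $\rho\com(x)$: it is what guarantees $\model(\rho)\subseteq\ham(\sens)$ (so that comparison with $G^\circ\subseteq\ham(\sens)$, which holds by non-degeneracy via \Cref{prop:when maximal coherent is complete}, is meaningful), and it is what upgrades ``$G$-coherent'' to ``maximal $G$-coherent'', placing the witnessed worlds in $G^\circ$ rather than only in $\mbf C(G)$. The single inductive ingredient — that PCR morphisms respect $\leq$ along chains, not merely along generating pairs — I would dispatch as a preliminary remark.
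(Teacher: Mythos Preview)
Your proof is correct. The containment half matches the paper's argument essentially verbatim. For minimality, however, the paper takes a different route: rather than realizing each $u\in G^\circ$ separately over a singleton via \Cref{prop:function form of dual}, it exhibits a \emph{single} realization witnessing all of $G^\circ$ at once, namely $\spc:=G^\circ$ with $\rho(a):=\{U\in G^\circ:a\in U\}=\half{a;G}$. One then checks that $\rho$ is a PCR morphism and that $\rho\com$ is the identity on $G^\circ$, so $\model(\rho)=G^\circ$. Your approach is perfectly valid and perhaps more transparent, leaning on an already-established bijection; the paper's has the small conceptual bonus of producing a canonical ``tautological'' realization of $G$ whose perceptual classes are exactly the elements of the dual.
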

\begin{proof}
See \Cref{proof:universality}.\qed
\end{proof}
Thus, the dual $G^\circ$ of a non-degenerate $G$ serves as a minimal model of the state space of the system $(agent+environment)$, and remains valid under {\em any} change to this system for as long as $\rho$ remains order-preserving. This is a form of robustness of the representation to changes in the coupling between the agent's sensory equipment and the environment: changes leaving the implication record invariant provide no reason for the agent to alter its reasoning.

\subsection{Reducing PCR Representations.}\label[subsection]{models:PCRs to poc sets}
The universality of PCR duals motivates a deeper study of their properties, seeking a better understanding of the degree of redundancy in the description of $G^\circ$ by a PCR $G$.  
This is not a mere technical issue:
while non-degeneracy guarantees the adequacy of our notion of an associated ``possible world'', it is not obvious that it also provides for sufficient control over the quality of inference.
The intended application---inferring {\em approximate} implications from partial observations---is well known to be problematic in the absence of simplifying assumptions (e.g. the ubiquitous restriction to directed acyclic graphs in the context of Bayesian networks).
It is therefore crucial to clarify the precise formal sense in which a PCR may be viewed as encoding a ``record of implications'', which is the purpose of this section.
A crucial notion in any such discussion is that of what it means for a query, as well as for the difference of two queries, to be negligible, because negligible but non-zero differences tend to accumulate in the transitive closure into material ones.

Looking more closely at the setting of the last proposition, notice that, for a fixed $\rho$, the assumption that $\rho$ is a morphism translates into the following. The property $a\leq b\THEN\rho(a)\subseteq\rho(b)$ for all $a,b\in\sens$ implies $\rho(a)=\varnothing$ for any $a\in N(G)$ (because $\varnothing$ is the only negligible element of $\power{\spc}$); furthermore, $\rho(a)=\rho(b)$ must hold whenever $a$ and $b$ are $G$-equivalent (recall \Cref{PCR notation}). These identifications lead us to recall Roller's definition of a {\em poc set} from~\cite{Roller-duality}:
\begin{definition}[poc set]\label[definition]{defn:poc set} A poc set is a tuple $\ppoc=(\sens,\leq,\mbf{0},\ast)$ where $(\sens,\leq)$ is a partially ordered set with a minimum element $\mbf{0}$, endowed with an order-reversing involution\footnote{That is, $a\com\com=a$ and $a\leq b\THEN b\com\leq a\com$ for all $a,b\in\sens$.} $a\mapsto a\com$ satisfying $\mbf{0}\com\neq\mbf{0}$ and $a\leq a\com\THEN a=\mbf{0}$
for all $a\in\sens$.\qed
\end{definition}
In other words, a poc set is a transitive and anti-symmetric PCR over $\sens$ whose only negligible element is $\mbf{0}$.

\begin{proposition}[canonical quotient]\label[proposition]{prop:canonical quotient} For any non-degenerate PCR $G$ there exists a surjective PCR morphism $\pi_G:G\to\widehat{G}$ of $G$ onto a poc set $\widehat{G}$ such that any PCR morphism $f:G\to\ppoc$ gives rise to one and only one PCR morphism $\widehat{f}:\widehat{G}\to\ppoc$ satisfying $f=\widehat{f}\circ\pi$.
\end{proposition}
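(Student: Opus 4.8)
The plan is to construct $\widehat{G}$ explicitly as a quotient of $\sens$ and to verify its universal property directly. First I would define an equivalence relation on $\sens$ by declaring $a\approx b$ if and only if either $a\leq_G b$ and $b\leq_G a$ (i.e. $a$ and $b$ are $G$-equivalent in the sense of \Cref{PCR notation}), or $a$ and $b$ are both $G$-negligible, or $a\com$ and $b\com$ are both $G$-negligible. The intuition, read off directly from the discussion preceding the statement, is that a PCR morphism $f:G\to\ppoc$ into a poc set is forced to collapse exactly these classes: $G$-equivalent elements go to a common element because $\leq$ is antisymmetric in $\ppoc$, and any negligible $a$ (with $a\leq_G a\com$) must map to $\mbf{0}$ since $\mbf{0}$ is the only negligible element of a poc set. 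One should check $\approx$ is indeed an equivalence relation respecting $\ast$ (so that $a\approx b \Leftrightarrow a\com\approx b\com$) and that it contains the pair $\mbf{0},a$ for negligible $a$; non-degeneracy is what guarantees that the $\mbf{0}$-class (the negligibles) and the $\mbf{1}$-class (the complements of negligibles) are disjoint, so that $\widehat{G}$ will genuinely have $\mbf{0}\com\neq\mbf{0}$.

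Next I would put $\widehat{G}:=\sens/\!\approx$, equip it with the induced involution $[a]^\ast:=[a\com]$, distinguished element $[\mbf{0}]$, and the relation $[a]\leq[b]$ iff $a'\leq_G b'$ for some (equivalently, as one checks, any) representatives $a'\in[a]$, $b'\in[b]$. The content here is that this relation is well-defined, transitive, and \emph{antisymmetric} on the quotient: antisymmetry is precisely the statement that $[a]\leq[b]$ and $[b]\leq[a]$ force $a\approx b$, which holds because either $a\leq_G b\leq_G a$ (so $a,b$ are $G$-equivalent) or the chain passes through a negligible element, forcing both $a$ and $b$ into the negligible class. One also checks $[a]\leq[a]^\ast \Rightarrow [a]=[\mbf{0}]$, which again is just unwinding: $a\leq_G a\com$ means $a\in N(G)$. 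This shows $\widehat{G}$ is a poc set, and $\pi_G:a\mapsto[a]$ is by construction a PCR morphism (it sends $\mbf{0}$ to $[\mbf{0}]$, commutes with $\ast$, and is order-preserving) which is surjective.

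For the universal property, given any PCR morphism $f:G\to\ppoc$, I would show $f$ is constant on $\approx$-classes: if $a\leq_G b\leq_G a$ then $f(a)\leq f(b)\leq f(a)$ in $\ppoc$, so $f(a)=f(b)$ by antisymmetry of $\ppoc$; and if $a\leq_G a\com$ then $f(a)\leq f(a)\com$ in $\ppoc$, so $f(a)=\mbf{0}$ by the poc set axiom, hence all negligibles of $G$ go to $\mbf{0}$ (and dually all their complements to $\mbf{1}$). Therefore $f$ factors set-theoretically as $f=\widehat{f}\circ\pi_G$ with $\widehat{f}([a]):=f(a)$ well-defined; one then verifies $\widehat{f}$ is a PCR morphism (it preserves $[\mbf{0}]$, commutes with the involutions, and is order-preserving because $[a]\leq[b]$ witnessed by $a'\leq_G b'$ gives $f(a')\leq f(b')$, i.e. $\widehat{f}([a])\leq\widehat{f}([b])$). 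Uniqueness of $\widehat{f}$ is immediate from surjectivity of $\pi_G$.

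The routine parts are the numerous well-definedness checks; the one genuinely delicate point—and the place I expect the proof to need care—is verifying that $\approx$ is \emph{transitive} and that the induced order is \emph{antisymmetric}, because a $\leq_G$-chain linking two elements may run through negligible elements in the middle, and one must argue that this in fact collapses everything on the chain into a single class rather than producing spurious new identifications. Controlling exactly which pairs get identified, using $\down{N(G)}=N(G)$ (\Cref{defn:negligible_degenerate}) and $\up{S\com}=\down{S}\com$ (\Cref{PCR notation}), is the crux; once the combinatorics of the negligible class is pinned down, the universal property falls out formally.
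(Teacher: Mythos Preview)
Your construction is exactly the paper's: the classes of your relation $\approx$ coincide with the fibers of the paper's $\pi_G$ (using that $N(G)$ and $N(G)\com$ are unions of $G$-equivalence classes), and the induced order agrees with the paper's $\widehat{G}=\set{\pi_G(a)\pi_G(b)}{a\leq_G b}$. One minor correction: your parenthetical claim that ``some representatives'' is equivalent to ``any representatives'' in the definition of $[a]\leq[b]$ is false when $[a]=N(G)$ or $[b]=N(G)\com$---a negligible $a'\neq\mbf{0}$ need not satisfy $a'\leq_G b'$ for arbitrary $b'$---but this does no harm, since the ``some'' version already gives a well-defined partial order and is what both you and the paper actually use.
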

\begin{proof} We defer the proof to \Cref{proof:canonical quotient}, but define the canonical quotient mapping here. We set:
\begin{equation}\label{eq:equivalence classes in G}
	\pi_G(a):=\left\{\begin{array}{ll}
		[a]_G		&\text{if }a\notin N(G)\cup N(G)\com\\
		N(G)	&\text{if }a\in N(G)\\
		N(G)\com&\text{if }a\in N(G)\com		
	\end{array}\right.
\end{equation}
and let $\widehat{\sens}:=\set{\pi_G(a)}{a\in\sens}$, and setting $\pi_G(a)\leq\pi_G(b)$ to hold in $\widehat{G}$ if and only if $ab\in G$. It remains to verify that (1) $\widehat{\sens}$ is a well-defined PCS; (2) $\widehat{G}$ is a well-defined poc set structure over $\widehat{\sens}$; and (3) the assertions of the proposition hold.\qed
\end{proof}
One should view this result as stating the precise conditions necessary for presenting a poc set in terms of a set of generators and a set of relations. However, the emphasis on what happens to morphisms leads to powerful realizations about dual spaces:
\begin{corollary}[all duals are poc set duals]\label[corollary]{cor:all duals are poc set duals} If $G$ is a non-degenerate PCR then $\pi_G^\circ:\widehat{G}^\circ\to G^\circ$ is a bijection.
\end{corollary}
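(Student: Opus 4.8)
The plan is to deduce the bijectivity of $\pi_G^\circ$ from the universal property of the canonical quotient (\Cref{prop:canonical quotient}) by passing to the functional description of dual spaces furnished by \Cref{prop:function form of dual}. The point is that, once duals are identified with hom-sets into the trivial PCR $\power{}$, the dual map $\pi_G^\circ$ becomes precomposition by $\pi_G$, and the universal property asserts exactly that precomposition by $\pi_G$ is a bijection from the morphisms out of $\widehat{G}$ onto the morphisms out of $G$.

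In detail, first I would observe that $\widehat{G}$ is non-degenerate: being a poc set (\Cref{defn:poc set}), its only negligible element is $\mbf{0}$, whose complement $\mbf{1}$ is not negligible, so $\widehat{G}$ is non-degenerate in the sense of \Cref{defn:negligible_degenerate}. Hence \Cref{prop:function form of dual} supplies bijections $\chi_G\colon\morph{PCR}{G}{\power{}}\to G^\circ$ and $\chi_{\widehat G}\colon\morph{PCR}{\widehat G}{\power{}}\to \widehat G^\circ$, each given by $f\mapsto f\inv(\mbf{1})$. By \Cref{defn:dual map}, under these identifications $\pi_G^\circ$ is carried to the precomposition map $\pi_G^{\ast}\colon\morph{PCR}{\widehat G}{\power{}}\to\morph{PCR}{G}{\power{}}$, $g\mapsto g\circ\pi_G$; so it suffices to show $\pi_G^{\ast}$ is a bijection. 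Injectivity is immediate from the surjectivity of $\pi_G$ (part of \Cref{prop:canonical quotient}): if $g\circ\pi_G=g'\circ\pi_G$ then $g=g'$ on $\widehat{\sens}=\pi_G(\sens)$. For surjectivity, note that the trivial PCR $\power{}$ of \Cref{defn:trivial PCR} is a poc set; hence, given any $f\in\morph{PCR}{G}{\power{}}$, the universal property of \Cref{prop:canonical quotient} produces a PCR morphism $\widehat f\colon\widehat G\to\power{}$ with $f=\widehat f\circ\pi_G=\pi_G^{\ast}(\widehat f)$. Thus $\pi_G^{\ast}$ is onto, hence bijective, and therefore $\pi_G^\circ$, being conjugate to $\pi_G^{\ast}$ via the bijections $\chi_G$ and $\chi_{\widehat G}$, is a bijection as well.

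I do not anticipate a genuine obstacle here: the statement is a soft consequence of the preceding results, and the only points requiring a moment's care are (i) confirming that $\power{}$ is indeed a poc set, so that \Cref{prop:canonical quotient} is applicable with target $\power{}$, and (ii) confirming the non-degeneracy of $\widehat G$ so that $\chi_{\widehat G}$ is available. As an alternative, one could argue directly on the level of sets: coherence of a subset of $\widehat{\sens}$ is equivalent to coherence of its $\pi_G$-preimage (since $\pi_G$ merely collapses $G$-equivalence classes and sends all of $N(G)$, resp.\ $N(G)\com$, to a single negligible, resp.\ co-negligible, element), every maximal $G$-coherent $T\subseteq\sens$ is saturated under $G$-equivalence and contains $N(G)\com$ while avoiding $N(G)$ (using non-degeneracy and \Cref{prop:when maximal coherent is complete}), whence $T=\pi_G\inv(\pi_G(T))$ with $\pi_G(T)$ maximal $\widehat G$-coherent, and $\pi_G\inv$ is injective on subsets of $\widehat{\sens}$ because $\pi_G$ is onto; but the functorial argument above is cleaner and I would present that.
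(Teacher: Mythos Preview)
Your proposal is correct and follows essentially the same approach as the paper: both identify duals with hom-sets into $\power{}$ via \Cref{prop:function form of dual}, then use the universal property of \Cref{prop:canonical quotient} (applied with target $\power{}$) to show that precomposition with $\pi_G$ is a bijection. The paper phrases this by exhibiting $\varphi\mapsto\widehat{\varphi}$ as an explicit inverse to $\pi_G^\circ$, while you separately argue injectivity (from surjectivity of $\pi_G$) and surjectivity (from the universal property); these are the same argument in slightly different packaging.
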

\begin{proof}
See \Cref{proof:all duals are poc set duals}.\qed
\end{proof}
\begin{corollary}[naturality of canonical quotients]\label[corollary]{cor:naturality of canonical quotients} Let $G,H$ be non-degenerate PCRs. Then, for every morphism $f:G\to H$ there exists one and only one morphism $\hat{f}:\widehat{G}\to\widehat{H}$ satisfying $\pi_H\circ f=\widehat{f}\circ\pi_G$.
\end{corollary}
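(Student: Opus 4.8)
The plan is to build $\widehat{f}$ by composing $f$ with the universal quotient $\pi_H$ and then invoking the universal property of $\pi_G$ established in \Cref{prop:canonical quotient}. Concretely, consider the composite $\pi_H\circ f\colon G\to\widehat{H}$. Since $H$ is non-degenerate, $\widehat{H}$ is a poc set, so $\pi_H\circ f$ is a PCR morphism from $G$ into a poc set. By the universal property in \Cref{prop:canonical quotient} (applied with $\ppoc=\widehat{H}$), there exists a unique PCR morphism $\widehat{f}\colon\widehat{G}\to\widehat{H}$ with $\pi_H\circ f=\widehat{f}\circ\pi_G$. This already gives both existence and uniqueness of $\widehat{f}$ as claimed, provided one checks that $\widehat{f}$ is in fact a morphism of poc sets (not merely of PCRs); but a PCR morphism between poc sets is automatically a poc set morphism, since the extra poc-set axioms constrain only the source and target structures, not the map. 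So the corollary follows almost immediately from \Cref{prop:canonical quotient}.

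The steps, in order, are: (i) observe that $\widehat{H}$ is a poc set (by the defining property of the canonical quotient, \Cref{prop:canonical quotient}, since $H$ is non-degenerate); (ii) form the PCR morphism $g:=\pi_H\circ f\colon G\to\widehat{H}$, using that the composition of PCR morphisms is again a PCR morphism; (iii) apply the universal property of $\pi_G$ in \Cref{prop:canonical quotient} to $g$, obtaining a unique PCR morphism $\widehat{f}\colon\widehat{G}\to\widehat{H}$ with $g=\widehat{f}\circ\pi_G$, i.e.\ $\pi_H\circ f=\widehat{f}\circ\pi_G$; (iv) note that uniqueness of $\widehat f$ subject to this equation is exactly the uniqueness clause of \Cref{prop:canonical quotient}, because $\pi_G$ is surjective, so any two solutions agree on all of $\widehat G$.

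I do not expect any genuine obstacle here: the result is a formal consequence of the universal property already proved, and the only thing to be careful about is the bookkeeping between ``PCR morphism'' and ``poc set morphism'' — one should remark that \Cref{prop:canonical quotient} is stated for morphisms into poc sets, and that on the poc set $\widehat G$ a PCR morphism and a poc set morphism are the same thing, so the $\widehat f$ produced is legitimately a map of poc sets. If one wishes, one can also record the explicit formula $\widehat f([a]_G)=\widehat f(\pi_G(a))=\pi_H(f(a))$, which makes the well-definedness transparent and shows directly that $\widehat f$ respects $\leq$ and $\ast$; but this merely re-derives what the universal property already guarantees. The mildly delicate point, if any, is simply ensuring the hypotheses of \Cref{prop:canonical quotient} are met — namely that the target of $g$ is a poc set — which is why non-degeneracy of $H$ (and not just of $G$) is assumed in the statement.
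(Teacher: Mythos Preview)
Your proposal is correct and follows essentially the same approach as the paper: form $g:=\pi_H\circ f\colon G\to\widehat{H}$ and apply the universal property of \Cref{prop:canonical quotient} with $\ppoc=\widehat{H}$ to obtain the unique $\widehat{f}$ (which the paper calls $\widehat{g}$). Your additional remarks about PCR versus poc set morphisms and the explicit formula are harmless elaborations beyond what the paper records.
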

\begin{proof}
See \Cref{proof:naturality of quotients}.\qed
\end{proof}
A particular consequence of the last corollary is that one also has $\pi_H^\circ\circ\widehat{f}^{\,\circ}=f^\circ\circ\pi_G^\circ$. This means the dual maps of $f$ and $\hat{f}$ coincide up to the identifications between the pre- and post-projection duals. Thus, any results about poc set duals apply to duals of PCRs. In the next two sections we review these results, and then harness them in our construction of the {\em universal memory architecture} (UMA).

\subsection{Convexity theory of PCR duals.}\label[subsection]{models:convexity}
To discuss the geometry of PCR duals, we need to endow PCRs with more structure.
\textsc{From this point on, all PCRS we consider will be finite, with the sole possible exception of power sets.}

\begin{definition}[Hamming metric] Let $G$ be a PCR over $\sens$. The {\em Hamming metric} on $G^\circ$ is defined by $\ellone{u}{w}=\left|\pi_G(u)\minus\pi_G(w)\right|$, where $\pi_G:G\to\widehat{G}$ is the canonical quotient map.
We define $\dual{G}$ to be the simple\footnote{That is: loopless, unoriented, with no multiple edges.} graph with vertex set $G^\circ$, and edges of the form $\{u,w\}$ for all $u,w\in G^\circ$ with $\ellone{u}{w}=1$.\qed
\end{definition}
In the case when $G$ is already a poc set, two vertices $u,w\in\ppoc^\circ$ form an edge if and only if $u\minus w$ is a singleton, that is: the perceptual classes represented by $u$ and $w$ differ by the truth value of a single query.
The common edge they span in the Hamming cube $\sel(\sens)$ corresponds to the $\ast$-selection $u\cap w$ in the concept presentation.
In the general case ($G$ not necessarily a poc set), since both $u$ and $w$ are coherent, each is the union of $N(G)\com$ with a number of $G$-equivalence classes $[a]$, $a\notin N(G)$ (recall \Cref{PCR notation} and \Cref{prop:canonical quotient}).
Thus $u$ and $w$ span an edge in $G^\circ$ if and only if $u\minus v=[a]$ for some $a\notin N(G)\cup N(G)\com$.
Intuitively, we think of the different $b\in[a]$ as counting for a single Boolean query.

We briefly recall the graph-theoretic notion of convexity:
\begin{definition}[convexity in graphs]\label[definition]{defn:convexity in graphs} Let $\Gamma=(V,E)$ be a graph and let $u,v\in V$.
The {\em hop distance} $d_\Gamma(u,v)$ is defined to be the minimum length of an edge-path in $\Gamma$ joining $u$ with $v$.
The {\em interval} $I(u,v)$ is defined to be the set of all vertices $w\in V$ satisfying the equality $d_\Gamma(u,v)=d_\Gamma(u,w)+d_\Gamma(w,v)$.
A set $C\subseteq V$ is said to be {\em convex} in $\Gamma$, if $I(u,v)\subseteq C$ holds for all $u,v\in C$.
A set $H\subseteq V$ is a {\em half-space of $\Gamma$}, if both $H$ and $V\minus H$ are convex sets in $G$.
Finally, we denote by $\mathcal{H}(\Gamma)$ the poc set whose elements are the half-spaces of $\Gamma$ (note that $\varnothing$ is a half-space of $\Gamma$), ordered by inclusion, and with $H\com:=V\minus H$.\qed
\end{definition}
We refer the reader to \cite{Nica-spaces_with_walls}, section 4, for the (very elegant and much more general) proofs of the following two lemmas (stated there for poc sets, but valid for finite non-degenerate PCRs as well, due to \Cref{prop:canonical quotient} and its two corollaries):
\begin{lemma}\label[lemma]{lemma:ellone metric is hop metric} Let $G$ be a finite non-degenerate PCR. Then the hop metric on $\dual{G}$ coincides with the metric $\mbf{\Delta}$.\qed
\end{lemma}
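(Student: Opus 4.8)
The plan is to establish, for all $u,w\in G^\circ$, both $\ellone{u}{w}\le d_{\dual{G}}(u,w)$ and the reverse inequality, after first reducing to the case where $G$ is a poc set. For the reduction I would use \Cref{cor:all duals are poc set duals}: the dual map $\pi_G^\circ\colon\widehat{G}^\circ\to G^\circ$ is a bijection, and since $\pi_G$ is onto, $\pi_G(\pi_G^{-1}(S))=S$, so for $S,T\in\widehat{G}^\circ$ one computes $\ellone{\pi_G^\circ(S)}{\pi_G^\circ(T)}=|\pi_G(\pi_G^{-1}(S))\minus\pi_G(\pi_G^{-1}(T))|=|S\minus T|$, which equals the Hamming distance between $S$ and $T$ computed inside the poc set $\widehat{G}$ (where $\pi_{\widehat{G}}$ is the identity). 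Hence $\pi_G^\circ$ is a $\mbf{\Delta}$-isometry; as in either graph the edges are by definition exactly the pairs at $\mbf{\Delta}$-distance $1$, it is simultaneously an isomorphism of graphs $\dual{\widehat{G}}\to\dual{G}$ and therefore an isometry for the hop metrics as well. So it is enough to handle $G=\ppoc$ a poc set, where $\widehat{\ppoc}\cong\ppoc$ and $\ellone{u}{w}=|u\minus w|$ on $\ppoc^\circ$.

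The inequality $\ellone{u}{w}\le d_{\dual{\ppoc}}(u,w)$ is the formal one. Since the elements of $\ppoc^\circ$ are complete selections (\Cref{prop:when maximal coherent is complete}), $a\mapsto a\com$ carries $u\minus w$ bijectively onto $w\minus u$, so $\ellone{u}{w}=\tfrac12|u\symplus w|$ is the restriction to $\ppoc^\circ$ of half the symmetric-difference metric on $\power{\sens}$ — in particular an honest metric. Every edge of $\dual{\ppoc}$ has $\mbf{\Delta}$-length $1$, so applying the triangle inequality along a shortest edge-path from $u$ to $w$ gives $\ellone{u}{w}\le d_{\dual{\ppoc}}(u,w)$.

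The content is the reverse inequality. I would argue by induction on $\ellone{u}{w}$ that it suffices, whenever $u\neq w$, to exhibit a vertex $u'\in\ppoc^\circ$ adjacent to $u$ in $\dual{\ppoc}$ with $\ellone{u'}{w}=\ellone{u}{w}-1$. Choose $a$ minimal in the finite nonempty poset $(u\minus w,\le)$ and put $u':=(u\minus\{a\})\cup\{a\com\}$. This $u'$ is again a complete selection, and once we know it is coherent it is automatically in $\ppoc^\circ$ (a complete coherent selection being maximal coherent). For coherence, suppose $b,c\in u'$ with $b\le c\com$: if $b,c\in u\minus\{a\}\subseteq u$ this contradicts coherence of $u$, so, using $b\le c\com\IFF c\le b\com$, we may assume $b=a\com$ and hence $c\le a$. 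We cannot have $c=a\com$ (that would give $a\com\le a$, forcing $a=\mbf{1}$, yet $\mbf{1}\in w$ while $a\notin w$), so $c\in u\minus\{a\}$ and $c<a$. But every member of $\ppoc^\circ$ is forward-closed — if $c'\in v$, $c'\le a'$ and $a'\notin v$, then $a'\com\in v$ and $c'\le(a'\com)\com$ violates coherence of $v$ — so $c\in w$ would force $a\in w$, which is impossible; hence $c\in u\minus w$, contradicting the minimality of $a$. Thus $u'\in\ppoc^\circ$; moreover $\ellone{u}{u'}=|\{a\}|=1$ (so $\{u,u'\}$ is an edge), and since $a\com\in w$ we have $u'\minus w=(u\minus w)\minus\{a\}$, i.e.\ $\ellone{u'}{w}=\ellone{u}{w}-1$. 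The induction then yields $d_{\dual{\ppoc}}(u,w)\le\ellone{u}{w}$, and along the way that $\dual{\ppoc}$ is connected.

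The main obstacle is precisely the coherence check for the flipped selection $u'$: producing a single half-space that may legitimately be flipped ``towards'' $w$ without creating an incoherent pair is the one genuinely non-formal point, and it is exactly there that the minimality of $a$ within $u\minus w$ and the forward-closedness of maximal coherent sets enter (flipping a non-minimal half-space can truly destroy coherence). The authors simply invoke Nica's more general half-space theory for this; the argument above is a direct, self-contained derivation in the present finite setting. Everything else is routine manipulation of the definitions together with \Cref{cor:all duals are poc set duals}.
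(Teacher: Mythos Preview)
Your proof is correct. The paper does not give its own proof of this lemma: it simply refers the reader to \cite{Nica-spaces_with_walls}, Section~4, noting that the poc-set result there transfers to non-degenerate PCRs via \Cref{prop:canonical quotient} and its corollaries. Your reduction to the poc-set case is exactly this transfer, carried out explicitly.

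For the poc-set case itself, your argument---pick $a$ minimal in $u\minus w$ and flip---is the standard one, and in fact coincides with machinery the paper develops later in the appendices: \Cref{lemma:when a vertex meets a cube} is precisely the statement that flipping a minimal element of a vertex keeps you in $\ppoc^\circ$, and \Cref{prop:constructing geodesics} (specialized to a singleton target $K=\{w\}$) is your inductive step. So while the paper outsources the proof of \Cref{lemma:ellone metric is hop metric} to Nica, it independently reproves the essential content downstream; your write-up makes this self-contained at the point where the lemma is stated. The coherence check for the flipped selection, which you correctly identify as the only substantive point, is handled cleanly.
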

\begin{lemma}\label[lemma]{lemma:ellone halfspaces} Let $G$ be a finite non-degenerate PCR. Then the half-spaces of $\dual{G}$ are precisely the subsets of $G^\circ$ of the form\footnote{Note that $\half{a^\ast;G}=G^\circ\minus\half{a}$ for all $a\in G$, by \Cref{prop:when maximal coherent is complete}.} \begin{equation}
	\half{a;G}:=\set{u\in G^\circ}{a\in u}\,,\quad a\in G\,.
\end{equation}
In particular, subsets of $G^\circ$ of the form
\begin{equation}
	\half{S;G}:=\set{u\in G^\circ}{S\subseteq u}=\bigcap_{a\in S}\half{a;G}\,,\quad S\subset\sens
\end{equation}
are convex in $\dual{G}$, for any $S\subseteq G$.\qed
\end{lemma}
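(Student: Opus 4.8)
The plan is to reduce to the case of a poc set and then read the statement off an explicit formula for the geodesic intervals of the dual graph. First I would reduce: as $G$ is finite and non-degenerate, \Cref{cor:all duals are poc set duals} supplies a bijection $\pi_G^\circ\colon\widehat{G}^\circ\to G^\circ$, and since the Hamming metric $\mbf{\Delta}$ on $G^\circ$ is defined through $\pi_G$, this bijection is an isometry onto $\widehat{G}^\circ$ carrying its own Hamming metric; combined with \Cref{lemma:ellone metric is hop metric} applied on both sides, $\pi_G^\circ$ becomes an isomorphism of graphs $\dual{\widehat G}\to\dual{G}$. Consequently it matches half-spaces with half-spaces and sends $\half{\pi_G(a);\widehat G}$ to $\half{a;G}$ for each $a\in\sens$; as $\pi_G$ is onto $\widehat\sens$, every set of the form $\half{b;\widehat G}$, $b\in\widehat\sens$, is obtained this way. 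So it suffices to treat a finite poc set $\ppoc$. From now on I write $d$ for the hop metric on $\dual{\ppoc}$, which equals $\mbf{\Delta}$ by \Cref{lemma:ellone metric is hop metric} --- in particular $\dual{\ppoc}$ is connected --- and I recall from \Cref{prop:when maximal coherent is complete} that every $u\in\ppoc^\circ$ is a complete selection, so $d(u,w)=|u\minus w|=|w\minus u|$.

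The first real step is the interval formula
\[
	I(u,w)=\set{v\in\ppoc^\circ}{u\cap w\subseteq v\subseteq u\cup w}\,.
\]
I would prove it by comparing, one complementary pair $\{a,a\com\}$ at a time, the contribution of that pair to $d(u,w)$ and to $d(u,v)+d(v,w)$: a pair on which $u$ and $w$ make the same choice contributes $0$ to $d(u,w)$, and contributes $0$ to $d(u,v)+d(v,w)$ if $v$ makes that same choice but $2$ otherwise, while a pair on which $u$ and $w$ differ contributes $1$ to both, regardless of $v$; summing over pairs gives the formula. One half of the lemma is then immediate. If $u,w\in\half{a;G}$ then $a\in u\cap w\subseteq v$ for every $v\in I(u,w)$, so $\half{a;G}$ is convex; and because each complete selection contains exactly one of $a,a\com$ (again \Cref{prop:when maximal coherent is complete}), $\ppoc^\circ\minus\half{a;G}=\half{a\com;G}$ is convex for the same reason --- hence $\half{a;G}$ is a half-space. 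The ``in particular'' clause follows at once, since $\half{S;G}=\bigcap_{a\in S}\half{a;G}$ is an intersection of convex sets and such intersections are convex straight from \Cref{defn:convexity in graphs}.

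The converse is the substantive part. Let $H$ be a half-space of $\dual{\ppoc}$. If $H=\varnothing$ then $H=\half{\mbf{0};G}$, since $\mbf{0}$ (being below its complement) is in no coherent set; if $H=\ppoc^\circ$ then $H=\half{\mbf{1};G}$, since every complete selection contains $\mbf{0}$ or $\mbf{1}$ and never $\mbf{0}$. Otherwise, using connectedness, a shortest path from a vertex of $H$ to a vertex outside $H$ produces an edge $\{u,w\}$ with $u\in H$, $w\notin H$, and then $u\minus w=\{a\}$ for a unique $a\in\ppoc$, with $a\in u$, $a\com\in w$. I would establish $H=\half{a;G}$ through the chain of equivalences $v\in H\iff u\in I(v,w)\iff a\in v$, valid for every $v\in\ppoc^\circ$. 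The right-hand equivalence is a two-line computation on the single pair $\{a,a\com\}$ where $u$ and $w$ differ, which yields $d(v,u)=d(v,w)-1$ when $a\in v$ and $d(v,u)=d(v,w)+1$ when $a\com\in v$, so that $a\in v$ precisely when $u\in I(v,w)$. The left-hand equivalence uses convexity of $H$ and of $\ppoc^\circ\minus H$ together with that dichotomy: were $v\in H$ but $u\notin I(v,w)$, the dichotomy would force $w\in I(v,u)\subseteq H$, a contradiction; were $u\in I(v,w)$ but $v\notin H$, then $u\in I(v,w)\subseteq\ppoc^\circ\minus H$, again a contradiction.

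I expect the main obstacle to be this last paragraph --- identifying the defining element $a$ as the ``label'' carried by a single crossing edge of the half-space, and then proving the two-sided criterion $v\in H\iff a\in v$. Both rest on the interval formula, which itself rests on \Cref{lemma:ellone metric is hop metric}: the statement that hop distance equals Hamming distance on $\dual{G}$ is the one genuinely geometric fact we import from the Sageev-Roller duality / spaces-with-walls literature rather than reprove here.
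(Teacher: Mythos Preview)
Your argument is correct. The paper does not actually supply its own proof of this lemma: immediately before \Cref{lemma:ellone metric is hop metric} and \Cref{lemma:ellone halfspaces} it refers the reader to \cite{Nica-spaces_with_walls}, Section~4, for the proofs (noting, as you do, that the reduction from non-degenerate PCRs to poc sets comes from \Cref{prop:canonical quotient} and its corollaries). So there is no ``paper's proof'' to compare against beyond that citation.

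Your route is a clean self-contained alternative: you take \Cref{lemma:ellone metric is hop metric} as given and derive the interval formula $I(u,w)=\{v:u\cap w\subseteq v\}$ directly from it by the pairwise counting argument, then read off both directions of the half-space characterization. Note that this interval formula reappears in the paper as part of \Cref{thm:duality and median formula}, again attributed to the Sageev--Roller literature; you have effectively shown that once \Cref{lemma:ellone metric is hop metric} is in hand, both the interval formula and the present lemma follow by elementary combinatorics. The edge-crossing argument for the converse --- labeling the unique separating wall by the element $a$ carried across a boundary edge and then using convexity of $H$ and of its complement to pin down $H=\half{a;G}$ --- is exactly the standard ``walls separate half-spaces'' mechanism underlying the spaces-with-walls approach of \cite{Nica-spaces_with_walls}, so your proof is not so much a different idea as an explicit unpacking of what that citation contains.
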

\begin{definition}\label[definition]{defn:halfspaces abused notation} To simplify notation, we will abuse it in the following ways:
\begin{itemize}
	\item Writing $\half{a}$, $\half{S}$ without specifying $G$ will henceforth refer to the subsets of $\ham(\sens)$, those are $\half{a;\orth{\sens}}$ and $\half{S;\orth{\sens}}$, respectively.
    \item When $S$ is explicitly known, $S=\{a_1,\ldots,a_k\}$, we will write $\half{a_1\cdots a_k;G}$ instead of $\half{S;G}$ when convenient.
\end{itemize}
\end{definition}
As a side note, observe that $\half{S;G}=\half{S}\cap G^\circ$, where $\half{S}$ coincides with the vertex set of a face of the hamming cube $\ham(\sens)$.
In particular, presenting any subset of $G^\circ$ as a concept is equivalent to decomposing it as a union of convex subsets of $G^\circ$.

\paragraph{Median Graphs.} The two preceding lemmas are results of $\dual{G}$ being a {\it median graph}~\cite{Chepoi-median,median1}:
\begin{definition}\label[definition]{defn:median graph} A connected simple graph $\Gamma=(V,E)$ is said to be a \emph{median graph}, if the set $I(u,v)\cap I(v,w)\cap I(u,w)$ contains exactly one vertex for each $u,v,w\in V$. This vertex is the \emph{median} of the triple $(u,v,w)$ and denoted by $\med{u}{v}{w}$ -- see \Cref{fig:median}.
For median graphs $\Gamma_i=(V_i,E_i)$, $i=1,2$, a {\em median morphism} of $\Gamma_1$ to $\Gamma_2$ is a map $f\colon V_1\to V_2$ which preserves medians: $f(\med{u}{v}{w})=\med{f(u)}{f(v)}{f(w)}$.
\qed
\end{definition}
Median graphs are a special subfamily of {\it median algebras}, \cite{median4,median5,median3,median2}. Some modern generalizations and applications may be found in \cite{CDH-median_spaces_measured_walls}.
\begin{figure}[t]
	\begin{center}
		\includegraphics[width=.3\columnwidth]{./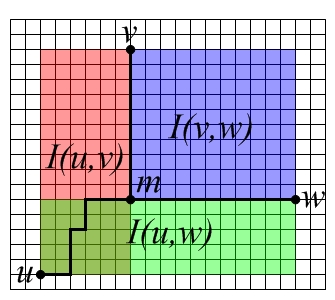}
	\end{center}
	\caption{
	Computing a median in a rectangle $\Gamma$ cut out of the integer grid (all vertices of the form $m\times n$, $m,n\in\ZZ$, with edges joining a vertex $m\times n$ to the vertices $(m\pm 1)\times n$ and $m\times(n\pm 1)$).
	\label{fig:median}}
\end{figure}
A central result in Sageev-Roller duality, specialized here to the finite case, and reformulated for non-degenerate PCRs is:
\begin{theorem}\label{thm:duality and median formula} The dual $\dual{G}$ of a finite non-degenerate PCR $G$ is a finite median graph, with the median calculated according to the formula:
\begin{equation}
	\med{u}{v}{w}=(u\cap v)\cup(u\cap w)\cup(v\cap w)\,,\quad u,v,w\in G^\circ\,,
\end{equation}
and with intervals in $\dual{G}$ calculated according to the formula:
\begin{equation}
	I(u,v)=\set{w\in G^\circ}{u\cap v\subseteq w}=\set{w\in G^\circ}{w\subseteq u\cup v}\,.
\end{equation}
Conversely, if $\Gamma$ is a finite median graph then $\Gamma$ is naturally isomorphic to $\dual{\mathcal{H}(\Gamma)}$ by sending every vertex $v$ to the $\ast$-selection of all half-spaces of $\Gamma$ which contain $v$.\qed
\end{theorem}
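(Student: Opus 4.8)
The plan is to prove the two halves independently: for the forward half (``$\dual G$ is a median graph with the stated median and interval formulae'') I would first reduce to the case of a poc set and then run a short set-theoretic argument; for the converse I would exhibit the stated map and show it is a surjective isometry, which is where the real content sits.

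\textbf{Forward direction, reduction and the median.} By \Cref{cor:all duals are poc set duals} the canonical quotient $\pi_G$ induces a bijection $\pi_G^\circ:\widehat G^\circ\to G^\circ$; since $\pi_G^\circ(\bar u)=\pi_G\inv(\bar u)$, since taking preimages commutes with $\cap$ and $\cup$, and since the Hamming metric on $G^\circ$ is \emph{defined} through $\pi_G$ (so that $\pi_G^\circ$ is an isometry), both the median expression and the interval description are preserved under $\pi_G^\circ$. Hence it suffices to prove the first half when $G=\ppoc$ is a finite non-degenerate poc set, where every element of $\ppoc^\circ$ is a complete $\ast$-selection (\Cref{prop:when maximal coherent is complete}) and $\ellone{u}{w}=|u\minus w|$. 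Fix $u,v,w\in\ppoc^\circ$ and put $m:=(u\cap v)\cup(u\cap w)\cup(v\cap w)$. First I would verify $m\in\ppoc^\circ$: for each $a\in\sens$, a majority count over the three complete $\ast$-selections $u,v,w$ places exactly one of $a,a\com$ into $m$, so $m$ is a complete $\ast$-selection; and if $a,b\in m$ with $a\leq b\com$, then each of $a,b$ lies in at least two of $u,v,w$, so (any two subsets of a $3$-element set each of size $\geq 2$ meet) some common one of them contains both $a$ and $b$, contradicting its coherence. Thus $m$ is a maximal coherent set.

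\textbf{Forward direction, interval formula and uniqueness.} Next I would establish $I(u,v)=\{w\in\ppoc^\circ:u\cap v\subseteq w\}=\{w\in\ppoc^\circ:w\subseteq u\cup v\}$. From the inclusion $u\minus v\subseteq(u\minus w)\cup(w\minus v)$, a disjoint union, one gets $d(u,w)+d(w,v)=|u\minus w|+|w\minus v|\geq|u\minus v|=d(u,v)$ for every $w\in\ppoc^\circ$ (using \Cref{lemma:ellone metric is hop metric}), with equality precisely when $(u\minus w)\cup(w\minus v)=u\minus v$; a short case check, using that $u,v,w$ are complete $\ast$-selections, shows this last condition is equivalent to $u\cap v\subseteq w$ and also to $w\subseteq u\cup v$. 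Since $u\cap v\subseteq m$ and likewise for the other two pairs, $m$ lies in all three intervals $I(u,v),I(v,w),I(u,w)$; conversely any $w'$ in that triple intersection satisfies $u\cap v,\,u\cap w,\,v\cap w\subseteq w'$, hence $m\subseteq w'$, forcing $m=w'$ because a complete $\ast$-selection admits no proper coherent superset. So $m$ is the unique median, $\dual{\ppoc}$ is a median graph with the claimed formula, and pulling back along $\pi_G^\circ$ gives the general non-degenerate case.

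\textbf{Converse.} Given a finite median graph $\Gamma=(V,E)$, note $\mathcal H(\Gamma)$ is a finite poc set (the only half-space $H$ with $H\subseteq V\minus H$ is $\varnothing$), hence non-degenerate, so by the first half $\dual{\mathcal H(\Gamma)}$ is a median graph. Define $\eta:V\to\power{\mathcal H(\Gamma)}$ by $\eta(v)=\{H\in\mathcal H(\Gamma):v\in H\}$. For each half-space exactly one of $H,V\minus H$ contains $v$, so $\eta(v)$ is a complete $\ast$-selection; and any $H,K\in\eta(v)$ both contain $v$, so $H\cap K\neq\varnothing$, i.e.\ $H\not\leq K\com$, so $\eta(v)$ is coherent; thus $\eta(v)\in\mathcal H(\Gamma)^\circ$. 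It remains to show $\eta$ is a surjective isometry onto $\dual{\mathcal H(\Gamma)}$ (an isometry automatically preserves and reflects edges, so this also gives the graph, and hence median, isomorphism). For the isometry I would invoke the classical fact about median graphs that for every edge $ab$ the sets $W_{ab}=\{x\in V:d(x,a)<d(x,b)\}$ and $W_{ba}$ are complementary convex half-spaces; then a half-space, being convex with convex complement, meets any geodesic in an index-interval and so ``separates'' exactly one of its edges, giving $\#\{H:u\in H,\ v\notin H\}\leq d(u,v)$, while along a fixed $u$--$v$ geodesic $x_0,\dots,x_k$ the $k$ half-spaces $W_{x_i x_{i+1}}$ are distinct and each separates $u$ from $v$, giving the reverse inequality; hence $\ellone{\eta(u)}{\eta(v)}=\#\{H:u\in H,\ v\notin H\}=d(u,v)$. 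For surjectivity, given $S\in\mathcal H(\Gamma)^\circ$ the half-spaces in $S$ pairwise intersect (coherence), so by the Helly property of convex sets in a median graph $\bigcap_{H\in S}H\neq\varnothing$; any $v$ in this intersection has $S\subseteq\eta(v)$, and two complete $\ast$-selections with one contained in the other coincide, so $S=\eta(v)$.

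\textbf{Where the difficulty lies.} The forward direction is essentially bookkeeping once \Cref{cor:all duals are poc set duals} and \Cref{lemma:ellone metric is hop metric} are available. The genuine content is concentrated in the converse, and specifically in the two structural facts about median graphs it rests on: convexity of the bisectors $W_{ab}$ (equivalently, that $\Gamma$ is a partial cube isometrically cut out by its half-spaces) and the Helly property for convex subsets. I would either cite these from \cite{Chepoi-median} or derive both quickly from the defining uniqueness-of-medians property; in any case these, rather than any computation, are the crux of the theorem.
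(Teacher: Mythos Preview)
The paper does not prove this theorem: it is stated with a terminal \qed{} and introduced as ``a central result in Sageev-Roller duality, specialized here to the finite case,'' i.e.\ it is cited from \cite{Roller-duality} rather than proved. So there is no in-paper argument to compare yours against; the relevant question is whether your sketch is correct on its own terms.

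Your forward direction is clean and correct. The reduction via \Cref{cor:all duals are poc set duals} is exactly the right move, the majority-vote argument for $m\in\ppoc^\circ$ is standard and sound, and your derivation of the interval formula from \Cref{lemma:ellone metric is hop metric} together with the equivalence $u\cap v\subseteq w\Leftrightarrow w\subseteq u\cup v$ (for complete $\ast$-selections) is fine. Connectedness of $\dual{\ppoc}$, needed for ``median graph,'' comes for free from \Cref{lemma:ellone metric is hop metric} since $\mathbf\Delta$ is everywhere finite.

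For the converse, your outline is the standard one and is essentially correct, but one step is slightly under-argued. In bounding $\#\{H:u\in H,\,v\notin H\}\leq d(u,v)$ you produce, for each such $H$, the unique edge of a fixed geodesic that $H$ cuts; this gives a map to the $d(u,v)$ edges, but you use it as a count, which needs the map to be \emph{injective}. Equivalently, you need that the only half-space separating an edge $ab$ with $a\in H$, $b\notin H$ is $W_{ab}$ itself. This is true and follows quickly from the median axiom (for $y\in H$ take $m=\med{a}{b}{y}\in I(a,b)\cap H=\{a\}$, whence $d(b,y)=1+d(a,y)$), but you should either say this or fold it into the ``classical facts'' you cite from \cite{Chepoi-median}. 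With that patch, the isometry and the Helly-based surjectivity go through, and your identification of where the content lies (convexity of the $W_{ab}$ and the Helly property) matches how the paper packages things via \Cref{thm:convexity theory of median graphs}.
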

This result is the consequence of a very strong convexity theory:
\begin{theorem}[Properties of median graphs, \cite{Roller-duality}, section 2]\label{thm:convexity theory of median graphs}
Let $\Gamma=(V,E)$ be a finite median graph. Then:
\begin{enumerate}
	\item Any family of pairwise intersecting convex sets has a common vertex;
	\item Every convex set is an intersection of halfspaces;
	\item For any convex subset $K\subset V$, the subgraph of $\Gamma$ induced by $K$ is a median graph;
	\item For any convex $K\subset V$ and any $v\in V\minus K$ there is a unique vertex $\proj{K}(v)\in K$ at minimum hop distance from $v$;
	\item For any convex $K\subset V$, the nearest point projection $\proj{K}(\wild)$ is a median preserving, distance non-increasing retraction of $\Gamma$ onto its subgraph induced by $K$.
\end{enumerate}
Property (1) is often referred to as the {\em Helly property}.\qed
\end{theorem}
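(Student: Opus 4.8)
The statement is classical --- it is exactly the content of \cite{Roller-duality}, section~2 --- and the plan is to deduce all five items from \Cref{thm:duality and median formula} by working throughout in the dual/half-space picture. Identify $\Gamma$ with $\dual{\mathcal{H}(\Gamma)}$, so that vertices are complete $\ast$-selections, $\med{u}{v}{w}=(u\cap v)\cup(u\cap w)\cup(v\cap w)$, and $I(u,v)=\set{w\in V}{u\cap v\subseteq w}=\set{w\in V}{w\subseteq u\cup v}$. One observation carries the whole argument, and I would record it first: since $u\cap v\subseteq\med{u}{v}{w}$ (and likewise for the other pairs), the median of any three vertices of a convex set $C$ lies in $I(u,v)\subseteq C$; in particular every convex set is closed under the median operation. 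I would also note at the outset that, $\Gamma$ being finite, every nonempty vertex set contains points at minimum hop distance from a given vertex.

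Items (1) and (3) are then short. For the Helly property (1) I would induct on the size of a pairwise-intersecting finite family $C_1,\dots,C_n$ of convex sets. The cases $n\le 2$ are trivial; for $n=3$, pick $p\in C_2\cap C_3$, $q\in C_1\cap C_3$, $r\in C_1\cap C_2$ and note that $\med{p}{q}{r}$ lies in $I(q,r)\cap I(p,r)\cap I(p,q)\subseteq C_1\cap C_2\cap C_3$; for $n\ge 4$, replace $C_{n-1},C_n$ by the convex set $C_{n-1}\cap C_n$ and observe that the $n=3$ case forces the resulting $(n-1)$-family to remain pairwise intersecting, so the inductive hypothesis produces a common point, which lies in $\bigcap_i C_i$. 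For (3), I would check that every $\Gamma$-geodesic between two vertices of a convex set $K$ stays inside $K$ (its vertices lie in the corresponding interval), so that the induced subgraph on $K$ is connected with $d_K=d_\Gamma|_K$ and $I_K(u,v)=I_\Gamma(u,v)$; the defining median-triple condition for $K$ is then inherited verbatim from $\Gamma$.

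For (4), existence of a nearest point of $K$ to $v$ is clear by finiteness. Uniqueness follows from the median-closure lemma: if $u_1,u_2\in K$ both realize $d(v,K)$, then $m:=\med{v}{u_1}{u_2}\in I(u_1,u_2)\subseteq K$ lies on a geodesic from $v$ to $u_1$, so $d(v,m)\le d(v,u_1)$; minimality forces $d(v,m)=d(v,u_1)$, hence $m=u_1$, and symmetrically $m=u_2$. Running the same argument with an arbitrary $x\in K$ in place of $u_2$ yields the \emph{gate property}: $\proj{K}(v)\in I(v,x)$ for every $x\in K$. This is the workhorse for the remaining items.

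Items (2) and (5) are where I expect the real bookkeeping to lie, the key being to translate the gate into half-space language. Using $I(v,x)=\set{w\in V}{w\subseteq v\cup x}$ together with the gate property, I would first establish the structural dictionary: for a half-space $\half{a;G}$ that separates $K$ (meets both $K$ and its complement), $a\in\proj{K}(v)\iff a\in v$; if $\half{a;G}\supseteq K$ then $a\in\proj{K}(v)$ always; and if $\half{a\com;G}\supseteq K$ then $a\notin\proj{K}(v)$ always. Each is a two-line case analysis pitting $a\in\proj{K}(v)$ (or $a\com\in\proj{K}(v)$) against $\proj{K}(v)\subseteq v\cup x$ for a suitable $x\in K$, and keeping these three cases straight is the only genuinely delicate point. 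Given the dictionary, (2) is immediate: for $v\notin K$ (the case $K=\varnothing$ being covered by $\half{\mbf{0};G}=\varnothing$) one has $\proj{K}(v)\neq v$, so some $a$ lies in $\proj{K}(v)$ but not in $v$; by the dictionary such an $a$ can be neither of ``$\half{a\com;G}\supseteq K$'' nor of ``separates $K$'' type, so $\half{a;G}\supseteq K$ and $v\notin\half{a;G}$ --- hence $K$ equals the intersection of the half-spaces containing it. Finally, (5): the retraction property is immediate ($\proj{K}$ fixes $K$ pointwise and lands in $K$, which by (3) is a median subgraph); distance non-increase follows by adding the two gate identities $d(u,\proj{K}v)=d(u,\proj{K}u)+d(\proj{K}u,\proj{K}v)$ and $d(v,\proj{K}u)=d(v,\proj{K}v)+d(\proj{K}v,\proj{K}u)$ and comparing with the triangle inequality, which collapses to $2\,d(\proj{K}u,\proj{K}v)\le 2\,d(u,v)$; and median-preservation drops out of the dictionary, since for each $a$ the three cases above give $a\in\proj{K}(\med{x}{y}{z})$ iff $a\in\med{\proj{K}x}{\proj{K}y}{\proj{K}z}$.
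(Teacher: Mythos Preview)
The paper does not prove this theorem at all: the statement carries a terminal \qed\ and is attributed wholesale to \cite{Roller-duality}, section~2, with no argument given. Your proposal, by contrast, supplies a complete self-contained proof that leverages \Cref{thm:duality and median formula} to work entirely in the half-space picture. The argument is correct: the median-closure observation, the Helly induction, the gate property from (4), the half-space dictionary for $\proj{K}$, and the derivations of (2) and (5) from it all go through as you describe. In effect you have traded an external citation for an internal one --- your proof depends on the duality theorem, which the paper also imports from Roller --- but the reduction is clean and makes the convexity theory feel like a consequence of the paper's own machinery rather than a black box.
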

The Helly property is, perhaps, the most notable of the results stated above.
In our setting of PCR duals, it may be interpreted as guaranteeing the satisfiability of any family of conjunctive monomials over $\alphabet$ in which every pair is separately satisfiable.

\paragraph{Convex hulls.}
Given the central role of half-spaces in the convexity theory of median graphs, a notion of the set of half-spaces dual to a given set of vertices is useful:
\begin{definition} For $K\subset\ham$, its dual set of halfspaces, $K\supp$, is defined to be the set of all $a\in \sens$ with $K\subseteq\half{a}$.\qed
\end{definition}
An immediate corollary of \Cref{thm:convexity theory of median graphs}(2) is:
\begin{corollary}\label[corollary]{cor:convex hull formula} Suppose $G$ is a non-degenerate PCR, and $K\subseteq G^\circ$. Then $K\supp$ is $G$-coherent and forward-closed, and the convex hull $\hull{K}{G}$ of $K$ in $\dual{G}$ coincides with $\half{K\supp;G}$.\qed
\end{corollary}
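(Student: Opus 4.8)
The plan is to work throughout with the identity $K\supp=\bigcap_{u\in K}u$, which is immediate from the definitions: $a\in K\supp$ iff $K\subseteq\half{a}$ iff $a\in u$ for every $u\in K$. (One may assume $K\neq\varnothing$; if $K=\varnothing$ then $K\supp=\sens$ and $\half{K\supp;G}=\half{\sens;G}=\varnothing=\hull{K}{G}$, so the statement holds vacuously, and the coherence/forward-closed clauses are understood modulo this degenerate case.) Granting this description, $G$-coherence of $K\supp$ is soft: fix any $u_0\in K$; then $K\supp\subseteq u_0$ and $u_0$ is $G$-coherent, so no pair $a,b\in K\supp$ satisfies $a\leq b\com$. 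For forward-closedness I would first note that every $u\in G^\circ$ is itself forward-closed --- since coherence is preserved by forward closure (the equivalences recorded just after the definition of $G$-coherence), $\up{u}$ is a $G$-coherent set containing $u$, so maximality of $u$ forces $\up{u}=u$ --- and then observe that an intersection of forward-closed sets is forward-closed, so $K\supp=\bigcap_{u\in K}u$ is forward-closed.

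For the set equality I would prove the two inclusions separately. The inclusion $\hull{K}{G}\subseteq\half{K\supp;G}$ is the easy one: by \Cref{lemma:ellone halfspaces}, $\half{K\supp;G}=\bigcap_{a\in K\supp}\half{a;G}$ is an intersection of half-spaces of $\dual{G}$ and hence convex, and it contains $K$ because $a\in u$ for every $a\in K\supp$ and $u\in K$, by the definition of $K\supp$; since $\hull{K}{G}$ is the smallest convex subset of $\dual{G}$ containing $K$, the inclusion follows. For the reverse inclusion I would invoke \Cref{thm:convexity theory of median graphs}(2): the convex set $\hull{K}{G}$ is an intersection of half-spaces of $\dual{G}$, and by \Cref{lemma:ellone halfspaces} these are exactly the sets $\half{a;G}$, so $\hull{K}{G}=\bigcap_{a\in T}\half{a;G}$ for some $T\subseteq\sens$. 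Because $K\subseteq\hull{K}{G}\subseteq\half{a;G}=\half{a}\cap G^\circ$ for each $a\in T$, and $K\subseteq G^\circ$, we get $K\subseteq\half{a}$, i.e. $a\in K\supp$, for every $a\in T$; thus $T\subseteq K\supp$ and therefore $\half{K\supp;G}=\bigcap_{a\in K\supp}\half{a;G}\subseteq\bigcap_{a\in T}\half{a;G}=\hull{K}{G}$.

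The proof is essentially a sequence of definition unwindings, so there is no serious obstacle; the only point that needs care is the direction-of-inclusion bookkeeping in the last step (a larger index set of half-space constraints yields a smaller intersection), together with the small observation --- used twice --- that $\half{a;G}=\half{a}\cap G^\circ$, which is what lets one translate between $K\supp$, defined inside the Hamming cube $\ham(\sens)$, and the half-spaces of $\dual{G}$. I would also flag explicitly that maximal $G$-coherent sets are forward-closed, since that is what makes the forward-closedness of $K\supp$ fall out for free.
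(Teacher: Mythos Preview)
Your proof is correct and follows essentially the same approach as the paper, which merely declares the result an ``immediate corollary'' of \Cref{thm:convexity theory of median graphs}(2) without further argument. You have fully unpacked what the paper left implicit: the key step in both is that every convex set in $\dual{G}$ is an intersection of half-spaces, which forces the index set $T$ for $\hull{K}{G}$ to sit inside $K\supp$; your handling of the coherence and forward-closedness claims via $K\supp=\bigcap_{u\in K}u$ is clean and correct.
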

Thus, every convex subset of $G^\circ$ may be written as $\half{S;G}$ for some $S\in\mbf{C}(G)$. This representation is unique, by last assertion of the following lemma:
\begin{lemma}\label[lemma]{lemma:halfspace properties} Let $G$ be a non-degenerate PCR over $\sens$.
Then, for all $S,S_1,S_2\subset\sens$:
\begin{enumerate}
	\item $\half{S;G}\neq\varnothing$ if and only if $S$ is coherent;
	\item For all $S_1,S_2\subseteq\sens$ one has $\half{S_1\cup S_2;G}=\half{S_1;G}\cap\half{S_2;G}$;
	\item If $S_2\subseteq S_1$ then $\half{S_1;G}\subseteq\half{S_2;G}$;
	\item If $S$ is coherent then $\half{S;G}=\half{\up{S};G}$;
    \item If $S\in\mbf{C}(G)$, then $\half{S;G}=\half{\min(S);G}$
	\item For all $S_1,S_2\in\mbf{C}(G)$ one has $\half{S_1;G}\subseteq\half{S_2;G}\IFF S_2\subseteq S_1$.
\end{enumerate}
\end{lemma}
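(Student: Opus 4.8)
The plan is to treat parts (1)--(3) as definitional bookkeeping, derive (4) from the fact that maximal coherent sets are automatically forward-closed, obtain (5) from (4) via $\up{\min(S)}=S$, and prove (6) --- the uniqueness of the representation --- by a separation argument leaning on non-degeneracy. For (1): a coherent $S$ lies in some $u\in G^\circ$ by the Zorn's-lemma remark following \Cref{defn:dual}, so $u\in\half{S;G}$; conversely any member of $\half{S;G}$ is a coherent superset of $S$. Part (2) is ``$S_1\cup S_2\subseteq u\IFF S_1\subseteq u\wedge S_2\subseteq u$'', and (3) is the instance $S_1=S_1\cup S_2$ of (2).

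For (4), note that every $u\in G^\circ$ is forward-closed: forward closure preserves coherence, so $\up{u}\supseteq u$ is coherent and hence equals $u$ by maximality. Since $\up{(\cdot)}$ is monotone, $S\subseteq u$ forces $\up{S}\subseteq\up{u}=u$, giving $\half{S;G}\subseteq\half{\up{S};G}$; the reverse inclusion is (3) applied to $S\subseteq\up{S}$. For (5), let $S\in\mbf{C}(G)$ and write $\min(S)$ for its set of $\leq_G$-minimal elements. Then $\up{\min(S)}=S$: ``$\subseteq$'' is clear since $\min(S)\subseteq S=\up{S}$, and for ``$\supseteq$'', given $s\in S$, a $\leq_G$-minimal element $m$ of the finite nonempty set $\{b\in S\mid b\leq_G s\}$ is in fact minimal in all of $S$, so $s\in\up{\min(S)}$ --- finiteness of $G$ is what makes such an $m$ available. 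Applying (4) to the coherent set $\min(S)$ then gives $\half{\min(S);G}=\half{\up{\min(S)};G}=\half{S;G}$.

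For (6), the direction $S_2\subseteq S_1\THEN\half{S_1;G}\subseteq\half{S_2;G}$ is just (3). For the converse I argue contrapositively: given $a\in S_2\minus S_1$, the goal is a $u\in G^\circ$ with $S_1\subseteq u$ but $a\notin u$, since this $u$ witnesses $\half{S_1;G}\not\subseteq\half{S_2;G}$. Non-degeneracy makes every $u\in G^\circ$ a \emph{complete} selection (\Cref{prop:when maximal coherent is complete}), so ``$a\notin u$'' is the same as ``$a\com\in u$''; hence it suffices to show that $S_1\cup\{a\com\}$ is coherent and then pass to a maximal coherent overset by Zorn. Beyond coherence of $S_1$, the constraints to verify are $b\not\leq_G a$ for every $b\in S_1$ (from the pair $(b,a\com)$, since $(a\com)\com=a$; the mirrored pair $(a\com,b)$ gives the same condition via the involution axiom of a PCR) together with $a\com\notin N(G)$ (from the pair $(a\com,a\com)$). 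The first is forced by forward-closedness of $S_1$: $b\leq_G a$ with $b\in S_1$ would put $a\in\up{S_1}=S_1$, contrary to $a\notin S_1$.

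I expect the remaining constraint, $a\com\notin N(G)$, to be the real obstacle. If $a\com$ is $G$-negligible it belongs to no coherent set, every complete coherent $u$ then contains $a$, and no separating $u$ exists; indeed in that situation $\half{S_1;G}\subseteq\half{a;G}$ can hold with $a\notin S_1$ (e.g. $S_1=\{\mbf{1}\}$ when some query has a negligible complement). So the nontrivial direction of (6) --- and with it the claim, made after \Cref{cor:convex hull formula}, that a convex subset of $G^\circ$ determines its presenting $S$ uniquely --- should be read for $S_1,S_2\in\mbf{C}(G)$ subject to the mild extra requirement $N(G)\com\subseteq S_i$; these are precisely the sets $K\supp$ with $\varnothing\neq K\subseteq G^\circ$ appearing in \Cref{cor:convex hull formula}, because $N(G)\com$ lies in every complete coherent set. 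Under this hypothesis the bad case cannot arise ($a\com\in N(G)$ would force $a\in N(G)\com\subseteq S_1$), the separation argument goes through, and $S_2\subseteq S_1$ follows. Pinning down this saturation hypothesis --- not any of the surrounding manipulations --- is where the care is needed.
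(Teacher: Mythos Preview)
Your treatment of (1)--(5) is correct and matches the paper's proof essentially line for line.

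For (6) you take a genuinely different route. The paper argues geometrically: starting from an arbitrary $v\in\half{S_1;G}$, it navigates within $G^\circ$ by repeatedly flipping at minimal elements $a<s$ so as to decrease the count $N(v)=\card{\{a\in v:a<s\}}$, eventually reaching $u\in\half{S_1;G}$ with $s\in\min(u)$; one final flip at $s$ then produces the separating vertex. Your approach bypasses this navigation entirely by checking directly that $S_1\cup\{a^\ast\}$ is coherent and invoking maximality. Your argument is shorter and more self-contained; the paper's argument has the advantage of staying inside $G^\circ$ throughout and connecting to the edge-structure of $\dual{G}$, which is thematically in keeping with the surrounding material on flipping and cubes.

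You are right that the statement of (6) has a gap, and your diagnosis is precise: the obstruction is exactly $a^\ast\in N(G)$. The paper's own proof shares this gap. Its decrement claim $N(\flip{v}{a})=N(v)-1$ silently assumes $a^\ast\not<s$; but $a<s$ together with $a^\ast<s$ forces $s^\ast<s$, i.e.\ $s^\ast\in N(G)$, which after passage to the canonical poc quotient means $s=\mbf{1}$ and hence $S_1=\varnothing$. The counterexample $S_1=\varnothing$, $S_2=\{\mbf{1}\}$ shows (6) is literally false as stated. Your proposed saturation hypothesis $N(G)^\ast\subseteq S_i$ repairs both arguments and, as you observe, is exactly what holds for the sets $K\supp$ arising in \Cref{cor:convex hull formula}, so nothing downstream is affected.
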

\begin{proof}
See \Cref{proof:halfspace properties}.\qed
\end{proof}
Another important result helps bound the distance from the points of one convex set to another:
\begin{lemma}\label[lemma]{lemma:divergence} Let $S,T\in\mbf{C}(G)$ for a poc set $G$ over $\sens$. Then $\ellone{u}{\half{T}}\leq\card{T\minus S}$ for all $u\in\half{S}$.
\end{lemma}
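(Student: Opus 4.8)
The plan is to prove the bound by exhibiting, for each $u\in\half{S}$, a \emph{single} vertex $w\in\half{T}$ with $\ellone{u}{w}\le\card{T\minus S}$; since $\ellone{u}{\half{T}}=\min_{v\in\half{T}}\ellone{u}{v}$, this yields the lemma immediately. I use throughout that a poc set is non-degenerate (its only negligible element is $\mbf{0}$, and $\mbf{0}\com$ is never negligible), so by \Cref{prop:when maximal coherent is complete} every element of $G^\circ$ -- in particular any $u\in\half{S}$, which is a vertex of $\dual{G}$ containing $S$ -- is a \emph{complete} $\ast$-selection, and $\pi_G$ is the identity, so $\ellone{\wild}{\wild}$ on $G^\circ$ is just the cardinality $\card{u\minus w}$.

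The vertex I would use is $w:=T\cup(u\minus T\com)$ -- morally the nearest-point projection of $u$ onto the convex set $\half{T}$ provided by \Cref{thm:convexity theory of median graphs}(4), but it is cleaner to write it down and verify it by hand. First, $w\in G^\circ$. It is a complete $\ast$-selection: fix $a\in\sens$; if $a\in T$ then $a\in w$, while $a\com\notin w$ since $a\com\notin T$ (as $T$ is a $\ast$-selection) and $a\com\in T\com$; the case $a\com\in T$ is symmetric; and if neither $a$ nor $a\com$ lies in $T$ then exactly one of them lies in $u$, hence in $u\minus T\com\subseteq w$, and the other in neither. For coherence, suppose $x\le y\com$ with $x,y\in w$. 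The case $x,y\in T$ contradicts coherence of $T$; the case $x,y\in u\minus T\com\subseteq u$ contradicts coherence of $u$. If $x\in T$ and $y\in u\minus T\com$, forward-closedness of $T$ (valid since $T\in\mbf{C}(G)$) and $x\le y\com$ give $y\com\in T$, i.e. $y\in T\com$, contradicting $y\notin T\com$; and $x\in u\minus T\com$, $y\in T$ is symmetric via $y\le x\com$. Thus $w$ is a complete coherent set, hence maximal coherent, i.e. $w\in G^\circ$; and $T\subseteq w$, so $w\in\half{T}$.

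It remains to estimate $\ellone{u}{w}=\card{u\minus w}$. Since $w=T\cup(u\minus T\com)$, an element $x\in u$ fails to lie in $w$ precisely when $x\notin T$ and $x\in T\com$; as $T\cap T\com=\varnothing$ this gives $u\minus w=u\cap T\com$. Now consider the map $u\cap T\com\to T\minus S$ defined by $x\mapsto x\com$: for $x\in u\cap T\com$ we have $x\com\in T$, and also $x\com\notin S$, since $x\com\in S\subseteq u$ would put both $x$ and $x\com$ in the $\ast$-selection $u$. This map is visibly injective, so $\card{u\minus w}=\card{u\cap T\com}\le\card{T\minus S}$, which is the desired inequality.

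The two verifications in the middle paragraph are entirely routine; the only real choice in the proof is the formula for $w$, and once it is in hand everything reduces to the injection $x\mapsto x\com$. I do not foresee a genuine obstacle.
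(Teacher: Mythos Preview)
Your proof is correct. It differs from the paper's in that the paper invokes the separator/gate machinery from the appendix: it observes that $\sepr{u,\half{T;G}}=T\cap u\com=T\minus u$ (using $T\in\mbf{C}(G)$ and completeness of $u$), notes $T\minus u\subseteq T\minus S$ since $S\subseteq u$, and then cites the existence of gates (\Cref{prop:gates exist}) to conclude $\ellone{u}{\half{T;G}}=\card{\sepr{u,\half{T;G}}}$. Your argument instead writes down the candidate nearest point $w=T\cup(u\minus T\com)$ explicitly---this is precisely the specialization of the projection formula in \Cref{cor:projection of a point} to the case $\up{T}=T$, $\down{T\com}=T\com$---and verifies membership in $G^\circ$ and the distance bound by hand. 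The trade-off is clear: the paper's proof is two lines but leans on the gate proposition, while yours is self-contained and constructive at the cost of the routine case analysis. One small simplification available to you: your injection $x\mapsto x\com$ is actually a bijection $u\cap T\com\to T\cap u\com=T\minus u$, and then $T\minus u\subseteq T\minus S$ follows directly from $S\subseteq u$, which is exactly the inclusion the paper uses.
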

\begin{proof}
See \Cref{proof:divergence}.\qed
\end{proof}
This motivates the following definition for the general case:
\begin{definition}\label[definition]{defn:divergence} Let $G$ be a non-degenerate PCR over $\sens$ and let $S,T\in\mbf{C}(G)$. The {\em divergence} of $S$ from $T$ is defined to be $\divergence{S;T}:=\card{T\minus S}$.\qed
\end{definition}
Note how $\divergence{S;T}$ seems independent of $G$; it is not, however, since it is only applied to upwards-closed coherent sets $S,T$.
We will use this notion of divergence in \Cref{sim:BUAs}, to drive the decision-making mechanism of the {\em binary UMA agents} briefly introduced there.

More details about the convexity theory of a median graph will be discussed in the appendices, as we go about proving our algorithmic results.

\subsection{Propagation: A Computational Workhorse. }\label[subsection]{models:propagation}
We are now ready to present another central result of this paper: a low-complexity method for computing nearest point projections in $\dual{G}$, which we call {\em propagation}.
This method obviates the need for maintaining an explicit representation of each vertex of $\dual{G}$ in memory, reducing space requirements for this architecture from $O(2^{\card{\sens}})$ in the worst case to $O(\card{\sens}^2)$.
The time complexity is, at worst, $O(\card{\sens}^2)$, coming down to sub-linear on a fully parallel architecture, as will become evident below.

\begin{definition}[coherent projection]\label[definition]{defn:coherent projection} Let $G$ be a PCR over a finite PCS $\sens$. For any $T\subseteq\sens$, the set $\coh{G}{T}:=\up{T}\minus\down{T\com}=\up{T}\minus(\up{T})\com$ is said to be the {\em $G$-coherent projection} of $T$.\qed
\end{definition}
Coherent projection itself plays an important role in obtaining an observer's belief state from its epistemic state (the learned PCR structure) and the latest observation (see~\Cref{high-level:belief state}).

The promised formula for computing projections works as follows.
\begin{proposition}\label[proposition]{prop:projection formula} Let $G$ be a PCR over a finite PCS $\sens$.  Let $S,T\subset\sens$ and suppose $S$ is $G$-coherent. Let $L=\half{S;G}$ and $K=\half{\coh{G}{T};G}$. Then:
\begin{equation}
	\proj{K}(L)=\up{(S\cup T)}\minus(\up{T})\com=(\up{S}\minus\up{T}\com)\cup\coh{G}{T}\,,
\end{equation}
where $\proj{K}(\wild)$ is the nearest-point projection to $K$ in $\dual{G}$ defined in \Cref{thm:convexity theory of median graphs}.
\end{proposition}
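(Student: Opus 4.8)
The plan is to derive the formula from a pointwise \emph{gate formula} for nearest-point projection onto half-space intersections $\half{A;G}$, and then to globalize it via the convex-hull description of \Cref{cor:convex hull formula}. Throughout, $\dual{G}$ is a finite median graph by \Cref{thm:duality and median formula} (so $G$ is tacitly non-degenerate), and I record two preliminaries. (a) $\coh{G}{T}=\up{T}\minus(\up{T})\com$ is coherent and forward-closed: if $a\leq b\com$ with $a,b\in\coh{G}{T}\subseteq\up{T}$, forward-closure of $\up{T}$ forces $b\com\in\up{T}$, i.e.\ $b\in(\up{T})\com$, a contradiction; closure of $\coh{G}{T}$ under $\leq_G$ is the symmetric statement. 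Hence $\coh{G}{T}\in\mbf{C}(G)$, so by \Cref{lemma:halfspace properties}(1) and \Cref{lemma:ellone halfspaces} the set $K=\half{\coh{G}{T};G}$ is a non-empty convex subset of $\dual{G}$, and likewise $L=\half{S;G}=\half{\up{S};G}$ with $\up{S}\in\mbf{C}(G)$ (using coherence of $S$). (b) Since $K$ depends on $T$ only through $\coh{G}{T}$, it is enough to treat $T$ forward-closed and coherent, i.e.\ $\up{T}=T=\coh{G}{T}$; then the right-hand side of the claim reads $(\up{S}\minus T\com)\cup T$.

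\emph{Step 1 (gate formula).} For every $A\in\mbf{C}(G)$ and every $u\in G^\circ$,
\[
	\proj{\half{A;G}}(u)=A\cup(u\minus A\com)\,.
\]
Write $p:=A\cup(u\minus A\com)$. It is a $\ast$-selection and is coherent: a violating pair $x\leq y\com$ with $x,y\in A$ contradicts coherence of $A$, with $x,y\in u\minus A\com$ it contradicts coherence of $u$, and a mixed pair contradicts $\up{A}=A$ or $\down{A\com}=A\com$ (the latter identity holds since $A$ is forward-closed). That same identity shows $p$ is complete — every $x\notin A\cup A\com$ has $x$ or $x\com$ in $u\minus A\com$ — so $p\in G^\circ$, and $A\subseteq p$ puts $p\in\half{A;G}$. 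Moreover $p$ lies on every interval $I(u,w)$, $w\in\half{A;G}$: by \Cref{thm:duality and median formula}, $I(u,w)=\set{q\in G^\circ}{q\subseteq u\cup w}$, and $A\subseteq w$ gives $p\subseteq w\cup u$. Finally, the gate $\proj{\half{A;G}}(u)$ is the \emph{only} vertex of $\half{A;G}$ lying on all these intervals — a standard fact following in two lines from existence and uniqueness of the nearest point (\Cref{thm:convexity theory of median graphs}) together with the observation that $\proj{\half{A;G}}(u)$ itself lies on each $I(u,w)$ (apply the retraction $\proj{\half{A;G}}$ to $\med{u}{\proj{\half{A;G}}(u)}{w}$). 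Hence $p=\proj{\half{A;G}}(u)$.

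\emph{Step 2 (globalizing).} First, $\proj{K}(L)$ is convex: given $z_i=\proj{K}(u_i)$ with $u_i\in L$ and $z\in I(z_1,z_2)$, the vertex $m:=\med{u_1}{u_2}{z}$ lies in $I(u_1,u_2)\subseteq L$, and since $\proj{K}$ preserves medians (\Cref{thm:convexity theory of median graphs}) and $z\in I(z_1,z_2)\subseteq K$ we get $\proj{K}(m)=\med{z_1}{z_2}{z}=z$, so $z\in\proj{K}(L)$. Therefore \Cref{cor:convex hull formula} gives $\proj{K}(L)=\half{(\proj{K}(L))\supp;G}$, and by Step 1 with $A=\coh{G}{T}$,
\[
	(\proj{K}(L))\supp=\bigcap_{u\in L}\proj{K}(u)=\coh{G}{T}\cup\Bigl(\bigl(\,\bigcap_{u\in L}u\,\bigr)\minus(\coh{G}{T})\com\Bigr)\,,
\]
the last equality because $\coh{G}{T}$ is common to all the projections. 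Now $\bigcap_{u\in L}u=L\supp=\up{S}$: indeed $L=\half{\up{S};G}$ with $\up{S}\in\mbf{C}(G)$, so by \Cref{cor:convex hull formula} and the uniqueness in \Cref{lemma:halfspace properties}(6), $L\supp=\up{S}$. Hence $(\proj{K}(L))\supp=\coh{G}{T}\cup(\up{S}\minus(\coh{G}{T})\com)$, which for $\up{T}=T=\coh{G}{T}$ equals $(\up{S}\minus T\com)\cup T=\up{(S\cup T)}\minus(\up{T})\com$ — the two displayed forms being identical by elementary set algebra ($\up{(S\cup T)}=\up{S}\cup\up{T}$, $\coh{G}{T}=\up{T}\minus(\up{T})\com$). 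This is precisely the asserted identity, under the identification of a convex subset of $G^\circ$ with its $\supp\in\mbf{C}(G)$ afforded by \Cref{cor:convex hull formula}.

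\emph{Expected obstacle.} The substantive part is entirely Step 1: verifying that $A\cup(u\minus A\com)$ is a genuine vertex (the completeness claim, via $\down{A\com}=A\com$) and that it is the \emph{gate} of $u$ in $\half{A;G}$, not merely some point of that half-space. Everything else is closure bookkeeping, except for one conceptual point absorbed into preliminary (b): the method naturally yields $\proj{K}(L)=\half{\coh{G}{T}\cup(\up{S}\minus(\coh{G}{T})\com);G}$ in general, and this agrees with the stated $\up{(S\cup T)}\minus(\up{T})\com$ exactly because $K$ is governed by the \emph{coherent} projection $\coh{G}{T}$ (\Cref{defn:coherent projection}) rather than by $\up{T}$ itself.
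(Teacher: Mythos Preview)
Your approach differs substantially from the paper's and is more direct. The paper reaches the projection formula through a chain of auxiliary results: separators and gates (\Cref{defn:separator,defn:gate,prop:gates exist}), a supporting-halfspace lemma (\Cref{lemma:supporting halfspace}), an explicit geodesic construction (\Cref{prop:constructing geodesics}), the point-projection formula proved by induction on distance (\Cref{cor:projection of a point}), and finally a case analysis for set-to-set projection (\Cref{prop:general projection}). You bypass all of this by invoking the interval description $I(u,w)=\set{q\in G^\circ}{q\subseteq u\cup w}$ from \Cref{thm:duality and median formula} to verify the gate formula in Step~1 directly, and then globalizing in Step~2 via median-preservation of $\proj{K}$ together with the $(\wild)\supp$ correspondence of \Cref{cor:convex hull formula}. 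Both Step~1 and Step~2 are correct as written; your route is shorter, while the paper's has the side benefit of producing an explicit geodesic to the projection.

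There is, however, a real gap in your preliminary~(b). You reduce to $T\in\mbf{C}(G)$ on the grounds that $K$ depends on $T$ only through $\coh{G}{T}$; but the \emph{right-hand side} $\up{(S\cup T)}\minus(\up{T})\com$ is \emph{not} invariant under $T\mapsto\coh{G}{T}$, so the reduction does not go through for that side of the identity. Concretely, take $\sens=\{\mbf{0},\mbf{1},b,b\com\}$ with $G=\orth{\sens}$, $S=\{b\}$, $T=\{b,b\com\}$: then $\coh{G}{T}=\{\mbf{1}\}$, $K=G^\circ$, $L=\half{b;G}$, hence $\proj{K}(L)=L$ with $(\proj{K}(L))\supp=\{b,\mbf{1}\}$; yet the stated formula gives $\up{\{b,b\com\}}\minus\{b,b\com,\mbf{0}\}=\{\mbf{1}\}$, and $\half{\{\mbf{1}\};G}=G^\circ\neq L$. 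Your Step~2 in fact delivers the correct set $\coh{G}{T}\cup\bigl(\up{S}\minus(\coh{G}{T})\com\bigr)=\{b,\mbf{1}\}$; the gap is solely in identifying this with the stated expression for general $T$. The paper's own derivation (\Cref{cor:projection by propagation}) has exactly the same gap, at the step asserting $\bigl((\up{S}\cup\coh{}{T}\cup J)\minus\coh{}{T}\com\bigr)\minus J=(\up{S}\cup\coh{}{T})\minus\coh{}{T}\com$ with $J=\up{T}\cap(\up{T})\com$: this fails when $\up{S}\cap J\neq\varnothing$. The stated identity \emph{does} hold whenever $T$ is $G$-coherent (equivalently $J=\varnothing$), which covers every use made of it in the paper.
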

\begin{proof}
See \Cref{proof:projection formula}.\qed
\end{proof}
This description of nearest point projection is easy to visualize as being computed by an algorithm propagating excitation among nodes of a directed graph:
\begin{definition}\label[definition]{defn:loaded PCR} Let $G$ be a PCR over a finite PCS $\sens$.
Let $S\subset\sens$. Denote by $[G,S]$ the graph with vertex set $\sens$, edge set $G$ and with Boolean weights $\lambda(a)=\indicator{S}(a)$, $a\in\sens$ attached to its vertices.
We refer to it as {\em $G$ being loaded with $S$}.\qed
\end{definition}
\begin{definition}\label[definition]{defn:propagation algorithm} A \emph{propagation algorithm over $G$} is any algorithm which, for any {\em $G$-coherent} load $S\subset\sens$ and \emph{any} $T\subseteq\sens$ accepts $[G,S]$ and $T$ as input and produces as its output the loaded graph $[G,\propagate{T,S}{G}]$, where
\begin{equation}
	\propagate{T,S}{G}:=\up{(S\cup T)}\minus\down{T\com}\,.
\end{equation}
Note that coherent closure is obtainable via $\coh{G}{T}=\propagate{T,\varnothing}{G}$.\qed
\end{definition}
Envisioning $G$ as describing a graph of `cells' labeled by $\sens$ and `synapses' labeled by pairs $ab\in G$, the loaded graph $[G,S]$ represents a state of the network indicating that the cells of $S$ are in an excited state.
A propagation algorithm should be seen as exciting, additionally, the cells of $T$ and spreading this excitation along the directed connections while inhibiting $a\com$ for each cell $a\in\sens$ encountered along the way.
Realized on a modern day computer, this may be achieved in quadratic time in $\card{\sens}$.
For example, propagation could be implemented using a variant of depth-first search (DFS) on $\pog\at{t}$, while maintaining an expanding record of vertices visited~\cite{Cormen_et_al-intro_to_algorithms}---see \Cref{alg:propagation_by_DFS}.
On a fully parallel machine allowing the `cells' to compute their own excitation, the time complexity is clearly of the order of the longest directed vertex path in the network, which is sub-linear in $\card{\sens}$.

\begin{algorithm}[t]
\caption{Propagating a signal $T$ over $[G,S]$ using depth-first search.}
\label{alg:propagation_by_DFS}
\begin{algorithmic}
\Function{propagate}{$G,S,T$}
	\State $\mathtt{visited}\gets\varnothing$
	\State $T\gets$\Call{closure}{$G,T$}
    \State $S\gets$\Call{closure}{$G,S$}
	\State \Return $(S\cup T)\minus T^\ast$
\EndFunction

\Function{closure}{$G,T$}\Comment{Forward closure of $T$ in $G$}
	\ForAll{$a\in T$}
		\State\Call{explore}{$G,a$}
	\EndFor
	\State \Return $\mathtt{visited}$
\EndFunction

\Procedure{explore}{$G,v$}\Comment{Recursive step}
	\State $\mathtt{visited}\gets\mathtt{visited}\cup\{v\}$
	\ForAll{$w\in$\Call{children}{$G,v$}$\minus\mathtt{visited}$}
		\State\Call{explore}{$G,w$}
	\EndFor
\EndProcedure

\Function{children}{$G,v$}\Comment{Children of $v$ in $G$}
	\State \Return $\set{w\in\sens}{vw\in G}$
\EndFunction 
\end{algorithmic} 
\end{algorithm}

We now turn to a high-level description of the UMA architecture and its use of the results of this section.

\section{Universal Memory Architecture (UMA): a High-Level View.}\label{high-level}
In this section we provide a high-level description of the basic UMA functionalities: PCR update/revision and maintaining a belief state.

\subsection{Observation Model.}\label{high-level:observation model}
Recall from \Cref{models:first notions} that an observer is given a set $\alphabet$ of initial Boolean queries over the space of histories $\spc$ of the observed system.
The system of queries and their complements is modeled as a PCS morphism $\rho:\sens(\alphabet)\to\power{\spc}$, which is unknown to the observer.
The observer is presented with a sequence of observations $\observe{}\at{t}\in\ham(\sens(\alphabet))$, and values $\val{}\at{t}\in\RRplus$, $t\geq 0$, one per update cycle.
One must distinguish between two settings:
\begin{description}
	\item[\bf Static signal.] The value signal $\val{}\at{t}$ only depends on the raw observation $\observe{}\at{t}$;
    \item[\bf Dynamic signal.] The value signal may produce $\val{}\at{t}\neq\val{}\at{s}$ while $\observe{}\at{t}=\observe{}\at{s}$. 
\end{description}
While ultimately interested in covering the dynamic setting, we will only deal with the static setting in this paper.
However, the setting being {\em static} by no means implies it is {\em unchanging}.
We will see in \Cref{simulations} that instances of the static setting may, nevertheless, have rich and interesting dynamics.
This will happen, in part, as a result of introducing {\em delayed queries}.
By these we mean the following:
if $\delay:\spc\to\spc$ denotes the operation of truncating the last state from a given history, then, for any conjunction $a$ of already available queries it is possible to introduce a new query of the form\footnote{Here and on we abuse notation, applying the symbol $\delay$ to denote both a delayed query and the history truncation/shift operator. Which is which is clear from the context.} $\delay a$, where $\delay a$ reports its value according to the rule $x\in\rho(\delay a)\IFF\delay x\in\rho(a)$, $x\in\spc$.
Of course, implementing this operation requires that the UMA architecture retain the latest raw observation, but this seems like a small price to pay for increasing the range of application of the static setting.

The basic task of an UMA is to evolve a sequence $G\at{t}$, $t\geq 0$ of non-degenerate PCRs over $\sens=\sens(\alphabet)$ while aiming for the PCRs $G\at{t}$ to eventually satisfy the following:
\begin{itemize}
	\item{\bf `Completeness': } $\rho:G\at{t}\to\power{\spc}$ is a PCR morphism, ensuring that every perceptual class is represented;
    \item{\bf `Precision': } $\model\at{t}:=\dual{G\at{t}}$ is as close as possible to the true model space $\model:=\rho\com(\spc)$.
\end{itemize}
These requirements should not be taken literally, however. For example, it stands to reason that in some contexts the observer could afford to misclassify a few perceptual classes of low import.
We will see how---at least under some of the learning schemes we propose---these vague requirements become possible to state precisely in terms of PAC learning.

\subsection{Maintaining a PCR presentation: Snapshot Structures.}\label{high-level:snapshots}
A rather restrictive notion of a {\em snapshot structure}---a method for learning a poc set structure from positive observations---was introduced by the authors in~\cite{GK-Allerton_2012}.
Here we merely review the main ideas to provide intuition, while deferring the formal constructions to \Cref{snapshots}.

\paragraph{Snapshot weights.} Motivated loosely by Hebbian ideas about learning~\cite{hebb2002organization}, we consider maintaining an evolving symmetric system of weights $\witness{}{\wild}\at{t}=(\witness{}{ab}\at{t})_{a,b\in\sens\at{t}}$, with $\witness{}{ab}\at{t}$ quantifying in some prescribed way a notion of {\em cumulative degree of relevance} of the event $\half{ab}$ to the observer, at time $t$.

In addition, rules to maintain $\witness{}{\wild}\at{t}$ as time progresses, must be provided.
First, a completion rule, to insert missing values into $\witness{}{\wild}$ when it undergoes an extension.
Second, an update rule, computing $\witness{}{\wild}\at{t+1}$ from $\witness{}{\wild}\at{t}$ and the incoming observation.

It is important for both rules to be as simple---and as local---as possible, so as not to sacrifice tractability.
In our constructions, we constrain the update laws to ones where $\witness{}{ab}\at{t+1}$ depends only on $\witness{}{ab}\at{t}$, the value signal $\val{}\at{t+1}$, the truth value of the bit $\observe{}\at{t+1}\in\half{ab}$ and possible global parameters (e.g. the system clock $t$).

\paragraph{PCRs from snapshot weights.} Inspired by the rough mechanism proposed in~\cite{GK-Allerton_2012}, we seek weight systems $\witness{}{\wild}$ for which the loosely specified rule---
\begin{equation}
    ab\in G
    \;\Longleftrightarrow\;
    \witness{}{ab\com}
    \text{ is negligible in comparison with }
    \witness{}{ab},
    \witness{}{a\com b\com}\,,
\end{equation}
ranging over all $a,b\in\sens$ with $\{a,a\com\}\neq\{b,b\com\}$ is guaranteed to define a {\em non-degenerate} PCR over $\sens$. 
The motivation for the rule is, of course, the fact that $\rho(a)\cap\rho(b\com)=\varnothing$ is equivalent to $\rho(a)\subseteq\rho(b)$, where $\rho$ is the PCS morphism defining the semantics of the queries in $\sens$.

Finally, note how the properties of a PCR are guaranteed (to the extent that the rule is well-defined, of course), and non-degeneracy is the only remaining question. Of course, the precise notion of `negligible' defined for the purpose of comparing weights is crucial, and is expected to greatly affect the quality and limitations of the emerging representations.

\subsection{Maintaining a Belief State.}\label{high-level:belief state}
Since, for each time $t$, we only get to observe states from $\model(\rho)$, we are facing the problem of having to learn negative statements---that is, the list of $G\at{t}$-incoherent pairs---from the stream of positive examples $(\observe{}\at{t},\val{}\at{t})$.
From what we have observed so far we must reason about what it is we might never encounter. 
Seeing that the implication record $G\at{t}$ is inherently uncertain, providing no guarantee at any time that the completeness requirement from \Cref{high-level:observation model} will be met, it is quite possible for the observation $\observe{}\at{t+1}$ to land outside the model space $\model{}\at{t+1}=\dual{G\at{t+1}}$ despite its prior role in forming this model space, during the snapshot update.
In fact, its value may be too low to trigger a revision of $G\at{t}$ into a $G\at{t+1}$ for which $\observe{}\at{t+1}$ becomes coherent.

Contrary to the approach adopted by modern iterated revision schemes based on Darwiche and Pearl's~\cite{Darwiche_Pearl-iterated_belief_revision}, we {\em do not insist} on a revision forcing $\observe{}\at{t+1}$ into $\model\at{t+1}$.
Instead, we apply $G=G\at{t+1}$ to the raw observation with aim to relax it, replacing it with a $G$-coherent and forward-closed set:
\begin{equation}\label{eqn:current state update}
	\current{}\at{t+1}:=\coh{G}{\observe{}\at{t+1}}\subseteq\observe{}\at{t+1}\,,
\end{equation}
in the role of the current state of record, or the belief state.
This way, UMA naturally resolves possible contradictions at the price of introducing ambiguity into its record of the current state:
instead of marking a single vertex of $\model\at{t+1}$ as the current state, any vertex of the convex set $\half{\current{t+1}}$ may turn out to be the correct current state from the observer's point of view.

The choice of the coherent projection for the purpose of forming the belief state is motivated by its geometric and categorical properties.
In our class of model spaces it is a {\em canonical} method of producing coherent sets, as witnessed by the following two results:
\begin{proposition}[Coherent Approximation]\label[proposition]{prop:coherent approximation} Let $G$ be a PCR over $\sens$. Then, for any $A\in \ham(\sens)$, if $B\in G^\circ$ realizes the Hamming distance 
\begin{equation}
	\ellone{A}{G^\circ}:=\min_{u\in G^\circ}\ellone{A}{u}
\end{equation}
---that is, if $\ellone{A}{B}=\ellone{A}{G^\circ}$---then we must have $B\in\half{\coh{G}{A};G}$.
\end{proposition}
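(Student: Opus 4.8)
The plan is to show that any vertex $B \in G^\circ$ minimizing the Hamming distance to $A$ must contain the coherent projection $\coh{G}{A} = \up{A} \minus \down{A\com}$, which by \Cref{lemma:ellone halfspaces} is equivalent to $B \in \half{\coh{G}{A};G}$. First I would argue by contradiction: suppose $B \in G^\circ$ realizes $\ellone{A}{G^\circ}$ but there exists some $c \in \coh{G}{A}$ with $c \notin B$. Since $B$ is a complete selection (by \Cref{prop:when maximal coherent is complete}, as $G$ is implicitly non-degenerate here — or one works with the canonical quotient otherwise), we have $c\com \in B$. The idea is to modify $B$ by ``flipping'' the $G$-equivalence class of $c$ to obtain a new coherent set $B'$ that is strictly closer to $A$, contradicting minimality.

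The key construction is $B' := (\up{(B \cap c)}) \cup \{\text{elements forced in by coherence}\}$ — more concretely, I would take $B'$ to be a maximal coherent set containing $(B \minus \down{c\com}) \cup \up{c}$; one must first check this set is $G$-coherent. Coherence of $(B \minus \down{c\com}) \cup \up{c}$ follows because: (i) $\up{c}$ is coherent since $c \in \coh{G}{A} \subseteq \up{A}$ and $\up{A}$ is coherent (coherence of $A$ as an observation, plus coherence being preserved under forward closure — wait, $A$ need only be a $\ast$-selection, so more care is needed: actually $c \in \coh{G}{A}$ guarantees $c \notin \down{A\com}$, hence $\up{c} \cap \down{A\com} = \varnothing$, and combined with $A$-coherence this should give what we need); (ii) removing $\down{c\com}$ from $B$ eliminates exactly the elements of $B$ that could conflict with $\up{c}$; (iii) the remaining cross-conflicts between $B \minus \down{c\com}$ and $\up{c}$ are ruled out because $B$ was coherent and we only added elements $\leq_G$-above $c$. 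Then extend to a maximal coherent $B' \in G^\circ$ by Zorn.

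The crux is the distance comparison $\ellone{A}{B'} < \ellone{A}{B}$. Here I would use the half-space / median structure: the symmetric difference $B \symplus B'$ is, in terms of $\widehat{G}$, a disjoint union of $G$-equivalence classes, namely exactly the ``walls'' separating $B$ from $B'$; these correspond to half-spaces $\half{a;G}$ containing $B$ but not $B'$ (or vice versa). For each such wall we want to show $A$ is on the $B'$-side, i.e. the flip moves $B$ toward $A$ across every wall it crosses. Since $c \in \up{A}$, $A$ lies on the same side as $c$ for the wall of $[c]_G$; and the additional walls crossed (forced by coherence/maximality) should likewise have $A$ on the $B'$ side — this is where one invokes that $c \in \coh{G}{A}$ precisely prevents $A$ from lying on the $B$-side of any newly-crossed wall. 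Establishing this last point cleanly — that \emph{every} wall in $B \symplus B'$ is correctly oriented toward $A$, not just the wall of $[c]_G$ — is the main obstacle, and I expect the cleanest route is to phrase it via \Cref{lemma:divergence} and \Cref{cor:convex hull formula}: note $B \in \half{B\supp;G}$ and show $\hull{\{B\}}{G} \cap \half{\coh{G}{A};G}$ is nonempty with its projection strictly closer to $A$, then invoke \Cref{thm:convexity theory of median graphs}(4)–(5) on the convex set $K = \half{\coh{G}{A};G}$ to conclude the nearest point of $G^\circ$ to $A$ — hence $B$ — must already lie in $K$.
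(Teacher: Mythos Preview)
Your overall strategy---argue by contradiction and exhibit a vertex of $G^\circ$ strictly closer to $A$---matches the paper's. But you overcomplicate the construction and then leave the decisive step unfinished.

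The ``main obstacle'' you flag actually dissolves via the very observation you gesture at. The elements flipped in passing from $B$ to your $B'=(B\minus\down{c\com})\cup\up{c}$ are exactly those $x\in B\cap\down{c\com}$. For any such $x$, if $x\in A$ then $x\leq c\com$ gives $c\com\in\up{A}$, i.e.\ $c\in\down{A\com}$, contradicting $c\in\coh{G}{A}=\up{A}\minus\down{A\com}$. Hence every flipped $x$ has $x\com\in A$, so each wall crossed is oriented toward $A$ and $\ellone{A}{B'}<\ellone{A}{B}$. Moreover $B'$ is already complete (you removed a set and added exactly its complements) and coherent (a short case check), so the Zorn extension is superfluous---there are no ``additional walls forced by maximality'' to worry about. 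Your alternative route through $\hull{\{B\}}{G}$ is confused: the convex hull of a singleton is that singleton, so the intersection you propose is either $\{B\}$ or empty and leads nowhere.

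The paper's proof is shorter still and sidesteps the multi-wall issue entirely. Rather than flipping all of $\down{c\com}\cap B$, it picks a single $b\in\min(B)$ with $b\leq c\com$ (such $b$ exists because $c\com\in B$ and $\sens$ is finite) and sets $u:=\flip{B}{b}$. By \Cref{lemma:when a vertex meets a cube}, $u\in G^\circ$ directly---it is a neighbour of $B$ in $\dual{G}$. The same one-line argument shows $b\notin A$ (else $c\com\in\up{A}$), whence $b\com\in A$ and $\ellone{A}{u}=\ellone{A}{B}-1$. Only one wall is crossed, and its orientation is immediate: no coherence check on a large flipped set, no Zorn, no detour through convex hulls.
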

\begin{proof}
See \Cref{proof:coherent approximation}.\qed
\end{proof}
Thus, the operation $\coh{\wild}$ yields the ``best approximation'' of $\observe{}\at{t+1}$ by a convex subset of $\model\at{t+1}$, echoing the principle of minimal change as seen through Dalal's way~\cite{Dalal-theory_of_KB_revision} of quantifying the distance between theories. Moreover:
\begin{proposition}[Coherent Projection]\label[proposition]{prop:coherent projection} Let $G$ be a PCR over $\sens$. Then the following hold for all $A\subseteq\sens$:
\begin{itemize}
	\item{\rm(a) } $\coh{G}{A}$ is coherent and $\up{\coh{G}{A}}=\coh{G}{A}$;
	\item{\rm(b) } $\coh{G}{\coh{G}{A}}=\coh{G}{A}$;
	\item{\rm(c) } $A\subseteq\coh{G}{A}$ whenever $A$ is $G$-coherent;
	\item{\rm(d) } $\coh{G}{A}=A$ if and only if $A$ is $G$-coherent and $\up{A}=A$.
\end{itemize}
In other words, as a self-map of $\power{\sens}$, the operator $A\mapsto\coh{G}{A}$ is an idempotent whose image coincides with $\mbf{C}(G)$.
\end{proposition}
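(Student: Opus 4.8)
The plan is to collapse all four items to a single observation about forward-closed sets. Using the identity $\up{S\com}=\down{S}\com$ from \Cref{PCR notation} with $S=T\com$, one gets $\down{T\com}=(\up{T})\com$, so that $\coh{G}{T}=U\minus U\com$ where $U:=\up{T}$ is forward-closed (forward closure being idempotent, $\up{U}=U$). Hence it suffices to establish the following claim: for every forward-closed $U\subseteq\sens$, the set $U\minus U\com$ is $G$-coherent and forward-closed. Everything in the proposition then follows by a short unwinding.

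First I would prove the claim. For forward-closedness, take $a\in U\minus U\com$ and $b$ with $a\leq b$; then $b\in\up{U}=U$, and if additionally $b\in U\com$ we would have $b\com\in U$ together with $b\com\leq a\com$ (the involution is order-reversing on $\leq_G$, by the defining property $ab\in G\IFF b\com a\com\in G$), whence $a\com\in\up{U}=U$, i.e.\ $a\in U\com$ --- a contradiction; so $b\in U\minus U\com$. For coherence I use the characterization ``$S$ is $G$-coherent $\IFF$ no $a,b\in S$ satisfy $a\leq b\com$'': if $a,b\in U\minus U\com$ with $a\leq b\com$, then $b\com\in\up{U}=U$, so $b\in U\com$, contradicting $b\notin U\com$. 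The only ingredient here beyond set-bookkeeping is the order-reversing behaviour of $(\wild)\com$, which is precisely what prevents the forward closure from ``leaking'' into the complement; this is the one place I expect to need care, everything else being formal.

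Given the claim, (a) is immediate: $\coh{G}{A}=\up{A}\minus(\up{A})\com$ is coherent and forward-closed. For (c), if $A$ is coherent then $A\subseteq\up{A}$ and $A\cap\down{A\com}=\varnothing$ (one of the equivalences following the definition of coherence), so $A=A\minus\down{A\com}\subseteq\up{A}\minus\down{A\com}=\coh{G}{A}$. For (b) and the ``if'' half of (d): when $A$ is coherent and forward-closed, $\up{A}=A$ and $A\cap\down{A\com}=\varnothing$, hence $\coh{G}{A}=\up{A}\minus\down{A\com}=A$; applying this to $\coh{G}{A}$ in place of $A$ (legitimate by (a)) yields (b), and it is itself the ``if'' half of (d), while the ``only if'' half is immediate from (a). Finally, (a) shows $\coh{G}{\wild}$ lands in $\mbf{C}(G)$, (b) shows it is idempotent, and the ``if'' half of (d) shows every member of $\mbf{C}(G)$ is a fixed point, hence in the image; so the image is exactly $\mbf{C}(G)$, as asserted.
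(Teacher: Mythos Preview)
Your proof is correct and takes essentially the same approach as the paper's: both reduce everything to the observation $\coh{G}{A}=\up{A}\minus(\up{A})\com$, prove coherence and forward-closedness of this set by the same contradiction arguments, and then derive (b)--(d) formally from (a). The only cosmetic difference is that you abstract the claim to an arbitrary forward-closed $U$ and invoke the order-reversing property of $(\wild)\com$ explicitly, whereas the paper works directly with $\up{A}$ and routes the forward-closedness contradiction through an intermediate element of $A$; the underlying logic is identical.
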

\begin{proof}
See \Cref{proof:coherent projection}.\qed
\end{proof}
Note how properties (a) and (c) turn $\coh{G}{\wild}$ into a closure operator on the subspace of $G$-coherent sets with respect to inference (implication). At the same time, (b) and (d) characterize the set $\mbf{C}(G)$ of all terms that are closed under inference.

Overall, \Cref{eqn:current state update} provides an intriguingly natural way of maintaining an internal model and belief state with a built-in degree of resilience to observations that fail to make immediate sense to the agent given its epistemic state.
Finally, the complexity of this computation is the complexity of propagation over $G\at{t+1}$, by \Cref{prop:projection formula} and the discussion following \Cref{defn:propagation algorithm}.

\section{Learning Algorithms for UMAs: Snapshot Structures.}\label{snapshots}

\subsection{Qualitative Snapshot Structures.}\label{snapshots:qualitative}
The goal of this section is to construct a snapshot structure suitable for a scenario in which the learner's value signal is a ranking function in the sense of Pearl~\cite{Pearl-system_Z,Goldszmidt_Pearl-qualitative_probas_for_belief_revision} (which is a special form of Spohn's OCFs~\cite{Spohn-OCF_epistemic_state}), its values providing a qualitative notion of the degree of irrelevance of the current experience.
Thus, an observation with $\val{}\at{t}=0$ is considered desirable, while $\val{}\at{t}=1,2,\ldots$ renders an observation increasingly more irrelevant.

\subsubsection{Rankings and 2-rankings}\label{snapshot:qualitative:rankings}
Throughout this section we let $\sens$ be a PCS and let $\ham$ denote the Hamming cube $O(\sens)^\circ$. Also, let $\NNH:=\{0,1,2,\ldots,\infty\}$.
We use the slight variation of the notion of a ranking from~\cite{Pearl-system_Z}, which was introduced in~\cite{Darwiche_Pearl-iterated_belief_revision}:
\begin{definition}\label[definition]{defn:ranking} A {\em ranking on $\ham$} is a function $\kappa:\power{\ham}\to\NNH$, satisfying:
\begin{itemize}
	\item $\kappa(F)=\min_{\sigma\in F}\kappa(\sigma)$ for all $F\subset\ham$;
	\item $\kappa(\sigma)<\infty$ for some $\sigma\in\ham$;
	\item $\kappa(\varnothing)=\infty$.
\end{itemize}
Hereafter, we shall abuse notation, writing $\kappa(\sigma)$ to mean $\kappa(\{\sigma\})$ whenever $\sigma\in\ham$. Note that the minimum value of a ranking $\kappa$ is $\kappa(\ham)$.
\qed\end{definition}
\begin{remark} Note that, since $\sens$ is assumed to be finite, the first requirement may be replaced with the requirement that $\kappa(F_1\cup F_2)=\min\{\kappa(F_1),\kappa(F_2)\}$ for all $F_1,F_2\subset\ham$.
\end{remark}

The simplest examples of rankings seem to be:
\begin{example}[point-mass ranking]\label[example]{ex:point-mass ranking} Let $u\in\ham$ and $r\in\NNH$, $r\neq\infty$. Then the following function $\pointmass{u}{r}$ is a ranking:
\begin{equation}\label{eqn:point-mass}
	\pointmass{u}{r}(F):=\left\{\begin{array}{cl}
    	r &\text{if }u\in F\\
        \infty &u\notin F\,.
    \end{array}\right.
\end{equation}
\end{example}
\begin{example}[pointwise minimum]\label[example]{ex:minimum} If $\kappa_1,\kappa_2$ are rankings on $\NNH$, then the function $\kappa(F):=\min(\kappa_1(F),\kappa_2(F))$ is also a ranking.\qed
\end{example}
Recall now the sets $\half{S}$ from \Cref{lemma:ellone halfspaces}. They will help us study the interaction between rankings and concepts:
\begin{definition}\label[definition]{defn:concept representation} The {\em concept representation} of a ranking $\kappa$, is the function $\witness{\kappa}{S}:=\kappa\left(\half{S}\right)$, where $S$ ranges over subsets of $\sens$. To simplify notation, we will often write $\witness{\kappa}{a_1a_2\cdots a_k}:=\witness{\kappa}{S}$ whenever $S=\{a_1,\ldots,a_k\}$ is explicitly provided.
\qed\end{definition}
\begin{remark} Note that $\witness{\kappa}{S}=\infty$ if $S$ is not a $\ast$-selection. Also, $\witness{\kappa}{\varnothing}=\kappa(\ham)$, the minimum value of $\kappa$. 
\end{remark}
\begin{lemma}[triangle inequality]\label[lemma]{lemma:triangle inequality} For any ranking $\kappa$ on $\ham$, the following holds $\witness{\kappa}{ac\com}\geq\min\left\{\witness{\kappa}{ab\com},\witness{\kappa}{bc\com}\right\}$ for all $a,b,c\in\sens$.
\end{lemma}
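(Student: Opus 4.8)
The plan is to unwind the definition of the concept representation in terms of half-spaces, and then exploit the min-compatibility of the ranking $\kappa$ together with an elementary set-theoretic inclusion among the relevant faces of the Hamming cube. Recall that, by definition, $\witness{\kappa}{S}=\kappa(\half{S})$ where $\half{S}=\set{u\in\ham}{S\subseteq u}$; in particular $\witness{\kappa}{ac\com}=\kappa(\half{ac\com})$, $\witness{\kappa}{ab\com}=\kappa(\half{ab\com})$, and $\witness{\kappa}{bc\com}=\kappa(\half{bc\com})$. Since $\kappa(F)=\min_{\sigma\in F}\kappa(\sigma)$, the inequality we want is equivalent to showing that every vertex $u\in\half{ac\com}$ realizing the minimum of $\kappa$ over that face can be ``charged'' to a vertex lying in $\half{ab\com}\cup\half{bc\com}$ with no larger $\kappa$-value.

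The key combinatorial observation is the face inclusion
\begin{equation*}
	\half{ac\com}\subseteq\half{ab\com}\cup\half{bc\com}\,,
\end{equation*}
valid for all $a,b,c\in\sens$. Indeed, take any complete $\ast$-selection $u$ with $a,c\com\in u$. Since $u$ is a complete selection, exactly one of $b,b\com$ lies in $u$. If $b\com\in u$, then $a,b\com\in u$, so $u\in\half{ab\com}$; if instead $b\in u$, then $b,c\com\in u$, so $u\in\half{bc\com}$. (The degenerate cases where one of the displayed pairs fails to be a $\ast$-selection are harmless: then the corresponding $\half{\cdot}$ is empty and $\witness{\kappa}{\cdot}=\infty$, and the inequality holds trivially; the case $a=c\com$, where $\half{ac\com}=\varnothing$, is likewise immediate.) Once this inclusion is in hand, the monotonicity of $\kappa$ under inclusion — which follows from $\kappa(F_1\cup F_2)=\min\{\kappa(F_1),\kappa(F_2)\}$, itself a consequence of the first axiom — gives
\begin{equation*}
	\witness{\kappa}{ac\com}=\kappa(\half{ac\com})\geq\kappa\!\left(\half{ab\com}\cup\half{bc\com}\right)=\min\!\left\{\kappa(\half{ab\com}),\kappa(\half{bc\com})\right\}=\min\!\left\{\witness{\kappa}{ab\com},\witness{\kappa}{bc\com}\right\}\,,
\end{equation*}
which is exactly the claim.

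I do not anticipate a serious obstacle here; the only place that requires a little care is the bookkeeping around $\ast$-selections versus complete $\ast$-selections. The face $\half{S}$ was defined (in \Cref{lemma:ellone halfspaces} and \Cref{defn:halfspaces abused notation}, via $\orth{\sens}$) as a subset of $\ham(\sens)$, i.e.\ of the \emph{complete} $\ast$-selections, so the parity argument ``exactly one of $b,b\com$ is in $u$'' is available and is the crux of the inclusion. One should also double-check the convention that $\witness{\kappa}{S}=\infty$ when $S$ fails to be a $\ast$-selection (noted in the remark after \Cref{defn:concept representation}), so that the statement is vacuously true in those boundary configurations rather than false. Everything else is a one-line appeal to the defining properties of a ranking.
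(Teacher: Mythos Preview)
Your proof is correct and follows essentially the same approach as the paper: the paper establishes the inclusion $\half{ac\com}=\half{abc\com}\cup\half{ab\com c\com}\subseteq\half{bc\com}\cup\half{ab\com}$ (which is exactly your completeness argument, written as a decomposition) and then uses the minimum property of $\kappa$ to conclude. Your treatment of the degenerate cases is a bit more explicit than the paper's, but the core argument is identical.
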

\begin{proof}
See \Cref{proof:triangle inequality}.\qed
\end{proof}
We are interested in studying the interactions between rankings on $\ham$ and non-degenerate poc-graph structures on $\sens$. A weakened notion of ranking is required for this purpose. 
\begin{definition}\label[definition]{defn:2-ranking} A {\em 2-ranking on $\sens$} is a symmetric matrix $\witness{}{\wild}=(\witness{}{ab})_{a,b\in\sens}$ with entries in $\NNH$, satisfying the following for all $a,b,c\in\sens$:
\begin{enumerate}
	\item $\witness{}{\mbf{0}a}=\witness{}{aa\com}=\infty$;
    \item $\min(\witness{}{aa},\witness{}{a\com a\com})=\min(\witness{}{bb},\witness{}{b\com b\com})<\infty$;
    \item $\witness{}{aa}=\min\left(\witness{}{ab},\witness{}{ab\com}\right)$;
    \item $\witness{}{ac\com}\geq\min\left(\witness{}{ab\com},\witness{}{bc\com}\right)$.
\end{enumerate}
We will say that a ranking $\kappa$ {\em agrees with $\witness{}{\wild}$}, if $\witness{\kappa}{ab}=\witness{}{ab}$ for all $a,b\in\sens$. Also, we will abbreviate as follows: $\witness{}{a}:=\witness{}{aa}$ and $\witness{}{\varnothing}:=\min(\witness{}{a},\witness{}{a\com})$, for all $a\in\sens$.
Finally, note how $\witness{}{a}=\min\{\witness{}{a0\com},\witness{}{a0}\}$ must hold, too, for all $a\in\sens$, by virtue of requirements 1. and 3.\qed
\end{definition}
Of course, the idea is to have a 2-ranking play the role of a snapshot weight, from which one needs to derive a non-degenerate PCR.
In our learning setting, the best one could do is to derive from the samples of the value signal $\val{}$ the 2-ranking $(\witness{\val{}}{ab})_{a,b\in\sens}$.
The main question is, then, how much of the original $\val{}$ could be recovered from this information.
The following family of PCRs helps answer this question:
\begin{proposition}\label[proposition]{prop:residual PCR} Suppose $\witness{}{\wild}$ is a 2-ranking, and let $\witness{}{\varnothing}\leq\delta\in\NNH$. Consider the PCRs on $\sens$ defined by:
\begin{equation}\label{implication from ranking finite}
	ab\in\residual{\witness{}{\wild};\delta}\DEFF
	\{a,a\com\}\cap\{b,b\com\}=\varnothing\,
	\text{ and }\,
	\witness{}{ab\com}>\delta
\end{equation}
for $\delta<\infty$, and by:
\begin{equation}\label{implication from ranking infinite}
	ab\in\residual{\witness{}{\wild};\infty}\DEFF\
	\{a,a\com\}\cap\{b,b\com\}=\varnothing\,,
	\text{ and }\,
	\witness{}{ab\com}=\infty\,.
\end{equation}
Then $\residual{\witness{}{\wild};\delta}$ is a non-degenerate PCR for all $\delta\in\NNH$.
\end{proposition}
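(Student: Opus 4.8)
The statement packages two claims: that $G:=\residual{\witness{}{\wild};\delta}$ is a PCR over $\sens$, and that $G$ is non-degenerate. My plan is to dispose of the PCR axioms essentially by inspection, and then spend the real effort on non-degeneracy, which I will reduce to a single statement about chains and close with the ultrametric (triangle) inequality built into a $2$-ranking. Throughout I treat $\delta<\infty$ and $\delta=\infty$ uniformly, reading the predicate ``$\witness{}{xy\com}>\delta$'' as ``$\witness{}{xy\com}=\infty$'' in the latter case.

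\emph{PCR axioms.} The defining side-condition $\{a,a\com\}\cap\{b,b\com\}=\varnothing$ is symmetric under $(a,b)\mapsto(b\com,a\com)$, and, since $\witness{}{\wild}$ is a symmetric matrix, $\witness{}{ab\com}=\witness{}{b\com a}=\witness{}{b\com(a\com)\com}$; hence the relation cut out by \eqref{implication from ranking finite} (resp.\ \eqref{implication from ranking infinite}) is already closed under $ab\mapsto b\com a\com$. For $a\notin\{\mbf{0},\mbf{1}\}$ the pairs $\mbf{0}a$ and $a\mbf{1}$ lie in this relation automatically, because $\witness{}{\mbf{0}a\com}=\witness{}{a\mbf{0}}=\infty$ by requirement (1) of \Cref{defn:2-ranking}; adjoining the three remaining trivial pairs $\mbf{0}\mbf{0},\mbf{0}\mbf{1},\mbf{1}\mbf{1}$ (so that $G$ contains the orthogonal PCR $\orth{\sens}$, which is itself closed under $ab\mapsto b\com a\com$) then secures $\mbf{0}a\in G$ for every $a$ and preserves the complementation symmetry. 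This step is bookkeeping-heavy but contains no mathematical content; it also pins down the reading of the definition, since the displayed formula as written does not literally list $\mbf{0}\mbf{1}$.

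\emph{Non-degeneracy.} The crux is the claim: \emph{if $a\leq_G a\com$ then $\witness{}{aa}>\delta$}. Granting it, suppose $G$ were degenerate, so that some $a$ satisfies both $a\leq_G a\com$ and $a\com\leq_G a$; then $\witness{}{aa}>\delta$ and $\witness{}{a\com a\com}>\delta$, whence $\witness{}{\varnothing}=\min(\witness{}{aa},\witness{}{a\com a\com})>\delta$, contradicting the hypothesis $\witness{}{\varnothing}\leq\delta$ (for $\delta=\infty$ one instead contradicts the finiteness of $\witness{}{\varnothing}$ guaranteed by requirement (2)). To prove the claim I would first note that every non-reflexive edge $xy\in G$ satisfies $\witness{}{xy\com}>\delta$: for edges coming from \eqref{implication from ranking finite} this is the defining property, while for an edge of $\orth{\sens}$ with $x\neq y$ one has $x=\mbf{0}$ or $y=\mbf{1}$, so $\witness{}{xy\com}$ equals $\witness{}{\mbf{0}y\com}$ or $\witness{}{x\mbf{0}}$, both $\infty$ by requirement (1). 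Then, given a chain $a=x_0,x_1,\ldots,x_n=a\com$ witnessing $a\leq_G a\com$ — after deleting reflexive steps, $n\geq 1$ and each $x_ix_{i+1}$ is a non-reflexive edge — I would induct on $k$ using requirement (4) in the form $\witness{}{x_0x_{k+1}\com}\geq\min(\witness{}{x_0x_k\com},\witness{}{x_kx_{k+1}\com})$, obtaining $\witness{}{x_0x_k\com}>\delta$ for all $k\leq n$; the case $k=n$ gives $\witness{}{a(a\com)\com}=\witness{}{aa}>\delta$.

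The genuinely delicate point — the one I would be most careful about — is ensuring the edge-weight bound above holds for \emph{every} edge that can occur in a negligibility-witnessing chain, including the edges incident to $\mbf{0}$ or $\mbf{1}$ (note $\mbf{0}$ is always $G$-negligible, via $\mbf{0}\mbf{1}\in\orth{\sens}\subseteq G$) and the lone pair $\mbf{0}\mbf{1}$ that the displayed formula does not cover; requirement (1) is exactly what rescues these cases, and requirement (2) is what turns ``$\witness{}{aa}$ and $\witness{}{a\com a\com}$ are both $>\delta$'' into a contradiction with $\witness{}{\varnothing}\leq\delta$. Everything else — associativity/monotonicity of $\min$, the reflexive–transitive-closure manipulations, and the symmetry check for the PCR axioms — is routine.
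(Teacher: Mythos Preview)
Your proposal is correct and follows essentially the same line as the paper's proof: verify the PCR axioms by inspection (using $\witness{}{\mbf{0}a}=\infty$ and the symmetry of $\witness{}{\wild}$), then use the triangle inequality (requirement (4) of \Cref{defn:2-ranking}) to propagate the bound $\witness{}{xy\com}>\delta$ along chains, concluding that $a\leq_G a\com$ forces $\witness{}{aa}>\delta$, so degeneracy would contradict $\witness{}{\varnothing}\leq\delta$. You are in fact more scrupulous than the paper about the side condition $\{a,a\com\}\cap\{b,b\com\}=\varnothing$ excluding pairs such as $\mbf{0}\mbf{1}$ from the displayed formula, and about checking that edges through $\mbf{0}$ or $\mbf{1}$ still carry the required weight bound; the paper glosses over this bookkeeping.
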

\begin{proof}
See \Cref{proof:residual PCR}.\qed
\end{proof}
A surprising consequence of the non-degeneracy of these PCRs is the following corollary, leading to the conclusion that every 2-ranking has a ranking that agrees with it:
\begin{corollary}\label[corollary]{cor:point mass clamp} Let $\witness{}{\wild}$ be a 2-ranking on $\sens$, and let $a,b\in\sens$. Set $r:=\witness{}{ab}$. Then there exists a vertex $u\in\ham$ such that the point mass ranking $\nu=\pointmass{u}{r}$ satisfies $\witness{\nu}{pq}\geq\witness{}{pq}$ for all $p,q\in\sens$.
\end{corollary}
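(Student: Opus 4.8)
The plan is to exhibit the required vertex $u$ as an element of the dual of the residual PCR $G:=\residual{\witness{}{\wild};r}$, which is non-degenerate by \Cref{prop:residual PCR}, so that by \Cref{prop:when maximal coherent is complete} every element of $G^\circ$ is a complete $\ast$-selection, i.e.\ a vertex of $\ham$. The reduction that makes this route natural is an observation about point masses: for $\nu=\pointmass{u}{r}$ one has $\witness{\nu}{pq}=\pointmass{u}{r}(\half{pq})$, which equals $r$ when $\{p,q\}\subseteq u$ and $\infty$ otherwise; hence $\witness{\nu}{pq}\ge\witness{}{pq}$ holds for \emph{all} $p,q$ precisely when $\witness{}{pq}\le r$ for every pair $p,q\in u$ (the case $\{p,q\}\not\subseteq u$ being automatic since $\pointmass{u}{r}(\half{pq})=\infty$ there). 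We assume throughout that $r<\infty$, since otherwise $\pointmass{u}{\infty}$ is not a ranking and the statement is vacuous.

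The off-diagonal part of this condition is automatic for any $u\in G^\circ$: if $p\ne q$ both lie in $u$ then $\{p,p\com\}\cap\{q,q\com\}=\varnothing$ (as $u$ is a $\ast$-selection), and were $\witness{}{pq}>r$ we would have $p(q\com)\in G$ by \eqref{implication from ranking finite}, hence $p\leq_G q\com$, contradicting $G$-coherence of $u$. So the only real constraint is the diagonal one, namely $\witness{}{pp}=\witness{}{p}\le r$ for all $p\in u$. Let $m:=\witness{}{\varnothing}$, the finite common value $\min(\witness{}{aa},\witness{}{a\com a\com})$ from \Cref{defn:2-ranking}(2); note $m\le\witness{}{aa}\le\witness{}{ab}=r$ by \Cref{defn:2-ranking}(3) (for the given $a,b$). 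Put $B:=\set{p\in\sens}{\witness{}{p}>r}$. Since $\min(\witness{}{p},\witness{}{p\com})=m\le r$, at most one of each complementary pair lies in $B$; thus $B$ and $B\com$ are $\ast$-selections, and every $p\in B\com$ satisfies $\witness{}{p}=m$. The idea is to take $u$ to be a maximal $G$-coherent set containing $B\com$: being complete (by the choice of $G$) and containing $B\com$, such a $u$ misses $B$ entirely, so $\witness{}{p}\le r$ for all $p\in u$, and together with the off-diagonal remark this makes $\nu=\pointmass{u}{r}$ work.

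Such a $u$ exists, by the Zorn's-lemma argument recorded right after \Cref{defn:dual}, as soon as $B\com$ itself is $G$-coherent; establishing this is the step I expect to be the crux, since it is where the 2-ranking axioms interact nontrivially. Suppose it fails: then $p\leq_G q\com$ for some $p,q\in B\com$, and necessarily via a $G$-chain $p=x_0,x_1,\dots,x_n=q\com$ of positive length (a length-zero chain would force $q\in B\cap B\com=\varnothing$). Each link contributes $\witness{}{x_i x_{i+1}\com}>r$, so iterating the triangle inequality \Cref{defn:2-ranking}(4) along the chain gives $\witness{}{pq}=\witness{}{p(q\com)\com}>r$. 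Now apply \Cref{defn:2-ranking}(3) twice: from $\witness{}{q}=\witness{}{qq}=\min(\witness{}{pq},\witness{}{p\com q})$ and $\witness{}{q}=m\le r<\witness{}{pq}$ we obtain $\witness{}{p\com q}=m$; then $\witness{}{p\com}=\witness{}{p\com p\com}=\min(\witness{}{p\com q},\witness{}{p\com q\com})\le m$, contradicting $\witness{}{p\com}>r\ge m$. Hence $B\com$ is $G$-coherent, the vertex $u$ is obtained as above, and the (now routine) verification that $\witness{\nu}{pq}\ge\witness{}{pq}$ for all $p,q$ finishes the proof.
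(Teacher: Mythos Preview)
Your proof is correct and follows the same core idea as the paper: both set $G=\residual{\witness{}{\wild};r}$, invoke \Cref{prop:residual PCR} for non-degeneracy, and take $u\in G^\circ$. The off-diagonal verification is identical.

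Where you diverge is in the diagonal case. You introduce the set $B=\{p:\witness{}{p}>r\}$, prove $B\com$ is $G$-coherent (the longest step in your argument), and select $u$ to contain $B\com$ so as to force $\witness{}{p}\le r$ for every $p\in u$. The paper bypasses all of this: it simply picks \emph{any} $u\in G^\circ$ and asserts $\witness{}{pq}\le r$ for all $p,q\in u$ uniformly. The reason this works even when $p=q$ is that $\witness{}{pp}=\witness{}{p}=\min(\witness{}{p\mbf{1}},\witness{}{p\mbf{0}})=\witness{}{p\mbf{1}}$ by axioms (1) and (3) of \Cref{defn:2-ranking}, and since $\mbf{1}\in u$ the off-diagonal argument applied to the pair $p,\mbf{1}$ already gives $\witness{}{p\mbf{1}}\le r$. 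Equivalently, $\witness{}{p}>r$ forces $p\mbf{0}\in G$ and hence $p\leq_G\mbf{0}\leq_G p\com$, making $p$ negligible and excluding it from any coherent set. So the diagonal case is not an independent constraint at all, and your construction of $B\com$ and its coherence proof---while correct---can be dropped entirely.
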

\begin{proof}
See \Cref{proof:point mass clamp}.\qed
\end{proof}
\begin{proposition}\label[proposition]{prop:completions exist} Let $\witness{}{\wild}$ be a symmetric $\NNH$-valued matrix. Then $\witness{}{\wild}$ is a 2-ranking if and only if there exists a ranking with which it agrees. Moreover, if $\witness{}{\wild}$ is a 2-ranking, then there exists one and only one ranking, 
\begin{equation}\label{eqn:completion formula}
	\completion{\witness{}{}}(u):=\max_{a,b\in u}\witness{}{ab}\,,
\end{equation}
that agrees with $\witness{}{\wild}$ and satisfies $\completion{\witness{}{}}\leq\kappa$ for every ranking $\kappa$ that agrees with $\witness{}{\wild}$.
\end{proposition}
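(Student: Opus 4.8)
The plan is to prove the three assertions in sequence: (i) if $\witness{}{\wild}$ agrees with some ranking then it is a 2-ranking; (ii) if $\witness{}{\wild}$ is a 2-ranking then the formula $\completion{\witness{}{}}(u):=\max_{a,b\in u}\witness{}{ab}$ defines a ranking; and (iii) this $\completion{\witness{}{}}$ agrees with $\witness{}{\wild}$ and is dominated by every ranking that agrees with $\witness{}{\wild}$ (which also delivers uniqueness of the minimal agreeing ranking). Direction (i) is essentially bookkeeping: axioms 1--3 of \Cref{defn:2-ranking} follow by specializing the definition $\witness{\kappa}{ab}=\kappa(\half{ab})$ and using $\half{aa\com}=\varnothing$, $\half{\mbf{0}a}=\varnothing$, $\half{aa}=\half{a}=\half{ab}\cup\half{ab\com}$ together with the min-additivity of $\kappa$; axiom 4 is exactly \Cref{lemma:triangle inequality}. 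The fact that $\min(\witness{\kappa}{aa},\witness{\kappa}{a\com a\com})=\kappa(\ham)$ is independent of $a$ gives axiom 2.

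For (ii), I must check that $\completion{\witness{}{}}$ as defined on vertices $u\in\ham$ extends (via $\kappa(F)=\min_{u\in F}\kappa(u)$) to a genuine ranking; by the Remark after \Cref{defn:ranking} it suffices to confirm that $\completion{\witness{}{}}(u)<\infty$ for at least one vertex $u$, and that $\completion{\witness{}{}}$ is well-defined, i.e. $\max_{a,b\in u}\witness{}{ab}$ is finite for the ``right'' vertices. Finiteness for some vertex is precisely the content of \Cref{cor:point mass clamp}: applying it with, say, $a=b=\mbf{1}$ produces a vertex $u$ and a point-mass ranking $\pointmass{u}{r}$ (with $r=\witness{}{\mbf 1}=\witness{}{\varnothing}<\infty$ by axiom 2) satisfying $\witness{\pointmass{u}{r}}{pq}\geq\witness{}{pq}$ for all $p,q$; in particular $\witness{}{pq}\leq r<\infty$ for every pair $p,q\in u$, so $\completion{\witness{}{}}(u)=\max_{p,q\in u}\witness{}{pq}\leq r<\infty$. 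The minimum value of the resulting ranking is then at most $r$, and in fact one shows it equals $\witness{}{\varnothing}$.

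The core of the argument is (iii), the agreement $\witness{\completion{\witness{}{}}}{ab}=\witness{}{ab}$, i.e. $\min_{u\in\half{ab}}\max_{p,q\in u}\witness{}{pq}=\witness{}{ab}$. The inequality ``$\geq$'' is immediate: any $u\in\half{ab}$ contains both $a$ and $b$, so $\max_{p,q\in u}\witness{}{pq}\geq\witness{}{ab}$. The reverse inequality ``$\leq$'' is the crux, and this is where \Cref{cor:point mass clamp} does the real work: it hands us, for the given pair $(a,b)$, a vertex $u$ with $a,b\in u$ (this containment must be extracted from the proof of that corollary, or re-derived) and $\witness{\pointmass{u}{r}}{pq}\geq\witness{}{pq}$ for all $p,q$, where $r=\witness{}{ab}$; hence $\witness{}{pq}\leq r$ for all $p,q\in u$, giving $\completion{\witness{}{}}(u)\leq r=\witness{}{ab}$, and therefore $\witness{\completion{\witness{}{}}}{ab}\leq\witness{}{ab}$. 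Finally, minimality: if $\kappa$ is any ranking agreeing with $\witness{}{\wild}$, then for each vertex $u$, monotonicity of $\kappa$ under subsets forces $\kappa(u)\geq\kappa(\half{ab})=\witness{}{ab}$ for every pair $a,b\in u$ (since $\{u\}\subseteq\half{ab}$), so $\kappa(u)\geq\max_{a,b\in u}\witness{}{ab}=\completion{\witness{}{}}(u)$; taking minima over $F$ yields $\kappa\geq\completion{\witness{}{}}$ pointwise, and uniqueness of a minimal agreeing ranking follows.

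The main obstacle I anticipate is the ``$\leq$'' half of agreement, and more precisely the need to be sure that the vertex produced by \Cref{cor:point mass clamp} actually lies in $\half{ab}$ (contains both $a$ and $b$) — the statement of that corollary as quoted only advertises $r=\witness{}{ab}$ and the domination $\witness{\nu}{pq}\geq\witness{}{pq}$, so I would need to either invoke the internal construction in its proof (which presumably clamps a maximal coherent extension of $\{a,b\}$) or argue the containment directly: $\{a,b\}$ is $\witness{}{\wild}$-coherent because $\witness{}{ab}=r<\infty$ together with axioms 1,3,4 precludes $a\leq b\com$, and then extend to a maximal coherent set. Once that containment is secured, everything else is a short assembly of the monotonicity and min-additivity properties already in hand.
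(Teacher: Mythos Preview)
Your proposal is correct and follows essentially the same route as the paper: both reduce the ``$\leq$'' half of agreement to \Cref{cor:point mass clamp}, and both derive minimality from the trivial inequality $\kappa(u)\geq\witness{\kappa}{ab}=\witness{}{ab}$ for $a,b\in u$. The paper packages the argument slightly differently, introducing the set $\mathscr{K}$ of rankings $\kappa$ with $\witness{\kappa}{pq}\geq\witness{}{pq}$ and observing it is closed under pointwise minima, but this detour is not essential and your direct verification of the formula is equivalent.

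Your flagged obstacle---that the vertex $u$ produced by \Cref{cor:point mass clamp} must satisfy $a,b\in u$---is a genuine subtlety that the paper's proof also glosses over (it simply asserts the corollary ``provides just such a ranking $\nu$'' with $\witness{\nu}{ab}=\witness{}{ab}$). Your proposed fix is exactly right: when $r=\witness{}{ab}<\infty$, the pair $\{a,b\}$ is coherent in $\residual{\witness{}{\wild};r}$ (since $a\leq b\com$ there would force $\witness{}{ab}>r$ by the argument in the proof of \Cref{prop:residual PCR}), so one may choose $u\in\residual{\witness{}{\wild};r}^\circ$ extending $\{a,b\}$; the case $r=\infty$ is trivial.
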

\begin{proof}
See \Cref{proof:completions exist}.\qed
\end{proof}

The upshot of the last proposition is that, henceforth, any 2-ranking may be treated as encoding a ranking. Formally:
\begin{definition}\label[definition]{defn:completion and restriction} Suppose $\witness{}{\wild}$ is a 2-ranking and $\kappa$ is a ranking. The {\em completion} of $\witness{}{\wild}$ is the ranking $\completion{\witness{}{}}$ from the preceding proposition. The {\em 2-restriction} of $\kappa$ is the 2-ranking, denoted $\twores{\kappa}$, obtained from $\kappa$ via the concept representation, that is: $\twores{\kappa}_{ab}:=\witness{\kappa}{ab}$ for all $a,b\in\sens$. The {\em 2-closure} of $\kappa$ is the ranking, denoted $\twoclose{\kappa}$, obtained from $\kappa$ as the completion of its 2-restriction. In particular one has $\kappa\geq\twoclose{\kappa}$.\qed
\end{definition}

\subsubsection{Derived PCRs and their duals.}\label{snapshots:qualitative:derived}
We now introduce the PCR used in the qualitative snapshot structure. As systems of defaults, these PCRs are strengthened (more restrictive) versions of the (ranked) default systems constructed by Goldzmidt and Pearl in~\cite{Goldszmidt_Pearl-qualitative_probas_for_belief_revision}, and they satisfy an analogous characterization.
\begin{proposition} Suppose $\witness{}{\wild}$ is a 2-ranking. For $0\leq\delta<\infty$, let its {\em derived PCR} be defined by:
\begin{equation}\label{eqn:derived graph}
	ab\in \derived{\witness{}{\wild};\delta}\IFF\left\{\begin{array}{l}
		\{a,a\com\}\cap\{b,b\com\}=\varnothing\,,\text{ and }\\[.25em]
		\witness{}{ab\com}=\infty\text{ or }\witness{}{ab\com}>\delta+\max(\witness{}{ab},\witness{}{a\com b\com})\,,
	\end{array}\right.
\end{equation}
and for $\delta=\infty$ let it be defined by:
\begin{equation}
	ab\in \derived{\witness{}{\wild};\infty}\IFF\left\{\begin{array}{l}
		\{a,a\com\}\cap\{b,b\com\}=\varnothing\,,\text{ and }\\[.25em]
		\witness{}{ab\com}=\infty\text{ and }\witness{}{ab},\witness{}{a\com b\com}<\infty\,.
	\end{array}\right.
\end{equation}
Then, $\derived{\witness{}{\wild};\delta}$ is a non-degenerate PCR for all $\delta\in\NNH$.
\end{proposition}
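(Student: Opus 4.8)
The claim is that $\derived{\witness{}{\wild};\delta}$ is a non-degenerate PCR for every $\delta \in \NNH$. The strategy is to verify first that each $\derived{\witness{}{\wild};\delta}$ is a PCR at all (i.e.\ satisfies the axioms of \Cref{defn:PCR}), and then to establish non-degeneracy by comparing $\derived{\witness{}{\wild};\delta}$ with the residual PCR $\residual{\witness{}{\wild};\delta'}$ for a suitably chosen threshold $\delta'$, so as to invoke \Cref{prop:residual PCR}. The key structural observation to exploit is that the defining condition of $\derived{\witness{}{\wild};\delta}$ is \emph{more restrictive} than that of $\residual{\witness{}{\wild};\delta}$: if $ab\in\derived{\witness{}{\wild};\delta}$ then $\witness{}{ab\com}=\infty$ or $\witness{}{ab\com}>\delta+\max(\witness{}{ab},\witness{}{a\com b\com})\geq\delta\geq\witness{}{\varnothing}$, so $ab\in\residual{\witness{}{\wild};\delta}$ as well (using $\delta\geq\witness{}{\varnothing}$, which one reads off requirement (3) of \Cref{defn:2-ranking} together with requirement (1), giving $\witness{}{ab\com}>\delta\geq\witness{}{\varnothing}$; one should double-check the $\delta<\witness{}{\varnothing}$ corner case, where the derived PCR is still contained in $\residual{\witness{}{\wild};\witness{}{\varnothing}}$). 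Thus $\derived{\witness{}{\wild};\delta}\subseteq\residual{\witness{}{\wild};\max(\delta,\witness{}{\varnothing})}$ as relations on $\sens$.

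\textbf{Step 1: PCR axioms.} I would check directly that $\mbf{0}a\in\derived{\witness{}{\wild};\delta}$ for all $a$: since $\witness{}{\mbf{0}a\com}=\infty$ by requirement (1) of \Cref{defn:2-ranking}, and $\{\mbf{0},\mbf{1}\}\cap\{a,a\com\}=\varnothing$ whenever $a\notin\{\mbf{0},\mbf{1}\}$ (the case $a\in\{\mbf{0},\mbf{1}\}$ must be handled: $\mbf{0}\mbf{1}$ should be in the relation, and here one needs $\witness{}{\mbf{0}\mbf{0}}=\witness{}{\mbf{0}a\com}$ with $a\com=\mbf{0}$, i.e.\ $\witness{}{\mbf{0}\mbf{0}}=\infty$, consistent with requirement (1) again). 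The symmetry axiom $ab\in G\IFF b\com a\com\in G$ follows because the defining condition is manifestly invariant under $(a,b)\mapsto(b\com,a\com)$: the disjointness clause is symmetric, $\witness{}{a(b\com)\com}=\witness{}{ab}$ becomes... wait — one must be careful: the condition involves $\witness{}{ab\com}$, and under $(a,b)\mapsto(b\com,a\com)$ this becomes $\witness{}{b\com(a\com)\com}=\witness{}{b\com a}=\witness{}{ab\com}$ by symmetry of the matrix, while $\max(\witness{}{ab},\witness{}{a\com b\com})$ becomes $\max(\witness{}{b\com a\com},\witness{}{ba})=\max(\witness{}{a\com b\com},\witness{}{ab})$, also unchanged. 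So the symmetry axiom holds.

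\textbf{Step 2: non-degeneracy.} By \Cref{prop:residual PCR}, $\residual{\witness{}{\wild};\delta'}$ is non-degenerate for $\delta'=\max(\delta,\witness{}{\varnothing})$. I would then argue that non-degeneracy is inherited by sub-PCRs: if $G\subseteq G'$ are both PCRs over $\sens$ and $G'$ is non-degenerate, then $G$ is non-degenerate. Indeed, $N(G)=\{a: a\leq_G a\com\}$, and $a\leq_G a\com$ (reachability in the reflexive-transitive closure of $G$) implies $a\leq_{G'} a\com$ since $G\subseteq G'$; hence $N(G)\subseteq N(G')$, so if $\sens$ contains no negligible element of $G'$ whose complement is also $G'$-negligible, the same holds for $G$. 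Since the $\delta=\infty$ case is defined slightly differently, I would check separately that $\derived{\witness{}{\wild};\infty}\subseteq\residual{\witness{}{\wild};\infty}$ — immediate, since the derived condition at $\delta=\infty$ requires $\witness{}{ab\com}=\infty$ outright, which is exactly the residual condition at $\infty$ — and then invoke \Cref{prop:residual PCR} at $\delta=\infty$ together with the same sub-PCR argument.

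\textbf{Main obstacle.} I expect the real work to be bookkeeping around the degenerate/boundary cases: verifying that $\derived{\witness{}{\wild};\delta}$ really is \emph{contained} in the appropriate residual PCR for \emph{all} $\delta\in\NNH$ (including $\delta<\witness{}{\varnothing}$ and $\delta=\infty$), and confirming that the "$\max(\witness{}{ab},\witness{}{a\com b\com})\geq 0$" slack is enough to force $\witness{}{ab\com}>\delta'$ with $\delta'=\max(\delta,\witness{}{\varnothing})$ in every subcase — the subtlety being that $\witness{}{ab}$ or $\witness{}{a\com b\com}$ could themselves be $0$, so one genuinely needs $\delta'\geq\witness{}{\varnothing}$ and the residual PCR's definition to be monotone in the threshold in the right direction. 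Everything else (PCR axioms, sub-PCR inheritance of non-degeneracy) is routine once the containment is pinned down. Accordingly, I would structure the proof as: (i) containment lemma $\derived{\witness{}{\wild};\delta}\subseteq\residual{\witness{}{\wild};\max(\delta,\witness{}{\varnothing})}$; (ii) sub-PCR inheritance of non-degeneracy; (iii) invoke \Cref{prop:residual PCR}.
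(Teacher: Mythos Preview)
Your proposal is correct and follows exactly the paper's approach: verify the PCR axioms directly from the symmetry of the defining condition, then observe $\derived{\witness{}{\wild};\delta}\subseteq\residual{\witness{}{\wild};\delta'}$ for a suitable $\delta'$ and inherit non-degeneracy via $N(G)\subseteq N(R)$ from \Cref{prop:residual PCR}. Your choice $\delta'=\max(\delta,\witness{}{\varnothing})$ is in fact slightly more careful than the paper (which simply takes $\delta'=\delta$), and your anticipated ``main obstacle'' dissolves once you note that axioms (2) and (3) of a 2-ranking force $\witness{}{pq}\geq\witness{}{\varnothing}$ for all $p,q$, so $\witness{}{ab\com}>\delta+\max(\witness{}{ab},\witness{}{a\com b\com})\geq\delta+\witness{}{\varnothing}\geq\delta'$ in one line.
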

\begin{proof} Let $G=\derived{\witness{}{\wild};\delta}$ and $R=\residual{\witness{}{\wild};\delta}$. Once again, the basic properties of a PCR are baked into the definition of $G$. Furthermore, observe that $ab\in G$ implies $ab\in R$ (though not the other way around). In particular, we have $N(G)\subseteq N(R)$ and it follows that $N(G)\cap N(G)\com\subseteq N(R)\cap N(R)\com=\varnothing$, as required.
\qed\end{proof}
\begin{definition}\label[definition]{graphs of a ranking} \Cref{prop:completions exist} and \Cref{defn:completion and restriction} make it possible for us to abuse notation and talk about the residual and derived PCRs of a ranking by setting $\residual{\kappa;\delta}:=\residual{\twores{\kappa};\delta}$, and $\derived{\kappa;\delta}:=\derived{\twores{\kappa};\delta}$, dropping all mention of $\delta$ when $\delta=0$, as before.
Of course, $\kappa$ may be replaced with its 2-closure $\twoclose{\kappa}$ throughout
%
.\qed
\end{definition}

We proceed to study properties of derived PCRs and their duals, to verify their utility to our representation problem.
Specifically, we are interested in the geometry of level sets, as we try to answer the question: how well does the 2-restriction of a ranking $\kappa$ capture the set of global minimum points of $\kappa$ (the most meaningful states according to $\kappa$)?
\begin{definition}\label[definition]{defn:minset} Given an integer $\epsilon\geq 0$ and a 2-ranking $\witness{}{\wild}$, denote:
\begin{equation}
	\minset{\witness{}{\wild};\epsilon}:=\left\{
    	a\in\sens\,\big|\,\witness{}{a}<\witness{}{a\com}-\epsilon
    \right\}
\end{equation}
The set $\minset{\witness{}{\wild}}:=\minset{\witness{}{\wild};0}$ will be referred to as the {\em minset of $\witness{}{\wild}$}. By virtue of \Cref{prop:completions exist}, this notion extends to rankings as follows:
\begin{equation}
	\minset{\kappa;\epsilon}:=\left\{
    	a\in\sens\,\big|\,\witness{\kappa}{a}<\witness{\kappa}{a\com}-\epsilon
    \right\}\,,
\end{equation}
with $\minset{\kappa}:=\minset{\kappa;0}$ being the {\em minset of $\kappa$}.\qed
\end{definition}

It is clear that a global minimum point of a ranking $\kappa$ must contain $\minset{\kappa}$.
Hence, $\half{\minset{\kappa}}$ contains all global minima of $\kappa$, but what does this have to do with the derived PCR and its dual?
The main result is as follows:
\begin{proposition}\label[proposition]{prop:global minima} Let $\kappa$ be a ranking on $\ham$ and set $G=\derived{\kappa}$ and $M=\minset{\kappa}$.
Let $F$ and $\widehat{F}$ be the sets of global minima of $\kappa$ and $\completion{\kappa}$, respectively.
Then $F\subseteq\widehat{F}\subseteq G^\circ$ and $\widehat{F}=\half{M;G}$.
Moreover, $\widehat{F}$ is the convex hull of $F$ in $\dual{G}$
\end{proposition}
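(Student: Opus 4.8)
The plan is to establish the chain $F\subseteq\widehat F\subseteq G^\circ$ first, then identify $\widehat F$ with $\half{M;G}$, and finally invoke \Cref{cor:convex hull formula} to pin down the convex hull. For the first inclusion, $F\subseteq\widehat F$: since $\kappa\geq\completion{\kappa}$ (by \Cref{prop:completions exist}, as $\completion{\kappa}$ is the completion of the $2$-restriction $\twores\kappa$, which agrees with $\kappa$, hence $\completion{\kappa}=\twoclose{\kappa}\leq\kappa$) and the two agree on singletons through the concept representation on pairs, a global minimum $\sigma$ of $\kappa$ satisfies $\completion{\kappa}(\sigma)=\witness{\kappa}{\varnothing}=\kappa(\ham)$; I need to check $\kappa(\ham)$ is also the minimum value of $\completion{\kappa}$, which follows since $\completion{\kappa}$ agrees with $\twores\kappa$ and $\witness{\twores\kappa}{\varnothing}=\witness{\kappa}{\varnothing}=\kappa(\ham)$. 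So $F\subseteq\widehat F$.

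Next, $\widehat F=\half{M;G}$ where $G=\derived{\kappa}=\derived{\twores\kappa;0}$ and $M=\minset{\kappa}$. The containment $\widehat F\subseteq G^\circ$ will come for free once $\widehat F=\half{M;G}$ is shown, since halfspaces $\half{S;G}$ are subsets of $G^\circ$ by definition. For $\widehat F\subseteq\half{M;G}$: I would show that for any $a\in M$ (so $\witness{\kappa}{a}<\witness{\kappa}{a\com}$) and any $\sigma$ with $a\com\in\sigma$, one has $\completion{\kappa}(\sigma)=\max_{p,q\in\sigma}\witness{\kappa}{pq}\geq\witness{\kappa}{a\com a\com}=\witness{\kappa}{a\com}>\witness{\kappa}{a}\geq\kappa(\ham)$, so $\sigma\notin\widehat F$; hence every $\sigma\in\widehat F$ must contain $a$, i.e.\ $M\subseteq\sigma$, giving $\widehat F\subseteq\half{M;G}$ (and along the way $\sigma\in G^\circ$: a global minimum of $\completion{\kappa}$ is $G$-coherent because if $\sigma$ contained $a,b$ with $a\leq_G b\com$, that forces $\witness{\kappa}{ab\com}$ finite but $>\delta+\max(\witness{\kappa}{ab},\witness{\kappa}{a\com b\com})$ or $=\infty$, which in either case makes $\completion{\kappa}(\sigma)$ strictly exceed the minimum by the definition of $\derived{\cdot}$ — this is the step that needs care). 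For the reverse, $\half{M;G}\subseteq\widehat F$: take $\sigma\in G^\circ$ with $M\subseteq\sigma$; I must show $\completion{\kappa}(\sigma)=\witness{\kappa}{\varnothing}$, i.e.\ $\witness{\kappa}{ab}\leq\witness{\kappa}{\varnothing}$ for all $a,b\in\sigma$, equivalently (since $\witness{\kappa}{\varnothing}\leq\witness{\kappa}{ab}$ always, being the minimum) equality. Using the triangle inequality \Cref{lemma:triangle inequality} together with coherence of $\sigma$ and $M\subseteq\sigma$, one deduces that every $a\in\sigma$ has $\witness{\kappa}{a}=\witness{\kappa}{\varnothing}$ (if $\witness{\kappa}{a}>\witness{\kappa}{a\com}$ then $a\com\in M\subseteq\sigma$, contradicting $\sigma$ being a $\ast$-selection; if $\witness{\kappa}{a}>\witness{\kappa}{\varnothing}=\witness{\kappa}{a\com}$ with $\witness{\kappa}{a}=\witness{\kappa}{a\com}$ impossible unless... ), and then property (3) of a $2$-ranking, $\witness{}{aa}=\min(\witness{}{ab},\witness{}{ab\com})$, combined with the non-degeneracy of $\derived{\kappa}$ and coherence of $\sigma$, forces $\witness{\kappa}{ab}=\witness{\kappa}{\varnothing}$ for $a,b\in\sigma$. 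This is the technical heart of the argument.

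Finally, for the convex-hull claim: by \Cref{cor:convex hull formula}, $\hull{F}{G}=\half{F\supp;G}$ where $F\supp=\{a\in\sens\mid F\subseteq\half{a}\}$. So I must show $\half{F\supp;G}=\half{M;G}$, for which it suffices (using \Cref{lemma:halfspace properties}, parts (3) and (4), noting $M$ is coherent since $M\subseteq\sigma$ for $\sigma\in F$) to show that $M$ and $F\supp$ have the same forward closure, or more directly that $\up M\supseteq F\supp\supseteq M$. The inclusion $M\subseteq F\supp$ is immediate since every global minimum of $\kappa$ contains $M$. For $F\supp\subseteq\up M$: if $a\in F\supp$, then $a\in\sigma$ for every $\sigma\in F$; I want to conclude $a\geq_G b$ for some $b\in M$, which amounts to showing $a$ lies in the forward closure of $M$ under $G=\derived\kappa$ — this should follow by analyzing, via \Cref{prop:global minima}'s own hypotheses, that if $a\notin\up M$ then one can build a global minimum of $\kappa$ (equivalently of $\completion\kappa$, using $F=\widehat F\cap$(minima of $\kappa$) and the already-proven $\widehat F=\half{M;G}$) avoiding $a$, contradicting $a\in F\supp$.

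The main obstacle I anticipate is the reverse inclusion $\half{M;G}\subseteq\widehat F$ — showing that $G$-coherence together with $M\subseteq\sigma$ actually pins $\completion\kappa(\sigma)$ down to the global minimum value. This requires carefully unwinding the definition of $\derived{\twores\kappa;0}$ (the $\witness{}{ab\com}>\max(\witness{}{ab},\witness{}{a\com b\com})$ clause) against the $2$-ranking axioms (2)–(4) and \Cref{lemma:triangle inequality}, and it is where the strengthening relative to Goldszmidt–Pearl's construction is really used; the rest of the argument is bookkeeping with halfspaces and the results of \Cref{models:convexity}.
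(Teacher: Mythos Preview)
Your plan is correct and all the pieces can be made to work, but it differs from the paper's organization in two notable ways.

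First, the paper factors the argument through two reusable lemmas about sub-level sets: one showing that $F$ and $\widehat F$ have the same support $F\supp=\widehat F\supp$ (\Cref{lemma:level set supports}), and one characterizing when a sub-level set of $\twoclose{\kappa}$ lands in $G^\circ$ (\Cref{lemma:which levels in Gcirc}). Your direct verifications of $\widehat F\subseteq G^\circ$ (via the chain argument you allude to) and of $\half{M;G}\subseteq\widehat F$ (via the derived-PCR clause forcing $ab\com\in G$ whenever $\witness{\kappa}{ab}>\witness{\kappa}{\varnothing}$ for $a,b\in\sigma$) are essentially the $r=\kappa(\ham)$ specializations of those lemmas, so the content is the same; the paper just packages it at a higher level of generality.

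Second, and more substantively, your convex-hull step is more laborious than it needs to be. The paper observes in one line that $M=F\supp$: for each $a\in\sens$ one has $\min(\witness{\kappa}{a},\witness{\kappa}{a\com})=\kappa(\ham)$, so $a\in M\iff\witness{\kappa}{a\com}>\kappa(\ham)\iff F\cap\half{a\com}=\varnothing\iff a\in F\supp$. With this identity, $\hull{F}{G}=\half{F\supp;G}=\half{M;G}=\widehat F$ is immediate from \Cref{cor:convex hull formula}. Your plan to prove $F\supp\subseteq\up M$ by constructing a global minimum avoiding any $a\notin\up M$ does work, but only because $M$ is already forward-closed (\Cref{lemma:minsets are forward-closed coherent}), so $\up M=M$ and you are in fact re-deriving $F\supp\subseteq M$; recognizing this up front collapses the whole step.
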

\begin{proof}
See \Cref{appendix:snapshots:minima}.\qed
\end{proof}
Upon inspection, the details of the proof generate the impression that $\completion{\kappa}$ is, for lack of a better word, a form of convex smoothing of $\kappa$, the last proposition showing how the collection of possibly disparate minimum points of $\kappa$ coalesces into a convex plateau of minimum points of $\completion{\kappa}$ in the dual space of the derived PCR. 

\subsubsection{A Snapshot Structure to Learn a Ranking.}
We return to our learning problem.
Suppose $\val{}$ is a {\em fixed} ranking on $\ham$, and we are given a sequence of samples $\val{}\at{t}=\val{}(\observe{}\at{t})$, where $\observe{}\at{t}\in\ham$ are the observations made by our agent.
We will assume $\val{}\at{t}<\infty$ for all $t$, reserving $\val{}(u)=\infty$ for the impossible observations.

We must define the {\em weight update} taking place in response to an incoming observation; and the {\em weight extension} in response to a query being added to the sensorium.

\paragraph{Weight update (static case).} For our snapshot structure, we propose the following update rule for the snapshot weights:
\begin{equation}\label{eqn:weight update constant ranking}
	\witness{}{}\at{0}=\twores{\pointmass{\observe{}\at{0}}{\val{}(\observe{}\at{0})}}\,,\quad
    \witness{}{}\at{t+1}:=\min\left\{\witness{}{}\at{t},\twores{\pointmass{\observe{}\at{t+1}}{\val{}(\observe{}\at{t+1})}}\right\}\,.
\end{equation}
By \Cref{ex:minimum}, $\witness{}{\wild}\at{t}$ is a 2-weight for every $t\geq 0$, giving rise to a non-degenerate PCR in the form of 
\begin{equation}
    G\at{t}:=\derived{\witness{}{\wild}\at{t}}\,.
\end{equation}
Since the sequence of weights is pointwise non-increasing, its convergence is guaranteed. 
Moreover, exposure to (at most) $N=\binom{\card{\sens}}{2}-\tfrac{\card{\sens}}{2}$ observations covering all pairs $\{a,b\}$ with $\{a,a\com\}\cap\{b,b\com\}=\varnothing$, sampling a minimum rank world in $\half{ab}$ for each pair $\{a,b\}$ at least once, will result in $\witness{}{\wild}$ coinciding with $\twores{\val{}}$. This motivates the question {\em ``How much less exposure is required for delivering the same result on average, in, say, an appropriately formulated PAC setting?''}, and emphasizes the good fit of ranking-based snapshot structures to settings featuring a teacher.

\subsection{Statistical Integrators of a Real-Valued Signal.}\label{snapshot:real-valued} The original suggestion of~\cite{GK-Allerton_2012} for maintaining a system of weights in the role of a snapshot structure was based on the idea that $\witness{}{ab}\at{t}$ should be the empirical estimate at time $t$ of the probability of the event $a\wedge b$, so that $ab\in G\at{t}$ could be put on record if and only if $\witness{}{ab\com}\at{t}<\min(\witness{}{ab}\at{t},\witness{}{a\com b\com}\at{t},\witness{}{a\com b}\at{t},\tau_{ab}\at{t})$, where $\tau_{ab}\at{t}$ is a fixed threshold.
That is, the implication $a\rightarrow b$ is put on record whenever the event $a\wedge\neg b$ has sufficiently low empirical probability.
We have since found out that the improved formalization provided by \Cref{prop:when maximal coherent is complete,prop:canonical quotient} enables the use of a far more general weight update scheme that is capable of incorporating a value signal into the learner's reasoning while also taking into account the observed frequency of events.

\subsubsection{Real-valued 2-weights.}
Once again, the learner is presented with a sequence of observations $u_t\in\ham$, accompanied by the signal $\val{}\at{t}=\val{}(u_t)$.
This time we require that the value signal $\val{}\at{t}$ presented to the agent at time $t$ is a real number greater than or equal to $1$, where a higher value of $\val{}$ indicates a more meaningful state of the observed system.
\begin{definition}\label[definition]{defn:real-valued 2-weight} A {\em real-valued 2-weight} on a PCS $\sens$ is a symmetric, real-valued function $\witness{}{\wild}=(\witness{}{ab})_{a,b\in\sens}$ on $\sens\times\sens$, satisfying the following requirements for all $a,b,c\in\sens$:
\begin{enumerate}
    \item $\witness{}{ab}\geq 0$, $\witness{}{\zero a}=0$ and $\witness{}{aa\com}=0$;
    \item $\witness{}{\varnothing}:=\witness{}{a}+\witness{}{a\com}=\witness{}{b}+\witness{}{b\com}$;
    \item $\witness{}{a}:=\witness{}{aa}=\witness{}{ab}+\witness{}{ab\com}$;
    \item $\witness{}{ab\com}+\witness{}{bc\com}+\witness{}{ca\com}=\witness{}{a\com b}+\witness{}{b\com c}+\witness{}{c\com a}$;
    \item $\witness{}{ac\com}+\witness{}{a\com c}\leq\witness{}{ab\com}+\witness{}{a\com b}+\witness{}{bc\com}+\witness{}{b\com c}$.
\end{enumerate}
When $\witness{}{ab}=0$ for all $a,b\in\sens$, we say $\witness{}{\wild}$ is {\em trivial}.\qed
\end{definition}
The following example provides motivation for the definition:
\begin{example}\label[example]{ex:2-weights from integrals} Suppose $(\spc,\mathscr{B},\mu)$ is a measure space and $\val{}:\spc\to\RR$ is a non-negative function in $L_1(\mu)$.
Suppose $\rho:\sens\to\mathscr{B}$ is a PCS morphism, when $\mathscr{B}$ is viewed as a sub-PCS of $\power{\spc}$ (recall \Cref{ex:power set as PCS}).
Then $\witness{}{ab}:=\int_{\rho(a)\cap\rho(b)}\val{}\dd\mu$ is a real-valued 2-weight.
Indeed, since the integral of a non-negative function is non-negative, the requirements 1.-5. become corollaries of various set-theoretic identities applied to $A=\rho(a)$, $B=\rho(b)$ and $C=\rho(c)$, respectively:
\begin{enumerate}
    \item $\varnothing\cap A=\varnothing$, $A\cap(\spc\minus A)=\varnothing$.
    \item $\spc=A\cup(\spc\minus A)=B\cup(\spc\minus B)$,
    \item $A=(A\cap B)\cup(A\minus B)$,
    \item $(A\minus B)\cup(B\minus C)\cup(C\minus A)=(B\minus A)\cup(C\minus B)\cup(A\minus C)$ (see \Cref{fig:weird identity}),
    \item $A\symplus C\subseteq (A\symplus B)\cup(B\symplus C)$,
\end{enumerate}
where $A\symplus B:=(A\minus B)\cup(B\minus A)$, for short.
\end{example}
\begin{figure}[H]
  \centering
  \includegraphics[width=.75\textwidth]{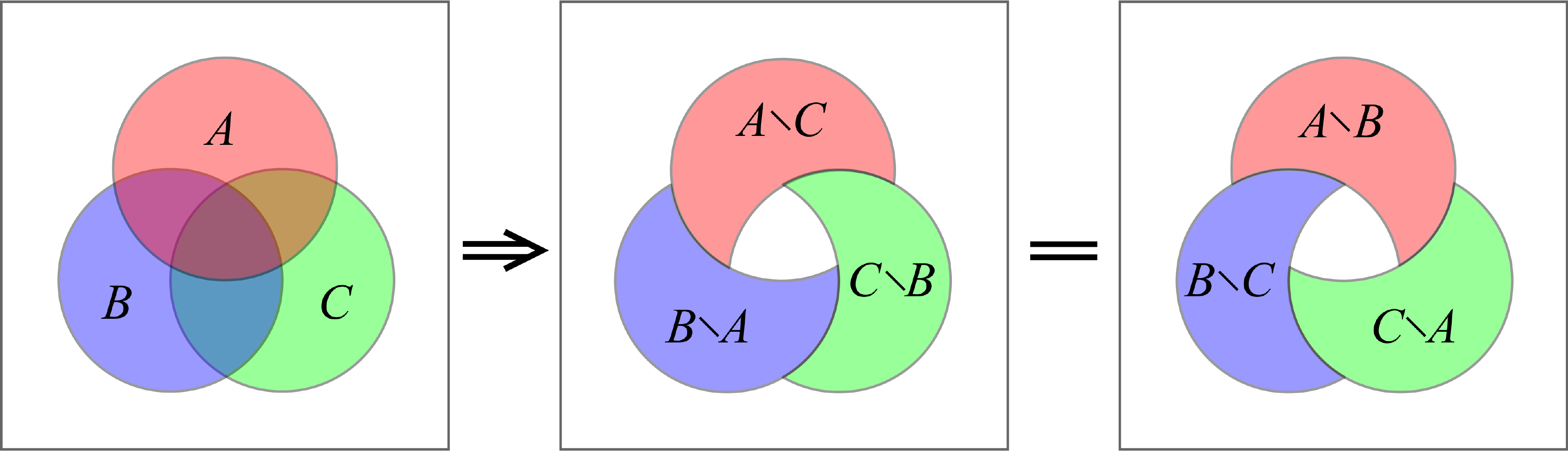}
  \caption{The set-theoretic identity underlying requirement 4. of a real-valued 2-weight (\Cref{defn:real-valued 2-weight}), as explained in \Cref{ex:2-weights from integrals}.}
  \label{fig:weird identity}
\end{figure}
\begin{example}[point mass weight]\label[example]{ex:point-mass weight} Similarly to the qualitative setting, the simplest example of a weight of this form is given by a point-mass measure on $\ham$:
\begin{equation}
    \witness{}{ab}=r\cdot\delta_u(\half{ab})\,,\quad
    \delta_u(F):=\left\{\begin{array}{cl}
        1   &\text{if }u\in F\\
        0   &\text{if }u\notin F\,,
    \end{array}\right.
\end{equation}
where $F\subset\ham$ (Compare with \Cref{ex:point-mass ranking}).\qed
\end{example}

\subsubsection{Derived PCRs and their duals.} 
The resulting notion of a derived PCR requires a system $\tau_\wild$ of threshold values, denoted $\tau_{ab}\at{t}\in(0,1)$, $a,b\in\sens$, satisfying the identities
\begin{equation}\label{eqn:threshold symmetries}
    \tau_{ab}\at{t}=\tau_{ba}\at{t}=\tau_{a\com b}\at{t}
\end{equation}
for all $a,b\in\sens$ and $t\geq 0$.
This makes it possible to construct a non-degenerate PCR as follows:
\begin{proposition}\label[proposition]{prop:statistical derived PCR}
For any choice of threshold values $\tau_\wild$ satisfying \Cref{eqn:threshold symmetries}, if $\witness{}{\wild}$ is non-trivial, then
\begin{equation}\label{eqn:real-valued derived PCR}
    ab\in \derived{\witness{}{\wild},\tau_\wild}\at{t}\DEFF\left\{\begin{array}{c}
    \witness{}{ab\com}\at{t}<min(
        \tau_{ab}\cdot\witness{}{\varnothing},
        \witness{}{ab},
        \witness{}{a\com b\com},    \witness{}{a\com b}
    )\\[.25em] 
    \texttt{or}\\[.25em]
    \witness{}{ab\com}=\witness{}{a\com b}=0\,.
    \end{array}\right.
\end{equation}
defines a non-degenerate PCR.
\end{proposition}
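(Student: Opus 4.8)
The goal is to show that $G := \derived{\witness{}{\wild},\tau_\wild}\at{t}$, as defined in \Cref{eqn:real-valued derived PCR}, is a non-degenerate PCR. Since the "PCR axioms" ($\mbf{0}a\in G$ and $ab\in G\IFF b\com a\com\in G$) are essentially wired into the definition, the bulk of the work is the non-degeneracy claim, i.e. that no $a\in\sens$ has both $a\leq_G a\com$ and $a\com\leq_G a$. Following the template of the proof of the analogous statement for $\derived{\witness{}{\wild};\delta}$ in the qualitative setting, the plan is to first exhibit an auxiliary non-degenerate PCR $R$ with $G\subseteq R$, and then observe that $N(G)\subseteq N(R)$ forces $N(G)\cap N(G)\com\subseteq N(R)\cap N(R)\com=\varnothing$.

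\emph{Step 1: the PCR axioms for $G$.} I would check that $\mbf{0}a\in G$ holds because $\witness{}{\mbf{0}a\com}=0$ (by requirement 1 of \Cref{defn:real-valued 2-weight}, $\witness{}{\mbf{0}b}=0$ for all $b$), so either the strict-inequality disjunct or the "$\witness{}{ab\com}=\witness{}{a\com b}=0$" disjunct of \Cref{eqn:real-valued derived PCR} is satisfied (for $a=\mbf{0}$, $\witness{}{\mbf{0}\,\mbf{0}\com}=0$ and $\witness{}{\mbf{0}\com\mbf{0}}=\witness{}{\mbf{1}\mbf{0}}=0$ as well). The complementation symmetry $ab\in G\IFF b\com a\com\in G$ follows from the symmetry of $\witness{}{\wild}$ together with the threshold symmetry $\tau_{ab}=\tau_{ba}=\tau_{a\com b}$ from \Cref{eqn:threshold symmetries}: under $a\mapsto b\com$, $b\mapsto a\com$ the triple $(\witness{}{ab\com},\witness{}{ab},\witness{}{a\com b\com},\witness{}{a\com b})$ is permuted among itself, and $\tau_{ab}\cdot\witness{}{\varnothing}$ is preserved because $\tau_{b\com a\com}=\tau_{ab}$ and $\witness{}{\varnothing}$ is the (a-independent) common value from requirement 2.

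\emph{Step 2: the comparison PCR.} I would define $R := \residual{\witness{}{\wild};\tau_\wild}\at{t}$ by $ab\in R \DEFF \{a,a\com\}\cap\{b,b\com\}=\varnothing$ and ($\witness{}{ab\com}<\tau_{ab}\cdot\witness{}{\varnothing}$ or $\witness{}{ab\com}=0$), i.e. the condition obtained from \Cref{eqn:real-valued derived PCR} by dropping the comparisons with $\witness{}{ab},\witness{}{a\com b\com},\witness{}{a\com b}$. Clearly $ab\in G\THEN ab\in R$. The real content is that $R$ is non-degenerate, and this is where I expect the main obstacle to lie. To prove non-degeneracy of $R$, one must rule out $a\leq_R a\com$ and $a\com\leq_R a$ simultaneously; the natural approach is a potential/telescoping argument: if $a=a_0\leq_R a_1\leq_R\cdots\leq_R a_k=a\com$ is a derivation, then for each step one has $\witness{}{a_{i-1}a_i\com}<\tau_{a_{i-1}a_i}\witness{}{\varnothing}$ or $=0$, and one wants to sum these up using the subadditivity-type requirements 3, 4 and especially the "metric" requirement 5, $\witness{}{ac\com}+\witness{}{a\com c}\leq\witness{}{ab\com}+\witness{}{a\com b}+\witness{}{bc\com}+\witness{}{b\com c}$, iterated along the chain, to bound $\witness{}{a\,a\com}+\witness{}{a\com a}=0$ from "below" by something that a two-sided derivation would force to be strictly positive — producing a contradiction. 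Concretely, requirement 5 gives $\witness{}{a_0 a_k\com}+\witness{}{a_0\com a_k}\le\sum_{i=1}^k(\witness{}{a_{i-1}a_i\com}+\witness{}{a_{i-1}\com a_i})$; with $a_k=a_0\com$ the left side is $\witness{}{a_0 a_0}+\witness{}{a_0\com a_0\com}=\witness{}{\varnothing}$, so $\witness{}{\varnothing}\le\sum_i(\witness{}{a_{i-1}a_i\com}+\witness{}{a_{i-1}\com a_i})$, and combining a forward derivation of $a\le_R a\com$ with the (complement-symmetric) backward one should push both $\sum\witness{}{a_{i-1}a_i\com}$ and $\sum\witness{}{a_{i-1}\com a_i}$ strictly below $\tfrac12\witness{}{\varnothing}$ via the threshold bounds — provided $\sum_i\tau_{a_{i-1}a_i}\le\tfrac12$ or some such estimate holds. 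Handling the bookkeeping of the thresholds $\tau_{ab}\in(0,1)$ along an arbitrarily long chain is the delicate point; I anticipate that the "$\witness{}{ab\com}=\witness{}{a\com b}=0$" disjunct must be treated separately (these edges cost nothing and cannot, by requirement 5 applied locally, create a two-sided cycle on their own), and that one reduces to chains all of whose steps use the strict-inequality disjunct.

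\emph{Step 3: conclude.} Once $R$ is known to be non-degenerate, non-degeneracy of $G$ is immediate: from $G\subseteq R$ we get $a\leq_G b\THEN a\leq_R b$, hence $N(G)\subseteq N(R)$ (a $G$-negligible element is $R$-negligible), and therefore $N(G)\cap N(G)\com\subseteq N(R)\cap N(R)\com$, which is empty since $R$ is non-degenerate. This is exactly the argument used for $\derived{\witness{}{\wild};\delta}$ earlier in the excerpt, and it transplants verbatim. The non-triviality hypothesis on $\witness{}{\wild}$ is what guarantees $\witness{}{\varnothing}>0$, which is needed both for the threshold term $\tau_{ab}\cdot\witness{}{\varnothing}$ to be a genuine positive bound and for the strict inequality in Step 2 to yield an honest contradiction rather than $0<0$.
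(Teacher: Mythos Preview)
Your Step~2 has a real gap: the auxiliary relation $R$ you define (keep only the threshold condition, drop the comparisons with $\witness{}{ab},\witness{}{a\com b\com},\witness{}{a\com b}$) need \emph{not} be non-degenerate, so the containment $N(G)\subseteq N(R)$ proves nothing. A concrete counterexample: take $\sens=\sens(\{a,b\})$ with the 2-weight coming from the uniform measure on the four atoms, i.e.\ $\witness{}{ab}=\witness{}{ab\com}=\witness{}{a\com b}=\witness{}{a\com b\com}=\tfrac14$, and set all thresholds $\tau_{xy}=0.3$. Then every proper pair $xy$ has $\witness{}{xy\com}=\tfrac14<0.3=\tau_{xy}\witness{}{\varnothing}$, so all of $ab,ab\com,a\com b,a\com b\com,ba,\ldots$ lie in $R$; in particular $a\leq_R b\leq_R a\com$ and $a\com\leq_R b\com\leq_R a$, so $R$ is degenerate. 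Meanwhile none of these pairs lie in $G$ (the strict inequality $\tfrac14<\min(0.3,\tfrac14,\tfrac14,\tfrac14)$ fails), so $G$ is non-degenerate. Your own caveat ``provided $\sum_i\tau_{a_{i-1}a_i}\le\tfrac12$ or some such estimate holds'' is exactly the problem: no such estimate is available for arbitrary chains and thresholds in $(0,1)$.

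The paper's argument works directly on $G$ and hinges precisely on the comparison you dropped. From the definition of $G$, whenever $ab\in G$ one has $\witness{}{a\com b}-\witness{}{ab\com}\ge 0$, with equality only when both terms vanish. Define $\ori{a}{b}:=\witness{}{a\com b}-\witness{}{ab\com}$; requirement~4 of \Cref{defn:real-valued 2-weight} gives the additive cocycle identity $\ori{a}{c}=\ori{a}{b}+\ori{b}{c}$, so along any $G$-cycle $a=a_0,a_1,\ldots,a_n=a$ (passing through $a\com$) one gets $0=\ori{a}{a}=\sum_k\ori{a_{k-1}}{a_k}$ with every summand $\ge 0$, forcing $\witness{}{a_{k-1}a_k\com}=\witness{}{a_{k-1}\com a_k}=0$ for each $k$. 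Only then does your triangle-inequality idea (requirement~5, for $\tri{a}{b}:=\witness{}{a\com b}+\witness{}{ab\com}$) finish the job: $\witness{}{\varnothing}=\tri{a}{a\com}\le\sum_{k\le m}\tri{a_{k-1}}{a_k}=0$, contradicting non-triviality. The missing idea in your sketch is the cocycle $\omega$ coming from requirement~4; without it the threshold bounds alone cannot control long chains.
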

\begin{proof}
See \Cref{proof:statistical derived PCR}.\qed
\end{proof}
Let $G=\derived{\witness{}{\wild}}$ for a real-valued 2-weight $\witness{}{\wild}$.
A notion analogous to that of a minset may be considered in the real-valued setting, taking into account the reversal of the value hierarchy (now, bigger values of $\val{}$ are considered the most significant):
\begin{equation}
    \minset{\witness{}{\wild}}:=\set{a\in\sens}{\witness{}{a}>\witness{}{a\com}}\,,
\end{equation}
The argument that $M=\minset{\witness{}{}}$ is $G$-coherent and forward-closed, for any choice of the thresholds $\tau_\wild$, is the same as the one given for minsets in the qualitative setting (\Cref{lemma:minsets are forward-closed coherent} in \Cref{appendix:snapshots:minima}), upon reversing the relevant inequalities.
This time around, however, the non-empty convex subset $F=\half{M;G}$ of $\dual{G}$ does not directly relate to extreme points of the value signal $\val{}$ in $\ham$, but, rather, to a notion of {\em center of mass} of $G^\circ$ with respect to $\witness{}{\wild}$, seen as a representation of the distribution of the value signal over $\ham$.

\subsubsection{Snapshot update.}
Similarly to the qualitative setting, in the real-valued setting we will also be assembling our estimate of the [integrals of the] observed value signal from point-masses, this time replacing minimization with linear combinations.
The update rule for a {\em discounted integrator snapshot} takes the form:
\begin{equation}\label{eqn:general discounted update}
    \left\{\begin{array}{rcl}
    \witness{}{ab}\at{0}&:=&\val{}\at{0}\cdot\delta_{u_0}(\half{ab})\,,\\[.5em]
    \witness{}{ab}\at{t+1}&:=&q\at{t}\cdot\witness{}{ab}\at{t}+(1-q\at{t})\cdot\val{}\at{t+1}\cdot\delta_{u_{t+1}}(\half{ab})\,,
    \end{array}\right.
\end{equation}
where the $q\at{t}\in(0,1]$ are the {\em discount coefficients}, $t\geq 1$.
The fact that $\witness{}{\wild}\at{t+1}$ is a convex combination of real-valued 2-weights ensures that $\witness{}{\wild}\at{t+1}$ is a real-valued 2-weight as well.

\paragraph{Types of update.} We studied two variants of the discounted integrator snapshot:
\begin{enumerate}
    \item{\bf Empirical Snapshot. } In this case, one sets $q\at{t}:=\tfrac{t+1}{t+2}$, resulting in
    \begin{equation}
        \witness{}{ab}\at{t}=\frac{1}{t}\sum_{s=0}^t\val{}\at{s}\cdot\delta_{u(s)}(\half{ab})\,,
    \end{equation}
    which is the empirical estimate for the integral of $\val{}$ over $\rho(a)\cap\rho(b)$.
    For this snapshot type, we used fixed thresholds $\tau_{ab}=\tau$.
    \item{\bf Fixed Discount Snapshot.} Here one sets $q\at{t}:=q$, a constant, playing the role of a rate at which information acquired about the signal `fades' unless continually reinforced by incoming observations:
    \begin{equation}
        \witness{}{ab}\at{t}=q^t\val{}\at{0}\cdot\delta_{u(0)}(\half{ab})+(1-q)\sum_{s=1}^{t} q^{t-s}\val{}\at{s}\cdot\delta_{u(s)}(\half{ab})\,.
    \end{equation}
    The eventual purpose of using an update of this form is to accommodate settings where $\val{}$ has multiple peaks, as well as, possibly, the dynamic setting, provided the value signal changes sufficiently slowly.
\end{enumerate}

\paragraph{PAC learning guarantees.} The notion of {\em probably approximately correct (PAC) learning} introduced by Valiant~\cite{Valiant-PAC_paper} is one framework within which the quality of UMAs based on real-valued snapshots could be discussed.
The assumptions of this setting are that the observations $(u_t)_{t\geq 0}$ are i.i.d. samples of a fixed distribution on $\ham$, in which case, for any fixed pair $a,b\in\sens$ with $\{a,a\com\}\cap\{b,b\com\}=\varnothing$, one could think of the sequence of input values $\mathrm{X}_{ab}\at{t}:=\val{}\at{t}\cdot\delta_{u_t}(\half{ab})$ as a sequence of i.i.d. samples of a random variable $\mathrm{X}_{ab}\in[0,A]$, where $A$ is an upper bound on the value signal $\val{}$. 
Equation~\Cref{eqn:general discounted update} then lets us think of $\witness{}{ab}\at{t}$ as random variables $\mathrm{Y}_{ab}\at{t}$ constructed according to $\mathrm{Y}_{ab}\at{t+1}=q\at{t}\mathrm{Y}_{ab}\at{t}+(1-q\at{t})\mathrm{X}_{ab}\at{t+1}$.
Applying induction one immediately verifies that $\expect{\rv{Y}_{ab}\at{t}}=\expect{\rv{X}_{ab}}$ for all $t\geq 0$.
It thus becomes reasonable to ask how many samples are required in order to bring the probability that $\left|\mathrm{Y}_{ab}\at{t}-\expect{\mathrm{X}_{ab}}\right|>\varepsilon$ below a specified threshold.
Valiant~\cite{Valiant-PAC_paper} had long ago observed that Chernoff bounds are a powerful tool for answering such questions.
Computing Chernoff bounds for our setting yields:
\begin{proposition}[PAC learning in empirical snapshots]\label[proposition]{prop:PAC real-valued} Given $\delta>0$, the empirical snapshot learning mechanism attains a precision of $\delta$ on all weights, with probability $1-\delta$ from a number of i.i.d randomized samples that is at most linear in $\tfrac{1}{\delta}$, at a rate depending only on the value signal.
\end{proposition}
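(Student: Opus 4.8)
The plan is to recognize that, for the empirical snapshot, every snapshot weight $\witness{}{ab}\at{t}$ is \emph{literally} the running average of i.i.d.\ bounded random variables, and then to push a Chernoff/Hoeffding concentration bound through a union bound over the (quadratically many) pairs. Since the proposition only concerns the weights — not the derived PCR — no geometry is needed; the entire argument lives at the level of \Cref{eqn:general discounted update} with $q\at{t}=\tfrac{t+1}{t+2}$.

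First I would set up the probabilistic model. By the i.i.d.\ assumption on the observation stream $(u_t)_{t\ge 0}$ and the fact that $u\mapsto\delta_u(\half{ab})$ is a fixed $\{0,1\}$-valued function, for every $a,b\in\sens$ the inputs $\rv{X}_{ab}\at{s}:=\val{}\at{s}\cdot\delta_{u_s}(\half{ab})$ form an i.i.d.\ sequence of copies of a random variable $\rv{X}_{ab}\in[0,A]$, where $A\ge 1$ bounds the value signal; many of these are identically $0$ because $\half{ab}=\varnothing$ (precisely when $\{a,a\com\}\cap\{b,b\com\}\ne\varnothing$, e.g.\ $\witness{}{aa\com}$ or $\witness{}{\mbf 0 a}$), so only the at most $M=\binom{\card{\sens}}{2}-\tfrac{\card{\sens}}{2}$ pairs with $\{a,a\com\}\cap\{b,b\com\}=\varnothing$ (together with the diagonal, which is anyway pinned to the off-diagonal values via requirement 3 of \Cref{defn:real-valued 2-weight}) carry any randomness. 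For the empirical snapshot, $\witness{}{ab}\at{t}$ is exactly the empirical mean $\tfrac1t\sum_{s=0}^t\rv{X}_{ab}\at{s}$ of the samples seen so far, and, as recorded in the text preceding the statement, it is unbiased: $\expect{\witness{}{ab}\at{t}}=\expect{\rv{X}_{ab}}$ for every $t$.

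Next I would invoke Hoeffding's inequality — the Chernoff-type bound Valiant points to — to obtain, for each such pair and each sample size $n$,
\[
	\prob{\left|\witness{}{ab}\at{n}-\expect{\rv{X}_{ab}}\right|\ge\varepsilon}\;\le\;2\exp\!\left(-\tfrac{2n\varepsilon^2}{A^2}\right),
\]
and then union-bound over the $O(\card{\sens}^2)$ pairs that matter, bounding the probability that \emph{some} weight deviates by more than $\varepsilon$ by $2M\exp(-2n\varepsilon^2/A^2)$. Requiring this to be at most $\delta$ and solving for $n$ yields the explicit sufficient bound $n\ge\frac{A^2}{2\varepsilon^2}\ln\frac{2M}{\delta}$. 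With the target precision $\varepsilon$ held fixed, the number of samples needed for confidence $1-\delta$ is logarithmic — hence \emph{a fortiori} at most linear — in $1/\delta$, and the constant $A^2/2\varepsilon^2$ (the ``rate'') is governed only by the bound on the value signal; setting $\varepsilon=\delta$ then recovers the statement as phrased.

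I do not expect a genuine analytic obstacle: once the i.i.d.\ structure is identified, the concentration step is textbook. The one place that needs care is bookkeeping — confirming that the union bound really ranges only over the $O(\card{\sens}^2)$ weights that fluctuate (the rest being forced to $0$ by $\half{ab}=\varnothing$, or determined from the others through the 2-weight identities of \Cref{defn:real-valued 2-weight}) — and, more delicately, stating the final sample bound so that the advertised ``at most linear in $1/\delta$'' is the correct reading: one must keep the precision and confidence roles of $\delta$ conceptually distinct, or else read ``linear'' as a deliberate over-estimate of the true logarithmic dependence on the confidence parameter.
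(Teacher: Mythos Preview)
Your proposal is correct and follows essentially the same approach as the paper: both recognize that the empirical snapshot weight is an i.i.d.\ sample mean of a bounded random variable and apply a Chernoff-type concentration inequality. The paper derives the bound via the moment-generating-function argument leading to a KL-divergence form, whereas you invoke Hoeffding's inequality directly; the paper also does not spell out the union bound over pairs that you include, so your write-up is in that respect slightly more complete.
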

\begin{proof}
See \Cref{proof:PAC real-valued}.\qed
\end{proof}
Our simulation results indicate that similar guarantees could be expected for the discounted setting, but the standard Chernoff-inspired approaches for leveraging the independence of the observations do not seem to work.
Since discounted snapshot learning makes it easier for the representation to recover from false implications, it is important to ascertain whether or not a result of the form \Cref{prop:PAC real-valued} could be proved, and if not---in what circumstances it might fail.

\paragraph{Other learning scenarios.}
The PAC learning guarantees of the preceding paragraph are predicated on the assumption that the sequence of observations is statistically independent.
This assumption becomes unreasonable for an observer of a system whose state evolves continuously over time, subject to some internal dynamics, in which case it is often unlikely that contiguous observations will be uncorrelated.

A fairly general model of such settings is provided by Markov chains~\cite{Rivest_Schapire-diversity}, where the underlying Markov process models the (uncertain) dynamics of the observed system.
In our setting, one regards $\rho\com(\spc)$---the set of observable possible worlds in $\ham(\sens)$ (\Cref{models:first notions})---as the set of states of a fixed (albeit unknown) Markov process.
Then, by the ergodic theorem for Markov chains~\cite{Feller-the_book}, one has:
\begin{proposition}\label[proposition]{prop:convergence for random walks} Suppose the sequence of observations $u_t\in\ham$ is sampled from an a-periodic, irreducible, positive-recurrent Markov chain with limiting distribution $\pi$. Then the empirical snapshot weights $\witness{}{ab}\at{t}$ learned from the constant value signal $\val{}\at{t}=1$ converge to the marginals $\int_{\rho(a)\cap\rho(b)}\dd\pi$, for all $a,b\in\sens$.\qed
\end{proposition}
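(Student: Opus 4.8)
The plan is to recognize the empirical snapshot weights as ergodic time averages and then quote the Markov ergodic theorem. With the constant value signal $\val{}\at{t}\equiv 1$ and the empirical choice $q\at{t}=\tfrac{t+1}{t+2}$ of discount coefficients, the update rule \Cref{eqn:general discounted update} collapses (as already observed in the text) to the running average $\witness{}{ab}\at{t}=\tfrac{1}{t}\sum_{s=0}^{t}\delta_{u_s}(\half{ab})$. Writing $g_{ab}$ for the $\{0,1\}$-valued observable on $\ham$ defined by $g_{ab}(u)=\delta_u(\half{ab})$ --- so that $g_{ab}(u)=1$ precisely when $\{a,b\}\subseteq u$ --- this says $\witness{}{ab}\at{t}$ is, up to the vanishing $O(1/t)$ contribution of the $s=0$ term, the Birkhoff average $\tfrac{1}{t}\sum_{s=1}^{t}g_{ab}(u_s)$ of a bounded function along the sample path of the chain.

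Next I would apply the ergodic theorem for Markov chains (\cite{Feller-the_book}): an irreducible, aperiodic, positive-recurrent chain has a unique stationary distribution, equal to its limiting distribution $\pi$, and for every bounded observable $g$ on the state space --- here $\rho\com(\spc)=\model(\rho)\subseteq\ham$ --- and every initial state, $\tfrac{1}{n}\sum_{s=1}^{n}g(u_s)\to\sum_{u}\pi(u)\,g(u)$ almost surely (hence also in $L^1$ and in probability). Taking $g=g_{ab}$, which is bounded and therefore in $L^1(\pi)$, yields $\witness{}{ab}\at{t}\to\sum_{u\in\model(\rho)}\pi(u)\,g_{ab}(u)=\pi\big(\half{ab}\big)$; since $\sens$ is finite, the convergence holds simultaneously over all pairs $a,b\in\sens$ with $\{a,a\com\}\cap\{b,b\com\}=\varnothing$, and trivially for the remaining pairs.

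It remains to match $\pi\big(\half{ab}\big)$ with the claimed quantity $\int_{\rho(a)\cap\rho(b)}\dd\pi$, which is pure bookkeeping about how $\pi$ is transported between $\ham$ and $\spc$. Under the sensory-equivalence correspondence $x\leftrightarrow\rho\com(x)$ of \Cref{models:first notions}, a history $x\in\spc$ lies in $\rho(a)\cap\rho(b)$ exactly when $\rho\com(x)\in\half{ab}$, so the event $\half{ab}$ (intersected with the chain's state space $\model(\rho)$) is precisely the image of $\rho(a)\cap\rho(b)$; hence $\pi\big(\half{ab}\big)=\int_{\rho(a)\cap\rho(b)}\dd\pi$ by the very definition of the marginal, and the proposition follows. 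The step I expect to require the most care is exactly this identification --- fixing the convention under which $\pi$ is a measure on perceptual classes rather than on $\spc$ itself, and checking that the ``marginals'' in the statement mean what we want --- together with making explicit the mode of convergence (almost sure, hence in probability) intended in the conclusion; the probabilistic core is entirely standard once the time-average form of the weights has been extracted.
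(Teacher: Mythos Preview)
Your proposal is correct and takes essentially the same approach as the paper: the paper's entire proof is the one-line invocation ``by the ergodic theorem for Markov chains~\cite{Feller-the_book}'' preceding the statement, and you have simply spelled out that application in detail---recognizing the empirical weights as time averages of the bounded indicator $g_{ab}$ and identifying the ergodic limit $\pi(\half{ab})$ with the stated marginal. Your added bookkeeping about the correspondence between $\half{ab}$ and $\rho(a)\cap\rho(b)$ via $\rho\com$ is exactly the unpacking the paper leaves implicit.
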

In particular, any thresholded implications derived from the real-valued 2-weight $\witness{}{ab}:=\int_{\rho(a)\cap\rho(b)}\dd\pi$ will be recovered in this process.

Finally, it follows from the decomposition theorem for Markov chains~\cite{Feller-the_book} that the ergodicity assumption in the above proposition does not impose undue restrictions on our model, as we only expect an agent to learn implications from recurring observations anyway.
We also note that the special case of lazy random walks guarantees an exponential rate of convergence to the limiting distribution in many interesting cases (see Theorem 5.1 of~\cite{Lovasz-random_walks_survey} and Theorem 9 of \cite{Rivest_Schapire-diversity}).

\section{Simulations.}\label{simulations}
We present two kinds of simulation studies.
\Cref{sim:1} illustrates the preceding results about learning with different snapshot types in a sample of `toy' settings. \Cref{sim:BUAs} explains how to construct simple UMA-based binary agents, whose performance is considered in~\Cref{sim:2}.

\subsection{Simulation settings.}\label{sim:settings} Each setting considered in \Cref{sim:1} consists of an observer/agent $\mathscr{A}$ situated in a discrete environment, $\env$.
For simplicity, the queries assigned to $\mathscr{A}$ are functions of the agent's current position in the environment, which we denote by $\pos(t)\in\env$.
Let $[N]:=\{0,\ldots,N\}$.
The environments and sensory endowments we consider are:
\begin{itemize}
    \item{\bf Discretized interval with GPS. } Here $\env=[N]$, and $\mathscr{A}$ has queries $\alphabet=\{a_1,\ldots,a_N\}$, with $a_i$ holding true at time $t$ iff $\pos(t)<i$;
    \item{\bf Discretized circle with beacons. } Now set $\env=[N-1]$ with $a_i$ ($i=0,\ldots,N-1$) holding true iff $\pos(t)$ is close enough to $i$, modulo $N$;
    \item{\bf Discretized interval with random position sensors. } $\env=[N]$ again, and $\alphabet=\{a_1,\ldots,a_N\}$, with $a_i$ true at time $t$ iff $\pos(t)\in A_i$, where $A_i\subsetneq\env$ are chosen uniformly at random ahead of each simulation run.
\end{itemize}

We consider different value signals, all set to be functions of the position, depending on snapshot type:
\begin{itemize}
    \item{\bf Qualitative Snapshots. } Two natural choices of the signal are considered,
    \begin{equation}\label{eqn:sniffy qualitative signals}
        \val{d}(p):=\left\{\begin{array}{rl}
            0 &\text{if }p=T\,,\\
            1 &\text{if }p\neq T\,,
        \end{array}\right.
        \quad\text{and}\quad
        \val{s}(p):=\dist{p}{T}\,,
    \end{equation}
    where $T$ should be regarded as a ``target'' position of high significance. 
    \item{\bf Real-valued Snapshots. } To parallel the ``sharp peak''/``dull peak'' signal variants from the qualitative setting, we pick: 
    \begin{equation}\label{eqn:sniffy real-valued signals}
        \val{d}(p):=1+\diam{\env}-\dist{p}{T}\,,\quad\text{and}\quad
        \val{s}(p):=\val{d}(p)^4\,,
    \end{equation}
    respectively.
    For discounted snapshots, the discount coefficients were picked to be $q=0.999$. 
    Learning thresholds are constant, where relevant, and are chosen to equal $\tfrac{1}{2N}$ to ensure correct learning of implications among the initial sensors by the real-valued snapshots.
\end{itemize}

\subsection{Simulation results for observers.}\label{sim:1} 
To assess the speed and quality of PCR learning, we track the error-rate of the learned PCR representation---the fraction of correctly learned PCR implications---over time.

\subsubsection{Repeated i.i.d. sampling (PAC-style setting).}\label{sim:1:iid}
\Cref{fig:sniffy_OBS_iid_error_rates} compares logarithmic plots of two mean error rates over $100$ observation sequences generated by repeated i.i.d. uniform sampling of positions from the environment, for the settings described in \Cref{sim:settings} for $N=20$:
\begin{enumerate}
    \item{\bf Solid lines. } The mean fraction of incorrect implications in the learned PCR relative to the expected PCR for the given learner in each setting, as a function of time;
    \item{\bf Dashed lines. } The mean fraction of incorrect implications in the transitive closure of the learned PCR relative to the poc set of actual implications among the provided sensors, as a function of time;
    \item{\bf Shaded regions } depict the mean$\pm$standard deviations for the corresponding quantities.
\end{enumerate}
\begin{figure}[ht]
    \includegraphics[width=1\textwidth]{./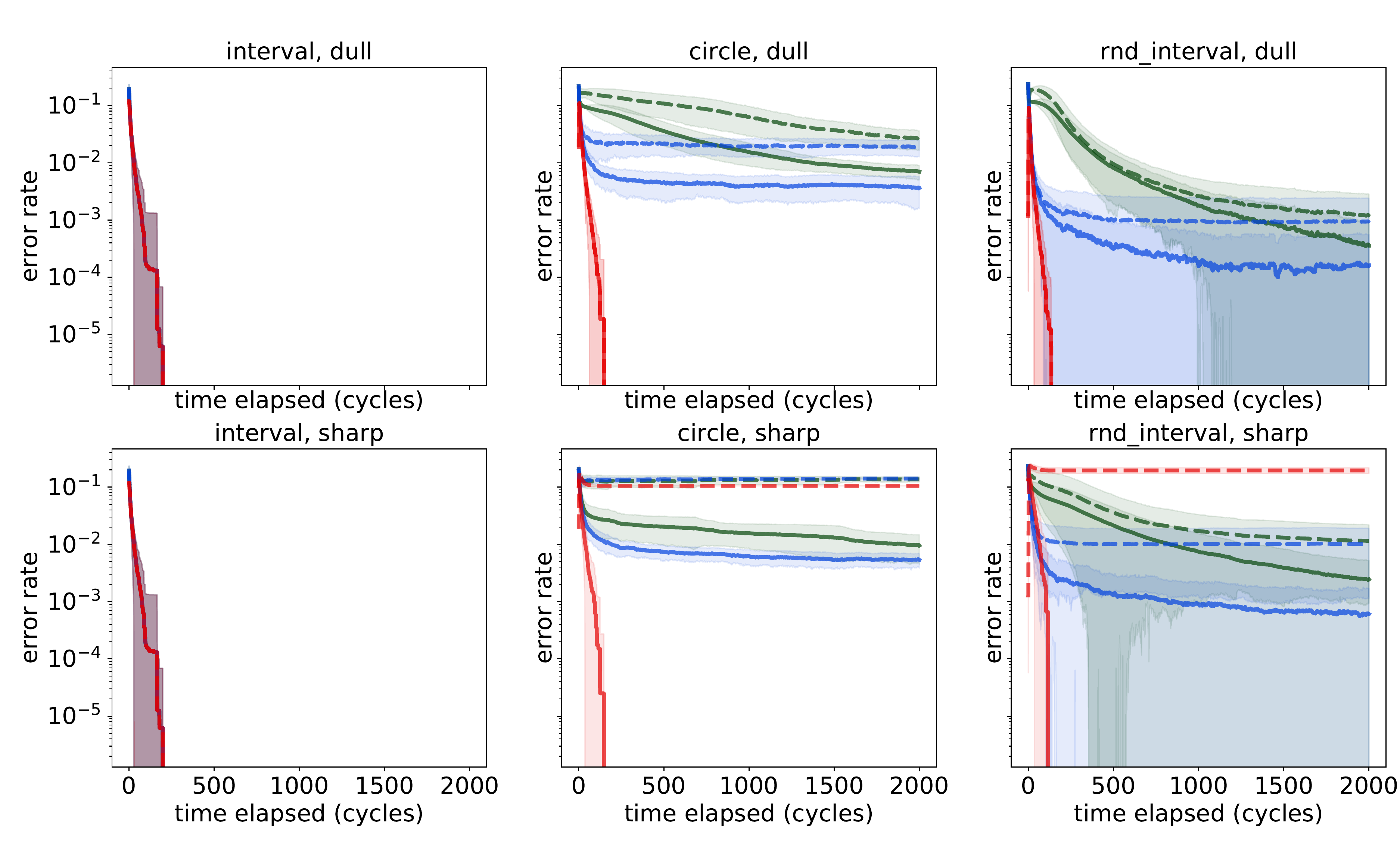}
    \caption{Evolution over time of the log mean error rates (curves in bold, logarithmic scale), plus/minus standard deviation (shading), over $100$ runs, of PCR representations acquired by i.i.d. uniform random sampling, for different snapshot types: empirical (red), discounted (green), qualitative (blue).
    Columns, left to right: interval w/GPS sensors; circle w/beacon sensors; and interval w/random sensors.
    Rows, top to bottom: $\val{1}$ (``dull peak'' value signal), $\val{2}$ (``sharp peak'' value signal), as in \Cref{eqn:sniffy qualitative signals,eqn:sniffy real-valued signals}.
    See \Cref{sim:1:iid} for more details.
    }
    \label{fig:sniffy_OBS_iid_error_rates}
\end{figure}

The first most notable feature of the figures---beyond confirming (and, in fact, exceeding) the theoretical results---is the complete agreement of the curves for all six learners on the interval (left column).
Since the sensors in this case are nested, the poc set of true implications coincides with the derived PCR induced by the expected weights and is recovered quickly and completely.

Next, on the circle we begin to see the difference between the quality of the learned PCR and the quality of the inferred system of implications as compared to the real ones.
This deterioration in quality was to be expected, as transitive closure enables the deduction of implications from chains of {\em approximate} implications recorded in the PCR.
Observe that the discrepancy is bigger for the sharp peak settings, in which a very small degree of significance is assigned to positions farther away from the target.
This difference is most notable in the qualitative learners: while completely absent in the dull peak setting, it is very visible in the sharp peak setting.
We account for these differences, among other things, in the detailed analysis of the true PCR provided in~\Cref{app:sniffy on the circle}.

A similar discrepancy is visible, but less pronounced in the third column, though we must keep in mind that, in this column, each run was executed with a different random collection of sensors.
The differences are less pronounced than on the circle because in the sensorium we have chosen for the circle there is very little nesting, while in a random sensorium, the probability of nesting is non-negligible.
Nesting relations $\rho(a)\subset\rho(b)$ in the sensorium forces $\witness{}{ab\com}\at{t}=0$ in real-valued snapshots, and $\witness{}{ab\com}\at{t}=\infty$ in qualitative snapshots at all times $t$, guaranteeing that $a<b$ will be learned with sufficient exposure. 
The rotation-invariant sensorium we chose for the circle has very little nesting, and hence much more room for error if the provided value signal happens to discount too many positions as being insignificant.
Deeper differences arise as a result of the circle's non-trivial homotopy type, which we discuss in~\Cref{sim:2:sniffy:circle} and further in~\Cref{discussion}.

Finally, let us remark that we do not yet have a good explanation for the good behavior of the discounted learners.
We were unable to prove any concentration inequalities for the discounted weight update to parallel the ones obtained for the empirical one.
Moreover, the quality of learning appears to be very sensitive to the choice of discount parameter.
In fact, it was this difficulty with appropriately selecting and controlling the discount parameter that motivated the construction of qualitative learners in the first place.

\subsubsection{Lazy random walk (learning from ``motor babble'').}\label{sim:1:lazy}
For a robotic system, a more realistic mode of sampling from the environment is ``motor babble'': a random walk on $\env$ generated by repeated i.i.d. sampling from the space of available actions/decisions.
In this mode, each instance of the agent $\mathscr{A}$ is constrained to a small set of available actions, depending on $\env$:
\begin{itemize}
    \item{\bf Discretized interval. } The allowed actions are a single step to the right ($\rt\colon p\mapsto\min\{N,p+1\}$), a step to the left ($\lt\colon p\mapsto\max\{p-1,0\}$), or to remain in place;
    \item{\bf Discretized circle. } Similarly, on the circle $\rt\colon p\mapsto p+1 (\mathrm{mod}\;N)$, or $\lt\colon p\mapsto p-1 (\mathrm{mod}\;N)$ or to do nothing at all.
\end{itemize}
\Cref{fig:sniffy_OBS_lazywalk_error_rates} shows the evolution of the error rates we had considered earlier in \Cref{sim:1:iid}, in the new sampling mode.

\begin{figure}[ht]
    \centering
    \includegraphics[width=1\textwidth]{./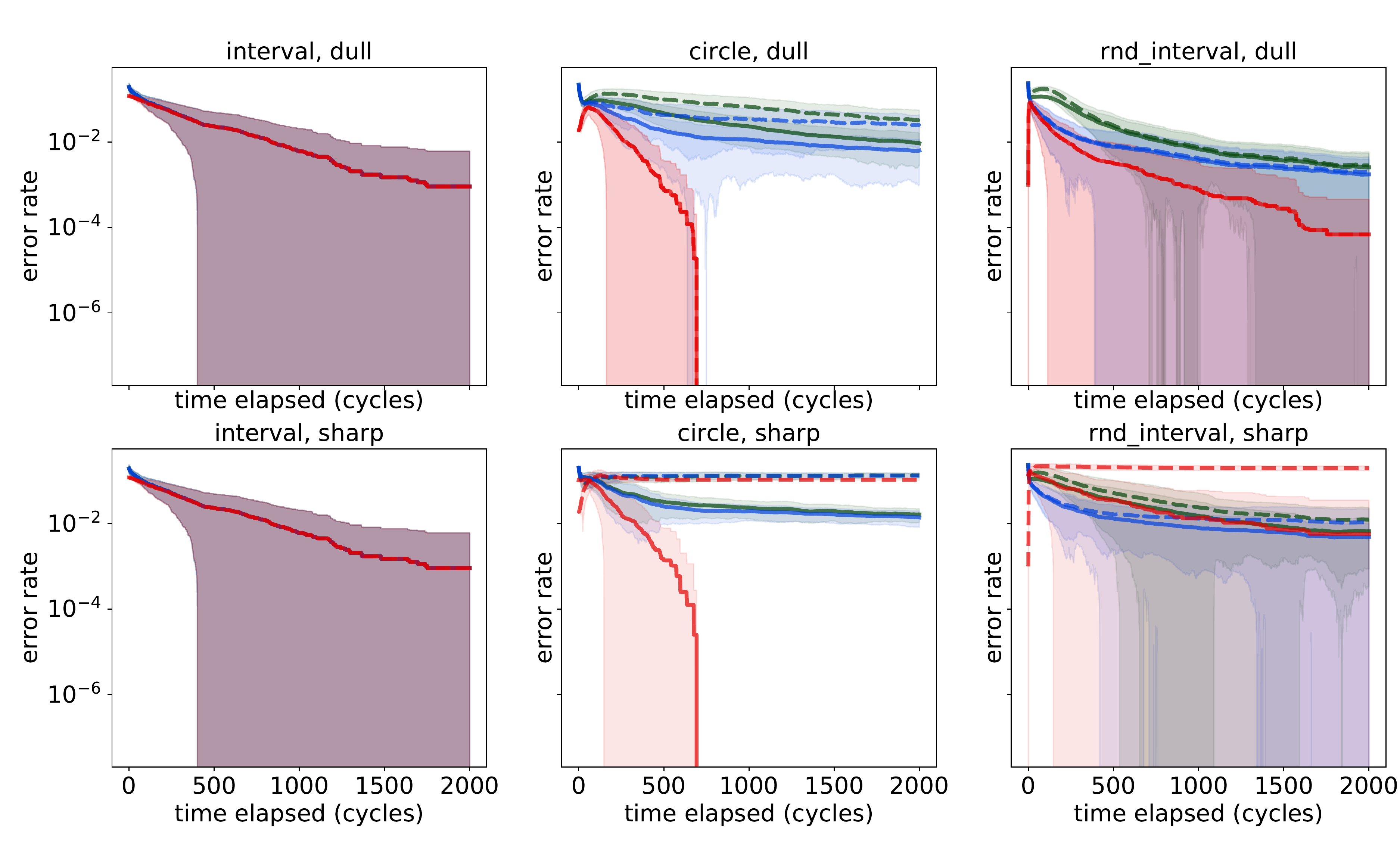}
    \caption{Evolution over time of the mean error rates (curves in bold, logarithmic scale), plus/minus standard deviation (shading), over $100$ runs, of PCR representations acquired by a lazy walk over the environment, for different snapshot types: empirical (red), discounted (green), qualitative (blue).
    Columns, left to right: interval w/GPS sensors; circle w/beacon sensors; and interval w/random sensors.
    Rows, top to bottom: $\val{1}$ (``dull peak'' value signal), $\val{2}$ (``sharp peak'' value signal).
    See \Cref{sim:1:lazy} for details.
    }
    \label{fig:sniffy_OBS_lazywalk_error_rates}
\end{figure}
This set of plots provides a good illustration of the robustness of UMA learning---especially with qualitative snapshots---where the quality of learning improves over time (though now at a much slower pace, due to the change in the sampling process), as the observer gains more exposure to the observed system.

\subsubsection{Learning the target set over time.}\label{sim:1:target}
We compare how UMA learners of different snapshot types develop their notion of the target set, $\minset{\witness{}{\wild}}$, over time.
For this purpose, \Cref{fig:sniffy_OBS_interval_AT_walk} shows this evolution for a single run from a separate batch of lazy random walk observations in a smaller environment ($N=10$), over a shorter period of time ($500$ cycles).
The features observed in this plot are, however, typical of the runs we generated for \Cref{fig:sniffy_OBS_lazywalk_error_rates}.
``Downgrading'' the experiment to a smaller environment enabled faster learning, and hence plotting the run at a lower resolution, without requiring the reader to magnify the plot attempting to discern its significant features.

\begin{figure}[ht]
    \centering
    \includegraphics[width=\textwidth]{./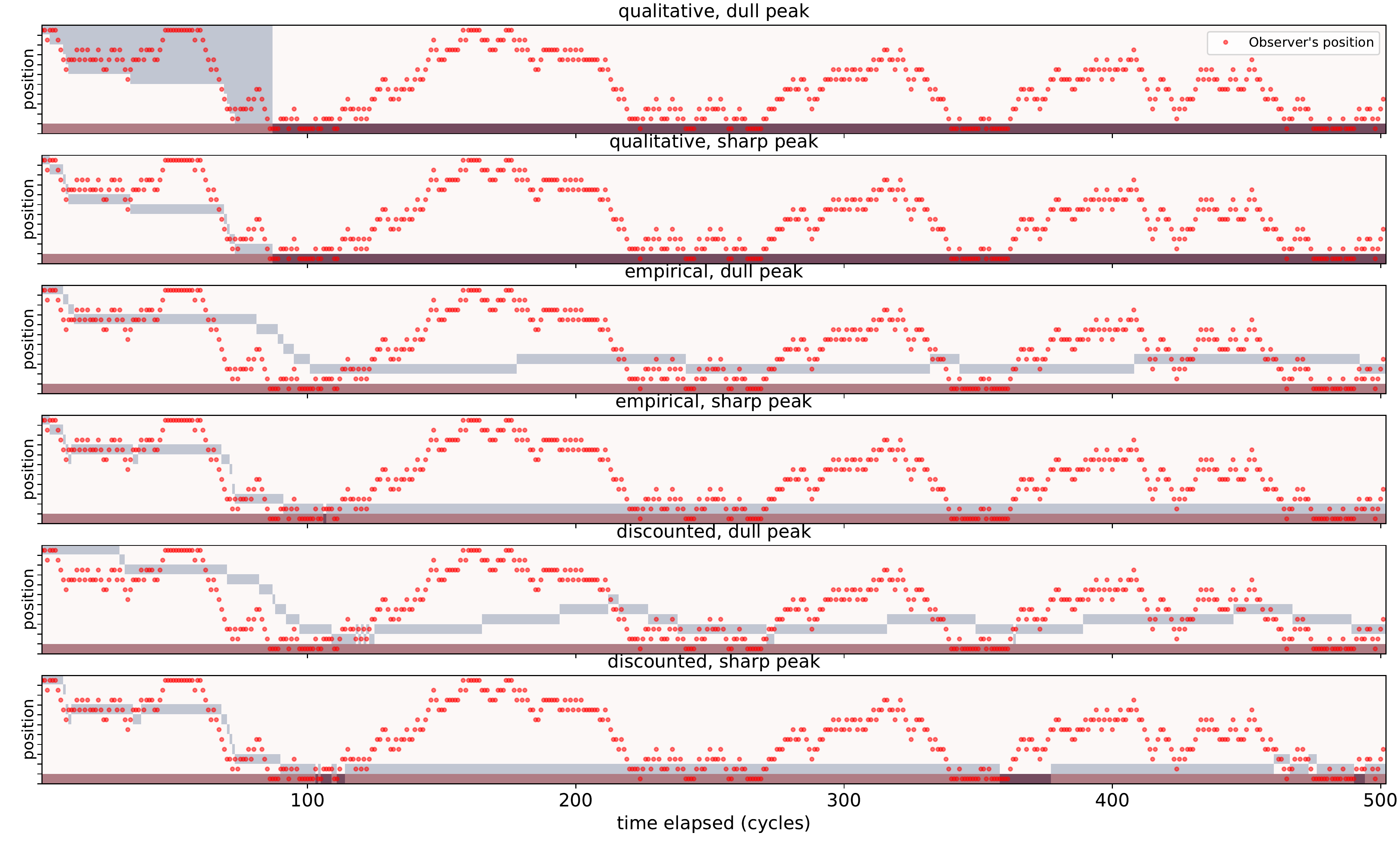}
    \caption{Evolution of an UMA observer's notion of target set (vertical bars drawn in grey) towards the true target (pink bar), in the interval w/GPS setting, $N=10$.
    The agent's position (red dot) evolves as a symmetric random walk.
    See \Cref{sim:1:target} for details.
    }
    \label{fig:sniffy_OBS_interval_AT_walk}
\end{figure}

Observe the eventual precision and efficiency of the qualitative reasoners, compared to the drift (away from the target) clearly noticeable for the real-valued learners.
Also note some initial delay in learning the target (in comparison with other types) in the discounted learners: the value of $q$ places a bound on how quickly an implication may be learned.

Both these observations are typical of all the batches we have observed.
This suggests the qualitative UMA learners as the best bet for upgrading UMAs to perform learning in the dynamic setting.
This also suggests that the real-valued learners could benefit from more careful shaping of the value signal, with significantly sharper peaks, as well as from lower values of the discount parameter (for discounted learners), if learning on shorter time scales is important.

\subsection{Binary UMA agents.}\label{sim:BUAs}
Postponing a more general formal definition of a binary UMA agent to another paper, let us describe just the simple sub-class of these agents considered here.

\paragraph{Actions as agents. } Given the environment $\env$ and the associated set of queries $\alphabet$ as described above in \Cref{sim:settings}, we regard each of the actions $\alpha$ available to $\mathscr{A}$ as an individual agent $\mathscr{A}_\alpha$, in charge of making the decision whether to act ($\alpha$) or not to act ($\alpha\com$).
Any conflicts between decisions made by different $\mathscr{A}_\alpha$ are, at this stage of development, arbitrated by hard-wiring (see example in \Cref{sim:2:sniffy} below).

\paragraph{Extended query set. } For $\mathscr{A}_\alpha$ to be capable of considering the consequences of its decisions, we have to extend $\alphabet$ so as to enable reasoning about the past.
Specifically, each $\mathscr{A}_\alpha$ is assigned a value signal $\val{\alpha}$, and an initial set of queries $\sens_\alpha:=\sens(\alphabet\cup\delay\alphabet)$, where $(\delay)$ is the {\em delay operator}:
the query $\delay q$ holds true at time $t+1$ if and only if $q$ held true at time $t$.

\paragraph{UMA representation conditional on action. } The BUA $\mathscr{A}_\alpha$ maintains {\em two} snapshots, $\witness{\beta}{\wild}$, $\beta\in\{\alpha,\alpha\com\}$.
The $2$-weight $\witness{\beta}{\wild}$ is updated precisely in those transitions in which $\mathscr{A}_\alpha$ acted according to $\beta$.
Thus, at any time $t$, $\witness{\beta}{\wild}\at{t}$ may be used to infer implications conditioned on $\beta$ taking place, by computing a derived graph, $G^\beta\at{t}$.

\paragraph{Prediction. } Given the current state $\current{\beta}\at{t}$ at time $t$ as represented by the $\beta$ snapshot, $\beta\in\{\alpha,\alpha\com\}$, the {\em prediction for time $(t+1)$ given $\beta$}, $\prediction{\beta}\at{t+1}$, is defined to be the coherent projection of $\delay\current{\beta}\at{t}$ with respect to $G^\beta\at{t}$.
This is the collection of sensations which $\mathscr{A}_\alpha$ can {\em prove} will occur if $\beta$ is chosen to take place, provided, of course, $G^\beta\at{t}$ persists into the $(t+1)$-st cycle.

\paragraph{Decision. } At the same time, each of the agent's two snapshots has a notion of where it is that the agent {\em should be}: the subset $\half{\minset{\witness{\beta}{\wild}};G^\beta\at{t}}$.
A simple way for $\mathscr{A}_\alpha$ to make a choice of $\beta\in\{\alpha,\alpha\com\}$ is to pick the value of $\beta$ for which $\divergence{\prediction{\beta}\at{t+1};\minset{\witness{\beta}{\wild}}\at{t}}$ is smaller, and to flip an even coin in the case of a tie (recall~\Cref{defn:divergence}).

\subsection{Simulation results for agents.}\label{sim:2}

\subsubsection{Sniffy: locating a stationary target using ``place field'' sensors.}\label{sim:2:sniffy} Consider an agent $\mathscr{A}$ in one of the two fixed settings described above in \Cref{sim:settings}, with two actions $\rt$ and $\lt$, as defined in \Cref{sim:1:lazy}, implemented as BUAs according to \Cref{sim:BUAs} with the value signals given in Equations~\eqref{eqn:sniffy qualitative signals} and~\eqref{eqn:sniffy real-valued signals}.
To minimize interference between $\rt$ and $\lt$, we impose a hard-wired arbitration mechanism: if $\lt$ and $\rt$ decide to act at the same time, a Bernoulli$-\frac{1}{2}$ random trial decides which one of them to suppress.

At the beginning of each simulation run, Sniffy (our pet agent $\mathscr{A}$) and its target are placed in random positions in $\env$, denoted $\pos(0)$ and $T$, respectively.
The agent then experiences a training period during which every decision by every BUA is overridden by a random one, resulting in a lazy random walk.
Once the training period is over, the BUAs are given control authority, with Sniffy acting according to their decisions.

Finally, following the indications of \Cref{sim:1:target}, we have chosen to replace the discount parameter of $q=0.999$ with $q=1-\tfrac{1}{N+1}$, to enable a faster response by the discounted learners.

\Cref{fig:sniffy_BUAs} reports the results of our simulations.
Each plot shows the mean, plus/minus standard deviation, over $100$ distinct runs, of the distance of the agent to its target as a function of time, in each setting.

\begin{figure}[ht]
    \centering
    \includegraphics[width=\textwidth]{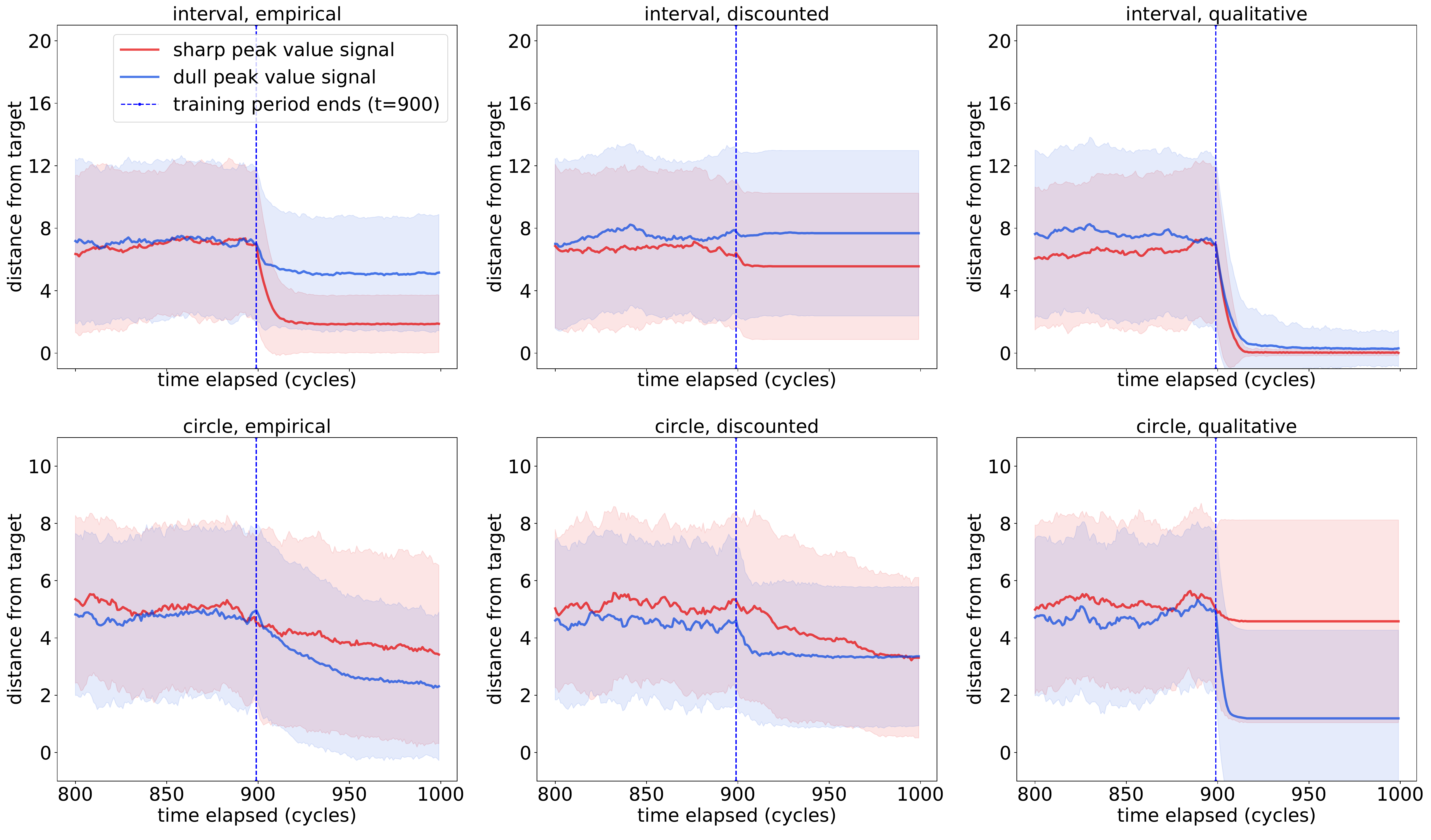}\\
    \caption{BUA implementations of Sniffy (\Cref{sim:2:sniffy}) learning to find a target, with different snapshot types.
    %
    %
    %
    Each graph shows the evolution of Sniffy's mean distance to the target, plus/minus std. deviation, over $100$ runs.
    %
    %
    }
    \label{fig:sniffy_BUAs}
\end{figure}

\Cref{poc:ex:moving bead on an interval} discusses the representations expected to arise in the case of the interval in some detail, explaining Sniffy's success in that environment, shown in the figure.
However, we also notice a deterioration of the results as Sniffy is moved from the interval to the circle.
This is due to subtle interactions between the propagation mechanism generating the BUAs' predictions (which drives decision-making), and the non-trivial homotopy type of the circle, which forces inconsistent states into all the model spaces involved (the latter, we recall, are always contractible).
This discrepancy between the topology of UMA model spaces and the spaces they come to model provides the main motivation for our future project of studying the control of situated agents by {\em networks} of BUAs, where the deliberation among agents is meant to generate an emergent joint representation of reactive behavior patterns with the competence to overcome topological constraints and obstacles (more in~\Cref{discussion}).

\subsubsection{What did Sniffy learn on the circle?}\label{sim:2:sniffy:circle}
All the graphs in \Cref{fig:sniffy_BUAs} indicate a significant change of behavior at the end of training.
It therefore seems sensible to attempt splitting the set of runs in each setting into those finishing closer to the target than to its antipodal point on the circle, as shown in \Cref{fig:split the runs}.
\begin{figure}[ht]
    \centering
    \includegraphics[width=.325\textwidth]{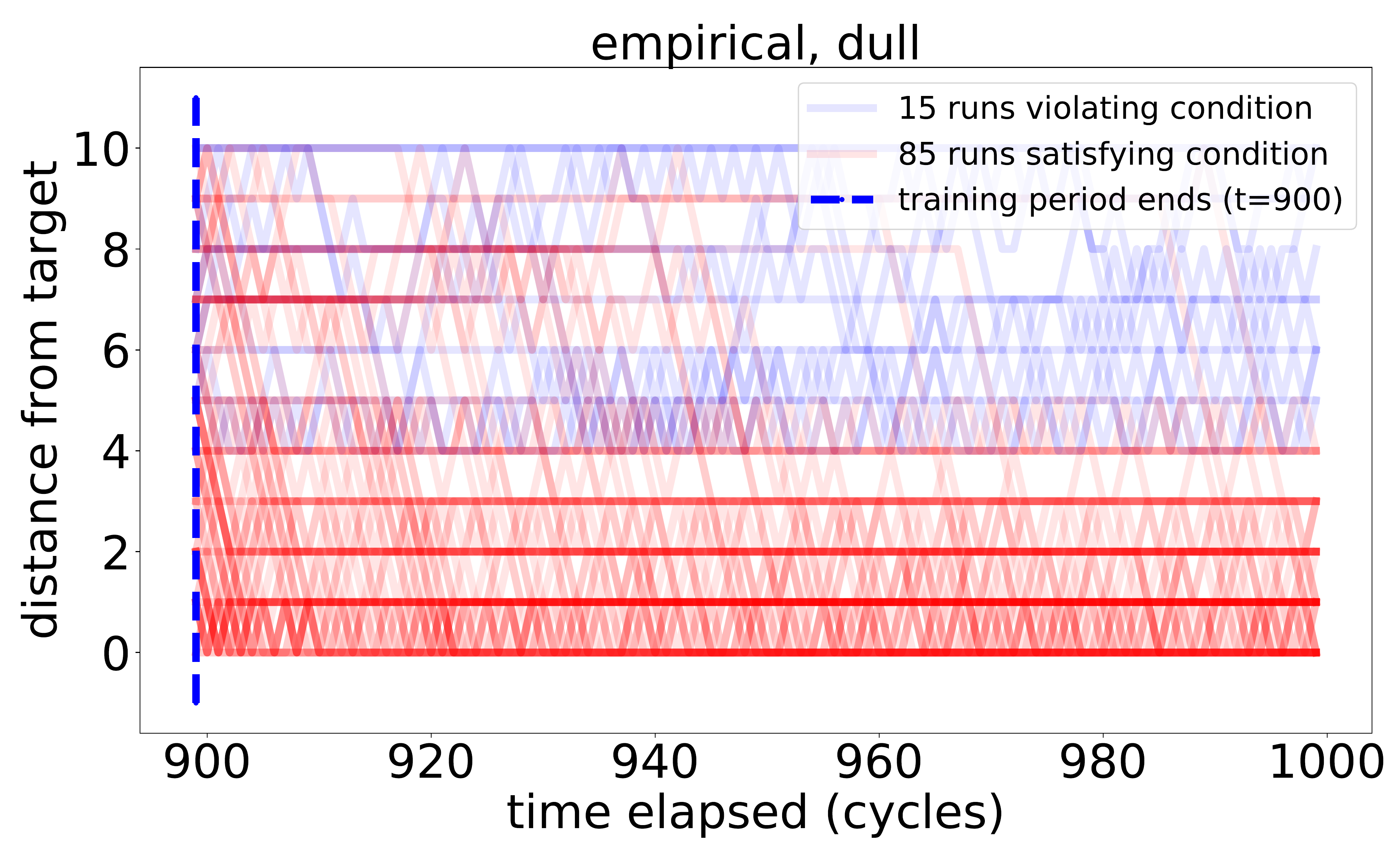}
    \includegraphics[width=.325\textwidth]{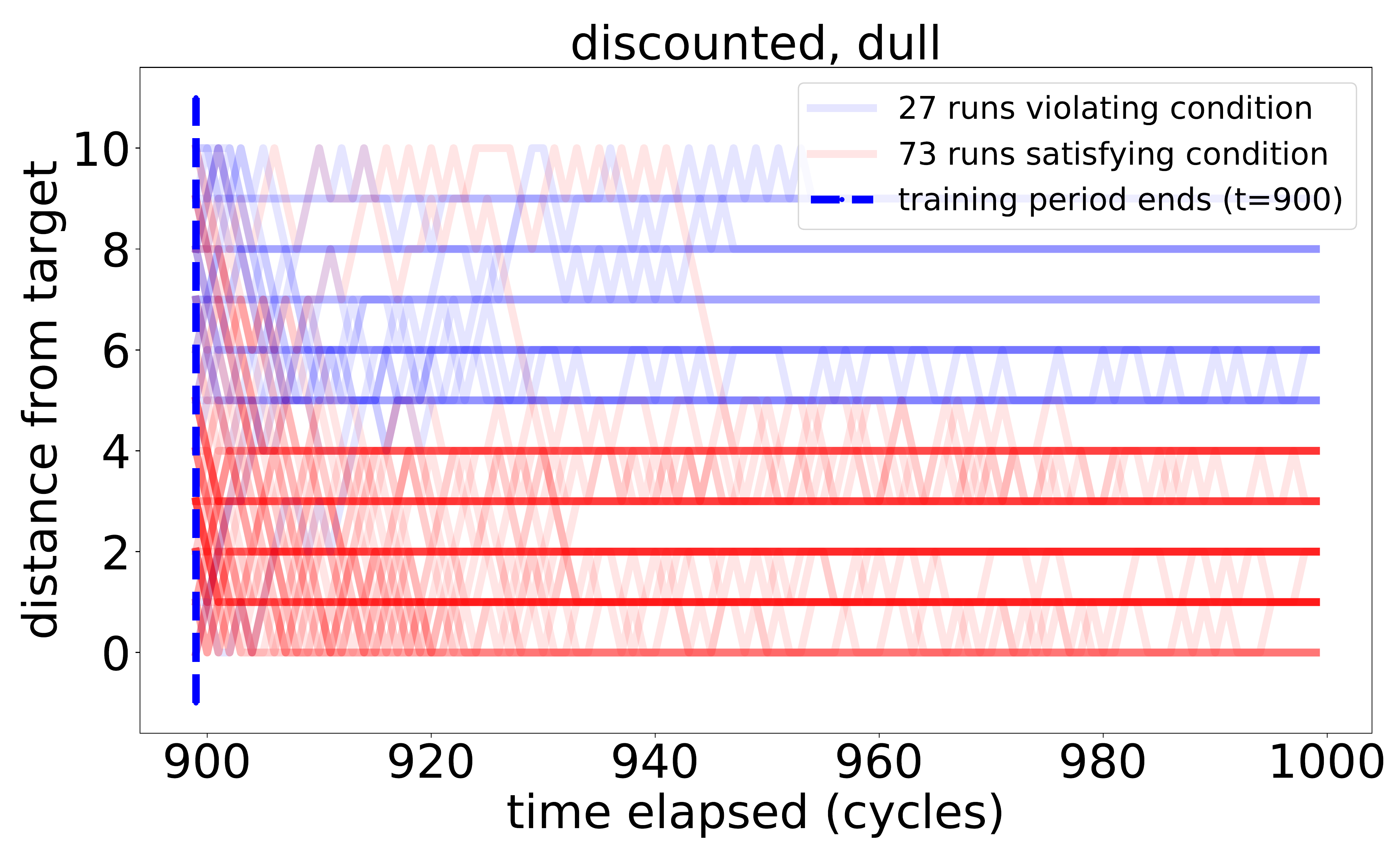}
    \includegraphics[width=.325\textwidth]{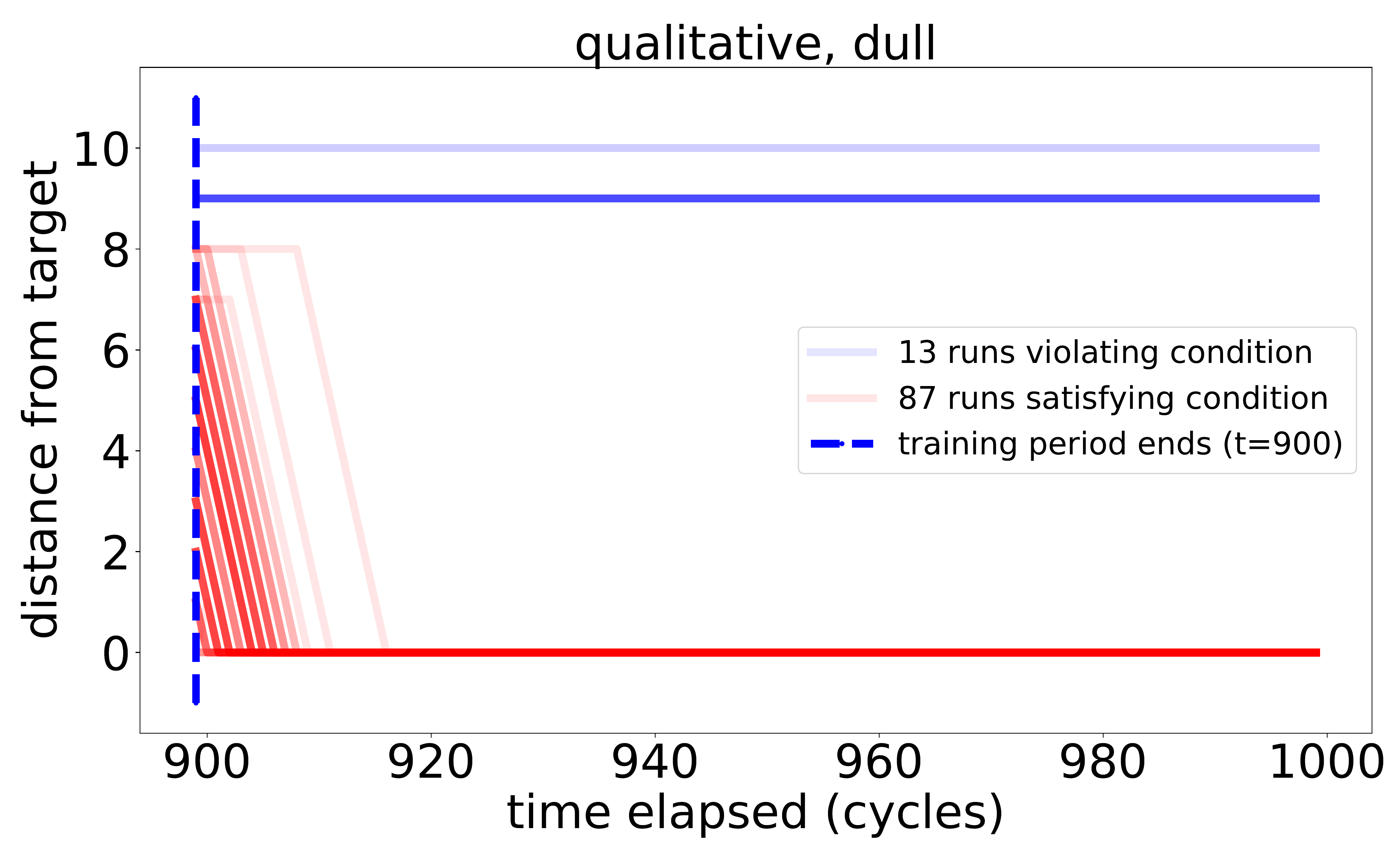}\\
    \includegraphics[width=.325\textwidth]{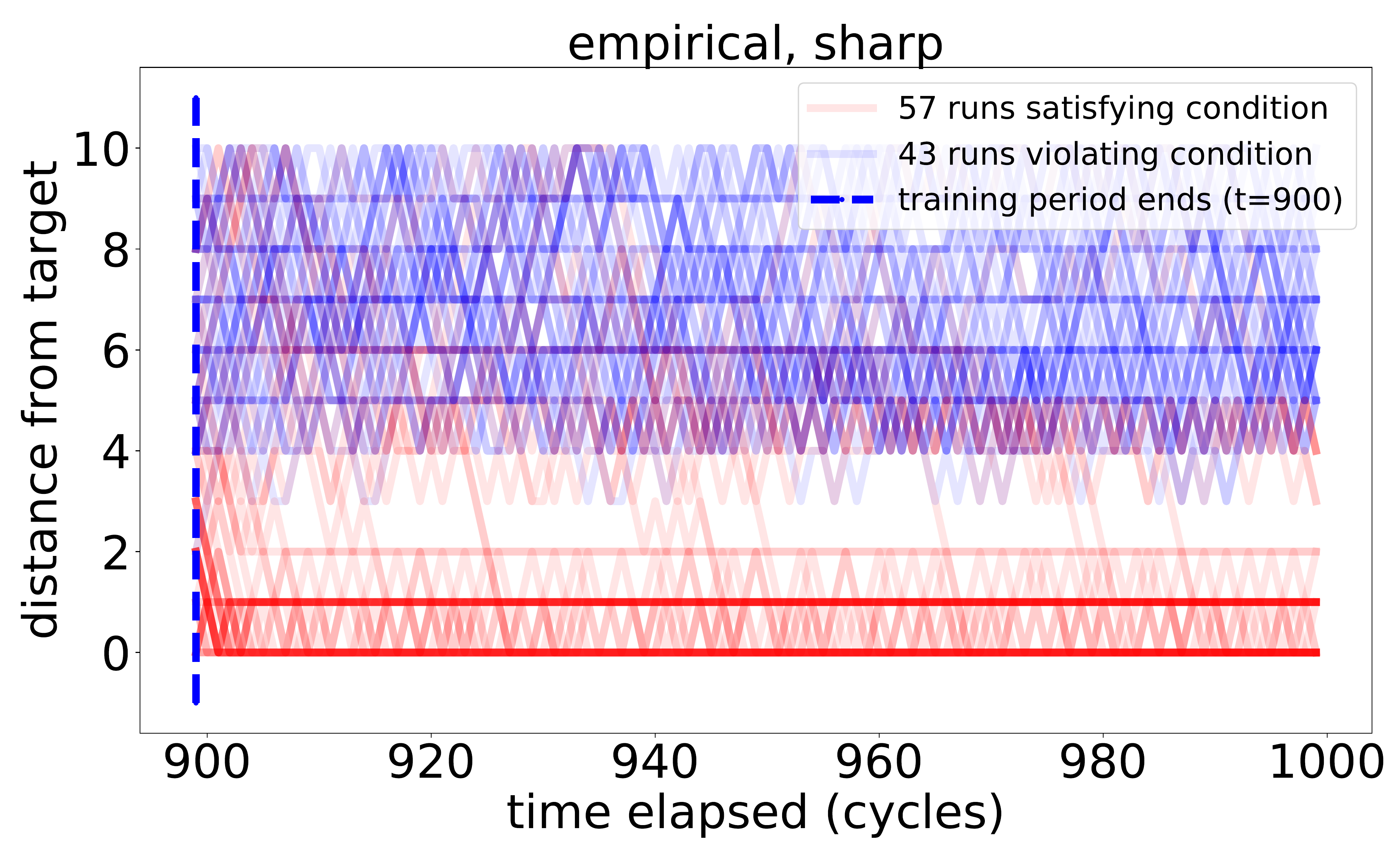}
    \includegraphics[width=.325\textwidth]{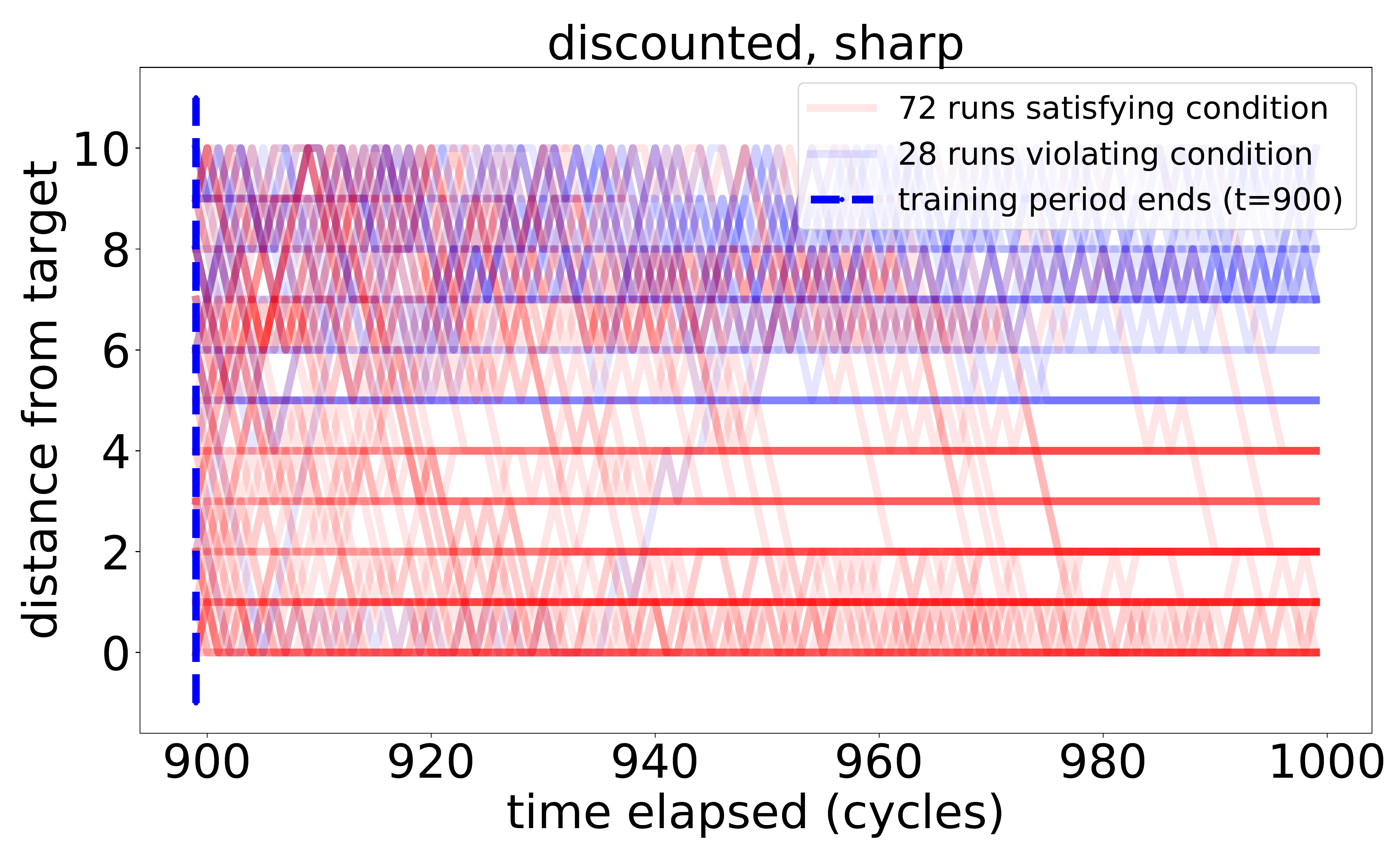}
    \includegraphics[width=.325\textwidth]{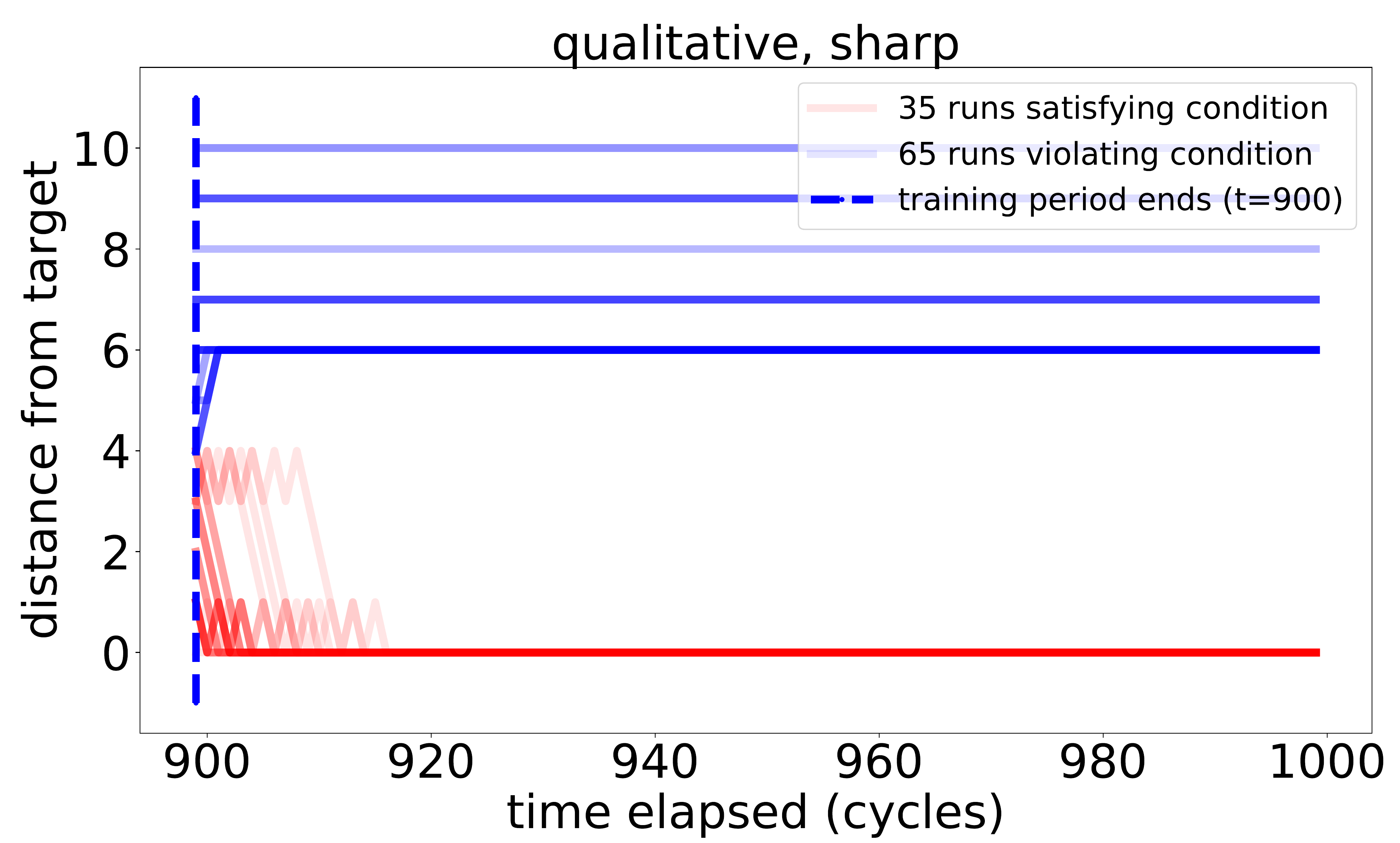}\\
    \caption{Splitting the population of Sniffy's runs on the circle according to their termination points:
    red paths terminate inside $\rho(a_T)$; blue paths terminate outside of $\rho(a_T)$.
    %
    %
    }
    \label{fig:split the runs}
\end{figure}
What emerges is that all the learned representations experience difficulties dealing with the situation loosely characterized as ``Sniffy approaches the point on the circle antipodal to the target''.
Note that the ``dull peak'' qualitative learners emerge as the most apt, both in terms of efficiency and in terms of separation between the desirable and undesirable modes of behavior.
In this setting, the target clearly emerges as an attracting point except for a small neighbourhood of its antipode, which seems to play the role of an unstable equilibrium.
This is reminiscent of gradient descent over the function $f(x)=\dist{x}{T}$ on the unit circle, viewed as a differentiable manifold: the target $T$ is a robust attractive equilibrium, complemented by an unstable equilibrium that is forced by the non-trivial homotopy type of the circle.
Since qualitative snapshots enable direct computation of the eventual values of the snapshot weights, it becomes possible to obtain explicit insights into the behavior learned by Sniffy in this setting.
We refer the reader to \Cref{app:sniffy on the circle} for a detailed discussion proving the preceding claims.
\section{Discussion.}\label{discussion}
Motivated by the goal of implementing well-reasoned general learning on mobile robots, this paper introduces algorithms implementing a simplified version of iterated belief revision and update that is consistent with budgetary constraints on storage space and computational complexity, collectively named ``universal memory architectures'' (UMAs).
We establish and study the mathematical language necessary for the analysis of UMA instances, and show how the standard model-theoretic approach to belief revision gets naturally replaced by the study of the geometry of convex sets in the model spaces represented by UMAs.

By construction, UMA representations are systems of default rules that are closed under counter-positives.
We show that such representations may be learned both by means of sampling and statistical integration of a real-valued signal (empirical and discounted snapshots, \Cref{snapshot:real-valued}), as well as by means of aggregating samples of a ranking function on the space of possible worlds, in the sense of Spohn~\cite{Spohn-OCF_epistemic_state} and Pearl~\cite{Pearl-system_Z} (qualitative snapshots~\Cref{snapshots:qualitative}).
In the latter case, we are able to guarantee the correct encoding of the convex hull, in the learned geometry, of the set of minimum rank worlds, provided sufficient exposure.
Finally, we show the potential of UMA representations for the motivating application by considering its behavior in a pair of simple learning settings simulating a standard task formulation from Robotics: localize a target in the presence of (highly impoverished) sensing in a global frame (Section~\ref{sim:2:sniffy}).

\paragraph{The need for expanding the set of queries (`self-enrichment'). } It is important to state clearly the limitations of UMA learners in the form presented in this paper.
From a practical perspective, attempting to learn a PCR structure for a fixed sensorium will yield {\em no learning at all} in the case of an arbitrary and/or `unstructured' binary sensorium such as the pixel grid of a B/W video camera, where no two pixels are {\it a-priori} correlated.
Consider an even simpler example: the situation of $a,b,c\in\sens$ satisfying $\rho(a)\cap\rho(b)\subseteq\rho(c)$, $\rho(a)\nsubseteq\rho(c)$ and $\rho(b)\nsubseteq\rho(c)$ cannot be encoded by a PCR unless the query set $\sens$ explicitly contains an element whose realization is $\rho(a)\cap\rho(b)$.
Finally, it is clear that PCRs are not geared for studying temporal interactions unless explicitly outfitted with appropriate queries (as in the example of BUAs in \Cref{sim:BUAs}).

Accepting the above as the price of the radical reduction in computational costs achieved by UMA-based learning (as compared to unrestricted iterated belief revision), a natural avenue for increasing the descriptive power of an UMA representation is to allow the set of queries $\sens$ to expand (by adding `meaningful' queries) and contract (by coalescing related queries, or deleting uninformative ones) over time, in a controlled fashion, {\em at a known and minimal cost in computational resources}.
The fixed sensorium $\sens$ should be replaced with a sequence $\sens\at{t}$, as the map $\rho:\sens\to\power{X}$ is replaced with a sequence $\rho\at{t}:\sens\at{t}\to\power{X}$.
Still, the advantage of UMA representations over others is in their efficiency at encoding a model space and reasoning about it in terms of its convex subspaces.
This motivates the search for an enrichment method that meets the lower complexity bound for representing the observed system.

Looking for such a method, one must be mindful that the expansion steps cannot be arbitrary, as it is necessary for each map $\rho\at{t+1}:\sens\at{t+1}\to\power{\spc}$ to be uniquely determined by its predecessor $\rho\at{t}$ and the limited information that was available to the UMA at time $t$.
This suggests two natural elementary expansion operations, which also happen to interact well with our detailed understanding of the geometry of duals:
\begin{description}
    \item[\bf Append a conjunction.] Adding a query of the form $q=a_1\wedge\cdots\wedge a_k$ for some $a_1,\ldots,a_k\in\sens\at{t}$, to form $\sens\at{t+1}=\sens\at{t}\cup\{q,q\com\}$, forces the extension of $\rho$ via $\rho(q)=\rho(a_1)\cap\ldots\cap\rho(a_k)$.
    \item[\bf Append a delayed sensor.] Let $\delay:\spc\to\spc$ denote the operation of truncating the last state from a given history; Then it is possible to introduce a query of the form $q=\delay a$ for $a\in\sens$, where $\delay a$ reports the value of $a$ preceding the current one, or, in other words: $x\in\rho(\delay a)\IFF\delay x\in\rho(a)$.
\end{description}
Observing $\delay(a\wedge b)=\delay a\wedge\delay b$ for all $a,b\in\sens$, we conclude that any composition of the above extension operations determines a unique extension of the original $\rho$.
Hence, an UMA endowed with these enrichment operations is capable, in principle, of eventually representing very rich theories of the observed system, both in terms of Boolean relations among the original sensors and in terms of temporal properties---provided we are willing to accept the cost in resources.
Clearly, the burden is on us to decide when an extension is in order; for what purpose; and how to prevent the population of added sensors from exploding to a prohibitive size.

In the presence of delayed queries, the situation lends itself to the formation of a prediction operator, extending the simplistic one constructed in \Cref{sim:BUAs}.
This makes it possible to formulate learning objectives concerning the quality of prediction.
Our ongoing work exploring analogies with perceptron learning~\cite{Minsky_Papert-perceptrons} is directed towards studying the problem of optimizing prediction through gradual extension of the sensorium using the operations just formulated.

\paragraph{Agents. } The stated motivation for this project was that of producing computationally efficient agents whose reasoning is grounded in a suitably relaxed---though still formally reasoned---form of iterated BR.
At the same time, the model spaces encoded by UMAs are uniquely suited for {\em reactive control}: the selection of a control instruction in direct response to a localized (in time, as well as in space) perception of the task.
At all times that the goal set is represented by a coherent selection on $\sens$ (that is, the goal set is non-empty and convex in the relevant model space), propagation may be used to produce the nearest point projection paths from the current state to the goal set, within the model space, helping determine the appropriate actions as those provably propelling the agent roughly along one of these paths, using the mechanism described in \Cref{sim:BUAs}.

A pertinent question for our current research is whether or not it is possible to employ self-enrichment procedures (see preceding paragraph) to guarantee---at least for some classes of problems---the emergence of a representation with the property that an agent's predictions from time $t$ never fall outside the perceived current state at time $(t+1)$, for all $t$ large enough.
If, and when, that becomes possible, one will have to conclude that any planning failure is due to an obstacle in the relevant UMA model space(s) originating from an attempt to navigate into an impossible perceptual class.
This would open the door to methods for efficient representation of such classes, as well as the leveraging of such representations for correcting the simplistic control scheme of navigation along geodesics. 

\paragraph{Improving representation using multiple agents. } The possible presence of obstacles focuses our attention on another important deficiency of UMA representations.
While the concept representations they encode are {\em always} contractible when regarded as cubical complexes (see \Cref{app:poc:homotopy} for more details), the concept representations corresponding to the ground truth will, more often than not, possess cavities/holes, serving the role of obstacles to navigation along geodesics in the UMA model space, and driving up the complexity of continuous planning~\cite{Farber-topological_complexity}.
An example of this phenomenon is already encountered in our simulations of target localization on the circle, in \Cref{sim:2:sniffy:circle}, and investigated in detail in \Cref{app:sniffy on the circle}.

A possible solution to this problem might lie with the accumulation of a flexible collection of specialized agents, each with its own sensors and its own value signal; each correctly representing some aspects of the `physical' agent's tasks, while having to rely on others in regions of its model space where its predictions fails.
Fairly detailed descriptions of communities of this form have been proposed as possible models of human cognition by Minsky~\cite{Minsky-K_lines,Minsky-SOM}, and studying the dynamics of such communities, charged with governing a situated agent, poses many interesting challenges.

In this context it is important to note that very recent results~\cite{Reverdy_Koditschek-motivational_dynamics}, demonstrating smooth(!) reactive switching between different control alternatives (behaviors/actions) using value-based motivational dynamics, provide a basis for speculation that (1) such methods may be applicable to our setting, too; and (2) formal understanding of the dynamics of the putative Minskian ``societies'' of UMAs just mentioned may be well within our reach.

Developing this approach will require the study of multi-agent systems incorporating means for the formation of ``BUA coalitions'', for lack of a better term, to be recruited for action under appropriate circumstances.
This is also where we expect the mathematical theory behind UMAs to prove most useful. Its categorical underpinnings (the fact that model spaces arise as dual spaces; see \Cref{models:duals}) provide a rigorous framework for comparing different models of the same system, and for studying the interaction between different perceptual components of a single model (see \Cref{poc:ex:moving bead on an interval,app:sniffy on the circle} where we carry out such detailed analysis).

\section*{acknowledgements}
This research was developed in part with funding from Air Force Research Lab (AFRL) grant FA865015D1845 (subcontract 669737-1), and in part with funding from the Defense Advanced Research Projects Agency (DARPA) and the Air Force Research Lab (AFRL) under agreement number FA8650-18-2-7840.
The views, opinions and/or findings expressed are those of the authors and should not be interpreted as representing the official views or policies of the Department of Defense or the U.S. Government. 
The authors are grateful to Siqi Huang, a Penn CGGT Master's graduate, for his relentless work developing a hardware-accelerated implementation of the UMA architecture, making the simulations in this study possible.
We also thank Kostas Karydis for helping proof-read some of the initial material on ranking-based UMAs during his last months as a post-doctoral fellow at Penn's GRASP lab.
%

\bibliographystyle{plain}
\bibliography{memory}

\appendix

\section{Appendix: The Duality Theory of Finite Poc Sets.}\label{poc}
The purpose of this appendix is to review known results about the geometry of duals of finite poc sets, while illustrating them with simple examples which emphasize our application.
An additional goal is to provide a sufficient technical background for proofs of new results in the appendices that follow.

The concept presentation of the dual of a poc set leads to more intuitive understanding of the geometry of poc set duals.
Recall from \Cref{models:concept presentation} that the concept representation of a subset $V\subset\ham$ of vertices of the Hamming cube over a PCS $\sens$ encodes the set of (cubical) faces of the Hamming cube obtained by deleting all faces containing at least one vertex of $\ham\minus V$.
The resulting structure is a (rather special) {\em cubical complex}\footnote{See~\cite{Kozlov-combi_alg_top}, Chapter 2, for a very brief introduction to polyhedral (in particular, cubical) complexes.}.
One way in which such cubical complexes are special is that they are completely determined by their {\em 1-dimensional skeleton}---their collections of vertices and edges.

The resulting freedom to consider a higher dimensional ``enveloping structure'' for $\dual{\ppoc}$ when $\ppoc$ is a poc set over $\sens$ turns out to be useful in many ways, some of which we intend to explore in this section.
\begin{definition}[Dual Cubing]\label[definition]{defn:dual cubing} Let $\ppoc$ be a poc set structure over a finite PCS $\sens$. The {\em dual cubing} $\cube{\ppoc}$ is the cubical complex obtained as the concept representation of the subset $\ppoc^\circ\subset\ham(\sens)$.\qed
\end{definition}
In the very least, the ability to refer to $\cube{\ppoc}$ will make it easier to visualize the graph $\dual{\ppoc}$, exposing its higher dimensional structure and bringing order to what otherwise would have been a chaos of edges (e.g. \Cref{fig:six_way_compass}). 
The notion of a dual cubing also makes it easier to understand cartesian products of dual graphs (\Cref{poc:ex:cartesian product} below).
Finally, we will use the dual cubing to explain some fundamental properties and limitations of PCR presentations (\Cref{poc:ex:circle} below) relating to their universality (\Cref{prop:universality}).

\subsection{Nesting, Transversality and Cubes.}
Fix a poc set $\ppoc$ over a finite PCS $\sens$.
The purpose of this section is to present the known characterizations of the cubes arising in $\cube{\ppoc}$.
Some additional standard terminology will be needed.
The following are from~\cite{Roller-duality}, Section 1.4:
\begin{definition}[proper elements, proper pairs] Let $\sens$ be a PCS.
A {\em proper element} of $\sens$ is any element $a\in\sens$ such that $a\notin\{\mbf{0},\mbf{0}\com\}$.
A pair $\{a,b\}$ of proper elements in $\sens$ is said to be proper, if $b\notin\{a,a\com\}$.\qed 
\end{definition}
\begin{definition}[nesting, transversality]\label[definition]{defn:nesting,transversality} Let $\ppoc$ be a poc set.
For any proper $a,b\in\ppoc$ at most one of the following holds:
\begin{equation}
    a\geq b\,,\quad
    a\geq b\com\,,\quad
    a<b\,,\quad
    a<b\com\,.
\end{equation}
If any one of the above relations holds, we will say that {\em $a$ and $b$ are nested}.
Otherwise, we say that {\em $a$ and $b$ are transverse}.
Furthermore, for any $A\subset\ppoc$, we say that {\em $A$ is nested (transverse)}, if every two elements of $A$ are nested (resp. transverse).\qed 
\end{definition}
Recalling that a poc set is, first and foremost, a partially ordered set, for any subset $S\subset\ppoc$ it makes sense to consider
\begin{equation}\label{eqn:poc minset}
    \min(S):=\set{a\in S}{(\forall b\in S)(b\leq a \rightarrow b=a) }\,.
\end{equation}
Since $\ppoc$ is finite, $\min(S)$ is non-empty whenever $S$ is.
The following is Proposition 10.1 of~\cite{Roller-duality}, restricted to the finite case and parsed into more elementary language:
\begin{lemma}[when a vertex meets a cube]\label[lemma]{lemma:when a vertex meets a cube} Let $\ppoc$ be a finite poc set and let $v\in\ppoc^\circ$.
Let $Q$ be a $d$-dimensional cube of $\cube{\ppoc}$.
Then $v\in Q$ if and only if $\min(v)$ contains a transverse subset $T$ of $\ppoc$ with the property that every vertex $u\in Q$ is of the form $u=\flip{v}{S}:=(v\minus S)\cup S\com$ for some $S\subseteq T$.\qed
\end{lemma}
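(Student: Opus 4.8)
The plan is to translate the abstract statement of \Cref{lemma:when a vertex meets a cube} into the concrete combinatorics of the concept presentation of $\cube{\ppoc}$, and then reduce to the elementary fact characterizing which vertices lie in a given face of the ambient Hamming cube. Recall from \Cref{models:concept presentation} that a $d$-dimensional cube $Q$ in $\cube{\ppoc}$ is exactly a $d$-dimensional face of the Hamming cube $\ham(\sens)$ all of whose $2^d$ vertices happen to be consistent, i.e.\ lie in $\ppoc^\circ$. Such a face is determined by a $\ast$-selection $S_Q := \bigcap_{u \in Q} u$ with $\card{\sens} - \card{S_Q}$ counting the dimension (up to the factor $2$ from complementation, so really the face corresponds to a set of $d$ ``free coordinates''); the vertices of $Q$ are precisely the complete $\ast$-selections extending $S_Q$.

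First I would set up the forward direction. Suppose $v \in Q$. Then $v$ is one of the complete $\ast$-selections extending $S_Q$, and the $d$ ``free coordinates'' of $Q$ can be represented by a set $T$ of proper elements of $\ppoc$ with $T \subseteq v$, one representative per free coordinate-pair, so that every vertex of $Q$ has the form $\flip{v}{S} = (v \minus S)\cup S\com$ for a unique $S \subseteq T$. It remains to check two things: that we may take $T \subseteq \min(v)$ (with respect to the poc-set order $\leq$), and that $T$ is transverse. For the transversality of $T$: if two elements $a,b \in T$ were nested, say $a \leq b$, then flipping $a$ alone while keeping $b$ would produce a $\ast$-selection containing $a\com$ and $b$; but $a\leq b$ gives $b\com \leq a\com$, so this set contains the pair $a\com, b$ with $b \leq a\com\com{}=a$... more cleanly, $a\le b$ forces any coherent (indeed any $\ast$-selection that is consistent) vertex containing $b$ to contain $a$ as well when the selection is complete---here I would invoke that $\ppoc^\circ$ consists of complete selections (\Cref{prop:when maximal coherent is complete}, using non-degeneracy of a poc set) together with the fact that consistency in $\ppoc^\circ$ means coherence, so $a \le b$ and $a\com \in u$ forces $b\com \in \up{\{a\com\}} \subseteq$ the forbidden side; contradiction with that vertex lying in $\ppoc^\circ$. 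Hence $T$ must be transverse. For $T \subseteq \min(v)$: if some $a \in T$ is not minimal in $v$, pick $b \in v$ with $b < a$; then flipping $a$ to $a\com$ inside $v$ (a vertex of $Q$) yields a $\ast$-selection containing both $b$ and $a\com$, but $b < a$ gives $a\com < b\com$, so $b \leq b$ and $a\com \le b\com$ put $b$ and $b\com$... again one deduces incoherence, contradicting that this is a vertex of $Q \subseteq \ppoc^\circ$. So a minimal transverse $T$ works, possibly after replacing each $a\in T$ by a $\le$-minimal element of $v$ below it---one must check this replacement does not change the face $Q$ spans, which follows because $a$ and its minimal replacement $b$ satisfy $b<a$, hence cut $\ppoc^\circ$ the same way locally (they are ``parallel'' on $Q$); this is the one spot I would be most careful about.

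Next, the reverse direction. Assume $\min(v)$ contains a transverse subset $T$ such that every $u = \flip{v}{S}$, $S \subseteq T$, is a vertex of $Q$ (equivalently, $Q$ is the face spanned by these $2^{\card{T}}$ selections). We must show each such $\flip{v}{S}$ is actually consistent, i.e.\ lies in $\ppoc^\circ$ --- but that is built into the hypothesis ``every vertex $u\in Q$''; what genuinely needs proving is that $v \in Q$, which is immediate once we know $Q$ is the face determined by the common $\ast$-selection $v \minus T$ (resp.\ its closure) and $v$ extends it. So the substance here is: given that $v\in\ppoc^\circ$ and $T\subseteq\min(v)$ is transverse, the set $\flip{v}{S}$ is $\ppoc$-coherent for every $S\subseteq T$; this is where transversality and minimality are used together --- flipping a transverse, minimal family creates no new comparability $x \le y\com$ because, for $a,b\in T$, transversality rules out all four relations $a\ge b, a\ge b\com, a<b, a<b\com$, and minimality of each $a$ in $v$ rules out $b < a$ for $b$ elsewhere in $v$. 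I would prove this by checking no forbidden pair $x \le y\com$ appears in $\flip{v}{S}$, splitting into cases by whether $x,y$ lie in $v\minus S$ or in $S\com$, and in each case deriving a contradiction either with coherence of $v$, with minimality of elements of $T$ in $v$, or with transversality of $T$. This case analysis is the main technical obstacle: it is the poc-set-theoretic heart of \cite{Roller-duality}, Prop.\ 10.1, and the bookkeeping (especially reconciling ``the $T$ we are handed'' with ``the canonical $T$ reading off the free coordinates of $Q$'') is where errors would creep in. Once both directions are in place, the lemma follows. In the writeup I would lean on \Cref{thm:duality and median formula} and the explicit interval/median formulas to phrase ``$\flip{v}{S}$ is a vertex of the face $Q$'' in purely set-theoretic terms, keeping the argument self-contained modulo the cited results.
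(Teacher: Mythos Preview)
The paper does not prove this lemma: it is stated with a terminal \qed\ and attributed to Proposition~10.1 of \cite{Roller-duality}, so there is no in-paper argument to compare against. Your approach is the standard one and is essentially what a direct proof from Roller's framework looks like, but a couple of points in your write-up need repair.

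First, in the transversality argument you flip the wrong element. With $a,b\in T\subseteq v$ and $a\leq b$, the vertex $\flip{v}{\{a\}}$ contains $a\com$ and $b$, and the pair $\{a\com,b\}$ is \emph{not} incoherent from $a\leq b$ alone (incoherence would require $b\leq a$ or $a\com\leq b\com$). What you want is $\flip{v}{\{b\}}$, which contains $a$ and $b\com$; then $a\leq b=(b\com)\com$ witnesses incoherence directly. Your attempted ``more cleanly\ldots'' fix is also garbled for the same reason: containing $b$ does not force containing $a$ under $a\leq b$.

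Second, the suggestion to ``replace each $a\in T$ by a $\leq$-minimal element of $v$ below it'' is misguided. The set $T$ is not yours to choose: once $v\in Q$ is fixed, $T$ is forced to be $v\setminus\bigcap_{u\in Q}u$, the $v$-side of the $d$ free coordinate pairs of $Q$. Your direct argument (if $a\in T$ had some $b\in v$ with $b<a$, then $\flip{v}{\{a\}}\ni b,a\com$ is incoherent, contradicting $\flip{v}{\{a\}}\in Q\subseteq\ppoc^\circ$) already shows $T\subseteq\min(v)$ with no replacement needed; the replacement idea would in fact change the face and break the correspondence with $Q$.

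Finally, for the reverse direction note that the lemma as written only says ``every vertex of $Q$ is of the form $\flip{v}{S}$'', which, read literally, does not force $v\in Q$ (e.g.\ take $T=\{a,b\}$ and $Q$ the edge on $\flip{v}{\{a\}},\flip{v}{\{a,b\}}$). The intended reading, clear from the corollaries the paper draws immediately after, is that $\card{T}=d$ and the $2^d$ sets $\flip{v}{S}$ enumerate the vertices of $Q$; under that reading $v=\flip{v}{\varnothing}\in Q$ is immediate, and the substantive content (that each $\flip{v}{S}$ is coherent) follows from your case analysis using transversality of $T$ and $T\subseteq\min(v)$.
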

\begin{figure}[t]
    \centering
    \includegraphics[width=.8\columnwidth]{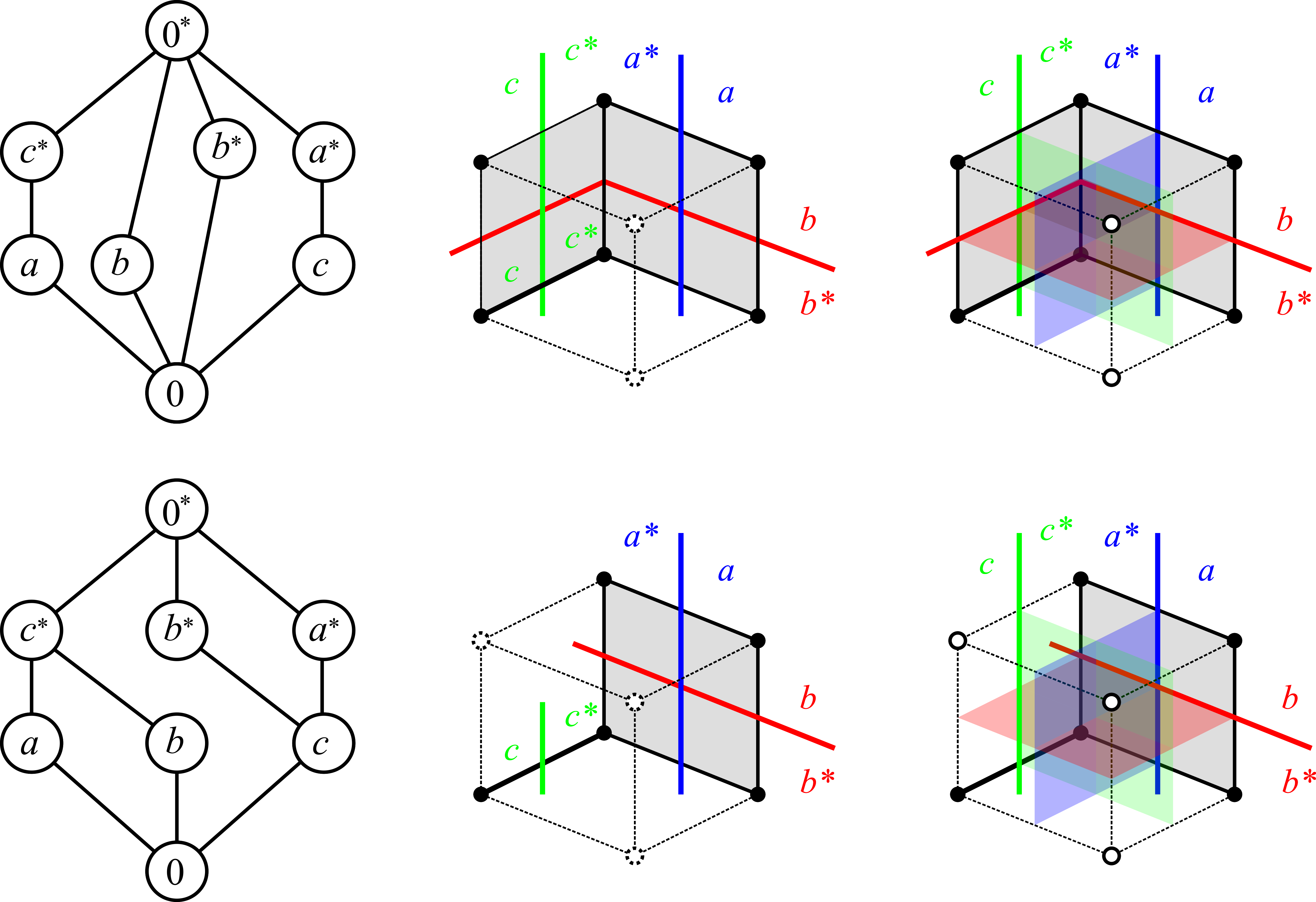}
    \caption{\small Half-spaces of $\ppoc^\circ$ and the hyperplanes of $\cube{\ppoc}$ that generate them (see discussion following \Cref{lemma:when a vertex meets a cube}).
    Introducing the nesting relation $a<c\com$ (top,a) produces the dual cubing (top,b), whose hyperplanes arise as the restrictions of the canonical hyperplanes of the Hamming cube (top,c). Additional relations (bottom,a) further reduce the dual and its hyperplanes (bottom,b), as fewer faces of the dual each hyperplane of the Hamming cube (bottom,c).\normalsize}
    \label{fig:hyperplanes}
\end{figure}
In particular:
\begin{itemize}
    \item every edge ($1$-cube) containing $v$ is spanned by $v$ and a vertex of the form $\flip{v}{a}:=(v\minus\{a\})\cup\{a\com\}$ for some $a\in\min(v)$;
    \item every square ($2$-cube) containing $v$ is spanned by $v$, $\flip{v}{a}$, $\flip{v}{b}$ and $\flip{v}{ab}$ for some transverse pair $\{a,b\}\subseteq\min(v)$.
\end{itemize}
These properties give rise to a new understanding of how the half-spaces $\{\half{a;\ppoc}\}_{a\in\ppoc}$ in $\dual{\ppoc}$ interact with the geometry of $\cube{\ppoc}$:%
one could think of the splitting of $\ppoc^\circ$ in the form $\half{a;\ppoc}\cup\half{a\com;\ppoc}$ as the result of cutting the cubing $\cube{\ppoc}$ along the {\em hyperplane} arising as the union of perpendicular bisectors of edges of the form $\{v,\flip{v}{a}\}$---see \Cref{fig:hyperplanes}.

In addition, the last lemma plays a crucial role in deducing some fundamental properties of $\cube{\ppoc}$ (Proposition 10.2 of~\cite{Roller-duality}):
\begin{theorem} Let $\ppoc$ be a finite poc set. Then $\cube{\ppoc}$ is contractible.\footnote{Contractibility of a topological space is a fundamental notion in Topology, formalizing the idea of a ``space with no holes''. See~\cite{Hatcher-alg_top_textbook}, Chapter 0 for a quick and very intuitive introduction.}\qed
\end{theorem}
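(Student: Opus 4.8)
The plan is to prove contractibility by induction on the number $|\ppoc^\circ|$ of vertices of $\dual{\ppoc}$, peeling off one ``wall'' at a time. For the base case, if $|\ppoc^\circ|=1$ (equivalently, $\ppoc$ has no proper elements) then $\cube{\ppoc}$ is a single point, which is contractible. For the inductive step, suppose $|\ppoc^\circ|\geq 2$; then $\ppoc$ has at least one proper element, and I would fix one, $a$, that is \emph{minimal} among the proper elements of $\ppoc$ in the partial order. Since $\ppoc$ is a (hence non-degenerate) poc set, $\{a\}$ and $\{a\com\}$ are coherent, so both $\half{a;\ppoc}$ and $\half{a\com;\ppoc}$ are nonempty, and $\ppoc^\circ$ is their disjoint union by \Cref{prop:when maximal coherent is complete}.

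The technical heart will be the following \emph{collaring lemma}: every cube $Q$ of $\cube{\ppoc}$ all of whose vertices lie in $\half{a;\ppoc}$ has every one of its directions transverse to $a$. To see this, apply \Cref{lemma:when a vertex meets a cube} at a vertex $v$ of $Q$: the directions of $Q$ form a transverse set $T\subseteq\min(v)$, and for $b\in T$ we have $a,b\in v$ with $\{a,b\}$ a proper pair (if $b\in\{a,a\com\}$ then $\flip{v}{a}$ or $v$ itself would fail to lie in $\half{a;\ppoc}$, or $a\com\in v$, both impossible). If $a$ and $b$ were nested, then either $b\leq a$ or $b\com\leq a$ --- contradicting minimality of $a$ among proper elements --- or $a< b$ (then the vertex $\flip{v}{b}$ of $Q$ contains both $a$ and $b\com$ with $a\leq(b\com)\com$, violating coherence) or $a< b\com$ (then $v$ itself contains $a,b$ with $a\leq b\com$, again violating coherence). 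Hence $a$ is transverse to each $b\in T$; moreover $a\in\min(v)$ because nothing proper lies below $a$ and $\mbf{0}\notin v$. Thus $T\cup\{a\}$ is transverse in $\min(v)$, and by \Cref{lemma:when a vertex meets a cube} the vertices $\{\flip{v}{S}\mid S\subseteq T\cup\{a\}\}$ span a cube of $\cube{\ppoc}$ one dimension higher, having $a$ as a direction, with $Q$ as one face and the parallel translate $Q\com:=\{\flip{v}{a}\mid v\in Q\}$ (which lies in $\half{a\com;\ppoc}$) as the opposite face.

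It follows that the cubes of $\cube{\ppoc}$ having $a$ as a direction (the ``$a$-cut'' cubes) assemble into a subcomplex isomorphic to $N\times[0,1]$, where $N$ is the subcomplex (``carrier'') of $\cube{\ppoc}$ spanned by the cubes with all vertices in $\half{a;\ppoc}$: the top $N\times\{1\}$ is $N$, the bottom $N\times\{0\}$ is the collection of translates $Q\com$ and sits inside the carrier $M$ of $\half{a\com;\ppoc}$, and the middle level is the hyperplane dual to $a$. Every other cube of $\cube{\ppoc}$ lies entirely in $M$ or in $N$, so $\cube{\ppoc}=M\cup(N\times[0,1])$ glued along $N\times\{0\}\subseteq M$; collapsing the collar $N\times[0,1]$ onto $N\times\{0\}$ while fixing $M$ gives a deformation retraction of $\cube{\ppoc}$ onto $M$. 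The $1$-skeleton of $M$ is the subgraph of $\dual{\ppoc}$ induced on the half-space $\half{a\com;\ppoc}$, which is convex (\Cref{lemma:ellone halfspaces}) and hence a median graph by \Cref{thm:convexity theory of median graphs}(3); since such cube complexes are determined by their $1$-skeleta, $M$ is isomorphic to $\cube{\mathcal{H}(\half{a\com;\ppoc})}$ via \Cref{thm:duality and median formula}. As $(\mathcal{H}(\half{a\com;\ppoc}))^\circ$ has $|\half{a\com;\ppoc}|<|\ppoc^\circ|$ vertices, the inductive hypothesis gives that $M$ is contractible, and therefore so is $\cube{\ppoc}$.

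I expect the collaring lemma to be the only non-routine step: ruling out, via coherence and the minimality of the wall $a$, any direction of an $\half{a;\ppoc}$-cube nested with $a$. Everything downstream is standard cube-complex topology (retracting a collar onto its base) together with bookkeeping in the convexity theory already in hand. An alternative, less hands-on route would use \Cref{lemma:when a vertex meets a cube} to verify Gromov's link condition (the vertex links of $\cube{\ppoc}$ are flag complexes) and then invoke that a simply connected non-positively curved cube complex is $\mathrm{CAT}(0)$, hence contractible; but establishing simple connectivity for that route costs about the same effort as the argument above.
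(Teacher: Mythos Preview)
The paper does not give its own proof of this theorem: the statement is imported as Proposition~10.2 of Roller's manuscript, and the \qed\ simply closes the statement. So there is nothing to compare your argument against on the paper's side beyond the remark that \Cref{lemma:when a vertex meets a cube} ``plays a crucial role.''

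Your inductive wall-peeling argument is correct. The core step---that for $a$ minimal among proper elements one has $a\in\min(v)$ for every $v\in\half{a;\ppoc}$, and that every direction of a cube in $\half{a;\ppoc}$ is transverse to $a$---is exactly the use of \Cref{lemma:when a vertex meets a cube} the paper alludes to, and your case analysis (ruling out each nesting relation between $a$ and a direction $b$) is clean. The decomposition $\cube{\ppoc}=M\cup(N\times[0,1])$ then follows because any cube meeting both half-spaces must have $a$ among its directions (if $a\in v$ but $a\notin\flip{v}{S}$ then $a\in S\subseteq T$), so the collar retraction is globally well-defined. Your identification of $M$ with a smaller dual cubing via \Cref{thm:duality and median formula} together with the paper's stated fact that concept-representation complexes are determined by their $1$-skeleta is legitimate, if slightly indirect; a marginally more direct route would be to observe that the involution $u\mapsto\flip{u}{a}$ is precisely the nearest-point projection $\proj{\half{a\com;\ppoc}}$ restricted to $\half{a;\ppoc}$, so that the retraction you build is the geometric realization of that projection. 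Either way, the induction closes.
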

Moreover, the lemma implies that $\cube{\ppoc}$ is non-positively curved (see~\cite{Wise-riches_to_raags}, Section 2.1).
This produces a characterization of complexes of the form $\cube{\ppoc}$ (Theorem 10.3 of~\cite{Roller-duality}):
\begin{theorem}[characterization of cubings]\label{thm:cubings are poc set duals}
A cubical complex arises as the dual of a finite poc set if and only if it is contractible and non-positively curved.\qed
\end{theorem}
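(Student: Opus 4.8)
The plan is to prove the two implications separately, with essentially all the work in the converse. The forward implication is already in hand from the material just above: the theorem recorded there states that $\cube{\ppoc}$ is contractible for every finite poc set $\ppoc$, and the remark following \Cref{lemma:when a vertex meets a cube} records that $\cube{\ppoc}$ is non-positively curved; it is finite by construction. So what remains is to show that an arbitrary finite contractible non-positively curved cubical complex $C$ is isomorphic to $\cube{\ppoc}$ for some finite poc set $\ppoc$ — in effect, to run the Sageev--Roller construction backwards.

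First I would recognize the $1$-skeleton $\Gamma := C^{(1)}$ as a finite median graph. Non-positive curvature for a cubical complex is Gromov's link condition (every vertex link is a flag simplicial complex), and contractibility gives simple connectivity, so by Cartan--Hadamard $C$ is a $\mathrm{CAT}(0)$ cube complex. One then passes to \emph{hyperplanes}: the midcube subcomplexes, indexed by the parallelism classes of edges. The structural facts one needs — valid precisely because $C$ is $\mathrm{CAT}(0)$ — are that each hyperplane is embedded and two-sided and separates $C$ into exactly two pieces, its two \emph{half-spaces}, and that any finite family of pairwise-intersecting half-spaces has a common vertex (the Helly property, cf.\ \Cref{thm:convexity theory of median graphs}(1)). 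From these one extracts, for each triple of vertices $u,v,w$, a unique median $\med{u}{v}{w}$, so $\Gamma$ is a median graph; it is finite because $C$ has finitely many edges, hence finitely many hyperplanes and half-spaces.

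Next I would apply the converse half of \Cref{thm:duality and median formula}: since $\Gamma$ is a finite median graph, $\mathcal{H}(\Gamma)$ — the set of half-spaces of $\Gamma$, ordered by inclusion, with $H\com := \Gamma\minus H$ and $\mbf{0}:=\varnothing$ — is a finite poc set, and $v\mapsto\set{H\in\mathcal{H}(\Gamma)}{v\in H}$ is an isomorphism of graphs $\Gamma\cong\dual{\mathcal{H}(\Gamma)}$. It then remains only to promote this graph isomorphism to an isomorphism of cubical complexes $C\cong\cube{\mathcal{H}(\Gamma)}$, which reduces to checking that the two complexes fill in \emph{the same} cubes over the common $1$-skeleton. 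On the $\cube{\mathcal{H}(\Gamma)}$ side, by \Cref{lemma:when a vertex meets a cube} a $d$-cube at a vertex $v$ is exactly a transverse $d$-subset $T$ of $\min(v)$, a condition detected in the $1$-skeleton (the vertices $\flip{v}{S}$, $S\subseteq T$, are present and span a sub-cube-graph). On the $C$ side, every filled cube determines a transverse family automatically, while conversely the flag condition on links forces, inductively on dimension, every combinatorial cube spanned by such a transverse family to be filled. Hence the cube sets agree and the graph isomorphism extends.

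The main obstacle is the identification of $\Gamma = C^{(1)}$ as a median graph — concretely, the passage from "non-positively curved cubical complex" to the hyperplane combinatorics (embeddedness, two-sidedness, separation into two half-spaces) together with the Helly property. These are the substantive inputs from the geometry of $\mathrm{CAT}(0)$ cube complexes; they are exactly where contractibility and non-positive curvature are used, and without them the half-space poc set need not recover $C$ as its dual (not every coherent selection would be realized by a vertex). Everything downstream — the poc-set reconstruction and the lift from graphs to cubical complexes — is bookkeeping against \Cref{thm:duality and median formula}, \Cref{thm:convexity theory of median graphs}, and \Cref{lemma:when a vertex meets a cube}.
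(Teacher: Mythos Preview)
The paper does not prove this theorem: it is stated with a terminal \qed and attributed to Roller (\cite{Roller-duality}, Theorem~10.3), so there is no in-paper argument to compare against. Your outline is a faithful sketch of the standard Sageev--Roller argument one finds in that reference: pass from non-positive curvature plus simple connectivity to $\mathrm{CAT}(0)$ via Cartan--Hadamard, extract the hyperplane/half-space structure, identify the $1$-skeleton as a finite median graph, invoke the converse direction of \Cref{thm:duality and median formula} to obtain $\ppoc=\mathcal{H}(\Gamma)$, and then use the link (flag) condition to promote the graph isomorphism to a cubical one.

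One small citation slip: you invoke \Cref{thm:convexity theory of median graphs}(1) for the Helly property while still in the process of showing that $\Gamma$ is median, but that theorem is stated \emph{for} median graphs. The Helly property you actually need at that stage is for half-spaces in a $\mathrm{CAT}(0)$ cube complex, which is an independent standard fact from Sageev's work; so this is a matter of pointing to the right source, not a logical gap.
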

All the above apply in far more general settings than the finite one: the interested reader should consult~\cite{Roller-duality}.

\subsection{Examples of Duals.}\label{poc:examples of duals} To improve the reader's intuition regarding dual graphs of poc sets, as well as to illustrate one of the example simulations (\Cref{sim:2:sniffy}), we consider a sequence of examples in light of the results of \Cref{models:convexity}.

\subsubsection{Example: a bead on a string.}\label{poc:ex:interval}
Suppose the system being observed consists of a bead strung on a tight piece of string.
The observed state of the system is modeled by the interval $[0,1]$ in the obvious way, so the space of histories $\spc$ is the set of sequences $x=(x_n)_{n=-\infty}^0$, where $\pos(x):=x_0$ corresponds to the current position of the bead given $x$, $x_{-1}$ is the previous position of the bead, and so on.
Let us set $\sens=\{\minP,\minP^\ast,a_1,a_1^\ast,\ldots,a_L,a_L^\ast\}$ with two different poc set structures, $\mbf{P}$ and $\mbf{Q}$, defined by the relations $a_k<a_{k+1}$, $1\leq k<L$ in $\mbf{P}$ and $a_i<a_j^\ast$, $1\leq i< j\leq L$ in $\mbf{Q}$.
These may be regarded as PCR representations of two different sensoria constructed as follows. Let $p_1<\ldots<p_L$ in $(0,1)$ be points that are pairwise at least $\epsilon$ apart, $0<\epsilon<\tfrac{1}{2(L+1)}$.
Then $\mbf{P}$ may be realized by setting $x\in\rho(a_k)\IFF\pos(x)<p_k$ (``threshold sensors''), while $\mbf{Q}$ may be realized, for example, by $x\in\rho(a_k)\IFF\dist{\pos(x)}{p_k}<\epsilon$ (``beacon sensors'').

\begin{figure}[t]
	\begin{center}
		\includegraphics[width=\columnwidth]{./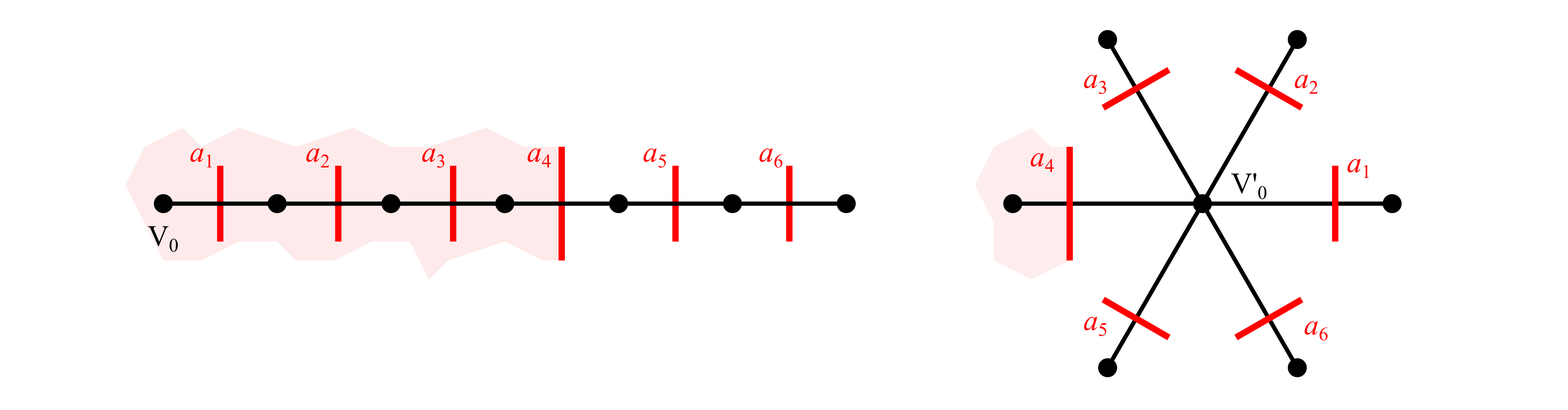}
		\caption{\small The dual graphs $\dual{P}$ and $\dual{Q}$ for the two arrangements of sensors along the real line described in \Cref{poc:ex:interval}: `threshold' sensors encoding a path (left, satisfying $\mbf{P}$), and `beacon' sensors encoding a starfish (right, satisfying $\mbf{Q}$), respectively.\normalsize\label{fig:L-path}}
	\end{center}
\end{figure}

The vertices of $\dual{\mbf{P}}$ have the form $V_k=\{\maxP\}\cup\{a_j^\ast\}_{j>k}\cup\{a_i\}_{i\geq k}$, $0\leq k\leq L$, with an edge joining $V_k$ to $V_{k+1}$ for all $k<L$ (recall that edges in $\dual{\mbf{P}}$ are edges of the Hamming cube $\ham=\ham(\sens)$).
The graph $\dual{\mbf{Q}}$ has a different collection of vertices, dictated by the fact that all pairs $\{a_i,a_j\}$ with $i\neq j$ are incoherent:
there is a `special' vertex $V'_0=\{\maxP,a_1^\ast,\ldots,a_L^\ast\}$ and a collection of `generic' ones, $V'_k=\{\maxP,a_k\}\cup\{a_j^\ast\}_{j\neq k}$;
all the $V'_k$, $k>0$, are adjacent to $V'_0$, and no other pair of vertices are adjacent.
\Cref{fig:L-path} shows $\dual{\mbf{P}}$ (left), which is an $L$-path, and $\dual{\mbf{Q}}$ (right), which we will refer to in the future as a {\it starfish}.
Note how, of the two model spaces, $\dual{\mbf{P}}$ seems to provide the better discretization of $[0,1]$.
Note that both duals are trees.
This is a manifestation of the well-known fact that $\dual{\ppoc}$ is a tree if and only if $\ppoc$ is nested (that is, any two elements of $\ppoc$ are nested).

\subsubsection{Example: Cartesian products of duals.}\label{poc:ex:cartesian product}
The easiest way to join two poc sets together is to form their direct sum:
\begin{definition}\label[definition]{defn:direct sum of poc sets} Let $\mbf{P}$ and $\mbf{Q}$ be discrete poc sets.
Their \emph{direct sum} $\mbf{P}\vee\mbf{Q}$ is defined to be the quotient of their external disjoint union $P\sqcup Q$ by the identification $\minP_\mbf{P}=\minP_\mbf{Q}$ and $\maxP_\mbf{P}=\maxP_\mbf{Q}$, endowed with the following:
\begin{itemize}
	\item $a\leq b\text{ in }\mbf{P}\vee\mbf{Q}\IFF
	(\{a,b\}\subseteq\mbf{P}\text{ and }a\leq b\text{ in }\mbf{P})
	\text{ or }
	(\{a,b\}\subseteq\mbf{Q}\text{ and }a\leq b\text{ in }\mbf{Q})$;
	\item $a=b\com\text{ in }\mbf{P}\vee\mbf{Q}\IFF
	(\{a,b\}\subseteq\mbf{P}\text{ and }a=b\com\text{ in }\mbf{P})
	\text{ or }
	(\{a,b\}\subseteq\mbf{Q}\text{ and }a=b\com\text{ in }\mbf{Q})$.
\end{itemize}
We abuse notation by identifying each element of $\mbf{P}\cup\mbf{Q}$ with the equivalence class in $\mbf{P}\vee\mbf{Q}$ of its natural representative in $\mbf{P}\sqcup\mbf{Q}$.\qed
\end{definition}
Consider the two inclusion maps, $p\colon\mbf{P}\into\mbf{P}\vee\mbf{Q}$ and $q\colon\mbf{Q}\into\mbf{P}\vee\mbf{Q}$, each of which is an injective poc morphism.
The dual maps $p^\circ$ and $q^\circ$ give rise to the median morphism $\mu:(\mbf{P}\vee\mbf{Q})^\circ\to\mbf{P}^\circ\times\mbf{Q}^\circ$ defined by $\mu(w)=(p^\circ(w),q^\circ(w))$, where $p^\circ(w)=w\cap\mbf{P}$ and $q^\circ(w)=w\cap\mbf{Q}$, by definition.
Since every proper pair $a,b\in\mbf{P}\vee\mbf{Q}$ with $a\in\mbf{P}$ and $b\in\mbf{Q}$ satisfies $a\pitchfork b$, it follows that $u\cup v$ is coherent for any $u\in\mbf{P}^\circ$ and $v\in\mbf{Q}^\circ$, and we conclude that $\mu$ is bijective.

Finally, recall that an edge in $\dual{\mbf{P}\vee\mbf{Q}}$ joining $w=\mu(u,v)$ with $w'=\mu(u',v')$ occurs iff $\card{w\minus w'}=1$. Since the intersection of $\mbf{P}$ with $\mbf{Q}$ in $\mbf{P}\vee\mbf{Q}$ is trivial, in terms of $u,u',w,w'$ we obtain:
\begin{equation*}
    \card{w\minus w'}=\card{u\minus u'}+\card{v\minus v'}\,,
\end{equation*}
so that $w,w'$ span an edge if and only if exactly one of the pairs $\{u,u'\}$ or $\{v,v'\}$ spans an edge. Thus, $\mu$ is a median isomorphism of the dual graphs and we have:
\begin{corollary} Let $\mbf{P,Q}$ be discrete poc sets. Then the mapping
\begin{equation}\label{eqn:cartesian product}
    \mu\colon\left\{\begin{array}{ccc}
        \dual{\mbf{P}\vee\mbf{Q}}   &\to&   \dual{\mbf{P}}\times\dual{\mbf{Q}}\\[.25em]
        w   &\mapsto&   (w\cap\mbf{P},w\cap\mbf{Q})
    \end{array}\right.
\end{equation}
is a median-preserving graph isomorphism.\qed
\end{corollary}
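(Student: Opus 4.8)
The plan is to check, in turn, that $\mu$ is a well-defined median morphism into the Cartesian product, that it is a bijection, and that it together with its inverse preserves adjacency --- whence it is an isomorphism of median graphs. For the first point I would note that $\mu$ is exactly the map $w\mapsto(p^\circ(w),q^\circ(w))$ assembled from the dual maps of the canonical inclusions $p\colon\mbf{P}\into\mbf{P}\vee\mbf{Q}$ and $q\colon\mbf{Q}\into\mbf{P}\vee\mbf{Q}$; by \Cref{defn:dual map} these are $w\mapsto w\cap\mbf{P}$ and $w\mapsto w\cap\mbf{Q}$. Since the median on any PCR dual is given by the intersection--union formula of \Cref{thm:duality and median formula}, and preimage commutes with finite intersections and unions, each of $p^\circ,q^\circ$ is a median morphism; hence so is $\mu$, the median on $\dual{\mbf{P}}\times\dual{\mbf{Q}}$ being computed coordinatewise.

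For bijectivity, the crucial input is the observation recorded just before the statement: in $\mbf{P}\vee\mbf{Q}$ every proper pair $\{a,b\}$ with $a$ in one summand and $b$ in the other is transverse, because by \Cref{defn:direct sum of poc sets} the partial order relates two elements only when they lie in a common summand. Consequently, for arbitrary $u\in\mbf{P}^\circ$ and $v\in\mbf{Q}^\circ$ the union $u\cup v$ is coherent: a witness $x\leq y\com$ to incoherence cannot have both $x,y$ in the same summand (coherence of $u$, resp. of $v$) and cannot have them in different summands (transversality). Extending $u\cup v$ to a maximal coherent set $w$ by Zorn's lemma, maximality of $u$ inside $\mbf{P}$ forces $w\cap\mbf{P}=u$, and likewise $w\cap\mbf{Q}=v$; since $\mbf{P}\cup\mbf{Q}$ exhausts $\mbf{P}\vee\mbf{Q}$ after the identification of $\minP$ and $\maxP$, this $w$ is uniquely determined. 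This yields surjectivity and injectivity at once, with $\mu\inv(u,v)=u\cup v$.

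For the graph structure I would use that, inside $\mbf{P}\vee\mbf{Q}$, the summands $\mbf{P}$ and $\mbf{Q}$ overlap only in $\{\minP,\maxP\}$; by \Cref{prop:when maximal coherent is complete}, $\maxP$ lies in every element of the (non-degenerate) dual and $\minP$ in none, so these two elements contribute nothing to any symmetric difference. Writing $w=u\cup v$ and $w'=u'\cup v'$, one therefore has $\card{w\minus w'}=\card{u\minus u'}+\card{v\minus v'}$ with no double counting, so $\ellone{w}{w'}=1$ exactly when one of $\card{u\minus u'},\card{v\minus v'}$ equals $1$ and the other $0$ --- precisely the adjacency condition in the Cartesian product of graphs. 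Hence both $\mu$ and $\mu\inv$ carry edges to edges, so $\mu$ is a graph isomorphism; being in addition a median morphism, it is a median isomorphism.

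I do not expect a genuine obstacle: the one load-bearing fact is the transversality of cross-summand pairs, and once coherence of $u\cup v$ is established the rest is bookkeeping with the definitions of dual maps, coherence, and the Hamming metric $\mbf{\Delta}$. The only point requiring a moment's care is the status of the distinguished elements $\minP,\maxP$, which coincide across the two summands after the gluing; but because the duals involved are non-degenerate, $\maxP$ belongs to every maximal coherent set and $\minP$ to none, so they neither obstruct the coherence argument nor perturb the symmetric-difference count.
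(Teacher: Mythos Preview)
Your proposal is correct and follows essentially the same approach as the paper: both identify $\mu$ with the pair of dual maps $(p^\circ,q^\circ)$, use transversality of cross-summand pairs to get coherence of $u\cup v$ and hence bijectivity, and then decompose the Hamming count as $\card{w\minus w'}=\card{u\minus u'}+\card{v\minus v'}$ to match edges. Your write-up is slightly more explicit on two points the paper leaves implicit --- why dual maps are median morphisms, and why the shared elements $\minP,\maxP$ do not disturb the count --- while the Zorn step is harmless but unnecessary, since $u\cup v$ is already a complete $\ast$-selection on $\mbf{P}\vee\mbf{Q}$ and hence maximal coherent once shown coherent.
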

For an alternative argument, note that for any $u\in\mbf{P}^\circ$ and $v\in\mbf{Q}^\circ$, if $S\subset\min(u)$ and $T\subset\min(v)$ are transverse sets, then $S\cup T\subset\min(u\cup v)$ and is a transverse set in $\mbf{P}\vee\mbf{Q}$.
Therefore, by \Cref{lemma:when a vertex meets a cube}, every cube in $\cube{\mbf{P}}\times\cube{\mbf{Q}}$ corresponds to a unique cube in $\cube{\mbf{\mbf{P}\vee\mbf{Q}}}$.
Thus $\mu$ from the corollary is much more than an isomorphism of graphs: it extends to an isomorphism of cubical complexes from $\cube{\mbf{P}\vee\mbf{Q}}$ onto $\cube{\mbf{P}}\times\cube{\mbf{Q}}$.

\subsubsection{Example: representing a circle.}\label{poc:ex:circle} Similarly to the example of a bead on a straigh piece of string (\Cref{poc:ex:interval}), one could consider a bead on a circular bracelet, replacing the interval $[0,1]$ with the unit circle $\SS^1\subset\mathbb{C}$ in the complex plane.
This time, let $p_0,\ldots,p_{L-1}\in\SS^1$ be a cyclically ordered collection of marker points, say, $p_k:=\exp(\tfrac{2\pi k\mbf{i}}{L})$.

We will compare several different representations over the PCSs:
\begin{equation}
    \sens=\sens(L)=
    \{\mbf{0},\mbf{0}\com\}\cup
    \{a_0,a_0\com\ldots,a_{L-1},a_{L-1}\com\}\,,\quad
    L\in\NN\,.
\end{equation}
We regard $\sens$ as a sensorium whose realization $\rho=\rho(L,\epsilon)$ is defined by setting $x\in\rho(a_k)$ for a history $x$ if and only if the currect state $x_0$ lies in the open circular arc segment of $\SS^1$ centered at $p_k$ and having radius $\epsilon$.
Depending on the choice of $L$ and $\epsilon$, different PCRs (and duals) may arise.
Specifically, We consider the examples with $L=4$ and $\epsilon=\tfrac{\pi}{4},\tfrac{\pi}{3}$; with $L=6$ and $\epsilon=\tfrac{\pi}{3},\tfrac{\pi}{2}$, to illustrate possible differences and shared qualities.

\paragraph{Jack Sparrow's compass, $L=4$.} Rather than keep track of the indices modulo 4 in this example, let us identify it with a day-to-day object: a compass. We denote
\begin{equation}
    \north:=a_0\,,\quad
    \south:=a_2\,,\quad
    \west:=a_1\,,\quad
    \east:=a_3\,.
\end{equation}
\Cref{fig:compass}(left) depicts the subsets of $\SS^1$ which determine $\rho(a_i)$, $i=0,1,2,3$, for the realizations in the cases $\epsilon=\tfrac{\pi}{3}$ (A) and $\epsilon=\tfrac{\pi}{4}$ (B).
Thinking of $\SS^1$ as the space of all possible positions of a compass needle---the needle of {\em this} compass points in the direction of your heart's greatest desire and that may not be a visit to the magnetic north pole---one should think of, e.g., $N_\alpha:=\rho_\alpha(\north)$, $\alpha\in\{A,B\}$, as the set of positions of the needle with which observer $\alpha$  associates an affirmative answer to the question ``Is the needle pointing North?''.
The difference between the two examples is that $\rho_B(\north),\rho_B(\south),\rho_B(\west),\rho_B(\east)$ are pairwise disjoint, while $\rho_A$ realizes the major directions so that only opposites are disjoint.

\begin{figure}[t]
	\begin{center}
		\includegraphics[width=\columnwidth]{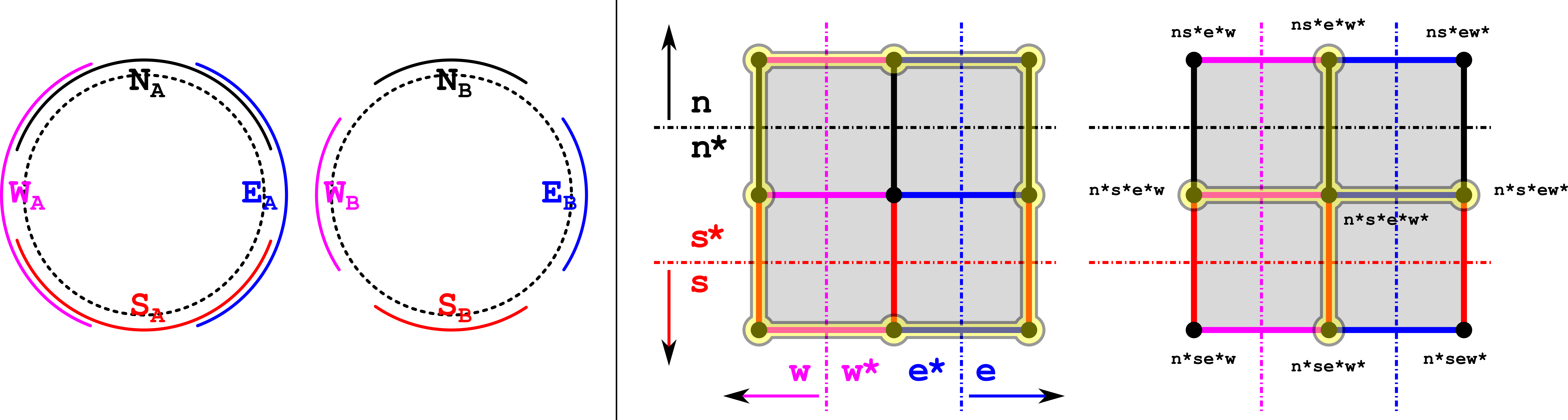}
		
		\caption{\small Illustrating the case $L=4$ in the example of \Cref{poc:ex:circle}: images of the realization maps (left), and the corresponding perceptual classes in the model space $\cube{\ppoc}$ dual to a common PCR representation (right).\normalsize\label{fig:compass}}
	\end{center}
\end{figure}

Let $G$ denote the poc set structure on $\sens$ with relations of the form\footnote{We regard the indices in this example and any arithmetic operations on them as being defined modulo $L$.}
\begin{equation}
    \north<\south\com\,,\quad
    \west<\east\com\,.
\end{equation}
Then $\rho$ is a poc morphism for either choice of $\epsilon$, and the right hand side of \Cref{fig:compass} illustrates the perceptual classes of $\rho$ (yellow highlighting) together with the edges they induce in the ambient structure, $\dual{G}$.
Note how case (A) produces an embedded cycle sub-graph in $\dual{G}$---a coarse but topologically faithful reconstruction of $\SS^1$, which is homotopically non-trivial---while case (B) produces a tree, a space homotopically equivalent to a point.

While providing an illustration for Proposition~\Cref{prop:universality}, this example also highlights the necessity in discussing what properties of the realization map $\rho$ could guarantee a degree of fidelity of the observer's reconstruction of the observed space (the space of histories $\spc$? the `environment' $\SS^1$?) as, say, the sub-graph of $\dual{G}$ induced by the perceptual classes.

\paragraph{Higher dimensions, $L=6$.}
\Cref{fig:six_way_compass} compares the dual graphs/cubings of two poc set representations, each optimal for its corresponding choice of the value of $\epsilon$.
The case $\epsilon=\tfrac{\pi}{3}$ again has the property that non-consecutive $\rho(a_j)$ are disjoint, implying that $\rho$ is a poc isomorphism of the poc set structure
\begin{equation}
    a_i\pitchfork a_j\IFF |i-j|\leq 1\,,\quad
    a_i<a_j\com \IFF |i-j|>1\,,
\end{equation}
onto its image in $\power{\SS^1}$. Denote this poc set structure on $\sens$ by $\ppoc_1$.

The case $\epsilon=\tfrac{\pi}{2}$ has fewer nesting relations among the $\rho(a_j)$, because every three consecutive sets of this form have a point in common.
Formally, $\rho$ is a poc isomorphism of the poc set structure described by:
\begin{equation}
    a_i\pitchfork a_j\IFF |i-j|\leq 2\,,\quad
    a_i<a_j\com \IFF |i-j|>2\,,
\end{equation}
onto its image. Denote this poc set structure on $\sens$ by $\ppoc_2$.
Note that the identity map $\mathrm{id}\colon\ppoc_2\to\ppoc_1$ is a poc morphism, while its inverse is not: a poc morphism $f$ is allowed to map a transverse pair to a nested one, but not the other way around.
The dual of this map embeds $\dual{\ppoc_1}$ in $\dual{\ppoc_2}$. This embedding can be seen clearly in \Cref{fig:six_way_compass}(right).

\begin{figure}[t]
	\begin{center}
		\includegraphics[width=\columnwidth]{./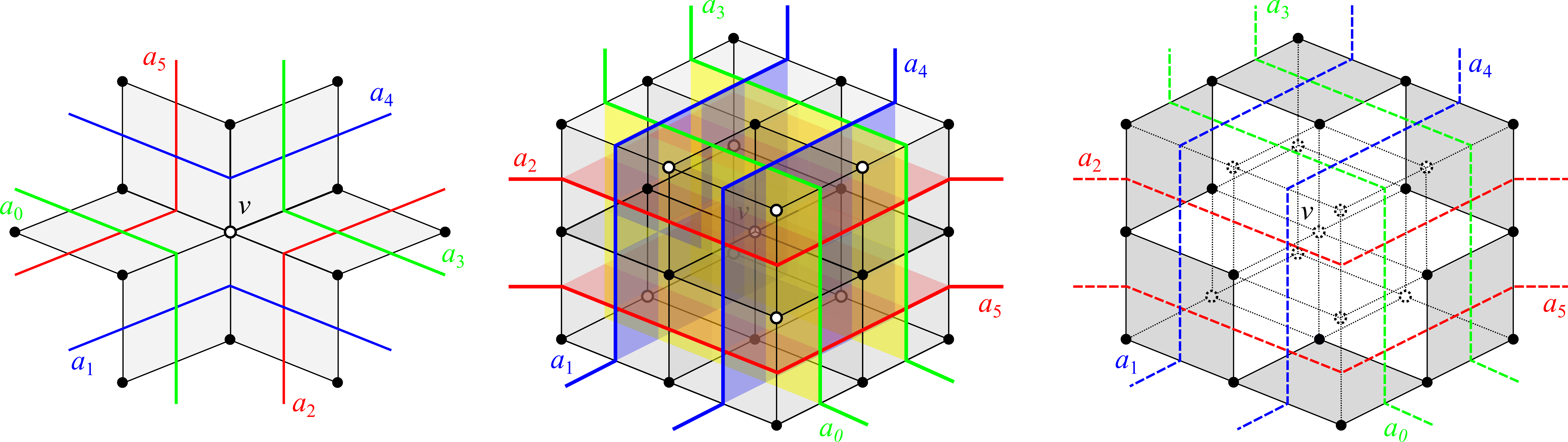}
		
		\caption{\small Duals of `optimal' poc set representations for two distinct realizations of the example of \Cref{poc:ex:circle}, in the case $L=6$: $\cube{\ppoc_1}$ is 2-dimensional (left), while $\cube{\ppoc_2}$ is 3-dimensional (center). Vertices painted black represent consistent models, while vertices painted white represent coherent, but inconsistent models. The diagram to the right shows what is left of $\cube{\ppoc_2}$ when the inconsistent models are deleted from the complex, leaving a sub-complex homotopy-equivalent to the circle $\SS^1$.\normalsize\label{fig:six_way_compass}}
	\end{center}
\end{figure}

\subsubsection{Example: moving bead on an interval.}\label{poc:ex:moving bead on an interval} Returning to the `thresholds' example of \Cref{poc:ex:interval}, we would like to consider it from the point of view of the agents described in \Cref{sim:2:sniffy}.

The interval $[0,1]$ from \Cref{poc:ex:interval} will now be replaced with $[0,L]$, $L$ a positive integer, for convenience.
Once again we are given position sensors $a_1,\ldots,a_L\in\sens$---more sensors will be added to $\sens$ in a moment---with realizations $x\in\rho(a_j)\IFF \pos(x)<j$, where we recall that $x=(x_t)_{t=-\infty}^0$, $x_t\in[0,L]$ is our current notion of a history, and $\pos(x):=x_0$ is the current position of the bead on $[0,L]$.

This time we are interested in reasoning about the possible motion of the bead along the interval, so we introduce delayed sensors $\delay a_j$ into $\sens$ alongside the original position sensors.
Formally, the delay operator acts on histories via $(\delay x)_{t}:=x_{t-1}$, and acts on sensors via $x\in\rho(\delay a)\IFF\delay x\in\rho(a)$, that is: at any time, the current value of $\delay a$ coincides with value of $a$ in the previous cycle.

The bead is endowed with two actuators. One, named $\rt$, whose action at time $t$ pushes the bead one unit to the right along the interval.
The only exception is the position $L$: if $\rt$ is applied there, its contribution to the motion of the bead will be nil.
Similarly, an actuator named $\lt$ pushes the bead one unit toward the endpoint $0$ of $[0,L]$, with no effect when the bead is already there.
Finally, turning on both actuators at the same time results in no motion of the bead in either direction.

Two agents, also named $\rt$ and $\lt$, are each in charge of deciding, respectively, whether to act (turn on their assigned actuator for the duration of one time interval), or not.
Each agent $\alpha\in\{\rt,\lt\}$ maintains two PCR representations: $G^\alpha$ is updated conditioned on the agent having acted, and $G^{\alpha\com}$ is updated conditioned on $\alpha$ resting.

Here we will consider the poc set representations we would like each agent to learn, as we attempt to draw their dual cubings.
For this purpose, we analyze nesting relations in the sensorium
\begin{equation}
    \sens_0:=\{\mbf{0},\mbf{0}\com\}\cup\{a_j,a_j\com\}_{j=1}^L\cup\{\delay a_j,\delay a_j\com\}_{j=1}^L\,.
\end{equation}
In the absence of any additional assumptions, the following relations are consistent with the selected realization (and are, therefore, desirable as part of any learned poc set structure on $\sens_0$):
$a_1<a_2<\cdots<a_L$ encodes the geometry of the interval, and further implies also $\delay a_1<\delay a_2<\cdots<\delay a_L$;
not knowing anything about the actions taken by the actuators one may only be certain of the relations $(\ddagger)$ $\delay a_j<a_{j+1}\,,\;a_j<\delay a_{j+1}$, for $j<L$.
Denote the resulting poc set structure on $\sens_0$ by $\ppoc_0$.

Leveraging our understanding of cartesian products (\Cref{poc:ex:cartesian product}), we set $\mbf{Q}$ to be the sub-poc set of $\ppoc_0$ restricted to just the position sensors $a_j$, while $\delay\mbf{Q}$ will be the sub-poc set of $\ppoc_0$ over the delayed position sensors $\delay a_j$.
Then the identity mapping $\id\colon\mbf{Q}\vee\mbf{\delay Q}\to\ppoc_0$ is a poc morphism, whose dual map is a median-preserving embedding of cubical complexes, of $\cube{\ppoc_0}$ in the square $(L+1)\times(L+1)$ grid arising as $\cube{\mbf{Q}}\times\cube{\mbf{\delay Q}}$.
We conclude that $\cube{\ppoc_0}$ is the cubical complex shown in \Cref{fig:interval with motion}(left), by applying the relations $(\ddagger)$ to erase redundant squares from the grid.

Now suppose that all the correct relations have been (somehow) learned and represented in the collection of PCRs $G^{\rt}$, $G^{\rt\com}$, $G^{\lt}$ and $G^{\lt\com}$.
Because the synchronous application of $\rt$ and $\lt$ yields no motion, there is no way to discriminate between $G^{\rt}$ and $G^{\lt\com}$, as well as between $G^{\lt}$ and $G^{\rt\com}$.
However, the former two poc set presentations will have obtained the relations $a_j<\delay a_j$ in addition to those of $\ppoc_0$, causing the dual cubing to grow even smaller, as highlighted on the right-hand side of \Cref{fig:interval with motion} (of course, a symmetric situation arises for the other pair of indistinguishable representations).

To end this section we note that, by hard-wiring the actuators to never execute both $\rt$ and $\lt$ at the same time, it is possible to disambiguate the representations.
In this regime, the (optimally learned) representations $G^{\rt\com}$ and $G^{\lt\com}$ will remain the same, while the PCRs $G^{\rt}$ and $G^{\lt}$ will each experience a collapse to a non-trivial canonical quotient:
the PCR $G^{\rt}$ witnesses $\delay a_j$ if and only if it witnesses $a_{j+1}$ (the diagonal vertices in \Cref{fig:interval with motion} are inconsistent given $\rt$ is active).
The situation is symmetric (but not identical) for $G^{\lt}$, and we obtain four distinct ``world views'' for each of the observers.

\begin{figure}[t]
    \centering
    \includegraphics[width=\columnwidth]{./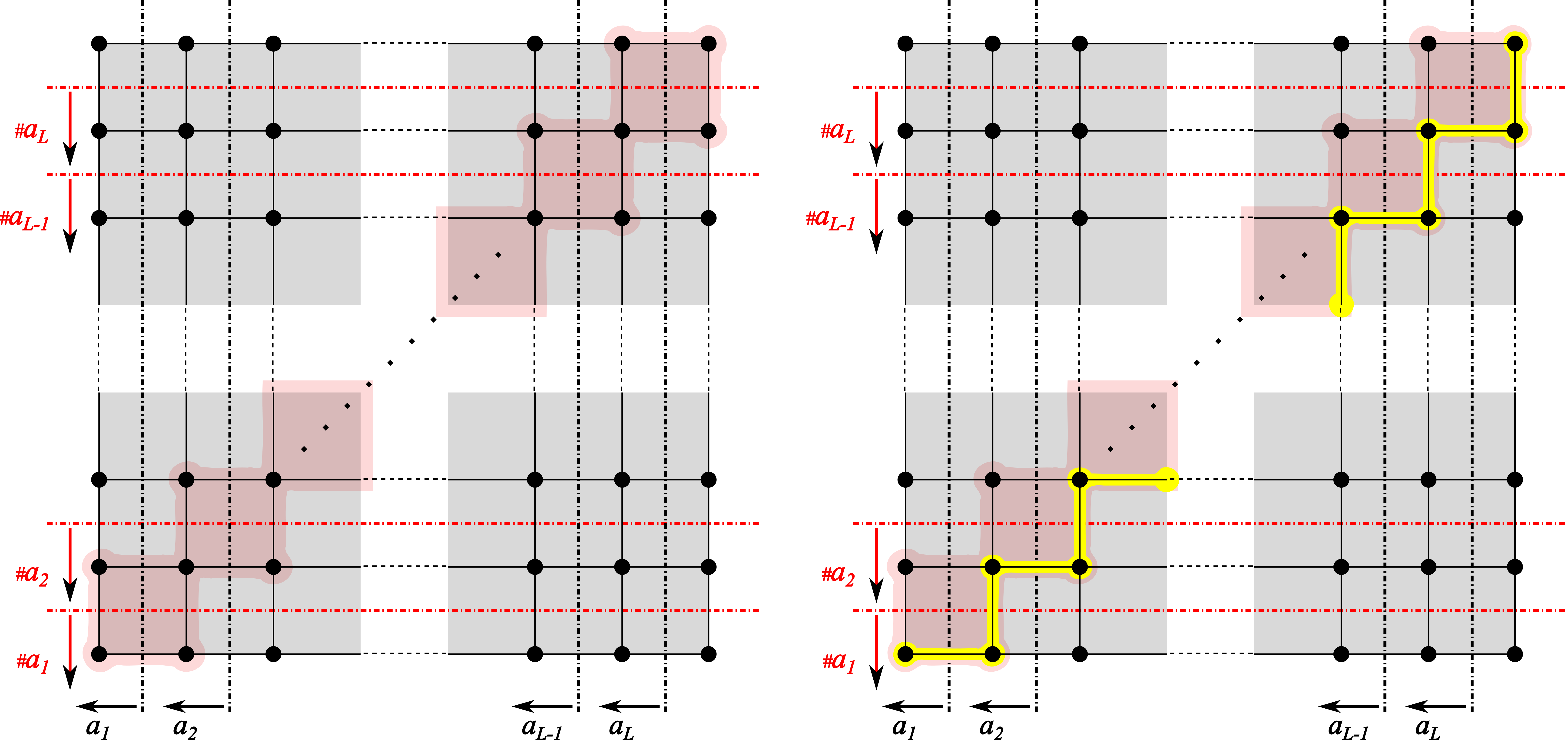}
    \caption{\small Discrete motion of a bead along the interval $[0,L]$, illustrating \Cref{poc:ex:moving bead on an interval}: $\cube{\ppoc_0}$ (red highlighting) shown as a sub-cubing of the 2-dimensional grid $\cube{\mbf{Q}}\times\cube{\mbf{\delay Q}}$ (left); the cubing representing the correct implications for $\rt$ and for $\lt\com$ shown as a sub-complex of $\cube{\ppoc_0}$ (yellow highlighting, right), while the cubing for $\lt$ and for $\rt\com$ is its diagonal mirror image.\normalsize}
    \label{fig:interval with motion}
\end{figure}

\subsection{Homotopy type of the observed space.}\label{app:poc:homotopy} The phenomenon witnessed by the examples of \Cref{poc:ex:circle} is very general, and brings to bear on the capabilities and limitations of knowledge representation using PCRs.

For a fixed PCS $\sens$, a fixed space $\spc$ and PCS morphism $\rho:\sens\to\power{\spc}$, recall (\Cref{models:first notions}) the subset $\model(\rho)$ of the Hamming cube $\ham=\ham(\sens)$ consisting of those models $u\in\ham$ for which $\bigcap_{a\in u}\rho(a)$ is non-empty---the set of possible worlds with respect to $\rho$.
Let $\cube{\rho}$ denote the cubical complex corresponding to the concept presentation of $\model(\rho)$---the set of cubical faces of $\ham$ all of whose vertices lie in $\model(\rho)$.

The authors proved in~\cite{GK-Allerton_2012} that, for sufficiently tame topological spaces $\spc$ and PCS morphisms $\rho:\sens\to\power{\spc}$, the following holds:
\begin{theorem}[Recovery of Homotopy Type]\label{thm:homotopy type representation} 
Suppose that, for every cube $C\in\cube{\rho}$, the set
\begin{equation}
    Z_C:=\bigcup_{u\in C}\bigcap_{a\in u}\rho(a)
\end{equation}
is contractible.
Then $\cube{\rho}$ is homotopy equivalent to $\spc$.\qed
\end{theorem}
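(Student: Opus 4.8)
The plan is to recognize the claimed statement as a standard application of the Nerve Lemma (in its homotopy-colimit or "gluing along a good cover" form). The space $\spc$ is covered by the collection of sets $\{\rho(a)\}_{a\in\sens}$ together with their finite intersections; more precisely, for each complete $\ast$-selection $u\in\model(\rho)$ the set $W_u:=\bigcap_{a\in u}\rho(a)$ is a non-empty "perceptual cell", and the $W_u$ partition $\spc$ as $u$ ranges over $\model(\rho)$. The cells corresponding to vertices of a common cube $C\in\cube{\rho}$ assemble into the set $Z_C$ appearing in the hypothesis, and these $Z_C$ form a cover of $\spc$ indexed by the face poset of $\cube{\rho}$.

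First I would set up the indexing category: the poset $\mathcal{F}$ of faces of $\cube{\rho}$ ordered by inclusion, with the functor $C\mapsto Z_C$ — note that $C\subseteq C'$ forces $Z_C\subseteq Z_{C'}$, so this is genuinely a functor into spaces (and into subsets of $\spc$). The hypothesis is precisely that each $Z_C$ is contractible. The key combinatorial fact to verify is that this cover is \emph{good} in the sense needed for the nerve-type argument: every non-empty finite intersection $Z_{C_1}\cap\cdots\cap Z_{C_k}$ is again of the form $Z_D$ for a suitable face $D$ (the largest common sub-face, when the intersection is non-empty), hence contractible. This uses the concrete description of $\cube{\rho}$ as a concept/sub-complex of the Hamming cube: faces are $\ast$-selections, and the intersection of the vertex sets of two faces is again the vertex set of a face (or empty), so $Z_{C_1}\cap Z_{C_2}=\bigcup_{u\in C_1\cap C_2}W_u=Z_{C_1\cap C_2}$ whenever $C_1\cap C_2\neq\varnothing$, and the disjointness of distinct $W_u$ forces the intersection to be empty otherwise. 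Once goodness is established, the homotopy colimit $\operatorname{hocolim}_{\mathcal F} Z_C$ is weakly equivalent to $\bigcup_C Z_C=\spc$ (projection lemma), while it is also weakly equivalent to $\operatorname{hocolim}_{\mathcal F} *=\lvert\mathcal F\rvert$, the order complex of the face poset, since every $Z_C$ is contractible. Finally $\lvert\mathcal F\rvert$ is the barycentric subdivision of $\cube{\rho}$, hence homeomorphic to $\cube{\rho}$, giving $\cube{\rho}\simeq\spc$.

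The main obstacle I expect is purely technical rather than conceptual: one must be careful that the $Z_C$ really do \emph{cover} $\spc$ (every point $x\in\spc$ lies in $W_{\rho\com(x)}$ and $\rho\com(x)\in\model(\rho)$ is a vertex of $\cube{\rho}$, so $x\in Z_{\{\rho\com(x)\}}$), and one must invoke the correct form of the Nerve/projection lemma valid for the "sufficiently tame" spaces the theorem restricts to — this is where the unstated tameness hypothesis on $\spc$ and $\rho$ does its work, ensuring the pieces $W_u$ and $Z_C$ are, say, open (or form a CW-type or numerable situation) so that a Mayer–Vietoris / homotopy-colimit spectral sequence or an inductive gluing over the skeleta of $\cube{\rho}$ applies. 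An alternative, more hands-on route avoiding the hocolim machinery is to build the homotopy equivalence by induction on the skeleta of $\cube{\rho}$: over the $0$-skeleton pick a point in each $W_u$; extend over each edge using contractibility of the corresponding $Z_C$ for $\dim C=1$; and inductively extend over each $k$-cube using contractibility of $Z_C$ for $\dim C=k$, at each stage producing compatible maps $\cube{\rho}\to\spc$ and $\spc\to\cube{\rho}$ with the composites homotopic to the identities — the contractibility hypotheses are exactly what make each extension step unobstructed. Either way, the heart of the matter is the goodness-of-the-cover bookkeeping afforded by the concept-complex structure of $\cube{\rho}$, and this is what I would write out in detail.
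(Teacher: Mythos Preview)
The paper does not actually prove this theorem: it is stated with a citation to~\cite{GK-Allerton_2012} and closed with a \qed, so there is no in-paper proof to compare against.

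That said, your proposal is the right one and is almost certainly what the cited paper does. A couple of remarks. First, your verification that the $W_u$ are pairwise disjoint (hence $Z_{C_1}\cap Z_{C_2}=Z_{C_1\cap C_2}$) and that $C_1\cap C_2$, when non-empty, is again a face of $\cube{\rho}$ is correct and is exactly the combinatorial input needed; it relies on $\cube{\rho}$ being a \emph{full} subcomplex of the Hamming cube (closed under taking subfaces), which follows from the concept-presentation definition. Second, the only genuinely delicate step is the one you flag: the equivalence $\operatorname{hocolim}_{\mathcal F}Z_C\simeq\spc$. Since the diagram is indexed by the face poset with \emph{covariant} inclusions $Z_C\hookrightarrow Z_{C'}$ for $C\subseteq C'$, the ordinary colimit is just the union, but matching its topology with that of $\spc$ and getting the projection lemma to fire requires the unstated tameness hypotheses (e.g.\ that the $Z_C$ form a closed cover with cofibrant inclusions, or an open cover after thickening). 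This is not a gap in your argument so much as the place where the paper's ``sufficiently tame'' caveat is doing work; in practice one appeals to a CW/cofibration hypothesis or to the open-cover Nerve Lemma after a suitable thickening of the $\rho(a)$. Your inductive skeleton-by-skeleton alternative is also fine and makes the obstruction-theoretic role of the contractibility hypothesis more transparent.
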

In other words, if the collection of queries available to the observer is sufficiently rich that obviously contractible subspaces of $\cube{\rho}$ (cubes) are witnessed by contractible subspaces of $\spc$, then $\cube{\rho}$ has, in the formal sense provided by algebraic topology, the same shape as the observed space $\spc$.

{\em In particular, under the condition of the theorem, if $\ppoc$ is a poc set structure on $\sens$ and $\rho$ is a poc morphism, then the universality of representation by PCRs (\Cref{prop:universality}) implies that $\cube{\rho}\subseteq\cube{\ppoc}$, making $\cube{\ppoc}$ into a minimal contractible model space for $\ppoc$ housing a homotopy model of the observed space, and the discrepancy between the two is precisely the set of unobservable perceptual classes.}

To illustrate the theorem, let us return to the examples of the preceding paragraph to observe that none of the phenomena we have encountered there had happened by accident.
For any $L\geq 2$, a choice of $0<\epsilon\leq\tfrac{\pi}{L}$ leads to $\cube{\ppoc}$ being a tree (a `starfish') containing the vertex $v=\{a_j\com\}_{j=0}^{L-1}$.
Since the set of points in $\SS^1$ witnessing this vertex is disconnected (see \Cref{fig:compass}), the hypothesis of the last theorem fails, making it possible for $\cube{\rho}$ to be contractible, which is exactly what happened for $L=4$, $\epsilon=\tfrac{\pi}{4}$.
At the same time, any choice of $\tfrac{\pi}{L}<\epsilon\leq\tfrac{\pi}{2}$ results in the hypothesis of the theorem being fulfilled, which is why, in the three other cases considered here, $\cube{\rho}$ is homotopy-equivalent to the circle.

Finally, we would like to emphasize that---similarly to the examples considered above---neither the tameness assumptions on $\spc$ and $\rho$ nor the hypothesis of the last theorem are excessive in standard Robotics settings.
First, since the sensor values are often functions of merely the last few visited states, the realization map $\rho:\sens\to\power{\spc}$ will often factor, up to sufficient approximation, through $\power{E\times\cdots\times E}$ where $E$ is the configuration space of the robotic system (similarly to the role played by the circle and the interval in all the preceding examples).
Second, $E$ is often a manifold, possibly with corners, or a cellular complex; in the absence of chaotic behavior, and provided sufficient sensing, it becomes possible to construct a sufficiently fine mesh of sensor values for ``chopping up'' the reduced history space $E\times\cdots\times E$ into small contractible regions as required by our theorem. 
\section{Appendix: Basic Results about PCRs.}\label{proofs:models}

\subsection{Proof of \Cref{prop:when maximal coherent is complete}.}\label{proof:when maximal coherent is complete}
Suppose $G$ is non-degenerate.
Take any $S\in G^\circ$ and any $a\in\sens$. One of the following holds:
\begin{itemize}
	\item $S\cup\{a\}$ is coherent, and hence $a\in S$ (by the maximality property of $S$) and $a\not\leq a\com$;
	\item $S\cup\{a\com\}$ is coherent, in particular $a\com\in S$ and $a\com\not\leq a$;
	\item Neither of the above.
\end{itemize}
In the third case there are two possibilities.
Either $S$ is empty, in which case the statement is that neither $\{a\}$ nor $\{a\com\}$ are coherent, which means that both $a\leq a\com$ and $a\com\leq a$ hold, and putting $a$ inside $N(G)\cap N(G)\com$~ ---~ a contradiction; or there exist $b,c\in S$ such that $a\leq b\com$ and $a\com\leq c\com$. But then $c\leq b\com$~ ---~ a contradiction to $S$ being coherent. Thus we are left with $a\in S$ or $a\com\in S$ for each $a\in\sens$, as desired.

The second assertion trivially implies the third, and the third implying the first follows from the remark preceding \Cref{defn:negligible_degenerate}.\qed

\subsection{Proof of \Cref{prop:function form of dual}.}\label{proof:function form of dual}
It is clear that $\chi$ is injective.
Any $f\in\morph{gc}{G}{\power{}}$ is a function of $\sens$ to $\power{}$, a two-point set, and is therefore characterized by the (possibly empty) set of points on which it obtains the value $\mbf{1}$.

Now let us verify that $\chi$ is well-defined, that is: that the set $S=f\inv(\mbf{1})$ is a maximal coherent subset of $\sens$ with respect to $G$.
Indeed, were $a,b\in S$ such that $a\leq b\com$, this would force $\mbf{1}=f(a)\leq f(b)\com=\mbf{0}$~ ---~ a contradiction.

Finally, we prove the surjectivity of $\chi$. Given a maximal coherent set $S$, \Cref{prop:when maximal coherent is complete} implies $S$ is a selection on $\sens$. This means that the function $f:\sens\to\power{}$ defined by $f(a)=\mbf{1}\IFF a\in S$ satisfies the identity $f(a\com)=f(a)\com$. We claim that $f$ is a PCR morphism. Since $\chi(f)=S$, proving this claim will finish the proof of the current proposition.

Suppose $f$ is not a morphism. Then there is $ab\in G$ satisfying $f(a)\not\leq_G f(b)$. In the current setting this is tantamount to $f(b)=\mbf{0}$ and $f(a)=\mbf{1}$, or, equivalently, $f(b\com)=f(a)=\mbf{1}$. In turn, this means $a,b\com\in S$. However, $S$ is forward-closed (as is any maximal coherent set), so $a\in S$ and $ab\in G$ imply $b\in S$. With $b\com\in S$ we obtain a contradiction.
\qed

\subsection{Proof of \Cref{prop:universality}.}\label{proof:universality}
The proof extends a standard argument from Sageev-Roller duality theory.
Given $\spc$ and $\rho$, pick any point $x\in\spc$. By definition, $\xi=\rho\com(x)$ belongs in $G^\circ$ if and only if no $a,b\in\xi$ satisfy $a\leq b\com$ in $G$. Since $\rho$ is order-preserving, having $a\leq b\com$ for $a,b\in\xi$ would imply $\rho(a)\cap\rho(b)=\varnothing$ while $x\in\tilde\rho(a)\cap\tilde\rho(b)$ at the same time---contradiction. Thus, $\xi\in G^\circ$ for all choices of $x\in\spc$, proving the first assertion of the proposition.
To verify the second one, consider the choice of $\spc=G^\circ$ with $\rho:\sens\to\power{\spc}$ given by $\rho(a)=\set{U\in G^\circ}{a\in U}$. It is easily verified that $\rho$ is a morphism and that $\rho\com:\spc\to G^\circ$ is the identity map (and hence surjective), finishing the proof.\qed

\subsection{Proof of \Cref{prop:canonical quotient}.}\label{proof:canonical quotient}
Let $G$ be a fixed non-degenerate PCR over $\sens$. For every $a\in\sens$, recall $[a]_G=\up{a}\cap\down{a}$, and recall the definition of $\pi=\pi_G$:
\begin{equation}
	\pi_G(a):=\left\{\begin{array}{ll}
		[a]_G		&\text{if }a\notin N(G)\cup N(G)\com\\
		N(G)	&\text{if }a\in N(G)\\
		N(G)\com&\text{if }a\in N(G)\com		
	\end{array}\right.
\end{equation}
Here are a few natural observations:
\begin{itemize}
	\item For any $a\in\sens$, $[a]_G\com=[a\com]_G$, where for any set $S\subseteq\sens$ we remember that $S\com:=\set{a\com}{a\in S}$.
	\item Since $N(G)$ is backwards-closed, $N(G)$ is a union of strong components of $G$: indeed, if $a\in N(G)$ then every $b\in[a]_G$ satisfies $b\leq a$, which implies $b\in N(G)$; hence $[a]_G\subseteq N(G)$.
	\item Analogously for $N(G)\com$, since it is forward-closed.
\end{itemize}
This allows for the construction of a new PCR $\widehat{G}$ over the PCS $\widehat{\sens}:=\set{\pi(a)}{a\in\sens}$, by setting $\widehat{G}=\set{\pi_G(a)\pi_G(b)}{a\leq_G b}$. We claim that $\widehat{G}$ induces on $\widehat{\sens}$ the structure of a poc set. For this it will suffice to show that $\widehat{G}$ is a non-degenerate PCR and a partial order.

First we show that $\widehat{\sens}$ is a PCS. The identity $\pi(a\com)=\pi(a)\com$ yields  $\pi(a)\com\com=\pi(a\com\com)=\pi(a)$ for all $a\in\sens$.
Suppose some $a\in\sens$ satisfied $\pi(a)\com=\pi(a)$.
Since $G$ is non-degenerate, this means $a\notin N(G)$ and $a\notin N(G)\com$.
But then $\pi(a)=[a]_G$, at the same time, and the equality $[a\com]_G=[a]_G$ implies both $a\leq a\com$ and $a\com\leq a$---contradicting non-degeneracy.
We conclude that $\pi(a)\com\neq\pi(a)$ for all $a\in\sens$, and, since $\pi$ is surjective, $\widehat{\sens}$ is a PCS.

It is clear now that $\widehat{G}$ is a PCR, by construction. Suppose now that $A\in\widehat{\sens}$ lay in $N(\widehat{G})$.
Then $A\leq A\com$, and writing $A=[a]_G$, $a\in\sens$ we obtain $a\in N(G)$, showing that $A=N(G)$.
Thus, $N(\widehat{G})$ is trivial, as desired.

$\widehat{G}$ is partially ordered by general considerations, so to conclude that $\widehat{G}$ is a poc set, it remains to verify that $N(G)$ is its minimum.
%
Now, $\mbf{0}\in N(G)$ and $\widehat{\mbf{0}}=\pi(\mbf{0})=N(G)\in\widehat{\sens}$ imply that the edge $\widehat{\mbf{0}}\pi(a)\in\widehat{G}$ for all $a\in\sens$.
Since $\pi$ is surjective, $\widehat{\mbf{0}}$ is the minimum element of $\widehat{\sens}$ with respect to the new partial order.

Finally, let $\ppoc$ be any poc set, and let $f:G\to\ppoc$ be any PCR morphism.
Then $f$ is constant on $\pi(a)$ for all $a\in\sens$, which defines the injective set map $\widehat{f}:\Gamma\to\ppoc$ via $\widehat{f}([a])=f(a)$.
This map is a PCR morphism of complemented graphs by construction, and is, therefore, a morphism of $\widehat{G}$ into $\ppoc$.
If $f':\widehat{G}\to\ppoc$ is any poc morphism satisfying $f=f'\circ\pi$, then for any $a\in\sens$ we have $f'(\pi(a))=f(a)=\widehat{f}(\pi(a))$. Since $\pi$ is surjective, $f'$ coincides with $\widehat{f}$.\qed

\subsection{Proof of \Cref{cor:all duals are poc set duals}.}\label{proof:all duals are poc set duals}
Let $G$ be a PCR over a PCS $\sens$, and let $\pi_G:G\to\widehat{G}$ be the canonical quotient map. We apply \Cref{prop:canonical quotient} with $\ppoc=\power{}$.

For any morphism $\varphi:G\to\power{}$ there exists one and only one morphism $\widehat{\varphi}:\widehat{G}\to\power{}$ satisfying $\varphi=\widehat{\varphi}\circ\pi_G$. Now, thinking of $\widehat{\varphi}$ as an element of $\widehat{G}^\circ$, we may write, by \Cref{prop:function form of dual,defn:dual map}, $\pi_G^\circ(\widehat{\varphi})=\widehat{\varphi}\circ\pi_G=\varphi$.

On the other hand, for any $\psi:\widehat{G}\to\power{}$, we may write $\widehat{\pi_G^\circ(\psi)}=\widehat{\psi\circ\pi_G}=\psi$, the last inequality following from the uniqueness assertion of \Cref{prop:canonical quotient}, applied to the morphism $\psi\circ\pi_G$.

We conclude that the map $\morph{cg}{G}{\power{}}\to\morph{cg}{\widehat{G}}{\power{}}$ defined by $\varphi\mapsto\widehat{\varphi}$ is an inverse of $\pi_G^\circ:\morph{cg}{\widehat{G}}{\power{}}\to\morph{cg}{G}{\power{}}$, as desired.\qed

\subsection{Proof of \Cref{cor:naturality of canonical quotients}.}\label{proof:naturality of quotients} We apply \Cref{prop:canonical quotient} again, to the PCR $G$, the poc set $\ppoc=\widehat{H}$ and the morphism $g:=\pi_H\circ f:G\to\widehat{H}$, to conclude there exists one and only one morphism $\widehat{g}:\widehat{G}\to\widehat{H}$ satisfying $g=\widehat{g}\circ\pi_G$. Substituting $g=\pi_H\circ f$ we see that $\widehat{g}$ is the required morphism.\qed

\section{Appendix: Convexity theory of PCR duals.}\label{app:convexity}
The purpose of this section is to provide a self-contained account of our results regarding coherent projection and the use of propagation for the computation of nearest-point projections in poc set duals.
Throughout this section, $G$ will be a fixed non-degenerate PCR over a finite PCS $\sens$.
Moreover, without loss of generality (through replacing $G$ with its canonical poc quotient $\widehat{G}$), we may assume $G$ is a poc set.
Since $G$ is fixed, we will simplify notation by writing $(\leq)$ instead of $(\leq_G)$, `coherent' instead of `$G$-coherent', and so on, throughout this section.

\subsection{Proof of \Cref{lemma:halfspace properties}:}\label{proof:halfspace properties}
Property (1) is a restatement of the fact that, if $G$ is non-degenerate, then every coherent subset of $\sens$ is contained in a coherent complete $\ast$-selection.
Items (2,3) are straightforward from the definition.

For item (4), observe that, if $u\in\half{a;G}$ and $a\leq b$, then $b\in u$ as well: indeed, if $b\notin u$, then $b\com\in u$ and $a\leq(b\com)\com$ means $u$ is incoherent.
We conclude that $a\leq b$ implies $\half{a;G}\subseteq\half{b;G}$, from which (4) readily follows.

To prove item (5), we observe that $S=\up{S}$ implies $S=\up{\min(S)}$, and then we apply (4).

Finally, one direction of (6) amounts to (3).
To prove the converse, suppose $S_1,S_2\in\mbf{C}(G)$ are such that $(\dagger)$ $\half{S_1;G}\subseteq\half{S_2;G}$, and suppose there exists $s\in S_2\minus S_1$. If there is a $u\in\half{S_1;G}$ such that $s\in\min(u)$, then its neighbor $v:=\flip{u}{s}$ in $\dual{G}$ (recall \Cref{lemma:when a vertex meets a cube}) contains $S_1$ but not $S_2$, contradicting $(\dagger)$.
We are left to prove that $\half{S_1;G}$ must contain such a $u$.

Indeed, pick any $v\in\half{S_1;G}$ such that the number $N(v)$ of $a\in v$ satisfying $a<s$ is smallest possible. Since $s\in S_2$, we have $s\in v$, by $(\dagger)$.
If $s\notin\min(v)$ (otherwise we are done), then $N(v)>0$ and we may find an $a\in\min(v)$ with $a<s$. Consider the vertex $\flip{v}{a}$: since $S_1=\up{S_1}$, we conclude that $a\notin S_1$, so $S_1\subset\flip{v}{a}$ and $s\in\flip{v}{a}$, with $N(\flip{v}{a})=N(v)-1$. This contradicts the choice of $v$, and we are done.\qed

\subsection{Proof of \Cref{prop:coherent approximation}:}\label{proof:coherent approximation}
Suppose $B\in G^\circ$ is such that $\ellone{A}{B}\leq\ellone{A}{u}$ for all $u\in G^\circ$. We must show that $\coh{}{A}\subseteq B$.

Suppose $a\in\coh{}{A}\minus B$. Then $a\com\in B$ and there is an element $b\in\min(B)$ with $b\leq a\com$.
Note that $u:=\flip{B}{b}$ is then also an element of $G^\circ$, by \Cref{lemma:when a vertex meets a cube}.

Now, if $b\in A$, then $a\in\up{A}\com$, contradicting $a\in\coh{}{A}=\up{A}\minus\up{A}\com$.
Therefore, $b\com\in A$ (since $A$ is a complete $\ast$-selection), but then $u$ satisfies:
\begin{equation}
    \ellone{A}{u}=\card{u\minus A}=\card{B\minus A}-1<\ellone{A}{B}\leq\ellone{A}{u}
\end{equation}
---a contradiction again. We conclude that $\coh{}{A}\subseteq B$, as desired.\qed

\subsection{Proof of \Cref{prop:coherent projection}:}\label{proof:coherent projection}
Recall that $A\subseteq\up{A}$, $\up{\up{A}}=\up{A}$ and $\down{A\com}=\up{A}\com$ for all $A\subseteq\sens$.
We check that $\coh{}{A}$ is coherent for all $A$.
For suppose that $b,c\in\coh{}{A}$ satisfy $b\leq c\com$.
Then $b\in\up{A}$ implies $c\com\in\up{\up{A}}=\up{A}$, and therefore $c\in\up{A}\com$. But then $c$ cannot lie in $\coh{}{A}$.

Next, we verify that $\coh{}{A}$ is forward-closed.
It suffices to verify $\up{\coh{}{A}}\subseteq\coh{}{A}$.
By definition we have $\coh{}{A}\subseteq\up{A}$, hence $\up{\coh{}{A}}\subseteq\up{\up{A}}=\up{A}$, and it remains to check that no $b\in\up{\coh{}{A}}$ belongs to $\up{A}\com$;
were there such a $b$, there would have been $a\in\coh{}{A},\,c\in A$ with $a\leq b$ and $c\leq b\com$, implying $a\leq c\com$~ ---~ a contradiction to $a\notin\down{A\com}=\up{A}\com$.
This proves (a).

Now let us calculate: $\coh{}{\coh{}{A}}=\up{\coh{}{A}}\minus\up{\coh{}{A}}\com=\coh{}{A}\minus\coh{}{A}\com=\coh{}{A}$, the last equality due to $\coh{}{A}$ being coherent. At the same time, if $A$ itself is coherent then $\coh{}{A}=\up{A}\supseteq A$. Moreover, this shows $\coh{}{A}=A$ whenever $A\in\mbf{C}(G)$. Finally, if $A=\coh{}{A}$ then $A\in\mbf{C}(G)$ because $\coh{}{A}$ always is.\qed

\subsection{Proof of \Cref{prop:projection formula}:}\label{proof:projection formula}
The proof of the projection formula will require additional notions and results from \cite{Roller-duality}, which we now recall.

\subsubsection{Separators and Gates} 
\begin{definition}\label[definition]{defn:separator} For any $K,L\subseteq G^\circ$, the set
\begin{equation}\label{eqn:separator}
	\sepr{K,L}:=\set{a\in G}{K\subseteq \half{a;G})\,,\;L\subseteq \half{a\com;G}}
\end{equation}
is called the {\emph separator of $K$ and $L$} in $G^\circ$.\qed
\end{definition}
The inequality $\ellone{u}{v}\geq\card{\sepr{K,L}}$ follows immediately for all $u\in K$ and $v\in L$. This motivates:
\begin{definition}\label[definition]{defn:gate}
Let $K,L\subseteq G^\circ$. A \emph{gate for $K,L$} is a pair of points $u\in K$, $v\in L$ such that $\ellone{u}{v}=\card{\sepr{K,L}}$.\qed
\end{definition}
The following result is well known in our setting:
\begin{proposition}\label[proposition]{prop:gates exist} Let $K,L$ be non-empty convex subsets of a median graph and let $u\in K$ and $v\in L$. Then $u,v$ form a gate for $K,L$ if and only if $\proj{K}{v}=u$ and $\proj{L}{u}=v$. Moreover, the pair $K,L$ has a gate.\qed
\end{proposition}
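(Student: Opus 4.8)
It suffices to treat the case actually needed here, where the ambient median graph is $\dual{G}=G^\circ$ for the fixed finite non-degenerate PCR $G$, which—as stipulated at the start of this appendix—we may assume to be a poc set (\Cref{prop:canonical quotient}); then intervals, medians and half-spaces in $\dual{G}$ are computed by \Cref{thm:duality and median formula} and \Cref{lemma:ellone halfspaces}, and $G^\circ$ is finite. The plan is to run everything through the set of half-spaces separating two vertices. For $u,v\in G^\circ$, observe that $u\minus v$ is exactly the set of $a\in G$ with $u\in\half{a;G}$ and $v\in\half{a\com;G}$ (using that $v$ is a complete selection, so $a\notin v\Leftrightarrow a\com\in v$, by \Cref{prop:when maximal coherent is complete}), and that $\ellone{u}{v}=\card{u\minus v}$ by definition of the Hamming metric. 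For any $u\in K$ and $v\in L$ one then has $\sepr{K,L}\subseteq u\minus v$, whence $\card{\sepr{K,L}}\le\ellone{u}{v}$—the inequality already noted after \Cref{defn:separator}. Since both sets are finite, the pair $(u,v)$ is a gate precisely when $\sepr{K,L}=u\minus v$, i.e. precisely when every $a$ with $u\in\half{a;G}$ and $v\in\half{a\com;G}$ satisfies $K\subseteq\half{a;G}$ and $L\subseteq\half{a\com;G}$.

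\textbf{The key step} is to identify these two ``directional'' conditions with the two projection identities. First I would record the classical gate property of nearest-point projections: if $u=\proj{K}(v)$ and $w\in K$, then $m:=\med{v}{u}{w}$ satisfies $u\cap w\subseteq m$, $v\cap u\subseteq m$ and $v\cap w\subseteq m$ by the median formula, so $m\in I(u,w)\subseteq K$ by convexity and $m\in I(v,u)$, giving $\ellone{v}{m}\le\ellone{v}{u}$; uniqueness of the nearest point in $K$ to $v$ (property (4) of \Cref{thm:convexity theory of median graphs}) forces $m=u$, and then $v\cap w\subseteq m=u$ yields $u\in I(v,w)$. Conversely, if $u\in I(v,w)$ for all $w\in K$, then $\ellone{v}{w}\ge\ellone{v}{u}$ for every such $w$, so $u$ is a point of $K$ at minimum distance from $v$ and hence $u=\proj{K}(v)$. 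Thus $\proj{K}(v)=u$ iff $u\in I(v,w)$ for all $w\in K$; by the interval formula of \Cref{thm:duality and median formula} this says $v\cap w\subseteq u$ for every $w\in K$, which is equivalent to: no $w\in K$ lies in $\half{a;G}$ for any $a\in v\minus u$, i.e. (using that $\half{a;G}$ and $\half{a\com;G}$ partition $G^\circ$, by the footnote to \Cref{lemma:ellone halfspaces}) $K\subseteq\half{b;G}$ for every $b\in u\minus v$. By the symmetric argument, $\proj{L}(u)=v$ iff $L\subseteq\half{b\com;G}$ for every $b\in u\minus v$. Combining with the first paragraph, $(u,v)$ is a gate if and only if $\proj{K}(v)=u$ and $\proj{L}(u)=v$.

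\textbf{For the existence statement}, I would use finiteness of $G^\circ$ to choose $(u,v)\in K\times L$ minimizing $\ellone{u}{v}$, with minimum value $d$. Since $v\in L$, every $w\in K$ satisfies $\ellone{w}{v}\ge d$, so $u$ is a point of $K$ at minimum distance from $v$ and $u=\proj{K}(v)$ by property (4) of \Cref{thm:convexity theory of median graphs}; symmetrically $v=\proj{L}(u)$. The equivalence just established then shows $(u,v)$ is a gate. (Alternatively one may iterate $v_{n+1}:=\proj{L}(u_n)$, $u_{n+1}:=\proj{K}(v_{n+1})$ starting from any $u_0\in K$: the values $\ellone{u_n}{v_{n+1}}\ge\ellone{u_{n+1}}{v_{n+1}}\ge\ellone{u_{n+1}}{v_{n+2}}\ge\cdots$ form a non-increasing sequence of non-negative integers, hence are eventually constant; at an index where $\ellone{u_n}{v_{n+1}}=\ellone{u_{n+1}}{v_{n+1}}$, uniqueness of projections forces $u_n=u_{n+1}$, and then $(u_{n+1},v_{n+1})$ satisfies both projection identities.)

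\textbf{Main obstacle.} Nothing here is deep; the one place requiring care is the chain of equivalences in the middle step, where one must correctly pass between ``$v\cap w\subseteq u$ for all $w\in K$'' and ``$K\subseteq\half{b;G}$ for all $b\in u\minus v$'' by complementing literals, and must invoke the small median-graph fact that the nearest-point projection of \Cref{thm:convexity theory of median graphs}(4)--(5) is a gate in the classical sense. Both are routine consequences of \Cref{thm:duality and median formula}, but this is exactly where a sign slip would be easy to make.
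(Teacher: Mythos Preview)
Your argument is correct and self-contained. The one thing to point out is that the paper does not actually prove this proposition: it is stated as a well-known fact about median graphs and is immediately followed by the sentence ``We will apply this proposition without proof.'' So there is no paper proof to compare against; you have supplied what the paper chose to omit.

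Your route---reducing to $\dual{G}$ for a poc set $G$, characterizing the gate condition as the equality $u\minus v=\sepr{K,L}$, and then identifying each of the two projection identities with one ``half'' of that equality via the gate property of nearest-point projection (the median computation showing $\proj{K}(v)=u\Leftrightarrow u\in I(v,w)$ for all $w\in K$)---is the standard one in this context and is exactly what the paper's later arguments (e.g.\ the proofs of \Cref{lemma:supporting halfspace} and \Cref{prop:general projection}) implicitly rely on. The existence argument by picking a distance-minimizing pair in $K\times L$ is clean and suffices; the alternative iteration you sketch also works but is unnecessary once finiteness is available.
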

We will apply this proposition without proof. An important consequence for us is the following:
\begin{lemma}\label[lemma]{lemma:supporting halfspace} Suppose $S\subset\sens$ is coherent, and $K=\half{S;G}$. Then, for any $a\in\sens$, if $K\subseteq\half{a;G}$ then there exists $s\in S$ such that $s\leq a$.
\end{lemma}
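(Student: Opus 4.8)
The claim is that if $S$ is coherent and $K = \half{S;G} \subseteq \half{a;G}$, then some $s \in S$ satisfies $s \leq a$. The natural approach is by contradiction: assume that no element of $S$ lies below $a$, and build a vertex $u \in K$ that omits $a$, contradicting $K \subseteq \half{a;G}$. Concretely, I would try to show that $S \cup \{a\com\}$ is coherent; since $G$ is non-degenerate, by \Cref{lemma:halfspace properties}(1) (or the remark that every coherent set extends to a complete coherent $\ast$-selection) this coherent set is contained in some $u \in G^\circ$, and then $u \supseteq S$ gives $u \in \half{S;G} = K$ while $a\com \in u$ forces $a \notin u$, so $u \notin \half{a;G}$ — the desired contradiction.

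\textbf{Key step: coherence of $S \cup \{a\com\}$.} This is the heart of the argument. Since $S$ is already coherent and $\{a\com\}$ is trivially coherent, the only way $S \cup \{a\com\}$ could fail to be coherent is if there is some $s \in S$ with $s \leq (a\com)\com = a$, or with $a\com \leq s\com$, i.e. $s \leq a$ again (using that $x \leq y \IFF y\com \leq x\com$ in a poc set). Either possibility is exactly $s \leq a$ for some $s \in S$, which is what we are assuming does not happen. Hence $S \cup \{a\com\}$ is coherent. Note we should also rule out $a\com \leq a$: that would make $a\com$ negligible, and then no coherent set — in particular not $K$, which is nonempty since $S$ is coherent by \Cref{lemma:halfspace properties}(1) — could contain $a$; but $K \subseteq \half{a;G}$ together with $K \neq \varnothing$ would force $a \in u$ for some $u \in G^\circ$, contradicting non-degeneracy via \Cref{prop:when maximal coherent is complete}. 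So this degenerate subcase does not arise, and in fact if $a$ itself is negligible the statement holds vacuously-to-contradiction in the same way.

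\textbf{The main obstacle.} The only real subtlety is making sure the logical bookkeeping around the poc-set order is airtight: we are working (as declared at the top of \Cref{app:convexity}, WLOG with $G$ a poc set) with $(\leq)$ being a genuine partial order and $(\com)$ order-reversing, so $s \leq a$, $a\com \leq s\com$, and ``$S \cup \{a\com\}$ incoherent'' really are equivalent statements, and there is no hidden third way for coherence to fail. Once that equivalence is laid out cleanly, the contradiction is immediate, so I expect the proof to be quite short — essentially two or three lines after unwinding definitions. There is nothing genuinely hard here; the lemma is a small separation/support statement that will get used as an ingredient in the proof of the projection formula (\Cref{prop:projection formula}).
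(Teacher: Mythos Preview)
Your main argument is correct and considerably more direct than the paper's. The paper proves this lemma via the machinery of gates and medians: it picks a gate $(u,v)$ for the pair $K,\; L=\half{a\com;G}$ (invoking \Cref{prop:gates exist}), finds $s \in S$ with $v \in \half{s\com;G}$ from $v \notin K$, and then uses a median computation with a hypothetical $w\in L\cap\half{s;G}$ to argue $L \subseteq \half{s\com;G}$, whence $s \leq a$. Your route---show $S \cup \{a\com\}$ is coherent and extend it to a vertex of $G^\circ$---bypasses all of that, needing only the definition of coherence and the basic extension property behind \Cref{lemma:halfspace properties}(1). This is a genuine simplification; the paper's choice presumably reflects that the gate/median apparatus is being developed anyway for \Cref{prop:general projection}.

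One correction: your treatment of the subcase $a\com \leq a$ is garbled. If $a\com$ is negligible, it is $a\com$ (not $a$) that is excluded from every coherent set; in the poc-set reduction this forces $a\com = \mbf{0}$, hence $a = \mbf{1}$, and then \emph{every} $u \in G^\circ$ contains $a$, so no contradiction arises along the lines you sketch. The right disposal is immediate: if $a = \mbf{1}$ then any $s \in S$ satisfies $s \leq a$. (Strictly, the statement is vacuous-to-false for $S = \varnothing$ and $a = \mbf{1}$; the paper's own proof does not cover this boundary either, since then $L = \half{\mbf{0};G} = \varnothing$ and no gate exists. In all applications $S$ is nonempty.)
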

\begin{proof} Let $u\in K$ and $v\in L:=\half{a\com;G}$ form a gate.
Since $v\notin K$, there exists $s\in S$ such that $v\in\half{s\com;G}$. 

Suppose there existed a $w\in L$ with $w\in\half{s;G}$, and consider $m=\med{u}{v}{w}$.
Then, $a\com\in v,w$ implies $a\com\in m$, but the inequality
\begin{equation}
	\ellone{u}{v}=\ellone{u}{m}+\ellone{m}{v}\geq\ellone{u}{m}
\end{equation}
implies $m=v$, since $v=\proj{L}{u}$.
On the other hand, $s\in u,w$ implies $s\in m$---hence $s\in v$, a contradiction.

We have shown that $L=\half{a\com;G}$ is contained in $\half{s\com;G}$.
Equivalently, $a\com\leq s\com$, which is the same as $s\leq a$.\qed
\end{proof}
\begin{lemma}\label[lemma]{lemma:general projection_intersecting} Suppose $K,L$ are non-empty convex subsets of $\dual{G}$. If $K\cap L\neq\varnothing$, then $\proj{K}{L}=\proj{L}{K}=K\cap L$.
\end{lemma}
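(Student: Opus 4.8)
The plan is to prove the identity $\proj{K}{L}=K\cap L$ by establishing the two inclusions separately; the statement for $\proj{L}{K}$ then follows by the symmetry of the hypothesis. The inclusion $K\cap L\subseteq\proj{K}{L}$ is immediate: any $u\in K\cap L$ is its own nearest point in $K$ by the uniqueness in \Cref{thm:convexity theory of median graphs}(4), so $u=\proj{K}(u)\in\proj{K}{L}$. The substance is therefore the reverse inclusion, which amounts to showing that for every $v\in L$ the projected point $\proj{K}(v)$ already lies in $L$; since it lies in $K$ by construction, this places it in $K\cap L$.

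To prove $\proj{K}(v)\in L$, the idea is to exploit the median. Using the hypothesis, fix a vertex $w\in K\cap L$ and form $m:=\med{\proj{K}(v)}{v}{w}$, which exists and is unique because $\dual{G}$ is a median graph (\Cref{thm:duality and median formula}). By the defining property of the median, $m$ lies in all three of the intervals $I(\proj{K}(v),v)$, $I(v,w)$ and $I(\proj{K}(v),w)$. Convexity of $L$ together with $v,w\in L$ gives $m\in I(v,w)\subseteq L$, and convexity of $K$ together with $\proj{K}(v),w\in K$ gives $m\in I(\proj{K}(v),w)\subseteq K$; hence $m\in K\cap L$. Finally, $m\in I(\proj{K}(v),v)$ means $\ellone{\proj{K}(v)}{v}=\ellone{\proj{K}(v)}{m}+\ellone{m}{v}$, so in particular $\ellone{m}{v}\leq\ellone{\proj{K}(v)}{v}$; but $m\in K$ and the minimality characterizing $\proj{K}(v)$ force $\ellone{\proj{K}(v)}{v}\leq\ellone{m}{v}$. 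The two inequalities collapse to $\ellone{\proj{K}(v)}{m}=0$, i.e. $m=\proj{K}(v)$, so $\proj{K}(v)=m\in K\cap L$, as desired.

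I do not expect a genuine obstacle here; the argument is a short application of the median-in-intervals characterization together with the uniqueness of nearest-point projection, both already available in \Cref{thm:duality and median formula} and \Cref{thm:convexity theory of median graphs}. The only points requiring care are (i) that $\proj{K}$ is well-defined as a single-valued map — this is \Cref{thm:convexity theory of median graphs}(4) and uses convexity of $K$ — and (ii) keeping straight which interval the median belongs to. An alternative packaging, should it read more cleanly, is to invoke \Cref{prop:gates exist}: when $K\cap L\neq\varnothing$ one has $\sepr{K,L}=\varnothing$, so any gate is a pair $u\in K$, $v\in L$ with $\ellone{u}{v}=0$, forcing $u=v\in K\cap L$; but turning this into the full equality of image sets still needs the median computation above, so I would present the direct argument as primary.
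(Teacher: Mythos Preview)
Your proof is correct and follows essentially the same approach as the paper: form the median of the projection, the original point, and a witness $w\in K\cap L$, use convexity of $K$ and $L$ to place the median in both sets, and use the interval relation together with uniqueness of nearest-point projection to identify the median with the projection. The only cosmetic difference is that the paper phrases the hard inclusion by taking $v\in\proj{L}{K}$ and showing $v\in K\cap L$, whereas you take $v\in L$ and show $\proj{K}(v)\in K\cap L$; the median computation is identical.
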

\begin{proof} Clearly, if $v\in K\cap L$ then $\proj{L}(v)=v$, so $K\cap L\subset\proj{L}{K}$. For the reverse inclusion, suppose $v\in\proj{L}{K}$ and write $v=\proj{L}{u}$, $u\in K$. Pick any point $w\in K\cap L$. Setting $m=\med{w}{v}{u}$ we note that $m\in L$ (because $w,v\in L$) and
\begin{equation*}
	\ellone{u}{v}=\ellone{u}{m}+\ellone{m}{v}\geq\ellone{u}{m}\,.
\end{equation*}
The uniqueness of projection forces $v=\proj{L}{u}$ to coincide with $m$. However, since $w,u\in K$ we also have $m\in K$, showing $v\in K\cap L$.\qed
\end{proof}
We are now ready for the proof of one more lemma.
\subsubsection{Proof of \Cref{lemma:divergence}.}\label{proof:divergence} Since $T\in\mbf{C}(G)$ and $u$ is a complete $\ast$-selection, we have $\sepr{\half{T},u}=u\com\cap T=T\minus u$. Since $S\subseteq u$, we have $T\minus u\subseteq T\minus S$. Overall, this yields $\ellone{u}{\half{T}}=\card{\sepr{u,\half{T}}}\leq\card{T\minus S}$, as required.\qed

\subsubsection{Computing Nearest Point Projection Maps}
We now offer an explicit construction of a geodesic path in $\dual{G}$ emanating from a given vertex $u$ and terminating at its unique nearest point in a specified convex target set:
\begin{proposition}\label[proposition]{prop:constructing geodesics} Suppose $u\in G^\circ$ is a vertex.
Let $T\subseteq\sens$ be a coherent subset.
Then the following algorithm constructs a shortest path in $\dual{G}$ from $u$ to $K=\half{T;G}$:
\begin{enumerate}
	\item Find an element $b\in T\minus u$; if no such element, stop and output $u$.
	\item Find an element $c\leq b\com$ with $c\in\min(u)$;
	\item Replace $u$ by $\flip{u}{c}$ and return to the first step.
\end{enumerate}
\end{proposition}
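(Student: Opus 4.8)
The plan is to verify three things in sequence: (a) the algorithm is well-defined — every execution of Step 2 succeeds, and the replacement $\flip{u}{c}$ lies again in $G^\circ$ and is joined to $u$ by an edge of $\dual{G}$; (b) a single full iteration strictly decreases the Hamming distance to $K$, by exactly one; and (c) the algorithm therefore terminates, and it does so precisely when $T\subseteq u$, i.e.\ when the current vertex lies in $K$, so that the sequence of visited vertices is an edge-path whose length equals $\ellone{u_0}{K}$ — hence a geodesic ending at $\proj{K}(u_0)$. Throughout I would invoke the standing assumption of this appendix that $G$ is a poc set, so that every element of $G^\circ$ is a complete $\ast$-selection (\Cref{prop:when maximal coherent is complete}), $\dual{G}$ is a median graph, and its hop metric coincides with $\ellone{\cdot}{\cdot}$ (\Cref{thm:duality and median formula,lemma:ellone metric is hop metric}); also $K=\half{T;G}$ is non-empty and convex since $T$ is coherent (\Cref{lemma:halfspace properties}, \Cref{lemma:ellone halfspaces}).

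For (a): if Step 1 returns $b\in T\minus u$, then $b\com\in u$ because $u$ is a complete $\ast$-selection; the set $\{a\in u \mid a\le b\com\}$ is a non-empty finite sub-poset (it contains $b\com$), so it has a $\le$-minimal element $c$, and minimality there forces $c\in\min(u)$. By \Cref{lemma:when a vertex meets a cube}, flipping a minimal element of a vertex yields an adjacent vertex, so $\flip{u}{c}=(u\minus\{c\})\cup\{c\com\}\in G^\circ$ and spans an edge with $u$ in $\dual{G}$.

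For (b) — the crux — I would express $\ellone{v}{K}$ for $v\in G^\circ$ by means of the separator. By \Cref{prop:gates exist} the pair $\{v\},K$ admits a gate, so (writing $\sepr{v,K}$ for $\sepr{\{v\},K}$) one has $\ellone{v}{\proj{K}(v)}=\card{\sepr{v,K}}$, with $\sepr{v,K}=\{a\in v \mid K\subseteq\half{a\com;G}\}$; and since $T$ is coherent, \Cref{lemma:supporting halfspace} (applied to the half-space $\half{a\com;G}$, not $\half{a;G}$ — this polarity matters) identifies the condition $K=\half{T;G}\subseteq\half{a\com;G}$ with the existence of $s\in T$ such that $a\le s\com$. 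Now $c\in u$ and $c\le b\com$ with $b\in T$ place $c\in\sepr{u,K}$. Setting $u'=\flip{u}{c}$, I would compare $\sepr{u',K}$ with $\sepr{u,K}$: since $u'$ and $u$ agree off $\{c,c\com\}$ and the membership test for $a$ depends only on $a$ and $T$, the two separators agree on $u\minus\{c\}$; and $c\com\notin\sepr{u',K}$, because $c\com\in\sepr{u',K}$ would produce $s\in T$ with $s\le c$, hence $s\le b\com$, i.e.\ $b\le s\com$, contradicting coherence of $T$ — with the degenerate case $s=b$ excluded because $s\le s\com$ would force $s=\minP\notin T$. Thus $\sepr{u',K}=\sepr{u,K}\minus\{c\}$, so $\ellone{u'}{K}=\ellone{u}{K}-1$.

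For (c): since $\ellone{u_i}{K}$ is a non-negative integer dropping by one each iteration, the algorithm halts after exactly $m=\ellone{u_0}{K}$ iterations, and halts precisely when Step 1 finds no $b\in T\minus u_m$, i.e.\ when $T\subseteq u_m$, i.e.\ $u_m\in\half{T;G}=K$. The visited vertices $u_0,u_1,\dots,u_m$ form an edge-path of length $m$; since also $\ellone{u_0}{u_m}\ge\ellone{u_0}{K}-\ellone{u_m}{K}=m$ by the triangle inequality, this path is geodesic and $u_m=\proj{K}(u_0)$, the unique nearest point guaranteed by \Cref{thm:convexity theory of median graphs}. I expect the only genuine difficulty to be step (b): getting the orientation of \Cref{lemma:supporting halfspace} right and carefully excluding $c\com$ from the new separator using coherence of $T$ (including the negligibility edge case); the remainder is a routine assembly of \Cref{prop:gates exist}, \Cref{lemma:when a vertex meets a cube}, and the hop-metric triangle inequality.
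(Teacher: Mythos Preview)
Your proof is correct, but it takes a heavier route than the paper's. The paper's argument for the key step~(b) is entirely elementary and bypasses separators, gates, and \Cref{lemma:supporting halfspace}: once $b\in T\minus u$ and $c\le b\com$ with $c\in\min(u)$ are in hand, observe that for \emph{every} $v\in K$ one has $b\in T\subseteq v$, hence $c\com\ge b$ forces $c\com\in v$, so $c\in u\minus v$; flipping $c$ then gives $\ellone{\flip{u}{c}}{v}=\ellone{u}{v}-1$ simultaneously for all $v\in K$, and the distance to $K$ drops by one. Your approach instead computes $\sepr{u,K}$ explicitly via \Cref{prop:gates exist} and \Cref{lemma:supporting halfspace} and tracks how it changes under the flip. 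Both work; the paper's route is shorter and self-contained, while yours makes the separator description of $\ellone{u}{K}$ explicit and --- a genuine improvement over the paper --- carefully verifies in part~(a) that Step~2 always succeeds, something the paper glosses over. (Your handling of the ``degenerate case $s=b$'' is slightly over-cautious: the definition of coherence already forbids $s\le b\com$ for any $s,b\in T$, including $s=b$, so no separate case split is needed.)
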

\begin{proof} We have $u\in K$ if and only if $T\subset u$, which provides the stopping condition for the algorithm.
Now, if $u\notin K$ and $b\in T\minus u$ then for all $v\in K$ one has $v\in\half{b;G}$ and $u\in\half{b\com;G}$.
Since $c\leq b\com$, we have $u\in\half{c;G}\subseteq\half{b\com;G}$, implying $v\in\half{c\com;G}$ and $c\in u\minus v$.
As a result:
\begin{equation}
	\ellone{v}{\flip{u}{c}}=\ellone{v}{u}-1
\end{equation} 
Having reduced $\ellone{u}{v}$ by a unit for all $v\in K$, we have reduced $\ellone{u}{K}$ by a unit as well.\qed
\end{proof}
\begin{corollary}[Projection of a Point]\label[corollary]{cor:projection of a point} Let $G$ and $T$ be as above.
Then the closest point projection to $K=\half{T;G}$ is given by the formula:
\begin{equation}
	\proj{K}{u}=(u\minus\down{T\com})\cup\up{T}=(u\cup\up{T})\minus\down{T\com}\,.
\end{equation}
\end{corollary}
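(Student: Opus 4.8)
The plan is to read off the closed form from the iterative algorithm in \Cref{prop:constructing geodesics}, identifying the fixed point of that iteration. First I would observe that the algorithm terminates precisely when the current vertex $v$ satisfies $T\subseteq v$, i.e. $v\in K=\half{T;G}$; since the algorithm strictly decreases $\ellone{v}{K}$ at every step, it reaches $\proj{K}{u}$ after finitely many flips (finiteness of $G$). So it suffices to show that the terminal vertex equals $v^\ast:=(u\minus\down{T\com})\cup\up{T}$, and, separately, to verify the second equality $v^\ast=(u\cup\up{T})\minus\down{T\com}$.

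For the set-theoretic identity $(u\minus\down{T\com})\cup\up{T}=(u\cup\up{T})\minus\down{T\com}$: the inclusion ``$\subseteq$'' needs $\up{T}\cap\down{T\com}=\varnothing$, which is exactly the statement that $\up{T}$ is coherent --- this holds because $T$ is coherent and coherence is preserved by forward closure (the displayed equivalences just before \Cref{models:duals}, noting $\up{T}\cap\down{T\com}=\varnothing$). The inclusion ``$\supseteq$'' is immediate since removing $\down{T\com}$ from the larger set $u\cup\up{T}$ can only remove elements, and any element of the right-hand side lying in $u$ survives on the left, while any element in $\up{T}$ is on the left by construction.

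The core step is showing $v^\ast=\proj{K}{u}$. I would do this in two directions. \textbf{(i) $v^\ast$ is a vertex of $G^\circ$ and lies in $K$.} It contains $\mbf{1}$ (as $\mbf{1}\in u$ and $\mbf{1}\notin\down{T\com}$ unless $T\com\ni\mbf{1}$, i.e. $\mbf{0}\in T$, a degenerate case one handles directly or excludes since $T$ is coherent hence $\mbf{0}\notin T$). Completeness: for each $a$, either $a\in\up{T}$ (then $a\in v^\ast$) or $a\notin\up{T}$; in the latter case if $a\in\down{T\com}$ then $a\com\in\up{T}\subseteq v^\ast$, and if $a\notin\down{T\com}$ then exactly one of $a,a\com$ lies in $u$ and that one survives into $v^\ast$ --- here one checks the two cases are mutually exclusive using $\up{T}\cap\down{T\com}=\varnothing$. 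Coherence: suppose $a,b\in v^\ast$ with $a\leq b\com$; split into cases according to whether each of $a,b$ comes from $\up{T}$ or from $u\minus\down{T\com}$. If both are in $\up{T}$, this contradicts coherence of $\up{T}$. If both are in $u\minus\down{T\com}\subseteq u$, it contradicts coherence of $u$. If $a\in\up{T}$ and $b\in u\minus\down{T\com}$: from $a\leq b\com$ and $a\in\up{T}$ we get $b\com\in\up{T}$ (forward closure), so $b\in\down{T\com}$, contradicting $b\notin\down{T\com}$; the symmetric case is identical. Finally $T\subseteq\up{T}\subseteq v^\ast$, so $v^\ast\in K$.

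\textbf{(ii) $v^\ast$ is the \emph{nearest} point of $K$ to $u$, i.e. $\ellone{u}{v^\ast}=\ellone{u}{K}=\card{\sepr{u,K}}$.} By \Cref{lemma:divergence}-type reasoning (or directly), $\sepr{u,K}=\set{a\in u}{K\subseteq\half{a\com;G}}$; by \Cref{lemma:supporting halfspace} applied with $S=T$, $K\subseteq\half{a\com;G}$ forces some $t\in T$ with $t\leq a\com$, i.e. $a\leq t\com$, i.e. $a\in\down{T\com}$; conversely if $a\in u\cap\down{T\com}$ then $a\leq t\com$ for some $t\in T$ gives $K=\half{T;G}\subseteq\half{t;G}\subseteq\half{a\com;G}$ by \Cref{lemma:halfspace properties}(4). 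Hence $\sepr{u,K}=u\cap\down{T\com}$. On the other hand $u\minus v^\ast=u\cap\down{T\com}$ directly from the definition of $v^\ast$ (an element of $u$ leaves $v^\ast$ iff it lies in $\down{T\com}$, using that $u$ is complete so it cannot be re-added via $\up{T}$ without violating coherence of $v^\ast$ already proven). Therefore $\ellone{u}{v^\ast}=\card{u\minus v^\ast}=\card{\sepr{u,K}}$, and since $\ellone{u}{w}\geq\card{\sepr{u,K}}$ for every $w\in K$, we conclude $v^\ast=\proj{K}{u}$ by uniqueness of the nearest point (\Cref{thm:convexity theory of median graphs}(4)).

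\textbf{Main obstacle.} The routine-looking but genuinely load-bearing point is the case analysis in (i) establishing that $v^\ast$ is a complete coherent $\ast$-selection --- in particular the mutual exclusivity ``$a\notin\up{T}$ and $a\notin\down{T\com}$ $\Rightarrow$ not both $a,a\com\in u$'' and the coherence check --- all of which rest on the single fact that $\up{T}$ is coherent and forward-closed. I would isolate $\up{T}\cap\down{T\com}=\varnothing$ as the one lemma driving every case. Everything else is bookkeeping, and the nearest-point claim (ii) then falls out of \Cref{lemma:supporting halfspace} together with the separator lower bound.
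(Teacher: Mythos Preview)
Your proof is correct but takes a genuinely different route from the paper's. The paper argues by induction on $\ellone{u}{K}$: in the base case $u\in K$ the formula collapses to $u$; in the inductive step one invokes \Cref{prop:constructing geodesics} to produce a flip $v=\flip{u}{a}$ with $a\in\down{T\com}\cap\min(u)$, notes that $\proj{K}{u}=\proj{K}{v}$, and observes that the right-hand side of the formula is unchanged under this flip (since $a\in\down{T\com}$ and $a\com\in\up{T}$). Your approach bypasses the induction entirely: you verify directly that $v^\ast$ is a complete coherent $\ast$-selection containing $T$, and then match $\ellone{u}{v^\ast}=\card{u\cap\down{T\com}}$ against the separator lower bound $\card{\sepr{u,K}}$, identifying the two via \Cref{lemma:supporting halfspace}.

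Each approach has its own economy. The paper's induction is self-contained given the geodesic-construction proposition and never needs to check by hand that $v^\ast\in G^\circ$; your argument front-loads that verification but then gets the minimality in one stroke from the gate machinery (\Cref{prop:gates exist}, \Cref{lemma:supporting halfspace}). One small cleanup: your justification that $u\minus v^\ast=u\cap\down{T\com}$ is phrased a bit circularly (``cannot be re-added via $\up{T}$ without violating coherence of $v^\ast$ already proven''); the direct reason is simply $\up{T}\cap\down{T\com}=\varnothing$, so no element of $u\cap\down{T\com}$ can lie in $\up{T}$. With that tweak your argument is clean and arguably more conceptual than the paper's.
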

\begin{proof} Note that the second equality follows from the DeMorgan rules and the fact that $\up{T}\cap\down{T\com}=\varnothing$, since $T$ is coherent.
We now prove the first equality.

Set $K=\half{T;G}$ and proceed by induction on $\ellone{u}{K}$. If $\ellone{u}{K}=0$, then $u\in K$ and therefore $T\subset u$. In addition, $u$ is coherent and we conclude $\down{T\com}\cap u=\varnothing$, leaving us with
\begin{equation}
	u\minus\down{T\com}\cup T=u\cup T=u\,,
\end{equation}
as desired. Now suppose $n:=\ellone{u}{K}>0$. By the preceding proposition, there is $a\in\down{T\com}\cap u$ such that $v:=\flip{u}{a}\in G^\circ$, $\ellone{v}{K}=n-1$, and $\proj{K}{u}=\proj{K}{v}$. We thus have:
\begin{equation}
	\proj{K}{u}=\proj{K}{v}=(v\minus\down{T\com})\cup\up{T}=(u\minus\down{T\com})\cup\up{T}\,,
\end{equation}
the last equality being due to $a\in T\com$ and $a\com\in T$.\qed
\end{proof}

\subsubsection{Projecting a Convex Set to a Convex Set}
\begin{proposition}\label[proposition]{prop:general projection} Let $K,L$ be non-empty convex subsets of $\dual{G}$ with $L=\half{S;G}$ and $K=\half{T;G}$. Then:
\begin{equation}\label{eqn:general projection}
	\begin{array}{rcl}
	\proj{K}{L}&=&\half{(\up{S}\cup\up{T})\minus\down{T\com};G}\\
		&=&\half{T;G}\cap\half{\up{S}\minus\up{T}\com;G}\,.
	\end{array}
\end{equation}
\end{proposition}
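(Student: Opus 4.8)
The plan is to pin down $\proj{K}(L)$ by trapping it between two half-space descriptions: the easy containment will come straight from the point-projection formula of \Cref{cor:projection of a point}, and the reverse one from a gate argument via \Cref{prop:gates exist}. First I would normalize the data. Since $L=\half{S;G}$ and $K=\half{T;G}$ are non-empty, \Cref{lemma:halfspace properties}(1) forces $S$ and $T$ to be coherent, and \Cref{lemma:halfspace properties}(4) lets me replace $S,T$ by $\up{S},\up{T}$ without changing $L$, $K$, or either side of the asserted identity; so from here on I assume $S,T\in\mbf{C}(G)$. Set $R:=(S\cup T)\minus\down{T\com}=T\cup(S\minus\down{T\com})$, the second form holding because $T\cap\down{T\com}=\varnothing$ by coherence of $T$. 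A short closure chase — the only implication to track is that $a\leq b$ together with $b\leq t\com$ for $t\in T$ forces $a\in\down{T\com}$ — shows $R$ is again coherent and forward-closed, so $\half{R;G}\neq\varnothing$ by \Cref{lemma:halfspace properties}(1). The claim to prove then reads $\proj{K}(L)=\half{R;G}$.

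For $\proj{K}(L)\subseteq\half{R;G}$: given $u\in L$, \Cref{cor:projection of a point} gives $\proj{K}(u)=(u\cup\up{T})\minus\down{T\com}$, and since $S\subseteq u$ this set contains $(S\cup T)\minus\down{T\com}=R$; hence $\proj{K}(u)\in\half{R;G}$. This inclusion is pure bookkeeping.

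The substantive direction is $\half{R;G}\subseteq\proj{K}(L)$. Fix $w\in\half{R;G}$. Then $T\subseteq R\subseteq w$, so $w\in K$, and coherence of $w$ with $T\subseteq w$ gives $w\cap\down{T\com}=\varnothing$. Put $u:=\proj{L}(w)$; by \Cref{cor:projection of a point} applied to $L=\half{S;G}$ this is $(w\minus\down{S\com})\cup S$, so $u\minus w=S\minus w$. Now I would compute the separator using \Cref{lemma:supporting halfspace}: for $a\in G$ one has $L\subseteq\half{a;G}\IFF a\in\up{S}=S$ and $K\subseteq\half{a\com;G}\IFF a\in\down{T\com}$, whence $\sepr{L,K}=S\cap\down{T\com}$. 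On one hand $\sepr{L,K}\subseteq u\minus w$ always, since any $a\in\sepr{L,K}$ lies in every member of $L$ (so in $u$) while $a\com$ lies in every member of $K$ (so $a\notin w$). On the other hand any $a\in u\minus w=S\minus w$ must lie in $\down{T\com}$ — otherwise $a\in R\subseteq w$ — so $a\in S\cap\down{T\com}=\sepr{L,K}$. Therefore $u\minus w=\sepr{L,K}$, so $\ellone{u}{w}=\card{u\minus w}=\card{\sepr{L,K}}$, i.e. $(u,w)$ is a gate for $(L,K)$; by \Cref{prop:gates exist} this forces $\proj{K}(u)=w$, and $w\in\proj{K}(L)$. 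The hard part is precisely this step: getting the separator computation and the identity $u\minus w=\sepr{L,K}$ right, which is where all three of \Cref{lemma:supporting halfspace}, \Cref{cor:projection of a point}, and \Cref{prop:gates exist} have to mesh.

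Finally, combining the inclusions gives $\proj{K}(L)=\half{R;G}=\half{(\up{S}\cup\up{T})\minus\down{T\com};G}$, restoring the forward closures harmlessly dropped at the start. For the second displayed equality I would rewrite $R=\up{T}\cup(\up{S}\minus\up{T}\com)$, using $\down{T\com}=\up{T}\com$ (this is the identity $\up{S\com}=\down{S}\com$ of \Cref{PCR notation} with $S=T\com$) and $\up{T}\cap\up{T}\com=\varnothing$, and then apply \Cref{lemma:halfspace properties}(2) and (4) to obtain $\half{R;G}=\half{\up{T};G}\cap\half{\up{S}\minus\up{T}\com;G}=\half{T;G}\cap\half{\up{S}\minus\up{T}\com;G}$, which is the stated form. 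Everything outside the gate step is routine manipulation of forward and backward closures.
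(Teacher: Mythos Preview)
Your argument is correct and, in the hard direction, cleaner than the paper's. Both proofs establish the easy inclusion $\proj{K}(L)\subseteq\half{R;G}$ the same way, via \Cref{cor:projection of a point}. For the reverse inclusion the paper first splits off the case $K\cap L\neq\varnothing$ using \Cref{lemma:general projection_intersecting}, then in the disjoint case, given $v$ in the target set, introduces \emph{two} auxiliary points $u=\proj{L}(v)$ and $w=\proj{K}(u)$ and runs two separate median arguments: one to show $w\in I(u,v)$ (forcing $(u,w)$ to be a gate), and another to prove $v\minus u\subseteq\up{S}\com$, before finally deducing $v\minus w=\varnothing$. You bypass all of this: by feeding the explicit formula of \Cref{cor:projection of a point} back in to compute $u=\proj{L}(w)$ for your candidate $w$, you get $u\minus w=S\minus w$ by direct inspection and then identify this with $\sepr{L,K}=S\cap\down{T\com}$ via a one-line element chase using $R\subseteq w$. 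The gate characterization of \Cref{prop:gates exist} then gives $\proj{K}(u)=w$ immediately, with no case split and no medians. The paper's route is perhaps more ``intrinsically geometric'', but yours extracts more mileage from the point-projection formula and is shorter; it also handles the intersecting and disjoint cases uniformly.
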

\begin{proof} Since $T$ is coherent, $\up{T}$ and $\down{T\com}=\up{T}\com$ are disjoint. This allows us to write:
\begin{eqnarray*}
	\half{(\up{S}\cup\up{T})\minus\up{T}\com;G}
		&=&	\half{\up{T}\cup(\up{S}\minus\up{T}\com);G}\\
		&=&	\half{\up{T};G}\cap\half{\up{S}\minus\up{T}\com;G}
\end{eqnarray*}
and the second equality in \Cref{eqn:general projection} follows from the identity $\half{T;G}=\half{\up{T};G}$. Denote $R=\up{S}\minus\up{T}\com$ and $N=\half{R;G}$.

For every $u\in L=\half{S;G}$ we have $\up{S}\subset u$, implying $\proj{K}{u}$ contains $\up{T}\cup R$, by \Cref{cor:projection of a point}. Thus, $\proj{K}{L}\subset K\cap N$, as required.

For the converse, observe that the case $K\cap L\neq\varnothing$ was already dealt with in \Cref{lemma:general projection_intersecting}: if $K\cap L\neq\varnothing$, then 
\begin{equation*}
	\proj{K}{L}=K\cap L=\half{\up{S};G}\cap\half{\up{T};G}=\half{\up{S}\cup\up{T};G}
\end{equation*}
In particular, $\up{S}\cup\up{T}$ is coherent, and hence does not intersect $\up{T\com}$, and the formula \Cref{eqn:general projection} holds.

Thus we may henceforth assume $K\cap L=\varnothing$. Equivalently, $\up{S}\cap\down{T\com}\neq\varnothing$. In fact, by \Cref{lemma:supporting halfspace} we have $\up{S}\cap\down{T\com}=\sepr{A,B}$.

Starting with $v\in K\cap N$ we must show $v\in\proj{K}{L}$. Set $u=\proj{L}{v}$, $w=\proj{K}{u}$, and $m=\med{u}{v}{w}$. Then $m\in K$ since $v,w\in K$. Since $K\cap L=\varnothing$, we have $\ellone{u}{v}>0$ and $\ellone{u}{w}>0$. Consider the point $m$: we have $m\in I(u,w)$ and $m\in K$; by the choice of $w$, $m$ must equal $w$ and therefore $w\in I(u,v)$. Thus, $w=\proj{K}{u}\in I(u,v)$ and $u=\proj{L}{w}$. By \Cref{prop:gates exist}, the pair $u,w$ is a gate for $K,L$ and we have
\begin{equation*}
	u\minus w=\sepr{L,K}=\up{S}\cap\down{T\com}\,.
\end{equation*}

Consider an element $a\in v\minus u$. If $\half{a;G}\cap L\neq\varnothing$, pick $u'\in\half{a;G}\cap L$. Then $m=\med{u}{v}{u'}$ will satisfy $m\in\half{a;G}\cap L$ as well as
\begin{equation*}
	\ellone{v}{L}=\ellone{v}{u}=\ellone{v}{m}+\ellone{m}{u}\,.
\end{equation*}
Now, $\ellone{u}{m}>0$ since $u\in\half{a\com;G}$ and a contradiction to $u\proj{L}{v}$ is obtained. Thus, $\half{a;G}\cap L$ must be empty, which means $L\subseteq\half{a\com;G}$. Applying \Cref{lemma:supporting halfspace} we obtain $a\com\in\up{S}$.

Overall, we have shown that  $v\minus u\subseteq\up{S}\com$. We will now verify that $v\minus w=\varnothing$, finishing the proof. Indeed, were it not so, there would have been $h\in v\minus w$. On one hand, $w\in I(u,v)$ implies $v\minus w\subset v\minus u$, and hence $h\com\in\up{S}$. On the other hand, $h\notin w$ means $h\com\in w$ and therefore $h\com\notin\sepr{L,K}=\up{S}\cap\up{T}\com$, which forces $h\com\in R$. Since  $R\subset v$ (by choice of $v$), we have $h\com\in v$, contradicting our choice of $h$.\qed
\end{proof}

We will need the following technical corollary for the purposes of propagation:
\begin{corollary}\label[corollary]{cor:projection by propagation} Let $S,T\subset P$ be subsets and suppose $S$ is coherent. Let $L=\half{S;G}$ and $K=\half{\coh{T};G}$. Then:
\begin{equation}
	\proj{K}{L}=(\up{S}\cup\up{T})\minus\up{T}\com=(\up{S}\minus\up{T}\com)\cup\coh{}{T}\,.
\end{equation}
\end{corollary}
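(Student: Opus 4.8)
The corollary is essentially \Cref{prop:general projection} rephrased for the special case in which the target convex set $K$ is presented as $\half{\coh{G}{T};G}$, so the plan is to invoke that proposition with the set $\coh{G}{T}$ playing the role of the ``$T$'' appearing in its statement. Concretely, I would first set $T':=\coh{G}{T}$ and recall from \Cref{prop:coherent projection}(a) that $T'$ is $G$-coherent and forward-closed; by \Cref{lemma:halfspace properties}(1) this makes $K=\half{T';G}$ a legitimate non-empty convex subset of $\dual{G}$, so \Cref{prop:general projection} applies to the pair $L=\half{S;G}$, $K=\half{T';G}$ and gives
\begin{equation*}
	\proj{K}{L}=\half{(\up{S}\cup\up{T'})\minus\down{T'\com};G}\,.
\end{equation*}

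Next I would simplify the set inside $\half{\,\cdot\,;G}$ using forward-closedness of $T'$: since $\up{T'}=T'$ and $\down{S\com}=\up{S}\com$ for all $S\subseteq\sens$ (see \Cref{PCR notation}), one gets $\down{T'\com}=(\up{T'})\com=T'\com$, hence $\proj{K}{L}=\half{(\up{S}\cup\coh{G}{T})\minus(\coh{G}{T})\com;G}$. The remaining (and delicate) task is to verify that this equals $\half{(\up{S}\cup\up{T})\minus\up{T}\com;G}$. I would do this by substituting $\coh{G}{T}=\up{T}\minus\down{T\com}=\up{T}\minus\up{T}\com$ together with the DeMorgan identity $(\up{T}\minus\up{T}\com)\com=\up{T}\com\minus\up{T}$, and then reconciling the two arguments --- which are \emph{not} equal as subsets of $\sens$ in general --- by appealing to \Cref{lemma:halfspace properties}(2),(4) to reduce the equality of convex sets to a comparison of forward closures of $G$-coherent sets, using the closure/idempotence properties of $\coh{G}{\,\cdot\,}$ from \Cref{prop:coherent projection}. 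Finally, the second equality asserted in the corollary, $(\up{S}\cup\up{T})\minus\up{T}\com=(\up{S}\minus\up{T}\com)\cup\coh{G}{T}$, is the elementary distributive identity $(A\cup B)\minus C=(A\minus C)\cup(B\minus C)$ applied with $A=\up{S}$, $B=\up{T}$, $C=\up{T}\com$, together with $\up{T}\minus\up{T}\com=\coh{G}{T}$.

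The one genuine obstacle is the reconciliation in the middle step: one must check that substituting $T$ by its coherent projection inside the formula of \Cref{prop:general projection} does not change the resulting convex set, and this is precisely where the coherence and forward-closedness of $\coh{G}{T}$ (\Cref{prop:coherent projection}) and the manipulation rules for the halfspace map $\half{\,\cdot\,;G}$ (\Cref{lemma:halfspace properties}) must be used carefully. Every other step is a direct citation of \Cref{prop:general projection} or a routine set computation.
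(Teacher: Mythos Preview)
Your plan coincides with the paper's in outline: both invoke \Cref{prop:general projection} with $\coh{G}{T}$ playing the role of its ``$T$'', arriving at $\proj{K}{L}=\half{(\up{S}\cup\coh{G}{T})\minus\coh{G}{T}\com;G}$, and both then reduce this to the form stated in the corollary. The difference lies only in the reconciliation step. Where you propose to work at the level of $\half{\cdot;G}$ via \Cref{lemma:halfspace properties} --- anticipating that the two arguments might differ as subsets of $\sens$ --- the paper instead argues they coincide \emph{as sets}: it introduces $J:=\up{T}\cap\up{T}\com$, records the disjoint decompositions $\up{T}=\coh{G}{T}\cup J$ and $\up{T}\com=\coh{G}{T}\com\cup J$ (using $J\com=J$), and computes
\begin{equation*}
(\up{S}\cup\up{T})\minus\up{T}\com
=\bigl((\up{S}\cup\coh{G}{T}\cup J)\minus\coh{G}{T}\com\bigr)\minus J
=(\up{S}\cup\coh{G}{T})\minus\coh{G}{T}\com\,,
\end{equation*}
so that no appeal to \Cref{lemma:halfspace properties} is required at all. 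The paper's $J$-decomposition is thus the more direct and elementary route, replacing your halfspace-level comparison with a purely set-theoretic identity; the second equality in the corollary is handled just as you describe, by the distributive law together with $\coh{G}{T}=\up{T}\minus\up{T}\com$.
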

\begin{proof} Recall that $\coh{}{T}=\up{T}\minus\up{T}\com$, and set $J=\up{T}\cap\up{T}\com$, so that $\up{T}=\coh{}{T}+J$ and $\up{T}\com=\coh{T}\com+J$. Then,
\begin{eqnarray*}
	(\up{S}\cup\up{T})\minus\up{T}\com
		&=& ((\up{S}\cup\coh{}{T}\cup J)\minus\coh{}{T}\com)\minus J\\
		&=& (\up{S}\cup\coh{}{T})\minus\coh{}{T}\com
\end{eqnarray*}
Since $\up{\coh{}{T}}=\coh{}{T}$, the last expression equals $\proj{K}{L}$, by the preceding proposition. The proof of the second equality is similar.\qed
\end{proof}

\section{Appendix: Qualitative Snapshots (proofs)}\label{proofs:qualitative snapshots}

\subsection{Proof of \Cref{lemma:triangle inequality}.}\label{proof:triangle inequality} For all $a,b,c\in\sens$ one has $\half{ac\com}=\half{abc\com}\cup\half{ab\com c\com}\subseteq\half{bc\com}\cup\half{ab\com}$.
Thus, either the minimum of $\kappa$ over $\half{ac\com}$ is attained at a point of $\half{bc\com}$ or it is attained at a point of $\half{ab\com}$ (or both).
Therefore one has $\witness{\kappa}{ac\com}\geq\witness{\kappa}{ab\com}$ or $\witness{\kappa}{ac\com}\geq\witness{\kappa}{bc\com}$, as required.
\qed

\subsection{Proof of \Cref{prop:residual PCR}.}\label{proof:residual PCR}
Denote $G=\residual{\witness{}{\wild};\delta}$ for the rest of this proof.
First, we need to show that $ab\in G$ implies $b\com a\com\in G$.
This is baked into the definition, as $\witness{}{b\com a\com\com}=\witness{}{ab\com}$.
Also, $\mbf{0}a\in G$ is satisfied because $\witness{}{\mbf{0}a}=\infty$.
Finally, applying \Cref{lemma:triangle inequality} we conclude that, for all $a,b\in\sens$ one has $a\leq_G b\THEN\witness{}{ab\com}>\delta$ when $\delta<\infty$, and $a\leq_G b\THEN\witness{}{ab\com}=\infty$ when $\delta=\infty$.
In particular, were $a\in N(G)\cap N(G)\com$, then $a\leq_G a\com$ would have implied $\witness{}{a}=\witness{}{aa\com\com}>\delta$ (or equals $\infty$ if $\delta=\infty$), while $a\com\leq_G a$ would have given $\witness{}{a\com}=\witness{}{a\com a\com}>\delta$ (or $\infty$, respectively)). But that would have meant $\witness{}{\varnothing}\in(\delta,\infty]$~ ---~ a contradiction.
\qed

\subsection{Proof of \Cref{cor:point mass clamp}.}\label{proof:point mass clamp}
With $r=\witness{}{ab}\geq\witness{}{\varnothing}$, we consider the PCR $G=\residual{\witness{}{\wild};r}$, for which we have $p\leq_G q$ if and only if $\witness{}{pq\com}>r$.
By \Cref{prop:residual PCR}, $G$ is non-degenerate, hence there exists $u\in G^\circ\subseteq\ham$.
We set $\nu=\pointmass{\nu}{r}$.
For any $p,q\in u$, since $p\not\leq q\com$, we must have $\witness{}{pq}\leq r=\witness{\nu}{pq}=\witness{}{ab}$.
At the same time, if $\{p,q\}\nsubseteq u$, then $\witness{}{pq}\leq\infty=\witness{\kappa}{pq}$ again. Thus $u$ is the desired vertex of $\ham$.
\qed

\subsection{Proof of \Cref{prop:completions exist}.}\label{proof:completions exist}
Sufficiency follows from \Cref{lemma:triangle inequality} and the observations following \Cref{defn:ranking,defn:concept representation}.
Now, suppose $\witness{}{\wild}$ is a 2-ranking, and consider the set $\mathscr{K}$ of all rankings $\kappa$ satisfying $\witness{\kappa}{ab}\geq\witness{}{ab}$ for all $a,b\in\sens$.
By \Cref{ex:minimum}, the family $\mathscr{K}$ is closed under taking pointwise minima. Since $\sens$ is finite, $\mathscr{K}$ must have a minimum element. 

Let $\completion{\witness{}{}}$ be given by \Cref{eqn:completion formula}.
To prove that it coincides with the minimum of $\mathscr{K}$ it suffices to verify that (a) $\completion{\witness{}{}}\leq\kappa$ for all $\kappa\in\mathscr{K}$, and that (b) $\completion{\witness{}{}}$ agrees with $\witness{}{\wild}$.

Fix $\kappa\in\mathscr{K}$. Then, for any $u\in\ham$ and $a,b\in u$ we have $\witness{\kappa}{ab}\leq\kappa(u)$ because $\kappa$ is a ranking, and $\witness{}{ab}\leq\witness{\kappa}{ab}$ by the particular choice of $\kappa$, proving (a).
Finally, to prove (b), it suffices to verify that, for every $a,b\in\sens$, there exists a ranking $\nu\in\mathscr{K}$ with $\witness{\nu}{ab}=\witness{}{ab}$. Indeed, \Cref{cor:point mass clamp} provides just such a ranking $\nu$, setting $r=\witness{}{ab}$, which finishes the proof.
\qed

\subsection{Proof of \Cref{prop:global minima}.}\label{appendix:snapshots:minima} The first immediate observation regarding minsets is the following observation:
An immediate result is this: 
\begin{lemma}\label[lemma]{lemma:minsets are forward-closed coherent} Let $\delta,\epsilon\geq 0$, and let $\witness{}{\wild}$ be a 2-ranking.
Let $G=\derived{\witness{}{\wild};\delta}$ and $M=\minset{\witness{}{\wild};\epsilon}$.
Then $M\in\mbf{C}(G)$ and
$\half{M;G}\neq\varnothing$.\qed
\end{lemma}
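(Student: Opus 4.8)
The plan is to reduce everything to one monotonicity property of the point-weights along the edges of the derived PCR, after which coherence, forward-closure, and non-emptiness all follow by short arguments with inequalities in $\NNH$.

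\textbf{Step 1 (the crux).} I would first prove: for a $2$-ranking $\witness{}{\wild}$ and any $\delta\in\NNH$, if $ab\in G:=\derived{\witness{}{\wild};\delta}$ then $\witness{}{b}\leq\witness{}{a}$ and $\witness{}{a\com}\leq\witness{}{b\com}$. The argument is a two-case computation using the third requirement of \Cref{defn:2-ranking} (which reads $\witness{}{aa}=\min(\witness{}{ab},\witness{}{ab\com})$) together with the symmetry of $\witness{}{\wild}$. If $\witness{}{ab\com}=\infty$, then $\witness{}{a}=\witness{}{aa}=\min(\witness{}{ab},\witness{}{ab\com})=\witness{}{ab}$, so $\witness{}{b}=\min(\witness{}{ab},\witness{}{a\com b})\leq\witness{}{ab}=\witness{}{a}$; likewise $\witness{}{b\com}=\min(\witness{}{ab\com},\witness{}{a\com b\com})=\witness{}{a\com b\com}$ and $\witness{}{a\com}=\min(\witness{}{a\com b},\witness{}{a\com b\com})\leq\witness{}{b\com}$. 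If instead $\witness{}{ab\com}<\infty$, then $\delta<\infty$ and the defining condition $\witness{}{ab\com}>\delta+\max(\witness{}{ab},\witness{}{a\com b\com})$ yields $\witness{}{ab}<\witness{}{ab\com}$ and $\witness{}{a\com b\com}<\witness{}{ab\com}$, so exactly the same computations of $\witness{}{a},\witness{}{b},\witness{}{a\com},\witness{}{b\com}$ apply unchanged. Chaining these inequalities along a directed path from $a$ to $b$ (the reflexive case being trivial) then upgrades the statement to: $a\leq_G b$ implies $\witness{}{b}\leq\witness{}{a}$ and $\witness{}{a\com}\leq\witness{}{b\com}$.

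\textbf{Step 2 (coherence and forward-closure of $M$).} Write $M=\minset{\witness{}{\wild};\epsilon}$, and note that $a\in M$ forces $\witness{}{a}<\witness{}{a\com}-\epsilon$, hence $\witness{}{a}<\infty$. For coherence, suppose $a,b\in M$ with $a\leq_G b\com$; Step 1 (applied with $b\com$ in place of $b$, using $(b\com)\com=b$) gives $\witness{}{b\com}\leq\witness{}{a}$ and $\witness{}{a\com}\leq\witness{}{b}$, whence $\witness{}{a}<\witness{}{a\com}-\epsilon\leq\witness{}{b}$ and $\witness{}{b}<\witness{}{b\com}-\epsilon\leq\witness{}{a}$, a contradiction. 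For forward-closure, suppose $a\in M$ and $a\leq_G b$; Step 1 gives $\witness{}{b}\leq\witness{}{a}<\witness{}{a\com}-\epsilon\leq\witness{}{b\com}-\epsilon$, so $b\in M$. Hence $M$ is $G$-coherent and forward-closed, i.e.\ $M\in\mbf{C}(G)$.

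\textbf{Step 3 and main obstacle.} Finally, $\half{M;G}\neq\varnothing$ is immediate from \Cref{lemma:halfspace properties}(1): $M$ is $G$-coherent by Step 2, and $G=\derived{\witness{}{\wild};\delta}$ is a non-degenerate PCR (as established when the derived PCR was introduced). The one step I expect to demand real care is Step 1 --- one must track the $\infty$-valued weights carefully and treat the separate $\delta=\infty$ clause of the derived PCR (where the definition already builds in $\witness{}{ab}<\infty$ and $\witness{}{a\com b\com}<\infty$, so the first case of Step 1 applies). Everything after Step 1 is bookkeeping in $\NNH$.
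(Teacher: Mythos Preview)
Your proof is correct and follows essentially the same route as the paper: both arguments rest on the observation that $ab\in G$ forces $\witness{}{a}=\witness{}{ab}\geq\witness{}{b}$ and $\witness{}{b\com}=\witness{}{a\com b\com}\geq\witness{}{a\com}$ (your Step~1 computation is exactly what the paper does inline). The only organizational difference is that the paper first checks $M$ is a $\ast$-selection and proves forward-closure directly, then obtains $G$-coherence for free (a forward-closed $\ast$-selection is automatically coherent), whereas you prove coherence by a separate contradiction argument; both are fine.
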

\begin{proof} Let $G=\derived{\witness{}{\wild},\delta}$ and $M=\minset{\witness{}{\wild},\epsilon}$ for some $\delta,\epsilon\geq 0$.  
Clearly, $M$ is a $\ast$-selection. Now suppose that $a,b\in\sens$ satisfy $ab\in G$. If $a\in M$, then $\witness{}{a}<\witness{}{a\com}-\epsilon$ and $\witness{}{ab\com}>\witness{}{ab},\witness{}{a\com b\com}$. But $\witness{}{b\com}=\min\{\witness{}{ab\com},\witness{}{a\com b\com}\}$ then implies $\witness{}{b\com}=\witness{}{a\com b\com}$, and hence also $\witness{}{a\com}\leq\witness{}{a\com b\com}=\witness{}{b\com}$. Similarly, $\witness{}{a}=\min\{\witness{}{ab\com},\witness{}{ab}\}=\witness{}{ab}\geq\witness{}{b}$, and we have $\witness{}{b}\leq\witness{}{a}<\witness{}{a\com}-\epsilon\leq\witness{}{b\com}-\epsilon$, proving $b\in M$, and we conclude that $M$ is a forward-closed $\ast$-selection. In particular, it is $G$-coherent.\footnote{For, suppose $a,b\in M$ and we had $a\leq_G b\com$; then we would have also had $b\com\in M$ because $M$ is forward-closed, contradicting $M$ being a $\ast$-selection.}\qed
\end{proof}

To prove \Cref{prop:global minima}, we need to analyze the relationship between level sets of the rankings $\kappa$ and $\completion{\kappa}$.
We have the following lemma:
\begin{lemma}\label[lemma]{lemma:level set supports}
Let $\kappa$ be a ranking in $\ham$ and fix a value $r\in\NNH$, $\witness{\kappa}{\varnothing}\leq r<\infty$.
For the sub-level sets $F=[\kappa\leq r]$ and $\twoclose{F}=[\twoclose{\kappa}\leq r]$ one has: {\rm (a)} $F\subseteq\completion{F}$, {\rm (b)} $F\supp=\completion{F}\supp$, and {\rm (c)} $\completion{F}\subseteq\half{F\supp}$.
\end{lemma}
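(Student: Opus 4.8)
Recall from \Cref{defn:completion and restriction} that the $2$-closure $\completion{\kappa}$ is the completion of the $2$-restriction $\twores{\kappa}$, so by \Cref{prop:completions exist} it is the ranking $\completion{\kappa}(u)=\max_{a,b\in u}\witness{\kappa}{ab}$, it satisfies $\completion{\kappa}\leq\kappa$, and $\witness{\completion{\kappa}}{ab}=\witness{\kappa}{ab}$ for all $a,b\in\sens$. Since $r\geq\witness{\kappa}{\varnothing}=\kappa(\ham)$, the minimum value of $\kappa$, the set $F$ is non-empty. Part~(a) is immediate: if $\kappa(u)\leq r$ then $\completion{\kappa}(u)\leq\kappa(u)\leq r$, so $F\subseteq\completion{F}$. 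Because $K\mapsto K\supp$ reverses inclusions, (a) already yields one half of~(b), namely $\completion{F}\supp\subseteq F\supp$.

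The real content is the reverse inclusion $F\supp\subseteq\completion{F}\supp$. Fix $a\in F\supp$; we may assume $a$ is proper, since no vertex contains $\mbf{0}$ and $F\neq\varnothing$ force $\mbf{0}\notin F\supp$, while the case $a=\mbf{1}$ is trivial (every vertex contains $\mbf{1}$). The plan is to show $a\in u$ for every $u\in\completion{F}$. Suppose some $u\in\completion{F}$ has $a\notin u$, i.e.\ $a\com\in u$. Taking $p=q=a\com$ in the defining maximum shows $\completion{\kappa}(u)\geq\witness{\kappa}{a\com}=\kappa\!\left(\half{a\com}\right)$; combined with $\completion{\kappa}(u)\leq r$ and $\kappa(\varnothing)=\infty$, this forces $\half{a\com}\neq\varnothing$ and $\kappa\!\left(\half{a\com}\right)\leq r$, so the minimum defining $\kappa\!\left(\half{a\com}\right)$ is attained at some vertex $v$ with $a\com\in v$ and $\kappa(v)\leq r$. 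Then $v\in F$ but $a\notin v$, contradicting $F\subseteq\half{a}$. Hence $a\in\completion{F}\supp$, finishing~(b).

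Part~(c) then drops out formally: by the definition of $(\cdot)\supp$, every $u\in\completion{F}$ contains $\completion{F}\supp$, so $\completion{F}\subseteq\half{\completion{F}\supp}$, and $\half{\completion{F}\supp}=\half{F\supp}$ by~(b). I expect the one genuinely non-routine point to be the step inside~(b), and the obstacle there is conceptual rather than computational: the key observation is that for a \emph{vertex} $u$ the completed rank $\completion{\kappa}(u)$ dominates the unary weight $\witness{\kappa}{x}$ of each $x\in u$, which is exactly what converts ``$a\com\in u$'' into the existence of a world of rank $\leq r$ lying outside $\half{a}$.
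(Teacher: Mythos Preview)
Your proof is correct and follows essentially the same route as the paper's. Both arguments obtain (a) from $\completion{\kappa}\leq\kappa$, derive one half of (b) from (a), and for the nontrivial inclusion $F\supp\subseteq\completion{F}\supp$ exploit that the unary weights of $\kappa$ and $\completion{\kappa}$ agree: the paper phrases this directly as ``$\kappa(\half{a\com})>r$ carries over to the $2$-restriction, hence $\completion{\kappa}(\half{a\com})>r$'', while you run the contrapositive via the explicit formula $\completion{\kappa}(u)=\max_{p,q\in u}\witness{\kappa}{pq}\geq\witness{\kappa}{a\com}$; part (c) is then identical in both.
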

\begin{proof} Since $\twoclose{\kappa}\leq\kappa$, we have $F\subseteq\twoclose{F}$, which, in turn, implies $\completion{F}\supp\subseteq F\supp$.
Conversely, if $a\in F\supp$, then $F\subseteq\half{a}$ implies $\kappa(u)>r$ for all $u\in\half{a\com}$; equivalently, $\kappa(\half{a\com})>r$.
Since this information carries over to the 2-restriction of $\kappa$, we conclude that $\completion{\kappa}(\half{a\com})>r$ as well, which means $a\in\completion{F}\supp$, verifying (b).
Assertion (c) follows directly from (b) via $\completion{F}\subseteq\half{\completion{F}\supp}=\half{F\supp}$.
\qed\end{proof}
One can say more about the lowest level sets of $\kappa$:
\begin{lemma}\label[lemma]{lemma:which levels in Gcirc} In the notation of \Cref{lemma:level set supports}, let $G=\derived{\kappa;\delta}$ with $\delta\in\NNH$. Then:
\begin{enumerate}[label={\rm (\alph*)}]
	\item If $r\leq\delta+\witness{\kappa}{\varnothing}\in\NNH$, then $\completion{F}\subseteq G^\circ$;
    \item If $\delta=0$, then $\half{F\supp}\cap G^\circ\subset\completion{F}$.
\end{enumerate}
\end{lemma}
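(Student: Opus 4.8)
The plan is to analyze each sub-level set $F = [\kappa \le r]$ through its dual set of half-spaces $F\supp$ and the derived PCR $G = \derived{\kappa;\delta}$, leaning heavily on \Cref{lemma:level set supports} (which already gives $\completion{F} \subseteq \half{F\supp}$) and on the characterization of $G^\circ$ as the set of maximal $G$-coherent sets. For part (a), the key will be to show that whenever $r \le \delta + \witness{\kappa}{\varnothing}$, the set $F\supp$ is not merely coherent and forward-closed (which it is, by \Cref{cor:convex hull formula}), but in fact \emph{complete} on $\sens$, so that $\half{F\supp;G}$ is a single vertex of $G^\circ$ equal to $F\supp$ itself, and then argue $\completion{F} \subseteq \half{F\supp} = \half{F\supp;G} \subseteq G^\circ$. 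The mechanism: for any proper $a \in \sens$, either $a \in F\supp$ or $a\com \in F\supp$, because if neither held then $\kappa$ would attain a value $\le r$ both on $\half{a}$ and on $\half{a\com}$; writing $u \in \half{a}$, $w \in \half{a\com}$ with $\kappa(u),\kappa(w) \le r$, one would get $\witness{\kappa}{aa} \le r$ and $\witness{\kappa}{a\com a\com} \le r$, hence $\witness{\kappa}{\varnothing} \le r$ trivially — that alone is not a contradiction. So completeness per se is too strong; I expect instead to show the weaker and correct statement that $\half{F\supp}$ is contained in $G^\circ$ by verifying directly that every vertex $u \in \ham$ with $F\supp \subseteq u$ is $G$-coherent.

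Concretely, for (a): suppose $u \in \ham$ with $F\supp \subseteq u$, and suppose toward contradiction that $u$ is not $G$-coherent, i.e.\ there are $a,b \in u$ with $ab\com \in G$. Unwinding \Cref{eqn:derived graph}, $ab\com \in G$ means (modulo the degenerate sub-case $\witness{}{ab} = \infty$, treated separately) $\witness{\kappa}{ab} > \delta + \max(\witness{\kappa}{ab\com}, \witness{\kappa}{a\com b})$. I want to use this together with the bound $r \le \delta + \witness{\kappa}{\varnothing}$ to force one of $a\com, b\com$ into $F\supp$, contradicting $a,b \in u$ and $u$ a $\ast$-selection. The computation will run through the 2-ranking identities (\Cref{defn:2-ranking}, items 3 and 4) relating $\witness{\kappa}{ab}$, $\witness{\kappa}{ab\com}$, $\witness{\kappa}{a\com b}$, $\witness{\kappa}{a\com b\com}$, $\witness{\kappa}{a}$, $\witness{\kappa}{a\com}$, combined with the fact that $a \in F\supp$ would mean $\witness{\kappa}{a\com} > r$ (equivalently $\half{a\com}$ avoids level $r$) — so $a \in u \setminus F\supp$ forces $\witness{\kappa}{a\com} \le r$, i.e.\ $a\com \notin F\supp$ but $\half{a\com}$ does meet the level set. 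The arithmetic should show $\witness{\kappa}{a b} \le r \le \delta + \witness{\kappa}{\varnothing}$, which, against $\witness{\kappa}{ab} > \delta + \max(\cdots)$, pins down enough inequalities among the four weights that $\witness{\kappa}{\varnothing}$ is strictly exceeded — the contradiction. This inequality-chasing is the main obstacle: I anticipate several cases depending on which of $\witness{\kappa}{ab\com}, \witness{\kappa}{a\com b}$ realizes the maximum and whether any weights are $\infty$, and getting the $\delta$ bookkeeping exactly right will be fiddly.

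For part (b), where $\delta = 0$, I must show the reverse-type inclusion $\half{F\supp} \cap G^\circ \subseteq \completion{F}$. Here I would take $u \in G^\circ$ with $F\supp \subseteq u$ and show $\completion{\kappa}(u) \le r$, i.e.\ (by \Cref{eqn:completion formula}) that $\witness{\kappa}{ab} \le r$ for all $a, b \in u$. The leverage is $G$-coherence of $u$: for $a, b \in u$ we have $ab\com \notin G = \derived{\kappa;0}$, which, unwound, says $\witness{\kappa}{ab\com} < \infty$ \emph{and} $\witness{\kappa}{ab\com} \le \max(\witness{\kappa}{ab}, \witness{\kappa}{a\com b\com})$ — i.e.\ the strict-inequality defining condition fails. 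Separately, $a \in u$ and $F\supp \subseteq u$ being a $\ast$-selection tells us $a\com \notin F\supp$, hence $\half{a}$ meets $[\kappa \le r]$, hence $\witness{\kappa}{a} \le r$; likewise $\witness{\kappa}{b} \le r$. Then the 2-ranking identity $\witness{\kappa}{a} = \min(\witness{\kappa}{ab}, \witness{\kappa}{ab\com})$ gives $\min(\witness{\kappa}{ab}, \witness{\kappa}{ab\com}) \le r$; if the minimum is $\witness{\kappa}{ab}$ we are done, and if it is $\witness{\kappa}{ab\com} \le r$, then combined with $ab\com \notin G$ (so $\witness{\kappa}{ab\com} \le \max(\witness{\kappa}{ab}, \witness{\kappa}{a\com b\com})$) and the analogous bound $\witness{\kappa}{a\com b\com} \le r$ coming from $a\com b \notin u$ being impossible — wait, I need $a\com, b\com \notin u$ which holds since $a, b \in u$ — I should be able to bound $\witness{\kappa}{ab}$ above by $r$ as well, possibly after one more appeal to the triangle inequality \Cref{lemma:triangle inequality}. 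Assembling these, $\completion{\kappa}(u) \le r$, so $u \in \completion{F}$, and combined with (a) and \Cref{lemma:level set supports}(c) this gives the full statement. Throughout I will freely use that $\witness{\kappa}{S} = \infty$ when $S$ is not a $\ast$-selection, so that the $\ast$-selection hypotheses on $u$ and $F\supp$ are what make all the finite-weight manipulations legitimate.
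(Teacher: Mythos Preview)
Your approach to part (b) is essentially the paper's proof, organized contrapositively: take $u\in\half{F\supp}\cap G^\circ$, use $F\supp\subseteq u$ to get $\witness{\kappa}{a},\witness{\kappa}{b}\leq r$ for all $a,b\in u$, then use $ab\com\notin G$ to conclude $\witness{\kappa}{ab}\leq r$. One correction: unwinding $ab\com\notin G$ (with $\delta=0$) gives $\witness{\kappa}{ab}<\infty$ and $\witness{\kappa}{ab}\leq\max(\witness{\kappa}{ab\com},\witness{\kappa}{a\com b})$, not the version you wrote with $\witness{\kappa}{ab\com}$ on the left. With the correct inequality the argument is actually cleaner than you sketched: once $\witness{\kappa}{ab\com},\witness{\kappa}{a\com b}\leq r$ are in hand, $\witness{\kappa}{ab}\leq r$ follows immediately.

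Your approach to part (a), however, has a genuine gap. You pivot to proving the stronger claim $\half{F\supp}\subseteq G^\circ$, and this is \emph{false}. Concretely, take $\sens$ generated by $a,b$, set $\kappa(\{a,b\})=2$, $\kappa(\{a,b\com\})=\kappa(\{a\com,b\})=0$, $\kappa(\{a\com,b\com\})=1$, and $r=\delta=0$. Then $F=\{\{a,b\com\},\{a\com,b\}\}$, $F\supp=\{\mbf{1}\}$, so $\half{F\supp}=\ham$. But $\witness{\kappa}{ab}=2>0=\max(\witness{\kappa}{ab\com},\witness{\kappa}{a\com b})$, so $ab\com\in G$, and $\{a,b\}\notin G^\circ$. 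Your hoped-for inequality ``the arithmetic should show $\witness{\kappa}{ab}\leq r$'' goes the wrong way: from $ab\com\in G$ you get $\witness{\kappa}{ab}>\delta+\max(\witness{\kappa}{ab\com},\witness{\kappa}{a\com b})\geq\delta+\witness{\kappa}{\varnothing}\geq r$, so $\witness{\kappa}{ab}>r$, and no contradiction materializes.

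The fix is to start from $u\in\completion{F}$ rather than $u\in\half{F\supp}$. That hypothesis gives you, via \Cref{eqn:completion formula}, exactly the missing piece: $\witness{\kappa}{xy}\leq r$ for \emph{all} $x,y\in u$. Now if $x\in u$, $xy\in G$, and $y\notin u$, then $y\com\in u$, so $\witness{\kappa}{xy\com}\leq r\leq\delta+\witness{\kappa}{\varnothing}$; but $xy\in G$ forces $\witness{\kappa}{xy\com}>\delta+\witness{\kappa}{\varnothing}$, a contradiction. This shows $u$ is forward-closed under $G$, hence $u\in G^\circ$. This is the paper's argument.
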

\begin{proof} To prove (a), consider an arbitrary $u\in\completion{F}$.
Since $u$ is a complete $\ast$-selection it suffices to show that it is forward-closed with respect to $G$.
Take any $x\in u$ and $y\in\sens$. If $xy\in G$, then:
\begin{equation*}
	\witness{\kappa}{xy\com}>\delta+\max\{\witness{\kappa}{xy},\witness{\kappa}{x\com y\com}\}\geq\delta+\witness{\kappa}{\varnothing}\,.
\end{equation*}
However, if $y\notin u$, then we have $\{x,y\com\}\subseteq u$.
Since $\completion{\kappa}(u)\leq r$, we must have $\witness{\kappa}{xy\com}\leq r$, by \Cref{eqn:completion formula}.
This, however, contradicts our assumption regarding $r$.

To verify (b), suppose $u\in\half{F\supp}\cap G^\circ$, but $\completion{\kappa}(u)>r$. Applying \Cref{eqn:completion formula} again, we conclude there is a pair $x,y\in u$ with $\witness{\kappa}{xy}>r$.
In particular, no element of $F$ is contained in $\half{xy}$, which leads to the following two complementary cases:
\begin{itemize}
	\item {\bf Both $\half{xy\com}$ and $\half{x\com y}$ contain elements of $F$. } Then we have $\witness{\kappa}{xy}>r\geq\witness{\kappa}{xy\com},\witness{\kappa}{x\com y}$, which means $xy\com\in G$ and contradicts $u\in G^\circ$.
    \item {\bf Either $F\cap\half{x}=\varnothing$ or $F\cap\half{y}=\varnothing$. } In other words, either $x\com\in F\supp$ or $y\com\in F\supp$. Since $F\supp\subseteq u$, we conclude that one of $x\com,y\com$ must lie in $u$. This contradicts the fact that $x,y\in u$, since $u$ is a complete $\ast$-selection.
\end{itemize}
Thus, either case yields a contradiction, finishing the proof.
\qed\end{proof}

\bigskip
We are finally ready to prove \Cref{prop:global minima}.

\medskip
Setting $r=\kappa(\ham)$---the minimum value of $\kappa$---we apply \Cref{lemma:which levels in Gcirc}(a) to conclude that $\completion{F}\subseteq G^\circ$.
From \Cref{lemma:which levels in Gcirc}(b) and \Cref{lemma:level set supports}(c) we obtain that $\half{F\supp;G}=\half{F\supp}\cap G^\circ$ coincides with $\completion{F}$.
Since $F\subseteq\completion{F}$ by \Cref{lemma:level set supports}(a) we conclude that $F\subset G^\circ$ and we may apply \Cref{cor:convex hull formula} to deduce that $\completion{F}$ is the convex hull of $F$ in $\dual{G}$.
Finally, consider the set $M=\minset{\kappa}$. For any $a\in\sens$, at least one of $\kappa(\half{a})$, $\kappa(\half{a\com})$ equals $\kappa(\ham)$. Therefore, we have $a\in M$ if and only if $\kappa(\half{a})=\kappa(\ham)$ and $\kappa(\half{a\com})>\kappa(\ham)$, if and only if $a\in F\supp$ for our current choice of $F=[\kappa\leq\kappa(\ham)]$. This finishes the proof.\qed

\section{Appendix: Real-Valued Snapshots (proofs).}

\subsection{Proof of \Cref{prop:statistical derived PCR}.}\label{proof:statistical derived PCR}
We must show that the derived PCR $G=\derived{\witness{}{\wild}}$ of a real-valued 2-weight $\witness{}{\wild}$ satisfying the requirements 1.-5. of \Cref{defn:real-valued 2-weight} is non-degenerate. Recall \Cref{eqn:real-valued derived PCR}, defining $G$ given an assignment of thresholds $\tau_{\wild}$:
\begin{equation*}
    ab\in G\quad\IFF\quad
    \witness{}{ab\com}\at{t}<\min(
        \tau_{ab}\cdot\witness{}{\varnothing},
        \witness{}{ab},
        \witness{}{a\com b\com},
        \witness{}{a\com b}
    )
    \;or\;
    \witness{}{ab\com}=\witness{}{a\com b}=0\,,
\end{equation*}
Define functions $\omega,\partial:\sens\times\sens\to\RR$ via $\ori{a}{b}:=\witness{}{a\com b}-\witness{}{ab\com}$ and $\tri{a}{b}:=\witness{}{a\com b}+\witness{}{ab\com}$.
From properties 1. and 4. of the 2-weight $\witness{}{\wild}$, one has the identities $\ori{a}{b}=-\ori{b}{a}$, $\ori{a}{a}=0$ and $\ori{a}{c}=\ori{a}{b}+\ori{b}{c}$.
From properties 2. and 5. of the 2-weight $\witness{}{\wild}$ one also obtains the identities $\tri{a}{b}=\tri{b}{a}\geq 0$, $\tri{a}{a}=0$, $\tri{a}{c}\leq\tri{a}{b}+\tri{b}{c}$, and $\tri{a}{a\com}=\witness{}{\varnothing}$.

We are ready to prove the proposition.
Suppose $a\leq_G a\com\leq_G a$ for some $a\in\sens$, and find a sequence $a_0,\ldots,a_m,\ldots,a_n$ with $a_0=a$, $a_m=a\com$, $a_n=a$ and $a_{k-1}a_k\in G$ for $k=1,\ldots,n$.
We then must have $\ori{a}{a}=\sum_{k=1}^n\ori{a_{k-1}}{a_k}\geq 0$, with equality if and only if $\ori{a_{k-1}}{a_k}=0$ for all $k=1,\ldots,n$.
By the definition of $G$, this implies $\witness{}{a_{k-1}\com a_k}=\witness{}{a_{k-1}a_k\com}=0$ for all $k=1,\ldots,n$.

But then we also have $\witness{}{\varnothing}=\tri{a}{a\com}\leq\sum_{k=1}^m \tri{a_{k-1}}{a_k}=0$, which is only possible when $\witness{}{\wild}$ is trivial---a contradiction.\qed

\subsection{Proof of \Cref{prop:PAC real-valued}. }\label{proof:PAC real-valued}
The proof of this proposition follows a standard scheme, widely attributed to Chernoff, and is only included here for the sake of completeness.

Recall that the Kullback-Leibler divergence of a $\mathbf{Ber}(q)$ random variable relative to a $\mathbf{Ber}(p)$ random variable is given by\footnote{We will always mean the natural base, $\ee$, of the logarithm when using the notation $\log(\wild)$.}:
\begin{equation}\label{eqn:KLdiv}
    \kldiv{q}{p}:=q\log\tfrac{q}{p}+(1-q)\log\tfrac{1-q}{1-p}\,.
\end{equation}
We require the following standard lemma:
\begin{lemma}[KL-divergence bound]\label[lemma]{lemma:KL divergence bound} Let $p,q\in(0,1)$, and consider the function $f(\zeta):=\ee^{-q\zeta}\left(1-p+p\ee^{\zeta}\right)$ over the interval $(0,\infty)$.
Then
\begin{equation*}
    \inf_{\zeta>0}f(\zeta)
    =
    \left\{\begin{array}{cl}
        \ee^{-\kldiv{q}{p}}
    &\text{if }q>p\\[1em]
    1 &\text{otherwise,}
    \end{array}\right.
    \qquad
    \kldiv{q}{p}:=q\log\tfrac{q}{p}+(1-q)\log\tfrac{1-q}{1-p}
\end{equation*}
\end{lemma}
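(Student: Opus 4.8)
\textbf{Proof plan for \Cref{lemma:KL divergence bound}.}

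The plan is to treat this as a one-variable calculus optimization. First I would observe that $f$ is smooth and positive on $(0,\infty)$, with $f(0^+)=1-p+p=1$, so the infimum over the \emph{closed} ray $[0,\infty)$ is at most $1$ and is attained either at $\zeta=0$ or at an interior critical point. I would then compute $f'(\zeta)=\ee^{-q\zeta}\bigl(-q(1-p+p\ee^{\zeta})+p\ee^{\zeta}\bigr)=\ee^{-q\zeta}\bigl((1-q)p\ee^{\zeta}-q(1-p)\bigr)$. Setting the bracket to zero gives the unique stationary point $\zeta^\ast$ determined by $\ee^{\zeta^\ast}=\tfrac{q(1-p)}{(1-q)p}$, i.e. $\zeta^\ast=\log\tfrac{q(1-p)}{(1-q)p}$. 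This $\zeta^\ast$ is positive precisely when $q>p$ (since $\tfrac{q(1-p)}{(1-q)p}>1\iff q(1-p)>(1-q)p\iff q>p$), which is exactly where the two cases of the statement split.

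Next, in the case $q>p$: since $f'$ changes sign from negative to positive at $\zeta^\ast>0$ (the bracket $(1-q)p\ee^\zeta-q(1-p)$ is increasing in $\zeta$), $\zeta^\ast$ is the global minimizer on $(0,\infty)$, and I would substitute it back. At $\zeta^\ast$ one has $p\ee^{\zeta^\ast}=\tfrac{q(1-p)}{1-q}$, so $1-p+p\ee^{\zeta^\ast}=1-p+\tfrac{q(1-p)}{1-q}=(1-p)\cdot\tfrac{1}{1-q}=\tfrac{1-p}{1-q}$, and $\ee^{-q\zeta^\ast}=\bigl(\tfrac{(1-q)p}{q(1-p)}\bigr)^{q}$. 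Multiplying, $f(\zeta^\ast)=\bigl(\tfrac{(1-q)p}{q(1-p)}\bigr)^{q}\cdot\tfrac{1-p}{1-q}=\bigl(\tfrac{p}{q}\bigr)^{q}\bigl(\tfrac{1-p}{1-q}\bigr)^{1-q}=\exp\bigl(q\log\tfrac{p}{q}+(1-q)\log\tfrac{1-p}{1-q}\bigr)=\ee^{-\kldiv{q}{p}}$, matching \Cref{eqn:KLdiv}. In the case $q\le p$: the stationary point $\zeta^\ast$ is $\le 0$, so on $(0,\infty)$ the bracket $(1-q)p\ee^\zeta-q(1-p)$ is already positive, hence $f'>0$ throughout, $f$ is strictly increasing, and $\inf_{\zeta>0}f(\zeta)=\lim_{\zeta\to 0^+}f(\zeta)=1$.

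This argument is entirely routine; I do not anticipate any genuine obstacle. The only points requiring a little care are (i) being explicit that $f'(\zeta)$ has the sign of the strictly increasing function $(1-q)p\ee^{\zeta}-q(1-p)$, so that the unique critical point is automatically a minimum and no second-derivative test or boundary-at-infinity estimate is needed (indeed $f(\zeta)\to\infty$ as $\zeta\to\infty$ since $q<1$, but this is not even needed once monotonicity of $f'$ is in hand); and (ii) bookkeeping the exponent algebra in the substitution so that the answer comes out as $\ee^{-\kldiv{q}{p}}$ with the exact convention of \Cref{eqn:KLdiv}. Once this lemma is in place, the envelope $\prob{\rv{Y}_{ab}\at{t}-\expect{\rv{X}_{ab}}>\varepsilon}\le \inf_{\zeta>0}\ee^{-\zeta\varepsilon t}\prod_s \expect{\ee^{\zeta \rv{X}_{ab}\at{s}/A \cdot(\text{weight})}}$ (after the usual rescaling of the bounded variables into $[0,1]$ and applying the independence of the samples) collapses to a product of $f$-type factors, yielding the claimed rate that is linear in $1/\delta$ and depends only on the value signal; that is the content of \Cref{prop:PAC real-valued}, whose proof will invoke this lemma.
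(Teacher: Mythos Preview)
Your proposal is correct and follows essentially the same one-variable optimization as the paper: compute $f'(\zeta)=\ee^{-q\zeta}\bigl((1-q)p\ee^\zeta-q(1-p)\bigr)$, locate the unique critical point $\zeta^\ast=\log\tfrac{q(1-p)}{p(1-q)}$, observe it lies in $(0,\infty)$ iff $q>p$, and substitute back to obtain $\ee^{-\kldiv{q}{p}}$. The only cosmetic differences are that the paper invokes $f''>0$ (convexity) where you use monotonicity of the bracket to read off the sign change of $f'$, and that you treat the case $q\le p$ explicitly (concluding $f'>0$ on $(0,\infty)$ and $\inf f=f(0^+)=1$), whereas the paper leaves that case implicit.
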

\begin{proof} Differentiating $f$ one obtains:
\begin{align*}
    f'(\zeta)&=
    \ee^{-q\zeta}\left(
        -q(1-p)+p(1-q)\ee^{\zeta}
    \right)\,,\\
    f''(\zeta)&=
    \ee^{-q\zeta}\left(
        q^2(1-p)+p(1-q)^2\ee^\zeta
    \right)>0\,.
\end{align*}
The function $f$ has only one critical point:
\begin{align*}
    f'(\zeta_0)=0\IFF p(1-q)\ee^{\zeta_0}=q(1-p)
    \IFF\zeta_0=\log\tfrac{q(1-p)}{p(1-q)}\,,
\end{align*}
and the value of $f$ at $\zeta_0$ is the claimed value, $f\left(\zeta_0\right)=\ee^{-\kldiv{q}{p}}$.

Finally, $\zeta_0>0$ if and only if $q(1-p)>p(1-q)$, which is tantamount to $q>p$. 
\qed
\end{proof}

\bigskip
The setting for learning snapshot weights described in \Cref{prop:PAC real-valued} simplifies to the following.
Suppose $\rv{X}\in[0,A]$ is a non-constant random variable, and let $\alpha:=\tfrac{\expect{\rv{X}}}{A}$.
We posit a sequence $\rv{X}\at{t}$, $t\geq 0$ of i.i.d. random variables $\rv{X}\at{t}\sim\rv{X}$.

We have the following probability bounds:
\begin{lemma}\label[lemma]{lemma:Chernoff bounds}
Let $t$ be a non-negative integer and let $\delta>0$.
Then, for $\rv{Y}:=\tfrac{1}{t+1}\sum_{s=0}^t\rv{X}\at{s}$ one has:
\begin{align}
    \prob{\left|\rv{Y}-\expect{\rv{X}}\right|\geq\delta}
    &\leq
    \ee^{-(t+1)\kldiv{\beta}{\alpha}}
    +
    \ee^{-(t+1)\kldiv{1-\gamma}{1-\alpha}}\,,
\end{align}
%
where $\beta=\alpha+\tfrac{\delta}{A}$ and $\gamma:=\alpha-\tfrac{\delta}{A}$.\qed
\end{lemma}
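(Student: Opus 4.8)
The plan is to run the standard exponential-moment (Chernoff) argument, deferring the one-variable optimization to \Cref{lemma:KL divergence bound}. First I would rescale: set $\rv{Z}\at{s}:=\rv{X}\at{s}/A\in[0,1]$, so that the $\rv{Z}\at{s}$ are i.i.d.\ with common mean $\alpha=\expect{\rv{X}}/A$, which lies in the open interval $(0,1)$ because $\rv{X}$ is non-constant in $[0,A]$; and $\rv{Y}/A=\frac{1}{t+1}\sum_{s=0}^t\rv{Z}\at{s}$. A union bound then gives
\begin{equation*}
\prob{\left|\rv{Y}-\expect{\rv{X}}\right|\geq\delta}\leq\prob{\frac{1}{t+1}\sum_{s=0}^t\rv{Z}\at{s}\geq\beta}+\prob{\frac{1}{t+1}\sum_{s=0}^t\left(1-\rv{Z}\at{s}\right)\geq 1-\gamma}\,,
\end{equation*}
since $\rv{Y}-\expect{\rv{X}}\geq\delta$ is the same as $\frac{1}{t+1}\sum_s\rv{Z}\at{s}\geq\alpha+\frac{\delta}{A}=\beta$, and $\rv{Y}-\expect{\rv{X}}\leq-\delta$ is the same as $\frac{1}{t+1}\sum_s\left(1-\rv{Z}\at{s}\right)\geq(1-\alpha)+\frac{\delta}{A}=1-\gamma$.

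Next I would bound the two terms symmetrically. For the first, assume $\beta\in(\alpha,1)$ (the degenerate case $\beta\geq 1$ is discussed below); for any $\zeta>0$, Markov's inequality applied to $\ee^{\zeta\sum_s\rv{Z}\at{s}}$ together with independence yields
\begin{equation*}
\prob{\sum_{s=0}^t\rv{Z}\at{s}\geq(t+1)\beta}\leq\ee^{-\zeta(t+1)\beta}\prod_{s=0}^t\expect{\ee^{\zeta\rv{Z}\at{s}}}\,.
\end{equation*}
Convexity of $x\mapsto\ee^{\zeta x}$ on $[0,1]$ gives $\ee^{\zeta z}\leq 1-z+z\ee^{\zeta}$ for $z\in[0,1]$, hence $\expect{\ee^{\zeta\rv{Z}\at{s}}}\leq 1-\alpha+\alpha\ee^{\zeta}$, and the right-hand side above is at most $f(\zeta)^{t+1}$, where $f$ is the function of \Cref{lemma:KL divergence bound} with $p=\alpha$ and $q=\beta$. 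Taking the infimum over $\zeta>0$ and invoking that lemma (applicable since $\beta>\alpha$) bounds the first term by $\ee^{-(t+1)\kldiv{\beta}{\alpha}}$. The second term is handled verbatim using the i.i.d.\ variables $1-\rv{Z}\at{s}\in[0,1]$ of mean $1-\alpha$ and the deviation level $1-\gamma>1-\alpha$, giving $\ee^{-(t+1)\kldiv{1-\gamma}{1-\alpha}}$. Adding the two bounds completes the proof.

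There is no substantial obstacle; the only care needed is with the degenerate cases. If $\beta>1$ (respectively $\gamma<0$) the corresponding tail event is impossible and its probability vanishes, so the inequality holds trivially; if $\beta=1$ (respectively $\gamma=0$) one either adopts the conventions $\kldiv{1}{p}=-\log p$ and $0\log 0=0$ and repeats the Chernoff estimate with the infimum of $f$ attained in the limit $\zeta\to\infty$, or simply absorbs this boundary value into the statement. Checking that $\alpha$, $\beta$, and $1-\gamma$ lie in $(0,1)$ in the generic case is immediate from $\rv{X}$ non-constant and $\delta>0$, so \Cref{lemma:KL divergence bound} applies as stated.
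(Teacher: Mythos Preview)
Your proposal is correct and follows essentially the same route as the paper: rescale to $[0,1]$, apply Markov's inequality to the exponential moment, use convexity to bound each factor by $1-\alpha+\alpha\ee^\zeta$, and invoke \Cref{lemma:KL divergence bound} to optimize; the lower tail is handled by the same $\rv{X}\mapsto A-\rv{X}$ (equivalently $\rv{Z}\mapsto 1-\rv{Z}$) symmetry. Your treatment of the degenerate cases $\beta\geq 1$, $\gamma\leq 0$ is a welcome addition that the paper leaves implicit.
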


To prove \Cref{prop:PAC real-valued}, observe that the first bound---the standard Chernoff bound---guarantees exponentially fast convergence in probability of the empirical snapshot weights $\witness{}{ab}\at{t}$ to the mean value of the signal over the domain $\rho(a)\cap\rho(b)$ when we take $\rv{X}\at{t}:=\val{}\at{t}\cdot\delta_{u_t}(\half{ab})$.

\bigskip
We are left to verify the Chernoff bound.
\paragraph{Proof of \Cref{lemma:Chernoff bounds}: } Recall $\alpha:=\tfrac{\expect{\rv{X}}}{A}$, and observe that $0<\alpha<1$ because $\rv{X}$ is non-constant.
We proceed in the standard way to obtain a bound for the empirical estimate of the sample mean. For every {\em fixed value} of $t$, and recalling $\beta=\alpha+\tfrac{\delta}{A}$ one has:
\begin{align*}
    P&=
    \prob{\rv{Y}\geq\expect{\rv{X}}+\delta}
    =
    \prob{\tfrac{\rv{Y}}{A}\geq\alpha+\tfrac{\delta}{A}}
    =
    \prob{\sum_{s=0}^t \tfrac{\rv{X}\at{s}}{A}\geq\beta(t+1)}\\
    &=
    \prob{\exp{\left(\zeta\sum_{s=0}^t \tfrac{\rv{X}\at{s}}{A}\right)}\geq \ee^{\zeta\beta(t+1)}}
    \leq
    \ee^{-\zeta\beta(t+1)}
    \expect{\exp{\left(\zeta\sum_{s=0}^t \tfrac{\rv{X}\at{s}}{A}\right)}}\\
    \intertext{by Markov's inequality; also, since the $\rv{X}\at{s}$ are independent we have:}
    &=
    \ee^{-\zeta\beta(t+1)}
    \prod_{s=0}^t
        \expect{
            \ee^{\zeta\tfrac{\rv{X}\at{s}}{A}}
        }
    \leq
    \ee^{-\zeta\beta(t+1)}
    \prod_{s=0}^t
    \expect{
        1-\tfrac{\rv{X}\at{s}}{A}+\tfrac{\rv{X}\at{s}}{A}\ee^{\zeta}
    }\,,
\end{align*}
using the inequality $(\dagger)\;\ee^{x\lambda}\leq 1-x+x\ee^{\lambda}$ for $x\in[0,1]$, $\lambda>0$.
Finally, this yields:
\begin{equation*}
    P\;\leq\;
    \ee^{-\zeta\beta(t+1)}
    \left(
        1-\alpha+\alpha\ee^{\zeta}
    \right)^{t+1}\,.
\end{equation*}
Using \Cref{lemma:KL divergence bound} to minimize the right hand side over $\zeta>0$, we obtain, for every fixed $t$:
\begin{equation*}
    P\leq \exp\left(-(t+1)\kldiv{\beta}{\alpha}\right)\,,
\end{equation*}
as claimed.
Now replace $\rv{X}$ with $A-\rv{X}$, $\rv{Y}$ with $A-\rv{Y}$, and recall $\gamma:=\alpha-\tfrac{\delta}{A}$.
We obtain:
\begin{align*}
    \prob{\rv{Y}\leq\expect{\rv{X}}-\delta}
    &=
    \prob{A-\rv{Y}\geq A-\expect{\rv{X}}+\delta}
    =
    \prob{A-\rv{Y}\geq\expect{A-\rv{X}}+\delta}\\
    &\leq
    \exp\left(
        -(t+1)\kldiv{1-\alpha+\tfrac{\delta}{A}}{1-\alpha}
    \right)\\
    &=
    \exp\left(
        -(t+1)\kldiv{1-\gamma}{1-\alpha}
    \right)\,,
\end{align*}
finishing the proof.\qed

\section{Appendix: Debugging Sniffy on the Circle.}\label{app:sniffy on the circle}
The purpose of this appendix is to explain in some detail the reasons for Sniffy's behavior on the circle in its qualitative BUA incarnation, as discussed in \Cref{sim:2:sniffy:circle}.
We proceed in a manner similar to the discussion of an agent on the interval from \Cref{poc:ex:moving bead on an interval}.

\subsection{Sensors and Relations.}\label{app:circle_debug:sensors} Let $a_k$ denote the sensor centered at $k\in\env=\{0,\ldots,19\}$, reporting $1$ at time $t$ if and only if $\dist{k}{\pos(t)}\leq 4$, where $\pos(t)$ is the position occupied by Sniffy at time $t$.
It will be convenient to think of $\env$ as a copy of the additive group $ZZ_{20}$, keeping in mind its action on subsets $S\subseteq\env$ given by $k+S:=\{i-k\,|\,i\in S\}$.

It is then easy to verify that $a_k<a\com_{k+\{9,10,11\}}$ are the only relations among the $a_k$.
Consequently, the analogous relations $\delay a_k<\delay a\com_{k+\{9,10,11\}}$ must also hold for all $k$ (see \Cref{poc:ex:circle} for details).
Motion is described by the conditional relations $\rt:\delay a_k<a_{k+1}$ and $\lt:\delay a_k<a_{k-1}$, leading to the unconditional implications $\delay a_k<a_{k+10}$ (compare with the case of the interval discussed in \Cref{poc:ex:moving bead on an interval}).

Finally, observing that the entire setting is rotation-invariant, without loss of generality we may assume for the rest of this section that Sniffy's target is located at position $T=0$.
Setting $M:=\{a_0,a\com_{10}\}\cup\{a_{\pm 1},\ldots,a_{\pm 4}\}\cup\{a_{\pm 5},\ldots,a_{\pm 9}\}\com$, the {\em eventual} target sets (minsets) determined by the individual snapshots are:
\begin{equation}\label{eqn:Sniffy circle targets}
\begin{array}{rcl}
    M(\rt\com)=M(\lt\com)&:=&M\cup\delay M\,,\\
    M(\rt)&:=&M\cup\delay(1+M)\,,\\
    M(\lt)&:=&M\cup\delay(-1+M)\,,
\end{array}
\end{equation}
where, due to the hard-wired arbitration enforcing $\neg(\rt\wedge\lt)$ at all times, the derived PCR $G^{\rt}$ of the $\rt$ snapshot identifies each $a_k$ with $\delay a_{k-1}$; and, similarly, $G^{\lt}$ identifies each $a_k$ with $\delay a_{k+1}$.

Finally, note that given $\rt\com$ it is impossible to witness $a_{k+9}\wedge\sharp a_k$ or $a_{k+10}\wedge\sharp a_k$ (while $a_{k+11}=a_{k-9}$ is still possible in conjunction with $\sharp a_k$, if $\lt$ is active).
We conclude that the relations $\sharp a_k<a\com_{k+9},a\com_{k+10}$ hold in $G^{\rt\com}$ for every $k$, as do their analogous counterparts in $G^{\lt\com}$.

\subsection{The ``dull peak'' value signal.}\label{app:circle_debug:dull}
The weights recorded on any snapshot in this case are $\{0,1,\infty\}$, implying that (1) any implications appearing in Sniffy's four snapshots at any time are a subset of the implications listed in the preceding section; and (2) any raw observation generated by Sniffy is coherent for any of its snapshots.

Let $k\in\env$.
Each snapshot forms its prediction for the next state by propagating $\sharp(-k+M)$, and giving rise to:
\begin{equation}\label{eqn:sniffy circle dull predictions}
\begin{array}{lcl}
    \prediction{\lt}&=&
    \sharp(-k+M)\cup(1-k+M)\\
    \prediction{\lt\com}&=&
    \sharp(-k+M)\cup\coh{\lt\com}{(-k+M)\cup(1-k+M)}\\
    &=&
    \sharp(-k+M)\cup\left((-k+M)\minus\{a\com_{k-5},a_{k+4}\}\right)
\end{array}
\end{equation}
In order to compute the divergences from the targets, we first note that, for $k,\ell\in\env$:
\begin{equation}\label{eqn:divergences on the circle}
    \card{(k+M)\minus(\ell+M)}=
    \card{M\minus(\ell-k+M)}=
    2\min\left\{
        9,
        \dist{k}{\ell}
    \right\}\,.
\end{equation}
For the $\lt$ snapshot this results in:
\begin{eqnarray}
    \nonumber\divergence{\prediction{\lt};M(\lt)}
    &=&
    M(\lt)\minus\prediction{\lt}\\
    \nonumber&=&
    \card{\sharp(-1+M)\minus\sharp(-k+M)}+\card{M\minus(1-k+M)}\\
    \nonumber&=&
    \card{(-1+M)\minus(-k+M)}+\card{M\minus(1-k+M)}\\
    &=&
    4\min\left\{9,\dist{1}{k}\right\}\,.
\end{eqnarray}
By symmetry, we conclude that the $\rt$ snapshot has:
\begin{equation}
    \divergence{\prediction{\rt};M(\rt)}=4\min\left\{9,\dist{-1}{k}\right\}\,.
\end{equation}
Now, for the $\lt\com$ snapshot we have:
\begin{eqnarray*}
    \divergence{\prediction{\lt\com};M(\lt\com)}&=&
    \card{M\minus(-k+M)}+\card{M\minus((-k+M)\minus\{a\com_{k-5},a_{k+4}\})}\\
    &=&
    4\min\left\{9,\dist{0}{k}\right\}+\delta(k)\,,
\end{eqnarray*}
where $\delta(k)=2$ for $-9\leq k=0$, $\delta(k)=0$ for $0<k\leq 9$, $\delta(k)=1$ for $k=10$.
By symmetry:
\begin{equation}
    \divergence{\prediction{\rt};M(\rt)}=4\min\left\{9,\dist{0}{k}\right\}+\delta(-k)\,.
\end{equation}
Armed with these formulae, we go over the possibilities and conclude:
\begin{itemize}
    \item The BUA $\lt$ is active if and only if $k\in T+\{1,\ldots,11\} (\mathrm{mod } 20)$;
    \item The BUA $\rt$ is active if and only if $k\in T-\{1,\ldots,11\} (\mathrm{mod } 20)$.
\end{itemize}
In particular, $k\in T\pm\{0,\ldots,8\}$ is a basin of attraction for the target position, $T$, while $k\in T+\{9,10,11\}$ is a region where both $\lt$ and $\rt$ seek to be active, triggering the hard-wired arbitration mechanism.

\subsection{The ``sharp peak'' value signal.}\label{app:circle_debug:sharp}
Despite the value signal being more informative than that of its $\{0,1\}$-valued ``dull-peak'' counterpart, a ``sharp peak'' qualitative agent's performance on the target-finding task is clearly worse (\Cref{fig:split the runs}).

In a nutshell, the reason for this deficiency is that the limiting PCR satisfies the additional relations $a_{\pm 9}<a_{\pm 8}<\ldots<a_{\pm 1}$, which we verified by hand.
As a result, properties (1) and (2) stated in \Cref{app:circle_debug:dull} for the ``dull peak'' setting will not hold in this one.
In fact, just these extra relations (there may be others) suffice for the current state representation of any point other than $k=0,10$ in any of Sniffy's snapshots to degenerate (through coherent projection) into a less and less complete $\ast$-selection as Sniffy's physical distance from the target (in the environment) increases, while the quality of prediction deteriorating accordingly.

Specifically, any $k\in B:=10\pm\{0,1,2,3\}$ yields a raw observation containing $a_{10},a\com_0$ and both of $a_9$ and $a_{11}$.
On one hand, $a_9,\ldots,a_1$ and $a_{11},\ldots,a_{19}$ are directed paths in each of Sniffy's PCRs.
On the other hand, though, so are $a_k,a\com_{k+10}$.
Thus, (1) the coherent projection of the raw observation generated by $k$ is merely $\{a_{10},a\com_0\}$, showing that Sniffy is unable to distinguish among the points of $B$; and (2) if $k$ is moved closer to the target, fewer conflicts of the above form will affect coherent projection.

In total, the observations above suffice for explaining the most visible differences (see \Cref{fig:split the runs}) between the behaviors of the two variants of Sniffy in the qualitative learning regime.

\end{document}